\documentclass{article}


\usepackage[preprint, nonatbib]{neurips_2023}
\usepackage[numbers]{natbib}


\usepackage[utf8]{inputenc} 
\usepackage[T1]{fontenc}    
\usepackage[colorlinks=true,linkcolor=purple,citecolor=purple,pagebackref=true]{hyperref}       
\renewcommand*{\backrefalt}[4]{%
    \ifcase #1 \footnotesize{(Not cited.)}%
    \or        \footnotesize{(Cited on page~#2.)}%
    \else      \footnotesize{(Cited on pages~#2.)}%
    \fi}
\usepackage{url}            
\usepackage{booktabs}       
\usepackage{amsfonts}       
\usepackage{nicefrac}       
\usepackage{microtype}      
\usepackage{xcolor}         
\usepackage{times}
\usepackage{booktabs}       
\usepackage{amssymb,amsmath,amsthm}
\usepackage{graphicx}
\usepackage{rotating}
\usepackage[ruled]{algorithm2e}
\usepackage{pifont} 
\usepackage{makecell}
\usepackage{capt-of} 
\usepackage{tcolorbox} 
\usepackage{bbm}
\usepackage{times}
\usepackage[utf8]{inputenc}
\usepackage[T1]{fontenc}
\usepackage{amsfonts}
\usepackage{nicefrac}
\usepackage{natbib}
\usepackage{amsmath,centernot}
\usepackage{amsthm}
\usepackage{amssymb}
\usepackage{bbm}
\usepackage{dsfont}
\usepackage{color}
\definecolor{yale}{RGB}{14,77,146}
\usepackage{natbib}
\usepackage{multirow}
\usepackage{mdframed}
\usepackage{accents}

\usepackage{pifont}
\usepackage{babel}
\makeatletter
\DeclareRobustCommand*\cal{\@fontswitch\relax\mathcal}
\makeatother

\usepackage{pgf,tikz,pgfplots}
\pgfplotsset{compat=1.15}
\usepackage{mathrsfs}
\usetikzlibrary{arrows}

\definecolor{ududff}{rgb}{0.30196078431372547,0.30196078431372547,1}
\definecolor{xdxdff}{rgb}{0.49019607843137253,0.49019607843137253,1}
\definecolor{uuuuuu}{rgb}{0.26666666666666666,0.26666666666666666,0.26666666666666666}

\theoremstyle{plain}
\newtheorem{thm}{\protect\theoremname}
\theoremstyle{plain}
\newtheorem{cor}[thm]{\protect\corname}
\theoremstyle{plain}

\theoremstyle{plain}
\newtheorem{prop}[thm]{\protect\propositionname}
\theoremstyle{plain}
\newtheorem{lem}[thm]{\protect\lemmaname}
\theoremstyle{plain}

\newtheorem{rem}{\protect\remarkname}
\theoremstyle{plain}

\theoremstyle{plain}

\providecommand{\algorithmname}{Algorithm}
\providecommand{\assumptionname}{Assumption}
\providecommand{\lemmaname}{Lemma}
\providecommand{\theoremname}{Theorem}
\providecommand{\corname}{Corollary}
\providecommand{\propositionname}{Proposition}
\providecommand{\remarkname}{Remark}
\providecommand{\definitionname}{Definition}


\newcommand{\cA}{\mathcal{A}}
\newcommand{\cB}{\mathcal{B}}

\newcommand{\cE}{\mathcal{E}}
\newcommand{\cF}{\mathcal{F}}
\newcommand{\cG}{\mathcal{G}}

\newcommand{\cO}{\mathcal{O}}

\newcommand{\cX}{\mathcal{X}}

\newcommand{\cZ}{\mathcal{Z}}
\newcommand{\EE}{\mathbb{E}}
\newcommand{\NN}{\mathbb{N}}
\newcommand{\PP}{\mathbb{P}}
\newcommand{\RR}{\mathbb{R}}

\newcommand{\diag}{\textrm{diag}}
\newcommand{\indicator}{\mathds{1}}

\newcommand{\norm}[1]{\left\lVert#1\right\rVert}

\usepackage{mylivemacros}
\RequirePackage{amsthm}
\RequirePackage{mathtools} 
\RequirePackage{amssymb} 
\RequirePackage{bbm}
\RequirePackage{bm}
\RequirePackage{xfrac}

\usepackage{lineno}

\title{Spectral Entry-wise Matrix Estimation  
for Low-Rank Reinforcement Learning}

%

\author{%
  Stefan Stojanovic\\
  EECS\\
  KTH, Stockholm, Sweden\\
  \texttt{stesto@kth.se} \\
  \And
  Yassir Jedra\\
  EECS\\
  KTH, Stockholm, Sweden\\
  \texttt{jedra@kth.se} \\
  \AND
  Alexandre Proutiere\\
  EECS\\
    KTH, Stockholm, Sweden\\
  \texttt{alepro@kth.se} \\
}

\usepackage{mylivemacros}

\begin{document}
\maketitle

\begin{abstract}
We study matrix estimation problems arising in reinforcement learning (RL) with low-rank structure. In low-rank bandits, the matrix to be recovered specifies the expected arm rewards, and for low-rank Markov Decision Processes (MDPs), it may for example characterize the transition kernel of the MDP. In both cases, each entry of the matrix carries important information, and we seek estimation methods with low entry-wise error. Importantly, these methods further need to accommodate for inherent correlations in the available data (e.g. for MDPs, the data consists of system trajectories). We investigate the performance of  simple spectral-based matrix estimation approaches: we show that they efficiently recover the singular subspaces of the matrix and exhibit nearly-minimal entry-wise error. These new results on low-rank matrix estimation make it possible to devise reinforcement learning algorithms that fully exploit the underlying low-rank structure. We provide two examples of such algorithms: a regret minimization algorithm for low-rank bandit problems, and a best policy identification algorithm for reward-free RL in low-rank MDPs. Both algorithms yield state-of-the-art performance guarantees.
\end{abstract}
\section{Introduction}\label{sec:intro}

Learning succinct representations of the reward function or of the system state dynamics in bandit and RL problems is empirically known to significantly accelerate the search for efficient policies \cite{laskin20a,stooke21a,chandak2023representations}. It also comes with interesting theoretical challenges. The design of algorithms learning and leveraging such representations and with provable performance guarantees has attracted considerable attention recently, but remains largely open. In particular, significant efforts have been made towards such design when the representation relies on a low-rank structure. In bandits, assuming such a structure means that the arm-to-reward function can be characterized by a low-rank matrix \cite{kveton2017stochastic,jun2019,bayati2022speed,jain2022online}. In MDPs, it implies that the reward function, the $Q$-function or the transition kernels are represented by low-rank matrices \cite{sun2019,agarwal2020flambe,Modi21,uehara2022,ren2022spectral}. In turn, the performance of algorithms exploiting low-rank structures is mainly determined by the accuracy with which we are able to estimate these matrices. 

In this paper, we study matrix estimation problems arising in low-rank bandit and RL problems. Two major challenges are associated with these problems. (i) The individual entries of the matrix carry important operational meanings  (e.g. in bandits, an entry could correspond to the average reward of an arm), and we seek estimation methods with low entry-wise error. Such requirement calls for a fine-grained analysis, typically much more involved than that needed to only upper bound the spectral or Frobenius norm of the estimation error \cite{fan2018eigenvector, eldridge2018unperturbed, cape2019two, abbe2020entrywise, srebro2005rank, chen2020noisy, shamir2014matrix, negahban2012restricted}. (ii) Our estimation methods should further accommodate for inherent correlations in the available data (e.g., in MDPs, we have access to system trajectories, and the data is hence Markovian). We show that, essentially, spectral methods successfully deal with these challenges. 

{\bf Contributions.} 1) We introduce three matrix estimation problems. The first arises in low-rank bandits. The second corresponds to scenarios in RL where the learner wishes to estimate the (low-rank) transition kernel of a Markov chain and to this aim, has access to a generative model. The last problem is similar but assumes that the learner has access to system trajectories only, a setting referred to as the forward model in the RL literature. For all problems, we establish strong performance guarantees for simple spectral-based estimation approaches: these efficiently recover the singular subspaces of the matrix and exhibit nearly-minimal entry-wise error. To prove these results, we develop and combine involved leave-one-out arguments and Poisson approximation techniques (to handle the correlations in the data).

2) We apply the results obtained for our first matrix estimation problem to devise an efficient regret-minimization algorithm for low-rank bandits. We prove that the algorithm enjoys finite-time performance guarantees, with a regret at most roughly scaling as $(m+n)\log^3(T)\bar{\Delta}/\Delta_{\min}^2$ where $(m,n)$ are the reward matrix dimensions, $T$ is the time horizon, $\bar{\Delta}$ is the average of the reward gaps between the best arm and all other arms, and $\Delta_{\min}$ is the minimum of these gaps. 

3) Finally, we present an algorithm for best policy identification in low-rank MDPs in the reward-free setting. The results obtained for the second and last matrix estimation problems imply that our algorithm learns an $\epsilon$-optimal policy for any reward function using only a number of samples scaling as $O({nA/\epsilon^2})$ up to logarithmic factors, where $n$ and $A$ denote the number of states and actions, respectively. This sample complexity is mini-max optimal \cite{jedra2023nearly}, and illustrates the gain achieved by leveraging the low-rank structure (without this structure, the sample complexity would be $\Omega(n^2A/\epsilon^2)$).

{\bf Notation.} For any matrix $A\in \mathbb{R}^{m\times n}$, $A_{i,:}$ (resp. $A_{:,j}$) denotes its $i$-th row (resp. its $j$-th column), $A_{\min}=\min_{(i,j)}A_{i,j}$ and $A_{\max}=\max_{(i,j)}A_{i,j}$. We consider the following norms for matrices: $\|A\|$ denotes the spectral norm, $\| A\|_{1\to\infty}=\max_{i\in [m]} \| A_{i,:}\|_1$, $\| A\|_{2\to\infty}=\max_{i\in [m]} \| A_{i,:}\|_2$, and finally $\| A\|_{\infty}=\max_{(i,j)\in [m]\times [n]}|A_{i,j}|$. If the SVD of $A$ is $U \Sigma V^\top$, we denote by $\mathrm{sgn}(A)=UV^\top$ the matrix sign function of $A$ (see Definition 4.1 in \cite{chen2021spectral}). ${\cal O}^{r\times r}$ denotes the set of $(r\times r)$ real orthogonal matrices. For any finite set ${\cal S}$, let ${\cal P}({\cal S})$ be the set of distributions over ${\cal S}$. The notation $a(n,m,T) \lesssim b(n,m,T)$ (resp. $a(n,m,T)=\Theta(b(n,m,T))$) means that there exists a universal constant $C>0$ (resp. $c,C>0$) such that $a(n,m,T) \le C b(n,m,T)$ (resp. $cb(n,m,T)\le a(n,m,T) \le C b(n,m,T)$) for all $n,m,T$. Finally, we use $a\wedge b = \min(a,b)$ and $a\vee b = \max(a,b)$.

\section{Models and Objectives}\label{sec:sampling}

Let $M\in \mathbb{R}^{m\times n}$ be an unknown rank $r$ matrix that we wish to estimate from $T$ noisy observations of its entries. We consider matrices arising in two types of learning problems with low-rank structure, namely low-rank bandits and RL. The SVD of $M$ is $U\Sigma V^\top$ where the matrices $U\in\RR^{m\times r}$ and $V\in\RR^{n\times r}$ contain the left and right singular vectors of $M$, respectively, and $\Sigma=\diag(\sigma_1,\dots,\sigma_r)$. We assume without loss of generality that the singular values have been ordered, i.e., $\sigma_1 \ge \ldots \ge \sigma_r$. The accuracy of our estimate $\hM$ of $M$ will be assessed using the following criteria:
\begin{itemize}
\item[{\it (i)}] {\it Singular subspace recovery.} Let the SVD of $\hM$ be $\hU\widehat{\Sigma}\hV^\top$. To understand how well the singular subspaces of $M$ are recovered, we will upper bound $\min_{O\in {\cal O}^{r\times r}}\| U -\hU O\|_{2\to\infty}$ and $\min_{O\in {\cal O}^{r\times r}}\| V -\hV O\|_{2\to\infty}$ (the $\min_{O\in {\cal O}^{r\times r}}$ problem corresponds to the orthogonal Procrustes problem and its solution aligns $\hU$ and $U$ as closely as possible, see Remark 4.1 in \cite{chen2021spectral}).
\item[{\it (ii)}] {\it Matrix estimation.} To assess the accuracy of $\hM$, we will upper bound the row-wise error $\| \hM -M\|_{1\to\infty}$ or $\| \hM -M\|_{2\to\infty}$, as well as the entry-wise error $\| \hM -M\|_\infty$ (the spectral error $\| \hM -M\|$ is easier to deal with and is presented in appendix only). 
\end{itemize}
We introduce two classical quantities characterizing the heterogeneity and incoherence of the matrix $M$ \cite{candes2010power, recht2011simpler}. Let $\kappa=\sigma_1/\sigma_r$, and let $\mu(U)=\sqrt{m/r}\|U\|_{2\to\infty}$ (resp. $\mu(V)=\sqrt{n/r}\|V\|_{2\to\infty}$) denote the row-incoherence (resp. column-incoherence) parameter of $M$. Let $\mu=\max\{\mu(U),\mu(V)\}$. Next, we specify the matrices $M$ of interest in low-rank bandits and RL, and the way the data used for their estimation is generated.  

{\bf Model I: Reward matrices in low-rank bandits.} For bandit problems, $M$ corresponds to the average rewards of various arms. To estimate $M$, the learner has access to data sequentially generated as follows. In each round $t=1,\ldots, T$, an arm $(i_t,j_t)\in [m]\times [n]$ is randomly selected (say uniformly at random for simplicity) and the learner observes $M_{i_t,j_t} + \xi_t$, an unbiased sample of the corresponding entry of $M$. $(\xi_t)_{t\ge 1}$ is a sequence of zero-mean and bounded random variables. Specifically, we assume that for all $t\ge 1$, $|\xi_t |\le c_1\|M\|_\infty$ a.s., for some constant $c_1>0$.


\paragraph{Model II: Transition matrices in low-rank MDPs.} In low-rank MDPs, we encounter Markov chains whose transition matrices have low rank $r$ (refer to Section \ref{sec:mdps} for details). Let $P\in \mathbb{R}^{n\times n}$ be such a transition matrix. We assume that the corresponding Markov chain is irreducible with stationary distribution $\nu$. The objective is to estimate $P$ from the data consisting of samples of transitions of the chain. More precisely, from the data, we will estimate the {\it long-term frequency matrix} $M=\mathrm{diag}(\nu)P$ ($M_{ij}$ is the limiting proportion of transitions from state $i$ to state $j$ as the trajectory grows large). Observe that $M$ is of rank $r$, and that $P_{i,:}=M_{i,:}/\| M_{i,:}\|_1$. To estimate $M$, the learner has access to the data $(x_1,\ldots,x_T)\in [n]^T$ generated according to one of the following two models.
\begin{itemize}
\item[(a)] In the {\it generative} model, for any $t\in [T]$, if $t$ is odd, $x_t$ is selected at random according to some distribution $\nu_0$, and $x_{t+1}$ is sampled from $P_{x_t,:}$.
\item[(b)] In the {\it forward} model, the learner has access to a trajectory $(x_1,\ldots, x_T)$ of length $T$ of the Markov chain, where $x_1\sim \nu_0$ and for any $t\ge 1$, $x_{t+1}\sim P_{x_t,:}$. 
\end{itemize}

\section{Matrix Estimation via Spectral Decomposition}\label{sec:etimation}

In the three models (Models I, II(a) and II(b)), we first construct a matrix $\tM$ directly from the data, and from there, we build our estimate $\hM$, typically obtained via spectral decomposition, i.e., by taking the best rank-$r$ approximation of $\tM$. In the remaining of this section, we let $\hU\widehat{\Sigma}\hV^\top$ denote the SVD of $\hM$. Next, we describe in more details how $\hM$ is constructed in the three models, and analyze the corresponding estimation error.

\subsection{Reward matrices}

For Model I, for $t=1,\ldots,T$, we define $\tM_t=\big( (M_{i_t, j_t}+\xi_t)\indicator_{\lbrace (i, j)=(i_t, j_t) \rbrace}\big)_{i,j \in [m] \times [n]}$ and $\tM=\frac{nm}{T} \sum_{t=1}^T \tM_t$. Let $\hM$ denote the best rank-$r$ approximation of $\tM$. 

\begin{thm}\label{thm:reward}
Let $\delta>0$. We introduce: 
$$
\cB =  \sqrt{\frac{nm}{T}} \left( \sqrt{(n+m)\log\left(\frac{e(n+m)T}{\delta} \right)} + \log^{3/2}\left(\frac{e(n+m)T}{\delta} \right) \right).
$$     
Assume that $T \ge c \mu^4 \kappa^2 r^2 (n+m)\log^{3}\left( e(m+n)T/\delta\right)$ for some universal constant $c>0$. Then there exists a universal constant $C>0$ such that the following inequalities hold with probability at least $1-\delta$: 
\begin{align*}
        (i) & \qquad \max\left(\Vert U - \widehat{U} (\widehat{U}^\top U)\Vert_{2 \to \infty}, \Vert V - \widehat{V} (\widehat{V}^\top V) \Vert_{2 \to \infty} \right) \le C \frac{(\mu^3 \kappa^2 r^{3/2})}{\sqrt{mn (n \wedge m)}}\cB, \\
        (ii) & \qquad \Vert \widehat{M} - M \Vert_{2 \to \infty} \le C \frac{ (\mu^3 \, \kappa^2  r^{3/2})}{\sqrt{m\wedge n}} \Vert M \Vert_\infty  \cB, \\
        (iii) &  \qquad \Vert \widehat{M} - M \Vert_{ \infty} \le C \left( \mu^{11/2} \, \kappa^2 r^{1/2} + \mu^3 \kappa r^{3/2} \frac{m+n}{\sqrt{mn}} \right) \frac{1}{(n \wedge m)} \Vert  M \Vert_\infty \cB.
    \end{align*}
\end{thm}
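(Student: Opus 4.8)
The plan is to follow the spectral-method template for entry-wise analysis (as developed in \cite{chen2021spectral}): write $\widetilde{M}=M+E$ with $E:=\widetilde{M}-M$, control $E$ at the levels of the spectral and $2\to\infty$ norms, and then propagate these bounds through the SVD via a leave-one-out argument. First I would note that, by construction, $\mathbb{E}[\widetilde{M}]=M$, and that $E=\sum_{t=1}^{T}\big(\tfrac{nm}{T}\widetilde{M}_t-\tfrac{1}{T}M\big)$ is a sum of $T$ independent, zero-mean random matrices, each supported on a single entry and of spectral norm $\lesssim \tfrac{nm}{T}\|M\|_\infty$. Applying the matrix Bernstein inequality, the variance proxy is of order $\tfrac{nm(n+m)}{T}\|M\|_\infty^2$ (giving the $\sqrt{(n+m)\log(\cdot)}$ contribution) while the almost-sure bound produces the $\log^{3/2}(\cdot)$ term; together these yield $\|E\|\lesssim \cB\,\|M\|_\infty$ with probability at least $1-\delta$. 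The per-arm observation counts are multinomial and hence weakly dependent across rows; Poissonizing them makes the rows exactly independent and will be convenient for the decoupling in the leave-one-out step. I would also record a row-wise concentration bound on $\|E\|_{2\to\infty}$ by a scalar Bernstein bound per row followed by a union bound.

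Next, the sample-size hypothesis $T\gtrsim \mu^4\kappa^2 r^2(n+m)\log^3(\cdot)$ is calibrated precisely so that $\|E\|\le \sigma_r/2$. Here I would use the elementary relations $\|M\|_\infty\le \mu^2 r\,\sigma_1/\sqrt{mn}$ and $\sigma_r=\sigma_1/\kappa$ (so that $\sigma_r$ is bounded below in terms of $\|M\|_\infty$, $\mu$, $\kappa$, $r$), which turn the bound $\|E\|\lesssim \cB\|M\|_\infty$ into the stated threshold on $T$. With the eigengap secured, Weyl's inequality keeps $\widehat{\Sigma}$ within a constant factor of $\Sigma$, and the Davis--Kahan theorem gives the coarse $\ell_2$ subspace bounds $\|(I-\widehat{U}\widehat{U}^\top)U\|\lesssim \|E\|/\sigma_r$ and the analogue for $V$.

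The technical heart is the leave-one-out argument establishing part $(i)$. For each row index $l$ I would introduce an auxiliary matrix $\widetilde{M}^{(l)}$ that coincides with $\widetilde{M}$ on every row except the $l$-th, whose row is replaced by the noiseless $M_{l,:}$; thus $\widetilde{M}^{(l)}$ is independent of all the randomness used in row $l$, and its top-$r$ left singular subspace $\widehat{U}^{(l)}$ is a good proxy for $\widehat{U}$. The argument combines (a) a stability estimate showing $\widehat{U}$ and $\widehat{U}^{(l)}$ are close up to rotation, controlled through $\|(\widetilde{M}-\widetilde{M}^{(l)})\widehat{V}\|$ and the eigengap; and (b) a bound on the $l$-th row of the aligned $\widehat{U}^{(l)}$, which is small because it is a linear functional of the row-$l$ noise that is independent of $\widehat{U}^{(l)}$, so a scalar concentration bound applies. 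Feeding the incoherence bounds $\|U\|_{2\to\infty}\le \mu\sqrt{r/m}$ and $\|V\|_{2\to\infty}\le \mu\sqrt{r/n}$ through (a) and (b) and maximizing over $l$ produces the prefactor $\mu^3\kappa^2 r^{3/2}/\sqrt{mn(n\wedge m)}$ multiplying $\cB$; the symmetric argument over columns handles $V$.

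Finally, parts $(ii)$ and $(iii)$ are obtained by propagating part $(i)$ through the SVD. For $(ii)$, I would expand (for instance) $\widehat{M}-M=(\widehat{U}\widehat{U}^\top-UU^\top)\widehat{M}+UU^\top(\widehat{M}-M)$ and bound each piece row-wise using part $(i)$, the $\ell_2$ subspace and singular-value estimates, and the fact $\sigma_1\lesssim\sqrt{mn}\|M\|_\infty$; multiplying the $2\to\infty$ subspace error by $\sigma_1$ reproduces the $\sqrt{m\wedge n}$ denominator. Part $(iii)$ is where I expect the main obstacle: to bound $|(\widehat{M}-M)_{ij}|=|e_i^\top(\widehat{M}-M)e_j|$ sharply, one must expand $\widehat{U}\widehat{\Sigma}\widehat{V}^\top-U\Sigma V^\top$ into several cross terms and control each by pairing a $2\to\infty$ subspace error on one side against the \emph{incoherence of the estimated subspaces} $\|\widehat{U}\|_{2\to\infty}$, $\|\widehat{V}\|_{2\to\infty}$ on the other. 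Proving that $\widehat{U},\widehat{V}$ inherit incoherence comparable to $U,V$ (a corollary of part $(i)$) and carefully tracking the two qualitatively distinct contributions---the ``diagonal'' noise term, yielding $\mu^{11/2}\kappa^2 r^{1/2}$, and the ``off-diagonal'' cross term, yielding $\mu^3\kappa r^{3/2}(m+n)/\sqrt{mn}$---is the most delicate bookkeeping and the step I would budget the most care for.
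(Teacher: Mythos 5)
Your overall route---Poissonize the observation counts, establish the $2\to\infty$ subspace bound by a leave-one-out argument, then propagate it through SVD decompositions for (ii) and (iii)---is the same as the paper's (the decompositions you sketch for (ii) and (iii) are essentially Lemmas \ref{lemma:P_to_U} and \ref{lem:M-infinity}, and your step (a)/(b) split is the content of Lemma \ref{lem:ssr-rewards}). The genuine gap is in how Poissonization is used. You treat it as a ``convenient'' device that makes rows independent, but then every high-probability statement you prove (the leave-one-out stability, the bound on the row-$l$ linear functional, hence all of part (i)) concerns the \emph{Poissonized} random matrix, not the estimator $\hM$ built from the actual multinomial data; and under the actual sampling model your key independence claim---that $\tM^{(l)}$ is independent of the randomness in row $l$---is false, because the per-entry counts are coupled by the constraint that they sum to $T$. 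To close the argument you need a one-directional comparison inequality transferring events from the Poisson model back to the multinomial one; this is the paper's Lemma \ref{lem:poisson-approx-rewards}, which states $\PP(\cE)\le e\sqrt{T}\,\PP'(\cE)$ for every event $\cE$ (proved by conditioning the independent Poisson counts on their total equaling $T$ and using Stirling's bound $\PP(\mathrm{Poisson}(T)=T)\ge 1/(e\sqrt{T})$). The factor $e\sqrt{T}$ is not cosmetic: after the re-parametrization $\delta'=e\sqrt{T}\,\delta$ it is exactly what places $T$ inside the logarithms of $\cB$ and inside the sample-size condition of the theorem. A proof that never performs this transfer proves a statement about a different experiment and cannot yield the theorem as stated.

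A secondary inaccuracy: applying matrix Bernstein directly to the $T$ independent rank-one summands gives a deviation term of order $\frac{nm}{T}\Vert M\Vert_\infty\log(\cdot)$, not the $\sqrt{nm/T}\,\log^{3/2}(\cdot)$ term you attribute to the almost-sure bound. Under the assumed lower bound on $T$ this deviation term happens to be dominated by the variance term, so your spectral-norm conclusion survives; but the $\log^{3/2}$ term in $\cB$ actually arises from the paper's \emph{truncated} matrix Bernstein inequality (Theorem \ref{theorem:matrix_bernstein}) applied to the $nm$ independent compound-Poisson entries, where the truncation level is a high-probability rather than almost-sure bound on each entry (Lemma \ref{lem:cpoisson-rv}, Proposition \ref{prop:reward-concentration-1}). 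That same truncation machinery is what produces the row-wise bounds $\Vert E_{\ell,:}A\Vert$ for matrices $A$ independent of row $\ell$ (Proposition \ref{prop:reward-concentration-2}), which your leave-one-out step consumes, so you cannot avoid it in any case.
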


\begin{cor}(Homogeneous reward matrix) 
\label{corr:homogeneous1}
When $m = \Theta(n)$, $\kappa = \Theta(1)$, $\mu = \Theta(1)$, $\Vert M \Vert_\infty = \Theta(1)$, $r = \Theta(1)$, we say that the reward matrix $M$ is homogeneous. In this case, for any $\delta>0$, when $T \ge c (n+m)\log^{3}\big( e(m+n)T/\delta\big)$ for some universal constant $c>0$, we have with probability at least $1-\delta$:
    \begin{align*}
        &\max\left(\Vert U - \widehat{U}(\widehat{U}^\top U) \Vert_{2 \to \infty}, \Vert V - \widehat{V} (\widehat{V}^\top V) \Vert_{2 \to \infty} \right)  \lesssim \frac{1}{\sqrt{T}}  \log^{3/2}\left( \frac{(n+m)T}{\delta}\right),   \\
        & \Vert \widehat{M} - M  \Vert_{2 \to \infty} \lesssim \frac{(n+m)}{\sqrt{T}} \log^{3/2}\left( \frac{(n+m)T}{\delta}\right),\\
        &\Vert \widehat{M} - M \Vert_{\infty}  \lesssim \sqrt{\frac{(n+m)}{T}} \log^{3/2}\left( \frac{(n+m)T}{\delta}\right).
    \end{align*}
\end{cor}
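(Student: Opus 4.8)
The plan is to obtain the corollary as a direct specialization of Theorem~\ref{thm:reward}, so the only work is tracking the dimension-dependent factors once all structural parameters are set to $\Theta(1)$. First I would record the consequences of the homogeneity assumption. Since $m=\Theta(n)$, we have $n+m=\Theta(n)$, $n\wedge m=\Theta(n)$, $\sqrt{mn}=\Theta(n)$, and $(m+n)/\sqrt{mn}=\Theta(1)$; and since $\mu=\kappa=r=\Theta(1)$ and $\Vert M\Vert_\infty=\Theta(1)$, every combined coefficient appearing in the theorem is $\Theta(1)$. In particular, the hypothesis $T\ge c\,\mu^4\kappa^2 r^2(n+m)\log^{3}(e(m+n)T/\delta)$ reduces to $T\gtrsim (n+m)\log^{3}((n+m)T/\delta)$, which is exactly the stated assumption (replacing $\log(e\,\cdot)$ by $\log(\cdot)$ only changes constants in the regime where the argument is large, so $\log(e(n+m)T/\delta)=\Theta(\log((n+m)T/\delta))$). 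Under these substitutions the three multiplicative prefactors of Theorem~\ref{thm:reward} collapse to $\Theta(n^{-3/2})$ for $(i)$, $\Theta(n^{-1/2})$ for $(ii)$, and $\Theta(n^{-1})$ for $(iii)$.

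Next I would simplify $\cB$. Writing $L=\log(e(n+m)T/\delta)$ and using $\sqrt{nm/T}=\Theta(n/\sqrt{T})$, one has
$$
\cB = \Theta\!\left(\frac{n}{\sqrt{T}}\right)\left(\Theta(\sqrt{nL})+\Theta(L^{3/2})\right).
$$
Multiplying $\cB$ by each prefactor then gives, for $(i)$, the two terms $\Theta(\sqrt{L/T})$ and $\Theta(L^{3/2}/\sqrt{nT})$; for $(ii)$, the terms $\Theta(n\sqrt{L/T})$ and $\Theta(\sqrt{n}\,L^{3/2}/\sqrt{T})$; and for $(iii)$, the terms $\Theta(\sqrt{nL/T})$ and $\Theta(L^{3/2}/\sqrt{T})$.

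The final step is to absorb the two additive contributions of $\cB$ into the single $\log^{3/2}$ expression claimed in the corollary. Because $L\ge 1$ we may bound $\sqrt{L}\le L^{3/2}$, and because $n\ge 1$ we may drop any favorable inverse power of $n$; applying these uniformly shows that in each case the "$\sqrt{L}$" term is dominated by the "$L^{3/2}$" term. This yields $(i)\lesssim L^{3/2}/\sqrt{T}$, $(ii)\lesssim n\,L^{3/2}/\sqrt{T}=\Theta((n+m)L^{3/2}/\sqrt{T})$, and $(iii)\lesssim \sqrt{n}\,L^{3/2}/\sqrt{T}=\Theta(\sqrt{(n+m)/T}\,L^{3/2})$, matching the three stated bounds. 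There is no genuine obstacle here; the only point requiring mild care is the bookkeeping in this last step, namely verifying that regardless of whether $\sqrt{nL}$ or $L^{3/2}$ dominates inside $\cB$, the product is always upper bounded by the single claimed term, which is handled by the crude inequalities $\sqrt{L}\le L^{3/2}$ and $n\ge 1$.
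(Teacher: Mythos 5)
Your proposal is correct and follows exactly the paper's route: the paper states that Corollary~\ref{corr:homogeneous1} is immediate from Theorem~\ref{thm:reward}, and your derivation is precisely that specialization, with the prefactors $\Theta(n^{-3/2})$, $\Theta(n^{-1/2})$, $\Theta(n^{-1})$ and the absorption of the $\sqrt{L}$ terms via $\sqrt{L}\le L^{3/2}$ and $n\ge 1$ all checking out. No gaps; your write-up simply makes explicit the bookkeeping the paper leaves to the reader.
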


For a homogeneous reward matrix, $\|U\|_{2\to\infty}=\Theta(1/\sqrt{m})$ and $\| M\|_{\infty}=\Theta(1)$, and hence, from the above corollary, we obtain estimates whose relative errors (e.g., $\Vert \widehat{M} - M \Vert_{\infty}/\| M\|_{\infty}$) scale at most as $\sqrt{m/T}$ up to the logarithmic factor.

We may also compare the results of the above corollary to those of Theorem 4.4 presented in \cite{chen2021spectral}. There, the data consists for each pair $(i,j)$ of a noisy observation $M_{i,j}+E_{i,j}$. The $E_{i,j}$'s are independent across $(i,j)$. This model is simpler than ours and does not include any correlation in the data. But it roughly corresponds to the case where $T=nm$ in our Model I. Despite having to deal with correlations, we obtain similar results as those of Theorem 4.4: for example, $\Vert \widehat{M} - M \Vert_{\infty}\lesssim \sqrt{1/(n+m)}$ (up to logarithmic terms) with high probability.

\subsection{Transition matrices under the generative model}
\label{subsec:transition_theorem_generative}

For Model II(a), the matrix $\tM$ records the empirical frequencies of the transitions: for any pair of states $(i,j)$, $\tM_{i,j} =  \frac{1}{\lfloor T/2\rfloor} \sum_{k=1}^{\lfloor T/2\rfloor}\indicator_{ \{ (x_{2k-1},x_{2k})=(i,j)\} }$. $\hM$ is the best rank-$r$ approximation of $\tM$ and the estimate $\hP$ of the transition matrix $P$ is obtained normalizing the rows of $\hM$: for all $i\in [n]$,
\begin{align}
\hP_{i,:} = \begin{cases}
                (\hM_{i,:})_+/ \| (\hM_{i,:})_+ \|_1, \quad &\textif \ \| {(\hM_{i,:})_+}\|_1 > 0,\\
                \frac{1}{n} \onevec_n, \quad &\textif\ \| (\hM_{i,:})_+\|_1 = 0.
                \end{cases}
    \label{def:P_hat_MC_general}
\end{align}
where $(\cdot)_+$ is the function applying $\max(0,\cdot)$ component-wise and $\onevec_n$ is the $n$-dimensional vector of ones. The next theorem is a simplified version and a consequence of a more general and tighter theorem presented in App. \ref{subsec:appendix_tight_thm_generative}. To simplify the presentation of our results, we define\\ $g(M,T,\delta)=n \log(\frac{n \sqrt{T}}{\delta})\max \left\{ \mu^6 \kappa^6 r^3  , \frac{\log (\frac{n\sqrt{T}}{\delta}) \indicator_{\{ \exists \ell: T\| M_{\ell,:} \|_{\infty}\leq 1\}}}{\log(1+\frac{1}{T\| M\|_{\infty}})} \right\}$. 

\begin{thm}\label{thm:generative} Let $\delta>0$. Introduce ${\cal B} =  \mu\kappa \sqrt{(r\|M\|_{\infty} /T) \log ( {n\sqrt{T} / \delta})}$. Assume that we have  $(\nu_0)_{\min} = \min_{i\in [n]} (\nu_0)_i >0$. If (a) $n\ge c\log^2(n T^{3/2}/\delta)$ and (b) $T\ge cg(M,T,\delta)$ for some universal constant $c>0$, then there exists a universal constant $C>0$ such that the following inequalities hold with probability at least $1-\delta$: 
\begin{align*}
        (i)& \qquad \max\Big\{ \| U  - \hU (\hU^\top U)\|_{2\to\infty}, \| V - \hV (\hV^\top V)\|_{2\to\infty} \Big\} \le C\frac{\kappa \mu^2 r}{n\|M\|_{\infty}} {\cal B},\\
        (ii)& \qquad \| \hM - M\|_{2\to\infty} \le C\kappa {\cal B}, \ \ \ \| \hP - P\|_{1\to\infty} \le C  \frac{\kappa\sqrt{n}}{(\nu_0)_{\min}} {\cal B},\\
        (iii)& \qquad  \|\hM - M \|_{\infty} \le C\frac{\kappa  \mu^2 r}{\sqrt{n}}  {\cal B},\\
        (iv)& \qquad \|\hP - P\|_{\infty} \le  C \frac{{\cal B}}{(\nu_0)_{\min}} \left[ \sqrt{n}\kappa\frac{\|M\|_{\infty}}{ (\nu_0)_{\min}} + \left(1+ \frac{\kappa {\cal B}}{ \sqrt{n}\norm{M}_{\infty}}\right) \frac{\kappa\mu^2 r}{\sqrt{n}} \right],
    \end{align*}
where (iv) holds if in addition 
$T\geq c n \|M\|_{\infty}(\nu_0)_{\min}^{-2} r\mu^2 \kappa^4 \log(n\sqrt{T}/\delta)$
\end{thm}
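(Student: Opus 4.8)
The plan is to exploit the key structural fact that under the generative model the transition pairs $(x_{2k-1},x_{2k})$, $k=1,\dots,\lfloor T/2\rfloor$, are i.i.d., with $\PP((x_{2k-1},x_{2k})=(i,j))=(\nu_0)_i P_{ij}$. Hence $\tM=\frac{1}{\lfloor T/2\rfloor}\sum_k e_{x_{2k-1}}e_{x_{2k}}^\top$ is an average of i.i.d.\ rank-one indicator matrices with $\EE[\tM]=\mathrm{diag}(\nu_0)P=:M$, a rank-$r$ matrix whose $i$-th row has $\ell_1$-norm $(\nu_0)_i$ and satisfies $P_{i,:}=M_{i,:}/(\nu_0)_i$. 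Writing $\tM=M+E$ with $E$ the centred sampling noise, the entries of $E$ are not independent (the counts are jointly multinomial), so I would first Poissonise: replace $\lfloor T/2\rfloor$ by an independent $\mathrm{Poisson}(\lfloor T/2\rfloor)$ number of samples, which makes the counts $\{\lfloor T/2\rfloor\,\tM_{ij}\}$ independent Poisson variables with means $\lfloor T/2\rfloor M_{ij}$, and absorb the de-Poissonisation cost into the failure probability $\delta$. After this step $E$ has independent, mean-zero, bounded entries with variances $\asymp M_{ij}/T\le\|M\|_\infty/T$.

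\emph{Concentration of the noise.} The scale $\cB=\mu\kappa\sqrt{(r\|M\|_\infty/T)\log(n\sqrt T/\delta)}$ is exactly the high-probability size of the relevant norms of $E$. I would bound $\|E\|$ by matrix Bernstein (each summand has unit spectral norm and the per-entry variance proxy is $M_{ij}/T$, so the matrix variance statistic is $\asymp\|M\|_\infty n/T$), and bound the row norms $\|E_{i,:}\|_2$ by scalar Bernstein/Bennett applied to the independent Poissonised counts. The sample-size condition (b), $T\gtrsim g(M,T,\delta)$, is tuned so that $\|E\|\ll\sigma_r=\sigma_r(M)$, which guarantees via Weyl's inequality that the top-$r$ singular value of $\tM$ is well separated and that Davis--Kahan applies. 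The indicator term and the $\log(1+1/(T\|M\|_\infty))^{-1}$ factor inside $g$ handle the small-count regime in which some row $M_{\ell,:}$ has all entries $\lesssim 1/T$: there Bernstein's sub-exponential bound degrades and one must instead invoke Bennett's inequality, which is what fixes the required sample size in that corner case.

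\emph{Subspace recovery and matrix error.} With these noise bounds in hand, the argument mirrors Theorem~\ref{thm:reward}. Davis--Kahan together with the spectral bound on $E$ controls the singular subspaces in spectral norm; to upgrade to the $\|\cdot\|_{2\to\infty}$ bounds in (i) I would run the leave-one-out construction of \cite{chen2021spectral}: for each $i$ form $\tM^{(i)}$ by replacing the $i$-th row (and, by the two-sided argument, the $i$-th column) of $\tM$ by its expectation, so that its leading subspace $\hU^{(i)}$ is independent of the $i$-th row of $E$; then show $\hU^{(i)}\approx\hU$ in spectral norm and combine. This gives (i), after which expanding $\hM-M=\hU\widehat{\Sigma}\hV^\top-U\Sigma V^\top$ around the aligned subspaces yields the $\|\hM-M\|_{2\to\infty}$ bound in (ii) and the entry-wise bound (iii), the factors $\kappa,\mu,r$ tracking condition number, incoherence and rank exactly as in Model~I.

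\emph{From $\hM$ to $\hP$, and the main obstacle.} For the transition-matrix bounds I would propagate the error of $\hM$ through the row normalisation. Since $P_{i,:}=M_{i,:}/(\nu_0)_i$ and $\hP_{i,:}=(\hM_{i,:})_+/\|(\hM_{i,:})_+\|_1$, I would first lower-bound the denominator, $\|(\hM_{i,:})_+\|_1\ge(\nu_0)_i-\sqrt n\,\|\hM-M\|_{2\to\infty}\ge(\nu_0)_{\min}/2$, under the stated sample sizes (the stronger requirement attached to (iv) is precisely what makes $\sqrt n$ times the row error negligible against $(\nu_0)_{\min}$). A first-order expansion of the ratio then gives $\|\hP_{i,:}-P_{i,:}\|_1\lesssim(\nu_0)_i^{-1}\|\hM_{i,:}-M_{i,:}\|_1\le\sqrt n\,(\nu_0)_{\min}^{-1}\|\hM-M\|_{2\to\infty}$, which is the $1\to\infty$ bound in (ii); carrying out the same expansion entry-wise and separating the numerator error from the relative denominator error (the source of the $1+\kappa\cB/(\sqrt n\|M\|_\infty)$ factor and of the two additive terms) yields (iv). The main obstacle is the concentration step in the small-signal regime: because the per-entry variances $M_{ij}/T$ can be as small as $1/T$ and the entries are neither sub-Gaussian nor homoscedastic, obtaining $\|\cdot\|_{2\to\infty}$ and $\|\cdot\|_\infty$ bounds of the sharp order $\cB$ requires the Poissonisation-plus-Bennett treatment above, and the leave-one-out decoupling must be executed with these heavy-tailed, heteroscedastic rows — considerably more delicate than the bounded additive noise of Model~I.
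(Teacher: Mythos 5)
Your proposal is correct and follows essentially the same route as the paper: Poissonization of the multinomial counts with the $e\sqrt{T}$ de-Poissonization cost absorbed into $\delta$ (the paper's Lemmas \ref{lemma:hetero_poisson_multi} and \ref{lemma:multinomial_to_poisson}), truncated Bernstein/Bennett concentration for the heteroscedastic Poisson noise including the small-count regime encoded in $g$ (Lemmas \ref{lem:spectral_Poisson} and \ref{lemma:El_A_conc}), leave-one-out decoupling for the $\|\cdot\|_{2\to\infty}$ subspace bound (Lemma \ref{lemma:U_to_EU}), and propagation through the row normalization for the $\hP$ bounds (Lemmas \ref{lem:P-one-to-infty} and \ref{lemma:hP_infty_bound}). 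The only slip is calling the Poissonized noise entries ``bounded'' — they are unbounded, which is exactly why the truncation step is needed — but you correct this yourself when flagging the heavy-tailed rows as the main technical obstacle.
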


Note that in theorem, the condition (a) on $n$ has been introduced just to simplify the expression of ${\cal B}$ (refer to App. \ref{subsec:appendix_tight_thm_generative} for a full statement of the theorem without this condition).

\begin{cor}(Homogeneous transition matrix)
\label{corr:homogeneous_model2}
When $\kappa = \Theta(1)$, $\mu = \Theta(1)$, $r= \Theta(1)$, $M_{\max} = \Theta(M_{\min})$, we say that the frequency matrix $M$ is homogeneous. If $T\ge c n\log(nT)$ for some universal constant $c>0$, then we have with probability at least $1- \min\lbrace n^{-2}, T^{-1} \rbrace$:
\begin{align*}
        &\qquad \max\Big\{  \|U  - \hU (\hU^\top U)\|_{2\to\infty}, \|V - \hV (\hV^\top V) \|_{2\to\infty} \Big\} \lesssim  \sqrt{{\log( nT)\over T}},\\
        &\qquad \|\hM - M\|_{2\to\infty} \lesssim \frac{1}{n}\sqrt{{\log( nT)\over T}},\ \| \hM - M\|_{\infty} \lesssim \frac{1}{n}\sqrt{{\log( nT)\over nT}},\\
        &\qquad \| \hP-P\|_{1\to\infty} \lesssim \sqrt{\frac{n\log( nT)}{T}},\ \| \hP-P\|_{\infty}\lesssim  \sqrt{{\log(nT)\over nT}}.
    \end{align*}
\end{cor}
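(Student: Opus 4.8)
\textbf{Proof proposal for Corollary~\ref{corr:homogeneous_model2}.}
The plan is to simply specialize Theorem~\ref{thm:generative} to the homogeneous regime and simplify each bound. Under the homogeneous assumptions $\kappa=\Theta(1)$, $\mu=\Theta(1)$, $r=\Theta(1)$, and $M_{\max}=\Theta(M_{\min})$, the incoherence, conditioning, and rank factors all collapse to constants, so the main task is to track how $\|M\|_\infty$, $n$, $T$, and $(\nu_0)_{\min}$ enter the various expressions. The first thing I would establish is the order of magnitude of $\|M\|_\infty$: since $M=\diag(\nu)P$ has nonnegative entries summing to $1$ and is homogeneous, each of its $\Theta(n^2)$ effective entries is $\Theta(1/n^2)$, so $\|M\|_\infty=\Theta(1/n^2)$. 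Likewise, in the homogeneous case one can take $(\nu_0)_{\min}=\Theta(1/n)$ (uniform initialization), and $\nu$ is near-uniform so $\nu_{\min}=\Theta(1/n)$. These scalings are what drive every simplification.

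Next I would evaluate the baseline quantity ${\cal B}=\mu\kappa\sqrt{(r\|M\|_\infty/T)\log(n\sqrt{T}/\delta)}$. Substituting $\|M\|_\infty=\Theta(1/n^2)$ and the constant factors gives ${\cal B}=\Theta\!\big(\frac{1}{n}\sqrt{\log(n\sqrt{T}/\delta)/T}\big)$. I would then pick the confidence level $\delta=\min\{n^{-2},T^{-1}\}$ as stated, so that $\log(n\sqrt{T}/\delta)=\Theta(\log(nT))$, which is what produces the $\log(nT)$ appearing in every displayed bound; with this choice the high-probability event of Theorem~\ref{thm:generative} holds with probability at least $1-\delta=1-\min\{n^{-2},T^{-1}\}$. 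Plugging ${\cal B}=\Theta\!\big(\frac{1}{n}\sqrt{\log(nT)/T}\big)$ into parts (i)--(iv) of the theorem is then mechanical: for (i) the prefactor is $\kappa\mu^2 r/(n\|M\|_\infty)=\Theta(n)$, which cancels the $1/n$ in ${\cal B}$ to yield $\sqrt{\log(nT)/T}$; for the subspace and matrix bounds the prefactors $\kappa$ and $\kappa\mu^2 r/\sqrt{n}$ give $\|\hM-M\|_{2\to\infty}\lesssim\frac1n\sqrt{\log(nT)/T}$ and $\|\hM-M\|_\infty\lesssim\frac{1}{\sqrt n}{\cal B}=\frac1n\sqrt{\log(nT)/(nT)}$; and the transition-matrix bounds follow from the prefactors $\kappa\sqrt n/(\nu_0)_{\min}=\Theta(n^{3/2})$ and the bracketed expression in (iv).

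Before reporting the final bounds I would verify the sample-size hypotheses. The generic condition (b) $T\ge c\,g(M,T,\delta)$ must reduce to the stated $T\ge c\,n\log(nT)$. In the homogeneous case the first branch of the max in $g$ is $\Theta(n\log(n\sqrt T/\delta))=\Theta(n\log(nT))$; I would check that the second branch, which is active only when some row satisfies $T\|M_{\ell,:}\|_\infty\le 1$, is dominated in this regime, so that $g=\Theta(n\log(nT))$ and condition (b) becomes exactly $T\gtrsim n\log(nT)$. Condition (a), $n\ge c\log^2(nT^{3/2}/\delta)$, is a mild lower bound on $n$ that I would simply assume holds in the stated regime. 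Finally, for part (iv) to apply I must confirm the extra requirement $T\ge c\,n\|M\|_\infty(\nu_0)_{\min}^{-2}r\mu^2\kappa^4\log(n\sqrt T/\delta)$; substituting $\|M\|_\infty=\Theta(1/n^2)$ and $(\nu_0)_{\min}=\Theta(1/n)$ makes the prefactor $n\cdot n^{-2}\cdot n^2=\Theta(n)$, so this too reduces to $T\gtrsim n\log(nT)$ and is subsumed by the stated hypothesis. I would also check that the correction factor $\big(1+\kappa{\cal B}/(\sqrt n\|M\|_\infty)\big)$ in (iv) is $\Theta(1)$ under $T\gtrsim n\log(nT)$, since then $\kappa{\cal B}/(\sqrt n\|M\|_\infty)=\Theta\!\big(\sqrt{n\log(nT)/T}\big)=O(1)$, leaving the leading term $\sqrt n\kappa\|M\|_\infty/(\nu_0)_{\min}^2$ to dominate and producing $\|\hP-P\|_\infty\lesssim\sqrt{\log(nT)/(nT)}$.

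The calculation is essentially routine once the scalings are fixed, so there is no deep obstacle; the one place demanding genuine care is the bookkeeping of $\|M\|_\infty=\Theta(1/n^2)$ versus $(\nu_0)_{\min}=\Theta(1/n)$ across the two transition-matrix bounds, since these powers of $n$ combine differently in $\|\hP-P\|_{1\to\infty}$ (which keeps a $\sqrt n$ factor) and in $\|\hP-P\|_\infty$ (where the factors cancel to give the clean $\sqrt{\log(nT)/(nT)}$ rate). I would double-check the $\|\hP-P\|_\infty$ derivation most carefully, ensuring the dominant bracket term indeed simplifies as claimed and that the homogeneity assumption $M_{\max}=\Theta(M_{\min})$ is what guarantees $\|M\|_\infty=\Theta(1/n^2)$ rather than something larger.
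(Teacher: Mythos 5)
Your homogeneous scalings ($\|M\|_\infty=\Theta(1/n^2)$, row sums $\Theta(1/n)$, $\sigma$-level $\delta=\min\{n^{-2},T^{-1}\}$ giving $\log(n\sqrt T/\delta)=\Theta(\log(nT))$) and the resulting substitutions into parts (i)--(iv) are all correct, and in spirit this is the same specialization the paper performs. However, there is one genuine gap: you invoke Theorem \ref{thm:generative}, whose hypothesis (a) requires $n\ge c\log^2(nT^{3/2}/\delta)$, and you dispose of it by saying you would ``simply assume'' it. Corollary \ref{corr:homogeneous_model2} makes no assumption on $n$ beyond $T\ge cn\log(nT)$; in particular it covers regimes with $n$ small relative to $\log T$ (e.g. $n$ fixed, $T\to\infty$), where condition (a) fails and Theorem \ref{thm:generative} simply cannot be applied. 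This is exactly why the paper does not prove the corollary from Theorem \ref{thm:generative}: as stated in Remark \ref{rem:thm_gen_corollary}, the corollary is deduced from the tighter Theorem \ref{thm:multinomial_sampling_tight} in Appendix \ref{subsec:appendix_tight_thm_generative}, which carries no lower bound on $n$ --- condition (a) was introduced in Theorem \ref{thm:generative} only to simplify ${\cal B}'$ to ${\cal B}$, and under homogeneity that simplification can be carried out directly from ${\cal B}'$ (using $\sigma_r(M)=\Theta(1/n)$ and ${\cal A}=\Theta(1/\sqrt{nT})$) without any constraint on $n$. So as written your argument proves a strictly weaker statement; the fix is to run your (otherwise correct) substitutions on Theorem \ref{thm:multinomial_sampling_tight} instead. (A minor related point: $(\nu_0)_{\min}=\Theta(1/n)$ is not a choice of ``uniform initialization'' --- $\nu_0$ is given --- but a consequence of homogeneity, since in the generative model the row sums of the estimated frequency matrix are the $(\nu_0)_i$.)

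A secondary soft spot is your treatment of condition (b). You assert that the second branch of the max in $g(M,T,\delta)$ ``is dominated'' in the homogeneous regime, but when the indicator is active with $n^2/T=O(1)$ --- i.e. $T$ just below $n^2$ --- the ratio $\log(n\sqrt T/\delta)/\log\bigl(1+1/(T\|M\|_\infty)\bigr)$ is $\Theta(\log(nT))$, not $O(1)$, so $g=\Theta(n\log^2(nT))$ there. The paper itself concedes in Remark \ref{rem:thm_gen_corollary} that in the window $T\asymp[n^{2-\epsilon},n^2]$ an extra logarithmic factor appears as an artifact of splitting the concentration analysis into the cases $T\lesssim n^2$ and $T\gtrsim n^2$. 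A careful write-up must either verify that $T\ge cg(M,T,\delta)$ still holds in that window or record the same caveat, rather than claim domination outright.
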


For a homogeneous frequency matrix, $\|U\|_{2\to\infty}=\Theta(1/\sqrt{n})$, $\| M \|_{2\to\infty} =\Theta(1/{n\sqrt{n}})$, $\| M \|_{\infty}=\Theta(1/n^2)$, $\| P \|_{1\to\infty} =1 $, $\| P \|_{\infty}=\Theta(1/{n})$. Thus for all these metrics, our estimates achieve a relative error scaling at most as $\sqrt{n/T}$ up to the logarithmic factor.

\subsection{Transition matrices under the forward model}
\label{subsec:transition_theorem_forward}

For Model II(b), we first split the data into $\tau$ subsets of transitions: for $k=1,\ldots,\tau$, the $k$-th subset is $((x_{k},x_{k+1}),(x_{k+\tau},x_{k+1+\tau}),\dots,(x_{k+(T_{\tau}-1)\tau},x_{k+1+(T_{\tau}-1) \tau}))$ where $T_\tau=\lfloor T/\tau\rfloor$. By separating two transitions in the same subset, we break the inherent correlations in the data if $\tau$ is large enough. Now we let $\tM^{(k)}$ be the matrix recording the empirical frequencies of the transitions in the $k$-th subset: 
$\tM_{i,j}^{(k)}=  \frac{1}{T_{\tau}} \sum_{l=0}^{T_{\tau}-1} \indicator_{\{ (x_{k+l\tau},x_{k+1+l\tau})=(i,j)\} }$
for any pair of states $(i,j)$. Let $\hM^{(k)}$ be the best $r$-rank approximation of $\tM^{(k)}$. As in (\ref{def:P_hat_MC_general}), we define the corresponding $\hP^{(k)}$. Finally we may aggregate these estimates $\hM=\frac{1}{\tau}\sum_{k=1}^{\tau} \hM^{(k)}$ and $\hP=\frac{1}{\tau}\sum_{k=1}^{\tau} \hP^{(k)}$. We present below the performance analysis for the estimates coming from a single subset; the analysis of the aggregate estimates easily follows. 

For any $\varepsilon>0$, we define the $\varepsilon$-mixing time of the Markov chain with transition matrix $P$ as $\tau(\varepsilon) = \min \{ t\geq 1:\ \max_{1\leq i\leq n} \frac{1}{2} \| P^t_{i,:} - \nu^\top\|_1 \leq \varepsilon\}$, and its mixing time as $\tau^\star = \tau(1/4)$. The next theorem is a simplified version and a consequence of a more general and tighter theorem presented in App. \ref{subsec:appendix_tight_thm_forward}. To simplify the presentation, we define:\\
$
h(M,T,\delta)=n\tau^\star\log(\frac{n \sqrt{T}}{\delta}) \log(T\nu_{\min}^{-1} ) 
\max \left \{ \mu^6 \kappa^6 r^3  , \frac{\log^2( \frac{n\sqrt{T_{\tau}}}{\delta}) \indicator_{\{ \exists \ell: T_{\tau} \| M_{\ell,:} \|_{\infty}\leq 1 \}} }{\log^2(1+\frac{1}{T_{\tau}\| M\|_{\infty}}) } \right\}.
$

\begin{thm}
\label{thm:main_thm}
Let $\delta>0$. Assume that $\nu_{\min} = \min_{i\in [n]} \nu_i >0$ and that $\tau/(\tau^\star \log (T\nu_{\min}^{-1}))\in [c_1,c_2]$ for some universal constants $c_2> c_1\ge 2$. Introduce:
    \begin{align*}
        {\cal B} =  \mu\kappa \sqrt{\frac{r\tau^\star  \| M\|_{\infty}}{T} \log \left( \frac{n\sqrt{T_{\tau}}}{\delta}\right) \log\left(\frac{T}{\nu_{\min}}\right)}.
    \end{align*}
If (a) $n\ge c\tau^\star \log^{3/2}(n T^{3/2}/\delta) \log^{1/2}(T\nu_{\min}^{-1})$ and (b) $T\ge ch(M,T,\delta)$ for some universal constant $c>0$, then there exists a universal constant $C>0$ such that the following inequalities hold with probability at least $1-\delta$: 
\begin{align*}
        (i)& \qquad \max\Big\{ \| U  - \hU (\hU^\top U)\|_{2\to\infty}, \| V - \hV (\hV^\top V)\|_{2\to\infty} \Big\} \le C\frac{\kappa \mu^2 r}{n\|M\|_{\infty}} {\cal B},\\
        (ii)& \qquad \| \hM - M\|_{2\to\infty} \le C\kappa {\cal B}, \ \ \ \| \hP - P\|_{1\to\infty} \le C  \frac{\kappa\sqrt{n}}{\nu_{\min}} {\cal B},\\
        (iii)& \qquad  \|\hM - M \|_{\infty} \le C\frac{\kappa  \mu^2 r}{\sqrt{n}}  {\cal B},\\
        (iv)& \qquad \|\hP - P\|_{\infty} \le  C \frac{{\cal B}}{\nu_{\min}} \left[ \sqrt{n}\kappa\frac{\|M\|_{\infty}}{ \nu_{\min}} + \left(1+ \frac{\kappa {\cal B}}{ \sqrt{n}\norm{M}_{\infty}}\right) \frac{\kappa\mu^2 r}{\sqrt{n}} \right],
    \end{align*}
where (iv) holds if in addition $T\geq c n \|M\|_{\infty}\nu_{\min}^{-2} \tau^\star r\mu^2 \kappa^4 \log(n\sqrt{T}/\delta)\log(T\nu_{\min}^{-1})$.    
\end{thm}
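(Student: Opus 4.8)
\textbf{Proof proposal for Theorem \ref{thm:main_thm}.}

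The plan is to reduce the forward-model analysis to the generative-model analysis of Theorem \ref{thm:generative}, which has already been established, by exploiting the decorrelation achieved through the subsampling construction. The key observation is that a single subset $\tM^{(k)}$ records transitions $(x_{k+l\tau}, x_{k+1+l\tau})$ for $l = 0, \ldots, T_\tau-1$ that are spaced $\tau$ steps apart along the trajectory. Since $\tau$ is chosen to be a constant multiple of $\tau^\star \log(T\nu_{\min}^{-1})$, the chain nearly reaches stationarity between the start of consecutive sampled transitions, so the initial states $x_{k+l\tau}$ are approximately i.i.d.\ draws from $\nu$ (up to a mixing error of order $\nu_{\min}/T$). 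This is precisely the situation of Model II(a) with the sampling distribution $\nu_0$ replaced by $\nu$. First I would make this intuition rigorous via a coupling or Poisson-approximation argument, showing that $\tM^{(k)}$ is, up to an event of small probability, stochastically close to the empirical frequency matrix one would obtain under the generative model with $\lfloor T_\tau \rfloor$ independent samples drawn from $\nu$.

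Second, I would control the mixing error quantitatively. The definition of the mixing time $\tau^\star = \tau(1/4)$ together with the submultiplicativity $\tau(\varepsilon) \le \tau^\star \lceil \log_2(1/\varepsilon) \rceil$ gives that after $\tau \asymp \tau^\star \log(T\nu_{\min}^{-1})$ steps the total-variation distance to $\nu$ is at most of order $1/T$. This is the role of the factor $\log(T\nu_{\min}^{-1})$ appearing in both the hypothesis on $\tau$ and inside $\cB$. I would then invoke Theorem \ref{thm:generative} on the decorrelated samples, with the effective sample count being $T_\tau = \lfloor T/\tau \rfloor \asymp T/(\tau^\star \log(T\nu_{\min}^{-1}))$, and with $(\nu_0)_{\min}$ replaced by $\nu_{\min}$. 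Substituting $T_\tau$ for $T$ into the generative bound $\mu\kappa\sqrt{(r\|M\|_\infty/T)\log(n\sqrt{T}/\delta)}$ produces exactly the factor $\sqrt{r\tau^\star\|M\|_\infty/T \cdot \log(n\sqrt{T_\tau}/\delta)\log(T/\nu_{\min})}$ in the definition of $\cB$ for the forward model, and the conditions (a), (b) transform into their stated forms with the extra $\tau^\star$ and $\log(T\nu_{\min}^{-1})$ factors. The four inequalities $(i)$--$(iv)$ then follow with the same prefactors as in Theorem \ref{thm:generative}, since these prefactors depend only on the structural parameters $\kappa, \mu, r, n, \|M\|_\infty$ and not on the sampling mechanism.

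The main obstacle will be step one: making the decorrelation fully rigorous while retaining the sharp entry-wise guarantees. Simply bounding total-variation distance between the joint law of the sampled pairs and a product measure is too crude, because a naive union bound over the $T_\tau$ transitions would contribute an error of order $T_\tau \cdot (1/T)$, which need not be negligible. Instead I would employ a Poisson-approximation / blocking technique: partition the trajectory so that the sampled transitions within one subset fall into well-separated blocks, couple each block's initial state to a fresh draw from $\nu$, and track the accumulated coupling error across blocks using the geometric decay of correlations. The delicate point is that the leave-one-out machinery underlying the entry-wise bounds of Theorem \ref{thm:generative} must remain valid under this coupling; concretely, one must ensure that the independence structure the leave-one-out argument relies on (removing one row or one sampled transition and re-running the spectral estimator) is preserved after the coupling, so that the perturbation introduced by replacing the Markovian samples with independent ones does not inflate the $\ell_{2\to\infty}$ or $\ell_\infty$ errors. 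Once this coupling is controlled and shown to affect the spectral estimate only through a lower-order additive term absorbed into the constant $C$, aggregating over the $\tau$ subsets via $\hM = \tau^{-1}\sum_k \hM^{(k)}$ and $\hP = \tau^{-1}\sum_k \hP^{(k)}$ preserves the bounds by convexity of norms, completing the proof.
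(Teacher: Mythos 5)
Your high-level architecture is the same as the paper's: decorrelate the trajectory via the $\tau$-spaced subsampling, note that the resulting pairs are approximately i.i.d.\ draws from the stationary pair distribution (i.e.\ the generative model with $\nu_0=\nu$ and $T_\tau$ samples), control mixing so that the $\log(T\nu_{\min}^{-1})$ factor appears, run the independent-entry/leave-one-out machinery, and aggregate over the $\tau$ subsets. That is exactly Steps 1--3 of the paper's proof (Lemmas \ref{lemma:markov_to_multinomial}, \ref{lemma:multinomial_to_poisson}, and \ref{lemma:U_to_EU}, assembled in Theorem \ref{thm:main_thm_tight}). However, the way you execute the crucial decorrelation step does not work, and the failure is precisely the one you flag yourself and then do not escape.

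The gap is that any coupling-based reduction produces an \emph{additive} probability defect, while the theorem requires a guarantee at \emph{every} confidence level $\delta$. With $\tau \asymp \tau^\star \log(T\nu_{\min}^{-1})$ fixed by hypothesis, geometric mixing gives a per-block total-variation error no smaller than polynomial in $\nu_{\min}/T$, so the event that your block-wise coupling fails somewhere has probability of a fixed, at best polynomially small, size --- independent of $\delta$. This defect cannot be ``absorbed into the constant $C$'': on the coupling-failure event the Markovian matrix and its i.i.d.\ surrogate are simply different objects, so the defect adds to the failure probability, not to the estimation error. Since conditions (a)--(b) allow $\delta$ to be exponentially small in $n$ (and the corollary already takes $\delta=\min\{n^{-2},T^{-1}\}$), the conclusion ``with probability at least $1-\delta$'' is lost exactly in the regime where the theorem still claims it. The paper's device for avoiding this is a \emph{multiplicative} (likelihood-ratio) comparison rather than a coupling: Lemma \ref{lemma:markov_to_multinomial} shows atom-by-atom that $\PP(N=z)\le 3\,\PP(Z=z)$ for the multinomial surrogate $Z$, by bounding the ratio of the product of $P^\tau(\cdot\,|\,\cdot)$ factors to the product of $\nu(\cdot)$ factors using $\max_i\|P^\tau_{i,:}-\nu^\top\|_1\le \nu_{\min}/(eT)$; Lemma \ref{lemma:multinomial_to_poisson} then gives $\PP(Z\in\cZ)\le e\sqrt{T_\tau}\,\PP(Y\in\cZ)$ for the independent Poisson matrix $Y$. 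A multiplicative inflation factor $3e\sqrt{T_\tau}$ is harmless for arbitrary $\delta$: one simply runs the independent-entry analysis at level $\delta/(3e\sqrt{T_\tau})$, which costs only the logarithmic terms $\log(n\sqrt{T_\tau}/\delta)$ visible in $\cB$. This also disposes of your worry about the leave-one-out argument surviving the reduction: under the comparison-inequality route, the entire leave-one-out analysis is carried out for a genuinely independent Poisson matrix, and only event probabilities are transferred back, so no independence structure needs to be preserved through a coupling. To repair your proposal you would need to replace the coupling in your Step 1 by such a change-of-measure bound; the rest of your outline would then go through essentially as the paper does it.
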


Note that our guarantees hold when $\tau$ roughly scales as $\tau^\star \log (T\nu_{\min}^{-1})$. Hence to select $\tau$, one would need an idea of the latter quantity. It can be estimated typically using $\tau^{\star}\nu_{\min}^{-1}$ samples \cite{wolfer19} (which is small when compared to the constraint $T\ge ch(M,T,\delta)$ as soon as $\nu_{\min}=\Omega(1/n)$). Further observe that in the theorem, the condition (a) can be removed (refer to App. \ref{subsec:appendix_tight_thm_forward} for a full statement of the theorem without this condition).

\begin{cor}(Homogeneous transition matrices)
\label{corr:homogeneous_model3}
Assume that $M$ is homogeneous (as defined in Corollary \ref{corr:homogeneous_model2}). Let $\tau=\log(Tn)$. If $T\ge c n\log^2(nT)$ for some universal constant $c>0$, then we have with probability at least $1-\min\{n^{-2},T^{-1}\}$:
\begin{align*}
    &\qquad \max\Big\{  \|U  - \hU (\hU^\top U)\|_{2\to\infty}, \|V - \hV (\hV^\top V) \|_{2\to\infty} \Big\} \lesssim \frac{1}{\sqrt{T}}\log(nT),\\
    &\qquad \|\hM - M\|_{2\to\infty} \lesssim \frac{1}{n\sqrt{T}}\log(nT),\ \| \hM - M\|_{\infty} \lesssim \frac{1}{n\sqrt{nT}}\log(nT),\\
    &\qquad \| \hP-P\|_{1\to\infty} \lesssim \sqrt{\frac{n}{T}}\log(nT),\ \| \hP-P\|_{\infty}\lesssim \frac{1}{\sqrt{nT}} \log(nT).
\end{align*}
\end{cor}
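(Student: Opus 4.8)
The plan is to derive the corollary by instantiating Theorem~\ref{thm:main_thm} with the homogeneous parameter values and the choices $\tau=\log(Tn)$ and $\delta=\min\{n^{-2},T^{-1}\}$; the per-subset bounds of the theorem then carry over to the aggregate estimate by averaging. First I would record the scalings forced by homogeneity. Since the rows of $P$ sum to one and $\sum_{i,j}M_{i,j}=\sum_i\nu_i=1$, the condition $M_{\max}=\Theta(M_{\min})$ gives $M_{i,j}=\Theta(1/n^2)$ for every $(i,j)$; summing over a row yields $\nu_i=\Theta(1/n)$, hence $\nu_{\min}=\Theta(1/n)$ and $P_{i,j}=M_{i,j}/\nu_i=\Theta(1/n)$ for all $(i,j)$. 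The latter is a uniform minorization $P_{i,j}\ge c\,\nu_j$, so by Doeblin's condition the chain is uniformly ergodic with $\tau^\star=\Theta(1)$. These same scalings give $\|M\|_\infty=\Theta(1/n^2)$, $\|U\|_{2\to\infty}=\Theta(1/\sqrt n)$, $\|M\|_{2\to\infty}=\Theta(1/(n\sqrt n))$, $\|P\|_{1\to\infty}=1$ and $\|P\|_\infty=\Theta(1/n)$, i.e. precisely the quantities against which the stated relative errors are measured.

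Next I would check the theorem's hypotheses and simplify $\mathcal{B}$. With $\tau^\star=\Theta(1)$ and $\nu_{\min}=\Theta(1/n)$ one has $\tau^\star\log(T\nu_{\min}^{-1})=\Theta(\log(Tn))$, so the choice $\tau=\log(Tn)$ makes $\tau/(\tau^\star\log(T\nu_{\min}^{-1}))=\Theta(1)$, placing it in $[c_1,c_2]$ as required. Taking $\delta=\min\{n^{-2},T^{-1}\}$ makes every logarithm in the statement ($\log(n\sqrt T/\delta)$, $\log(n\sqrt{T_\tau}/\delta)$, $\log(T\nu_{\min}^{-1})$, and those in conditions (a)--(b)) equal to $\Theta(\log(nT))$, and produces the probability $1-\min\{n^{-2},T^{-1}\}$. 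Substituting $\mu,\kappa,r,\tau^\star=\Theta(1)$ and $\|M\|_\infty=\Theta(1/n^2)$ into the definition of $\mathcal{B}$ then gives $\mathcal{B}=\Theta\!\big(\tfrac{1}{n\sqrt T}\log(nT)\big)$.

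With this value in hand, the four bounds follow by plugging in the prefactors. In (i), the factor $\kappa\mu^2 r/(n\|M\|_\infty)=\Theta(n)$ turns $\mathcal{B}$ into $\Theta(\log(nT)/\sqrt T)$; in (ii), $\kappa\mathcal{B}=\Theta(\tfrac{1}{n\sqrt T}\log(nT))$ and $\tfrac{\kappa\sqrt n}{\nu_{\min}}\mathcal{B}=\Theta(\sqrt{n/T}\,\log(nT))$; in (iii), $\tfrac{\kappa\mu^2 r}{\sqrt n}\mathcal{B}=\Theta(\tfrac{1}{n\sqrt{nT}}\log(nT))$. For (iv), the bracket is $\Theta(1/\sqrt n)$ once one verifies that $1+\kappa\mathcal{B}/(\sqrt n\|M\|_\infty)=\Theta(1)$: indeed $\kappa\mathcal{B}/(\sqrt n\|M\|_\infty)=\Theta(\sqrt{n/T}\,\log(nT))$, which is $O(1)$ exactly under the hypothesis $T\ge cn\log^2(nT)$; multiplying by the outer factor $\mathcal{B}/\nu_{\min}=\Theta(\log(nT)/\sqrt T)$ yields $\|\widehat P-P\|_\infty=\Theta(\tfrac{1}{\sqrt{nT}}\log(nT))$.

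The hard part will be verifying the sample-complexity conditions, and especially condition~(b). The (iv)-condition reduces cleanly: substituting the homogeneous values gives exactly $T\ge cn\log^2(nT)$, and condition~(a) becomes the mild requirement $n\gtrsim\log^2(nT)$ (removable via the tighter appendix theorem). Condition~(b), $T\ge c\,h(M,T,\delta)$, is the delicate one: the first branch of the maximum in $h$ contributes $\Theta(n\log^2(nT))$, matching the hypothesis, but the second branch carries the indicator $\indicator_{\{\exists\ell:\,T_\tau\|M_{\ell,:}\|_\infty\le1\}}$ together with the denominator $\log^2(1+1/(T_\tau\|M\|_\infty))$. Here $T_\tau\|M\|_\infty=\Theta(T/(n^2\log(nT)))$, so the indicator is active only when $T\lesssim n^2\log(nT)$; in that window the denominator $\log(1+1/(T_\tau\|M\|_\infty))$ behaves like $\Theta(\log n)$ near the lower end of the range and decays to $\Theta(1)$ only as $T\to\Theta(n^2\log(nT))$, at which point $T$ already dwarfs the requirement. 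The crux is to show that across this entire window the second branch never demands more than $T\gtrsim n\log^2(nT)$ up to the logarithmic slack controlled by condition~(a); this careful book-keeping of the indicator/denominator interplay is the step that must be carried out with attention.
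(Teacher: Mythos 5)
Your overall route is the same as the paper's: instantiate the general forward-model theorem with the homogeneous scalings ($\tau^\star=\Theta(1)$ via Doeblin, $\|M\|_\infty=\Theta(1/n^2)$, $\nu_{\min}=\Theta(1/n)$, $\sigma_r(M)=\Theta(1/n)$), take $\delta=\min\{n^{-2},T^{-1}\}$, and read off the five bounds from ${\cal B}=\Theta(\log(nT)/(n\sqrt T))$; all of those computations, including the reduction of the extra condition for (iv) to $T\ge cn\log^2(nT)$, are correct and match what the paper does (the paper works from Theorem~\ref{thm:main_thm_tight} rather than Theorem~\ref{thm:main_thm}, precisely so that condition~(a) never enters -- you correctly anticipated this). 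The problem is that the step you yourself single out as ``the crux'' -- verifying condition~(b) on the indicator branch of $h$ -- is not actually carried out, and the mechanism you sketch for it is the wrong one. You propose that the second branch is tamed ``up to the logarithmic slack controlled by condition~(a).'' Condition~(a) cannot do this job: with $n\asymp\log^2(nT)$ and $T$ near the lower end $T\asymp n\log^2(nT)$, the denominator $\log^2\bigl(1+1/(T_\tau\|M\|_\infty)\bigr)$ is $\Theta(\log^2 n)$, so the second branch demands $T\gtrsim n\log^4(nT)/\log^2 n$, and the ratio $\log^2(nT)/\log^2 n$ is unbounded under (a) alone. Worse, your plan is internally inconsistent: you want to discard condition~(a) (via the tight theorem) to match the corollary's hypotheses, yet you invoke it to close the crux step.

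What actually closes the gap is a self-consistency property of the indicator window, not condition~(a): the indicator $\indicator_{\{\exists\ell:\,T_\tau\|M_{\ell,:}\|_\infty\le 1\}}$ is active only when $T\le Cn^2\log(nT)$, and this inequality by itself forces $\log(nT)=\Theta(\log n)$ once $n$ exceeds a universal constant. With that identification the window splits cleanly: for $T\le n^{3/2}$ one has $n^2\log(nT)/T\ge\sqrt n$, so the denominator is $\Theta(\log^2 n)=\Theta(\log^2(nT))$ and the second branch demands only $O(n\log^2(nT))$; for $T>n^{3/2}$ the denominator is merely $\Theta(1)$, but the demand $O(n\log^4(nT))=O(n\log^4 n)$ is already dwarfed by $T>n^{3/2}$. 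Small $n$ (hence, inside the window, bounded $T$) is absorbed into the implicit constants since all error norms are trivially bounded. Without this argument -- or some substitute for it -- your derivation establishes the corollary only under an additional polynomial relation between $n$ and $T$, which the statement does not assume, so as written the proposal has a genuine gap at exactly the step it defers.
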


As for the generative model, for a homogeneous frequency matrix, our estimates achieve a relative error scaling at most as $\sqrt{n / T}$ up to the logarithmic factor for all metrics. Note that up to a logarithmic factor, the upper bound for 
$\| \hP-P\|_{1\to\infty}$ (and similarly for $\hM$) matches the minimax lower bound derived in \cite{zhang2019spectral}.

\subsection{Elements of the proofs}
\label{subsec:elements_proofs}
The proofs of the three above theorems share similar arguments. We only describe elements of the proof of Theorem \ref{thm:main_thm}, corresponding to the most challenging model. The most difficult result concerns the singular subspace recovery (the upper bounds (i) in our theorems), and it can be decomposed into the following three steps. The first two steps are meant to deal with the Markovian nature of the data. The third step consists in applying a leave-one-out analysis to recover the singular subspaces.  

{\it Step 1: Multinomial approximation of Markovian data.} We treat the matrix $\tM^{(k)}$ arising from one subset of data, and for simplicity, we remove the superscript $(k)$, i.e., $\tM=\tM^{(k)}$. Note that $T_\tau\tM$ is a matrix recording the numbers of transitions observed in the data for any pair of states: denote by $N_{i,j}$ this number for $(i,j)$. We approximate the joint distribution of $N=(N_{i,j})_{(i,j)}$ by a multinomial distribution with $n^2$ components and parameter $T_{\tau} M_{i,j}$ for component $(i,j)$. Denote by $Z=(Z_{i,j})_{(i,j)}$ the corresponding multinomial random variable. Using the mixing property of the Markov chain and the choice of $\tau$, we establish (see Lemma \ref{lemma:markov_to_multinomial} in App. \ref{app:poisson-approx}) that for any subset ${\cal Z}$ of $\{z\in \mathbb{N}^{n^2}: \sum_{(i,j)}z_{i,j}=T_\tau\}$, we have $\mathbb{P}[N\in {\cal Z}]\le 3\mathbb{P}[Z\in {\cal Z}]$. 

{\it Step 2: Towards Poisson random matrices with independent entries.} The random matrix $Z$ does not have independent entries. Independence is however a requirement if we wish to apply the leave-one-out argument. Consider the random matrix $Y$ whose entries are independent Poisson random variables with mean $T_\tau M_{i,j}$ for the $(i,j)$-th entry. We establish the following connection between the distribution of $Z$ and that of $Y$: for any ${\cal Z}\subset \mathbb{N}^{n^2}$, we have $\mathbb{P}[Z\in {\cal Z}]\le e\sqrt{T_{\tau}} \mathbb{P}[Y\in {\cal Z}]$. Refer to Lemma \ref{lemma:multinomial_to_poisson} in App. \ref{app:poisson-approx} for details. 

{\it Step 3: The leave-one-out argument for Poisson matrices.} Combining the two first steps provides a connection between the observation matrix $\tM$ and a Poisson matrix $Y$ with independent entries. This allows us to apply a leave-one-out analysis to $\tM$ as if it had independent entries (replacing $\tM$ by $Y$). The analysis starts by applying the standard dilation trick (see Section 4.10 in \cite{chen2021spectral}) so as to make $\tM$ symmetric. Then, we can decompose the error $\|U - \hU (\hU^\top U) \|_{2\to\infty}$ (see Lemma \ref{lemma:U_to_EU} in App. \ref{sec:appendix_singular_subspace_LOO}) into several terms. The most challenging of these terms is    $\|(M-\tM)(U - \hU (\hU^\top U))\|_{2\to\infty} = \max_{l\in[n]} \|(M_{l,:}-\tM_{l,:})(U - \hU (\hU^\top U))\|_{2}$ because of inherent dependence between $M-\tM$ and $U - \hU (\hU^\top U)$. The leave-one-out analysis allows us to decouple this statistical dependency. It consists in exploiting the row and column independence of matrix $\tM$ to approximate $\|(M_{l,:}-\tM_{l,:})(U - \hU (\hU^\top U))\|_{2}$ by $\|(M_{l,:}-\tM_{l,:})(U - \hUl ((\hUl)^\top U)\|_{2}$ where $\hUl$ is the matrix of eigenvectors of matrix $\tM^{(l)}$ obtained by zeroing the $l$-th row and column of $\tM$. By construction, $(M_{l,:}-\tM_{l,:})$ and $U - \hUl ((\hUl)^\top U)$ are independent, which simplifies the analysis. The proof is completed by a further appropriate decomposition of this term, combined with concentration inequalities for random Poisson matrices (see App. \ref{sec:appendix_conc_Poisson}).

\section{Regret Minimization in Low-Rank Bandits}\label{sec:bandits}

Consider a low-rank bandit problem with a homogeneous rank-$r$ reward matrix $M$. We wish to devise an algorithm $\pi$ with low regret. $\pi$ selects in round $t$ an entry $(i_t^\pi,j_t^\pi)$ based on previous observations, and receives as a feedback the noisy reward $M_{i_t^\pi,j_t^\pi}+\xi_t$. The regret up to round $T$ is defined by $R^\pi(T)=T M_{i^\star,j^\star} - \mathbb{E}[\sum_{t=1}^TM_{i_t^\pi,j_t^\pi}]$, where $(i^\star,j^\star)$ is an optimal entry. One could think of a simple Explore-Then-Commit (ETC) algorithm, where in the first phase entries are sampled uniformly at random, and where in a second phase, the algorithm always selects the highest entry of $\hM$ built using the samples gathered in the first phase and obtained by spectral decomposition. When the length of the first phase is $T^{2/3}(n+m)^{1/3}$, the ETC algorithm would yield a regret upper bounded by $O(T^{2/3}(n+m)^{1/3})$ for $T=\Omega((n+m)\log^3(n+m))$. 

To get better regret guarantees, we present SME-AE (Successive Matrix Estimation and Arm Elimination), an algorithm meant to identify the best entry as quickly as possible with a prescribed level of certainty. After the SME-AE has returned the estimated best entry, we commit and play this entry for the remaining rounds. The pseudo-code of SME-AE is presented in Algorithm \ref{algo:EE}. The algorithm runs in epochs: in epoch $\ell$, it samples $T_\ell$ entries uniformly at random among all entries (in $T_\ell$, the constant $C$ just depends on upper bounds of the parameters $\mu$, $\kappa$, and $\| M\|_\infty$, refer to App. \ref{app:bandits}); from these samples, a matrix $\hM^{(\ell)}$ is estimated and ${\cal A}_\ell$, the set of candidate arms, is pruned. The pruning procedure is based on the estimated gaps: $\widehat{\Delta}_{i,j}^{(\ell)}=\widehat{M}^{(\ell)}_{\star} - \widehat{M}^{(\ell)}_{i,j}$ where $\widehat{M}^{(\ell)}_{\star} = \max_{i,j} \widehat{M}^{(\ell)}_{i,j}$.

\begin{algorithm}[th]
    \SetAlgoLined
    \KwIn{Arms $[m] \times [n]$, confidence level $\delta$}
    $\ell = 1$ \;
    $\cA_1 = [m] \times [n]$\;
    \While{$\vert \cA_\ell \vert > 1$}{
        $\delta_\ell = \delta/\ell^2$\;
        $T_\ell = \left\lceil  C \left(2^{\ell + 2}\right)^2   (m+n) \log^3\left(2^{2\ell + 4}(m+n)/\delta_\ell \right) \right\rceil$ \;
        Sample uniformly at random $T_\ell$ entries from ${\cal A}_1$: $(M_{i_t,j_t}+\xi_t)_{t=1, \dots, T_\ell}$ \;
        Estimate $\widehat{M}^{(\ell)}$ via spectral decomposition as described in Section 3.1 \;
        $\cA_{\ell+1} = \left\lbrace (i, j) \in \cA_\ell: \widehat{\Delta}_{i,j}^{(\ell)} \le 2^{-(\ell+2)} \right\rbrace$;\
        $\ell = \ell + 1$\;
    }
    \KwOut{Recommend the remaining pair  $(\hat{\imath}_\tau,\hat{\jmath}_\tau)$ in ${\cal A}_\ell$.}
     \label{algo:EE}
     \caption{\textbf{S}uccesive \textbf{M}atrix \textbf{E}stimation and \textbf{A}rm \textbf{E}limination (\textbf{SME-AE})}
\end{algorithm}

The following theorem characterizes the performance of SME-AE and the resulting regret. To simplify the notation, we introduce the gaps: for any entry $(i,j)$, $\Delta_{i,j} = (M_{i^\star,j^\star} - M_{i,j})$, $\Delta_{\min}=\min_{(i,j):\Delta_{i,j}>0}\Delta_{i,j}$, $\Delta_{\max}=\max_{(i,j)}\Delta_{i,j}$, and $\bar{\Delta}=\sum_{(i,j)} \Delta_{i,j}/(mn)$. We define the function $\psi(n,m,\delta) = \frac{c (m+n)\log\left(e/\Delta_{\min} \right)}{\Delta_{\min}^2}  \log^3\big(\frac{e(m+n)\log(e/\Delta_{\min})}{\Delta_{\min}\delta} \big)$ for some universal constant $c > 0$.
   
\begin{thm}\label{thm:regret}
(Best entry identification) For any $\delta \in (0,1)$, SME-AE($\delta$) stops at time $\tau$ and recommends arm $(\hat{\imath}_\tau, \hat{\jmath}_\tau)$ with the guarantee $\PP \big( (\hat{\imath}_\tau, \hat{\jmath}_\tau) = (i^\star,j^\star),  \tau \le   \psi(n,m,\delta) \big)  \ge  1-\delta$.
Moreover, for any $T \ge 1$ and $\alpha>0$, the sample complexity $\tau$ of SME-AE($1/T^\alpha$) satisfies $\EE[\tau \wedge T] \le \psi(n,m,T^{-\alpha}) + T^{1-\alpha}$.\\
(Regret) Let $T \ge 1$. Consider the algorithm $\pi$ that first runs SME-AE($1/T^2$) and then commits to its output $(\hat{\imath}_\tau, \hat{\jmath}_\tau)$ after $\tau$. We have: 
     $R^\pi(T) \le  \bar{\Delta}\left( \psi(n,m,T^{-2})+ 1 \right) + \frac{\Delta_{\max}}{T}$.
\end{thm}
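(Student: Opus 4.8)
The plan is to analyze \textbf{SME-AE} as a confidence-based elimination procedure, using the entry-wise guarantee of Corollary~\ref{corr:homogeneous1} to control the error at each epoch, and then to convert the resulting correctness and sample-complexity statements into a regret bound via an explore-then-commit decomposition.

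\emph{Step 1 (per-epoch accuracy).} I would first fix an epoch $\ell$ and introduce the good event $\mathcal{E}_\ell = \{\,\|\widehat{M}^{(\ell)}-M\|_\infty \le 2^{-(\ell+3)}\,\}$. Since $M$ is homogeneous and $T_\ell \gtrsim (n+m)\log^3(\cdot)$, Corollary~\ref{corr:homogeneous1} (i.e.\ Theorem~\ref{thm:reward}(iii) specialized to the homogeneous regime) applies at confidence $\delta_\ell$ and gives $\|\widehat{M}^{(\ell)}-M\|_\infty \lesssim \sqrt{(n+m)/T_\ell}\,\log^{3/2}((n+m)T_\ell/\delta_\ell)$ with probability $\ge 1-\delta_\ell$. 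Plugging in $T_\ell=\lceil C\,4^{\ell+2}(m+n)\log^3(2^{2\ell+4}(m+n)/\delta_\ell)\rceil$ and checking that $\log((n+m)T_\ell/\delta_\ell)\lesssim \log(2^{2\ell+4}(m+n)/\delta_\ell)$, the $\log^{3/2}$ factors cancel and the bound collapses to $\lesssim 2^{-(\ell+2)}/\sqrt{C}$; taking $C$ large enough (depending only on the assumed upper bounds of $\mu,\kappa,r,\|M\|_\infty$, as in the algorithm) yields $\PP(\mathcal{E}_\ell)\ge 1-\delta_\ell$.

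\emph{Step 2 (correctness and stopping time).} I would then work on $\mathcal{E}=\bigcap_{\ell\ge1}\mathcal{E}_\ell$, which by a union bound has probability at least $1-\sum_{\ell\ge1}\delta_\ell = 1-(\pi^2/6)\delta$; the slight constant loss is absorbed by a harmless rescaling of $\delta_\ell$. On $\mathcal{E}_\ell$, since $|\widehat{M}^{(\ell)}_\star - M_{i^\star,j^\star}|\le \|\widehat{M}^{(\ell)}-M\|_\infty$, the estimated gaps obey $|\widehat{\Delta}^{(\ell)}_{i,j}-\Delta_{i,j}|\le 2\|\widehat{M}^{(\ell)}-M\|_\infty \le 2^{-(\ell+2)}$. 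Hence (a) the optimal arm satisfies $\widehat{\Delta}^{(\ell)}_{i^\star,j^\star}\le 2^{-(\ell+2)}$ and is never eliminated, while (b) any arm with $\Delta_{i,j}>2^{-(\ell+1)}$ has $\widehat{\Delta}^{(\ell)}_{i,j}>2^{-(\ell+2)}$ and is eliminated at epoch $\ell$. Thus every suboptimal arm is removed by epoch $\lceil\log_2(1/\Delta_{\min})\rceil$, so the procedure stops at some $L\le\lceil\log_2(1/\Delta_{\min})\rceil$ with $\cA_L=\{(i^\star,j^\star)\}$ and recommends correctly. Summing the geometrically growing epoch lengths, $\tau=\sum_{\ell=1}^L T_\ell \lesssim 4^{L}(m+n)\log^3(\cdot)\le \psi(n,m,\delta)$, establishing the joint guarantee $\PP((\hat{\imath}_\tau,\hat{\jmath}_\tau)=(i^\star,j^\star),\,\tau\le\psi(n,m,\delta))\ge 1-\delta$. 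For the expectation bound with $\delta=T^{-\alpha}$, splitting on $\mathcal{E}$ gives $\EE[\tau\wedge T]\le \psi(n,m,T^{-\alpha})\PP(\mathcal{E})+T\,\PP(\mathcal{E}^c)\le \psi(n,m,T^{-\alpha})+T^{1-\alpha}$.

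\emph{Step 3 (regret).} Writing $R^\pi(T)=\EE[\sum_{t=1}^T \Delta_{i_t^\pi,j_t^\pi}]$, I would split the horizon into exploration rounds $t\le \tau\wedge T$ and commit rounds. During exploration each arm is drawn uniformly and independently of the (predictable) event $\{\text{still exploring at }t\}=\{\tau\wedge T\ge t\}$, so $\Delta_{i_t^\pi,j_t^\pi}-\bar\Delta$ is a martingale difference and Wald's identity gives exploration regret $=\bar\Delta\,\EE[\tau\wedge T]\le \bar\Delta(\psi(n,m,T^{-2})+1)$, using $\alpha=2$ and $T^{-1}\le 1$. The commit regret is nonzero only when a suboptimal arm is recommended; bounding the per-round gap by $\Delta_{\max}$ over at most $T$ rounds and invoking the identification guarantee at $\delta=T^{-2}$, namely $\PP((\hat{\imath}_\tau,\hat{\jmath}_\tau)\ne(i^\star,j^\star))\le T^{-2}$, the commit regret is at most $T\,\Delta_{\max}\,T^{-2}=\Delta_{\max}/T$. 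Adding the two terms yields $R^\pi(T)\le \bar\Delta(\psi(n,m,T^{-2})+1)+\Delta_{\max}/T$.

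\emph{Main obstacle.} The technical heart is Step~1: showing that the prescribed $T_\ell$ drives the \emph{entry-wise} error below the dyadic threshold $2^{-(\ell+3)}$ at confidence $\delta_\ell$. This is precisely where the fine-grained $\|\cdot\|_\infty$ guarantee of Theorem~\ref{thm:reward}(iii), rather than a spectral or Frobenius bound, is indispensable, and one must verify that the self-referential $\log((n+m)T_\ell/\delta_\ell)$ term does not inflate the $\log^{3/2}$ factor beyond the $\log^3$ already built into $T_\ell$. The remaining arguments are routine elimination and explore-then-commit bookkeeping, the only minor subtlety being the $\sum_\ell \ell^{-2}$ constant in the union bound.
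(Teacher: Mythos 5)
Your proposal is correct and follows essentially the same route as the paper's proof: per-epoch entry-wise concentration from Theorem~\ref{thm:reward}/Corollary~\ref{corr:homogeneous1} (including the same check that the self-referential $\log T_\ell$ term is absorbed by the $\log^3$ built into $T_\ell$), the dyadic elimination argument showing the optimal entry survives while entries with $\Delta_{i,j}>2^{-(\ell+1)}$ are pruned, summation of epoch lengths up to $\lceil \log_2(1/\Delta_{\min})\rceil$ to get $\psi$, and the explore-then-commit regret split with exploration cost $\bar{\Delta}\,\EE[\tau\wedge T]$ and commit cost $\Delta_{\max}/T$. The only cosmetic difference is that you combine the per-epoch events by a plain union bound (with the $\pi^2/6$ rescaling) where the paper chains conditional probabilities across epochs; since each epoch uses fresh independent samples, both are valid and equivalent.
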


The proof of Theorem \ref{thm:regret} is given in App. \ref{app:bandits}. Note that the regret upper bounds hold for any time horizon $T\ge 1$, and that it scales as $O((m+n)\log^3(T)\bar{\Delta}/\Delta_{\min}^2)$ (up to  logarithmic factors in $m, n$ and $1/\Delta_{\min}$). The cubic dependence in $\log^3(T)$ is an artifact of our proof techniques. More precisely, it is due to the Poisson approximation used to obtain entry-wise guarantees. Importantly, for any time horizon, the regret upper bound only depends on $(m+n)$ rather than $mn$ (the number of arms / entries), and hence, the low-rank structure is efficiently exploited. If we further restrict our attention to problems with gap ratio $\Delta_{\max}/\Delta_{\min}$ upper bounded by $\zeta$, our regret upper bound becomes $O(\zeta (m+n)\log^3(T)/\Delta_{\min})$, and can be transformed into the  minimax gap-independent upper bound $O(\zeta((m+n)T)^{1/2}\log^{2}(T))$, see App. \ref{app:bandits}. Finally note that $\Omega(((m+n)T)^{1/2})$ is an obvious minimax regret lower bound for our low-rank bandit problem.

A very similar low-rank bandit problem has been investigated in \cite{bayati2022speed}. There, under similar assumptions (see Assumption 1 and Definition 1), the authors devise an algorithm with both gap-dependent and gap-independent regret guarantees. The latter are difficult to compare with ours. Their guarantees exhibit a better dependence in $T$ and $\Delta_{\min}$, but worse in the matrix dimensions $n$ and $m$. Indeed in our model, $b^\star$ in \cite{bayati2022speed} corresponds to $\| M\|_{2\to\infty}$ and scales as $\sqrt{n}$. As a consequence, the upper bounds in \cite{bayati2022speed} have a dependence in $n$ and $m$ scaling as $\sqrt{n}(n+m)$ in the worst case for gap-dependent guarantees and even $nm$ (through the constant $C_2$ in \cite{bayati2022speed}) for gap-independent guarantees.


\section{Representation Learning in Low-Rank MDPs}\label{sec:mdps}

The results derived for Models II(a) and II(b) are instrumental towards representation learning and hence towards model-based or reward-free RL in low-rank MDPs. In this section, we provide an example of application of these results, and mention other examples in Section \ref{sec:conclusion}. A low-rank MDP is defined by $(\SSS,\AAA,\{P^{a}\}_{a\in\AAA},R,\gamma)$ where $\SSS$, $\AAA$ denote state and action spaces of cardinalities $n$ and $A$, respectively, $P^{a}$ denotes the rank-$r$ transition matrix when taking action $a$, $R$ is the reward function, and $\gamma$ is the discount factor. We assume that all rewards are in $[0,1]$. The value function of a policy $\pi:{\cal S}\to {\cal A}$ is defined as $V^\pi_R(x)=\EE[ \sum_{t=1}^{\infty} \gamma^{t-1} R(x_t^\pi,\pi_t(x_t^\pi)) \vert x_1^\pi=x]$ where $x_t^\pi$ is the state visited under $\pi$ in round $t$. We denote by $\pi^\star(R)$ an optimal policy (i.e., with the highest value function).

{\bf Reward-free RL.} In the reward-free RL setting (see e.g. \cite{kaufmann2021adaptive, jin2020reward, zhang2020nearly}), the learner does not receive any reward signal during the exploration process. The latter is only used to construct estimates $\{ \hP^a \}_{a\in\AAA}$ of $\{P^a \}_{a\in\AAA}$. The reward function $R$ is revealed at the end, and the learner may compute $\hat\pi(R)$ an optimal policy for the MDP $(\SSS,\AAA,\{\hP^{a}\}_{a\in\AAA},R,\gamma)$. The performance of this model-based approach is often assessed through $\Gamma=\sup_{R}\| V^{\pi^\star(R)}_{R} - V^{\pihat(R)}_{R} \|_{\infty}$. In tabular MDP, to identify an $\epsilon$-optimal policy for all reward functions, i.e., to ensure that $\Gamma\le \epsilon$, we believe that the number of samples that have to be collected should be $\Omega(\textrm{poly}({1\over 1-\gamma}){n^2A\over \epsilon^2})$ (the exact degree of the polynomial in $1/(1-\gamma)$ has to be determined). This conjecture is based on the sample complexity lower bounds derived for reward-free RL in episodic tabular MDP \cite{jin2020reward,menard21}. Now for low-rank MDPs, the equivalent lower bound would be $\Omega(\textrm{poly}({1\over 1-\gamma}){nA\over \epsilon^2})$ \cite{jedra2023nearly} (this minimax lower bound is valid for Block MDPs, a particular case of low-rank MDPs).

Leveraging our low-rank matrix estimation guarantees, we propose an algorithm matching the aforementioned sample complexity lower bound (up to logarithmic factors) at least when the frequency matrices $\{M^a \}_{a\in\AAA}$ are homogeneous. The algorithm consists of two phases: (1) in the model estimation phase, it collects $A$ trajectories, each of length $T/A$, corresponding to the Markov chains with transition matrices $\{ P^a\}_{a\in\AAA}$. From this data, it uses the spectral decomposition method described in \textsection\ref{sec:etimation} to build estimates $\{ \hP^a\}_{a\in\AAA}$. (2) In the planning phase, based on the reward function $R$, it computes the best policy $\hat{\pi}(R)$ for the MDP $(\SSS,\AAA,\{\hP^{a}\}_{a\in\AAA},R,\gamma)$. The following theorem summarizes the performance of this algorithm. To simplify the presentation, we only provide the performance guarantees of the algorithm for homogeneous transition matrices (guarantees for more general matrices can be derived plugging in the results from Theorem \ref{thm:main_thm}). 

\begin{thm}
   \label{lemma:V_bound_MDP_forward}
    Assume that for any $a\in {\cal A}$, $M^a$ is homogeneous (as defined in Corollary \ref{corr:homogeneous_model2}). If $T\ge c nA\log^2(nAT)$ for some universal constant $c>0$, then  we have with probability at least $1-\min\{n^{-2},T^{-1}\}$:
        $\Gamma=\sup_R \| V^{\pi^\star(R)}_R - V^{\hat{\pi}(R)}_{R} \|_{\infty} 
        \lesssim \frac{1}{(1-\gamma)^2}  \sqrt{\frac{nA}{T}} \log(nAT)$.
\end{thm}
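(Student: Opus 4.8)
The plan is to reduce the uniform suboptimality $\Gamma$ to the entry-wise accuracy of the transition-kernel estimates via a value decomposition and a simulation lemma, and then to invoke the forward-model guarantee of Corollary~\ref{corr:homogeneous_model3}. Fix a reward $R$ and write $\widehat{V}^\pi_R$ for the value function of a policy $\pi$ in the estimated MDP $(\SSS,\AAA,\{\hP^a\}_{a\in\AAA},R,\gamma)$. I would first decompose, entry-wise,
\begin{align*}
V^{\pi^\star(R)}_R - V^{\hat\pi(R)}_R = \big(V^{\pi^\star(R)}_R - \widehat{V}^{\pi^\star(R)}_R\big) + \big(\widehat{V}^{\pi^\star(R)}_R - \widehat{V}^{\hat\pi(R)}_R\big) + \big(\widehat{V}^{\hat\pi(R)}_R - V^{\hat\pi(R)}_R\big).
\end{align*}
Since $\hat\pi(R)$ is optimal for the estimated MDP, the middle term is $\le 0$, so that $\|V^{\pi^\star(R)}_R - V^{\hat\pi(R)}_R\|_\infty \le 2\sup_\pi \|V^\pi_R - \widehat{V}^\pi_R\|_\infty$, reducing the problem to a bound on the model-mismatch of a single, fixed policy.

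The second step is the simulation lemma, controlling $\|V^\pi_R - \widehat{V}^\pi_R\|_\infty$ uniformly in $\pi$ and $R$. Writing $P^\pi$ (resp. $\hP^\pi$) for the policy-induced transition matrix whose $x$-th row is $P^{\pi(x)}_{x,:}$ (resp. $\hP^{\pi(x)}_{x,:}$), subtracting the two Bellman equations gives $V^\pi_R - \widehat{V}^\pi_R = \gamma(I-\gamma \hP^\pi)^{-1}(P^\pi - \hP^\pi)V^\pi_R$. Since $\hP^\pi$ is row-stochastic, the Neumann series yields $\|(I-\gamma\hP^\pi)^{-1}\|_{\infty\to\infty}\le (1-\gamma)^{-1}$, and since the rewards lie in $[0,1]$ we have $\|V^\pi_R\|_\infty\le (1-\gamma)^{-1}$. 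Using that the $\infty\to\infty$ operator norm equals the maximal row-$\ell_1$ norm $\|\cdot\|_{1\to\infty}$, and that each row of $P^\pi-\hP^\pi$ is a row of some $P^a-\hP^a$, I obtain
\begin{align*}
\|V^\pi_R - \widehat{V}^\pi_R\|_\infty \le \frac{\gamma}{(1-\gamma)^2}\, \|P^\pi - \hP^\pi\|_{1\to\infty} \le \frac{\gamma}{(1-\gamma)^2}\, \max_{a\in\AAA}\|P^a - \hP^a\|_{1\to\infty}.
\end{align*}
Crucially, the right-hand side depends on $R$ only through the bound $\|V^\pi_R\|_\infty\le(1-\gamma)^{-1}$, so it holds uniformly over every reward function and lets us pass to $\sup_R$ at no cost; this uniformity is exactly what makes the reward-free, model-based scheme succeed, and it is the conceptual heart of the argument.

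It then remains to control $\max_a\|P^a-\hP^a\|_{1\to\infty}$. Each $\hP^a$ is built from a single trajectory of length $T/A$ of the chain with kernel $P^a$, so I would apply Corollary~\ref{corr:homogeneous_model3} with horizon $T/A$ and $\tau=\log(nT/A)$: on an event of probability $1-\min\{n^{-2},(T/A)^{-1}\}$ it gives $\|P^a-\hP^a\|_{1\to\infty}\lesssim \sqrt{nA/T}\,\log(nT)$, provided $T/A\ge c\,n\log^2(nT/A)$, which is implied by the hypothesis $T\ge cnA\log^2(nAT)$. A union bound over the $A$ actions then yields $\max_a\|P^a-\hP^a\|_{1\to\infty}\lesssim\sqrt{nA/T}\,\log(nAT)$, and combining with the two previous displays gives $\Gamma \lesssim (1-\gamma)^{-2}\sqrt{nA/T}\,\log(nAT)$. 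The only genuinely delicate points here are bookkeeping rather than conceptual: to reach the stated confidence $1-\min\{n^{-2},T^{-1}\}$ one instantiates the general Theorem~\ref{thm:main_thm} with per-action confidence $\delta/A$, and one checks that splitting the budget into $A$ trajectories of length $T/A$ preserves the sample-size condition; both cost only an extra $\log A$ (resp. $A$) factor that is absorbed into the logarithmic and polynomial terms. I therefore expect the main obstacle to be not the estimation bound itself but verifying that the $1\to\infty$ guarantee of Corollary~\ref{corr:homogeneous_model3} is precisely the norm that the simulation lemma calls for, so that the low-rank savings ($\sqrt{nA/T}$ rather than $\sqrt{n^2A/T}$) propagate cleanly through to $\Gamma$.
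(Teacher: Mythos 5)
Your proposal is correct and follows essentially the same route as the paper: reduce $\Gamma$ to twice the per-policy model mismatch using optimality of $\hat{\pi}(R)$ in the estimated MDP, bound that mismatch by $\frac{\gamma}{(1-\gamma)^2}\max_{a}\| P^a - \hP^a\|_{1\to\infty}$ via the simulation lemma (which the paper imports as Proposition 2.1 of Agarwal et al. and you instead prove directly through the Bellman-equation/Neumann-series identity), and then apply Corollary \ref{corr:homogeneous_model3} to each of the $A$ trajectories of length $T/A$ with a union bound and per-action confidence $\delta/A$. The only cosmetic difference is that you decompose through value functions where the paper's Appendix \ref{app:prelim} works with $Q$-functions; both give the same factor-$2$ reduction and the same final bound.
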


Theorem \ref{lemma:V_bound_MDP_forward} is a direct consequence of Corollary \ref{corr:homogeneous_model3} and of the fact that for any reward function $R$: $\| V^{\pi^\star(R)}_R - V^{\hat{\pi}(R)}_{R} \|_{\infty} \le {2\gamma\over (1-\gamma)^2} \max_{a\in {\cal A}}\| P^a - \hP^a\|_{1\to\infty}$, see App. \ref{app:prelim}. The theorem implies that if we wish to guarantee $\Gamma\le \epsilon$, we just need to collect $O({nA\over \epsilon^2(1-\gamma)^4})$ samples up to a logarithmic factor. This sample complexity is minimax optimal in $n$, $A$, and $\epsilon$ in view of the lower bound presented in \cite{jedra2023nearly}.

\section{Related Work}\label{sec:related}

{\bf Low-rank matrix estimation.} Until recently, the main efforts on low-rank matrix recovery were focused on 
guarantees w.r.t. the spectral or Frobenius norms, see e.g. \cite{davenport2016} and references therein. The first matrix estimation and subspace recovery guarantees in $\ell_{2\to\infty}$ and $\ell_{\infty}$ were established in \cite{fan2018eigenvector}, \cite{eldridge2018unperturbed} via a more involved perturbation analysis than the classical Davis-Kahan bound. An alternative approach based on a leave-one-out analysis was proposed in \cite{abbe2020entrywise}, and further refined in \cite{cai2018rate,cape2019two,chen2019spectral}, see \cite{chen2021spectral} for a survey. Some work have also adapted the techniques beyond the independent noise assumption \cite{lei2019unified,abbe2022,agterberg2022entrywise}, but for very specific structural dependence. We deal with a stronger dependence, and in particular with Markovian data (an important scenario in RL).

The estimation of low-rank transition matrices of Markov chains has been studied in \cite{zhang2019spectral,bi2023low} using spectral methods and in \cite{li2018estimation,zhu2022learning} using maximum-likelihood approaches. \cite{zhang2019spectral} does not conduct any fine-grained subspace recovery analysis (such as the leave-one-out), and hence the results pertaining to the $\|\cdot\|_{1\to\infty}$-guarantees are questionable; refer to App. \ref{app:related} for a detailed justification. All these papers do not present entry-wise guarantees.

It is worth mentioning that there exist other methods for matrix estimation that do not rely on spectral decompositions like ours, yet enjoy entry-wise matrix estimation guarantees \cite{shah2020lowrank, agarwal2019reinforcement,sam2023overcoming}. However, these methods require different assumptions than ours that may be too strong for our purposes, notably having access to the so-called anchor rows and columns. Moreover, we do not know if these methods also lead to guarantees for subspace recovery in the norm $\Vert \cdot  \Vert_{2 \to \infty}$, nor how to extend those results to settings with dependent noise.

{\bf Low-rank bandits.} Low-rank structure in bandits has received a lot of attention recently \cite{katariya17,kveton2017stochastic,jun2019,trinh2020,lu2021,bayati2022speed,kang2022efficient, jain2022online}. Different set-ups have been proposed (refer to App. \ref{app:related} for a detailed exposition, in particular, we discuss how the settings proposed in \cite{jun2019,bayati2022speed} are equivalent), and regret guarantees in an instance dependent and minimax sense have been both established. 

Typically minimax regret guarantees in bandits scale as $\sqrt{T}$, but the scaling in dimension may defer when dealing with a low rank structure \cite{jun2019, kang2022efficient, bayati2022speed}. In \cite{jun2019}, the authors also leverage spectral methods. They reduce the problem to a linear bandit of dimension $nm$ but where only roughly $n+m$ dimensions are relevant. This entails that a regret lower bound of order $(n+m)\sqrt{T}$ is inevitable. Actually, in their reduction to linear bandits, they only use a subspace recovery in Frobenius norm, which perhaps explains the scaling $(n+m)^{3/2}$ in their regret guarantees. It is worth noting that in \cite{kang2022efficient}, the authors manage to improve upon the work \cite{jun2019} and obtain a scaling order $(m+n)$ in the regret. Our algorithm leverages entry-wise guarantees which rely on a stronger subspace recovery guarantee. This allows us to obtain a scaling $\sqrt{n+m}$ in the regret. 
The work of \cite{jain2022online} is yet another closely related work to ours. There, the authors propose an algorithm achieving a regret of order $\textrm{polylog}(n+m)\sqrt{T}$ for a contextual bandit problem with low rank structure. However, their result only holds for rank 1 and their observation setup is different than ours because in their setting, the learner observes $m$ entries per round while in ours the learner only observes one entry per round.
In \cite{bayati2022speed}, the authors use matrix estimation with nuclear norm penalization to estimate the matrix $M$. Their regret guarantees are already discussed in \textsection\ref{sec:bandits}. 

Some instance-dependent guarantees with logarithmic regret for low rank bandits have been established in \cite{katariya17, kveton2017stochastic, trinh2020}. However, these results suffer what may be qualified as serious limitations. Indeed, \cite{katariya17, trinh2020} provide instance dependent regret guarantees but only consider low-rank bandits with rank $1$, and the regret bounds of \cite{katariya17} are expressed in terms of the so-called column and row gaps (see their Theorem 1) which are distinct from the standard gap notions. \cite{kveton2017stochastic} extend the results in \cite{katariya17} to rank $r$ with the limitation that they require stronger assumptions than ours. Moreover, the computational complexity of their algorithm depends exponentially on the rank $r$; they require a search over spaces of size ${m \choose r}$ and ${n \choose r}$. Our proposed algorithm does not suffer from such limitations.  

We wish to highlight that our entry-wise guarantees for matrix estimation are the key enabling tool that led us to the design and analysis of our proposed algorithm. In fact, the need for such guarantees arises naturally in the analysis of gap-dependent regret bounds (see Appendix \ref{app:bandits-1}). Therefore, we believe that such guarantees can pave the way towards better, faster, and efficient algorithms for bandits with low-rank structure.

{\bf Low-rank Reinforcement Learning.} RL with low rank structure has been recently extensively studied but always in the function approximation framework \citep{jiang2017,dann2018,DuKJAD19a,misra2020kinematic,foster2021,zhang2022BMDP, sun2019,agarwal2020flambe,Modi21, uehara2021representation, uehara2022, ren2022spectral}. There, the transition probabilities can be written as $\phi(x,a)^\top\mu(x')$ where the unknown feature functions $\phi(x,a), \mu(x')\in \mathbb{R}^r$ belong to some specific class ${\cal F}$ of functions. The major issue with algorithms proposed in this literature is that they rely on strong computational oracles (e.g., ERM, MLE), see \cite{kane2022computational, golowich2022learning, zhang2022making} for detailed discussions. In contrast, we do not assume that the transition matrices are constructed based on a given restricted class of functions, and our algorithms do not rely on any oracle and are computationally efficient. In \cite{shah2020lowrank, sam2023overcoming}, the authors also depart from the function approximation framework. There, they consider a low rank structure different than ours. Their matrix estimation method enjoys an entry-wise guarantee, but requires to identify a subset of rows and columns spanning the range of the full matrix. Moreover, their results are only limited the generative models, which allows to actually rely on independent data samples.  




\section{Conclusion and Perspectives}\label{sec:conclusion}

In this paper, we have established that spectral methods efficiently recover low-rank matrices even in correlated noise. We have investigated noise correlations that naturally arise in RL, and have managed to prove that spectral methods yield nearly-minimal entry-wise error. Our results for low-rank matrix estimation have been applied to design efficient algorithms in low-rank RL problems and to analyze their performance. We believe that these results may find many more applications in low-rank RL. They can be applied (i) to reward-free RL in episodic MDPs (this setting is easier than that presented in \textsection\ref{sec:mdps} since successive episodes are independent); (ii) to scenarios corresponding to offline RL \cite{yin2021optimal} where the data consists of a single trajectory generated under a given behavior policy (from this data, we can extract the transitions $(x,a,x')$ where a given action $a$ is involved and apply the spectral method to learn $\hP^a$); (iii) to traditional RL where the reward function $R$ has to be learnt (learning $R$ is a problem that lies in some sense between the inference problems in our Models I and II); (iv) to model-free RL where we would directly learn the $Q$ function as done in \cite{shah2020} under a generative model; (v) to low-rank RL problems with continuous state spaces (this can be done if the transition probabilities are smooth in the states, and by combining our methods to an appropriate discretization of the state space).

\section*{Acknowledgment}

This research was supported by the Wallenberg AI, Autonomous Systems and Software Program (WASP) funded by the Knut and Alice Wallenberg Foundation.

\newpage
\bibliographystyle{plainnat}
\bibliography{references}

\newpage
\tableofcontents
\newpage

\appendix
\section{Preliminaries}\label{app:prelim}

In this section, we present a few results that are used throughout our analysis.

\subsection{Matrix norms}

\begin{lem}
Let $A\in\RR^{n\times m},B\in\RR^{m\times r}$. Then:
\begin{align}
    \|AB\|_{2\to\infty} &\leq \|A\|_{1\to\infty} \|B\|_{2\to\infty}, \\
    \|AB\|_{2\to\infty} &\leq \|A\|_{2\to\infty} \|B\|, \label{eq:2_to_inf_2_norm_ineq}\\
    \Vert A B \Vert_\infty &\le \Vert A \Vert_{2 \to \infty} \Vert B^\top\Vert_{2\to\infty}.
    \label{eq:inf_ineq_2_to_inf_norm}
\end{align}
\end{lem}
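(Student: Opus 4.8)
The plan is to prove each of the three inequalities separately by reducing it to a statement about a single row or a single entry of the product $AB$, and then applying one elementary inequality (the triangle inequality, operator-norm duality, or Cauchy--Schwarz, respectively). In every case the starting observation is that the $i$-th row of $AB$ can be written as $(AB)_{i,:} = A_{i,:}B = \sum_{k\in[m]} A_{i,k} B_{k,:}$, while the $(i,j)$-th entry is the inner product $(AB)_{i,j} = \langle A_{i,:}^\top, B_{:,j}\rangle$. Maximizing the resulting row-wise or entry-wise bound over the appropriate index recovers the corresponding norm on the left-hand side.

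For the first inequality I would fix a row index $i$ and bound $\|(AB)_{i,:}\|_2$ by the triangle inequality, $\|\sum_k A_{i,k}B_{k,:}\|_2 \le \sum_k |A_{i,k}|\,\|B_{k,:}\|_2 \le \|A_{i,:}\|_1 \max_k \|B_{k,:}\|_2 = \|A_{i,:}\|_1\,\|B\|_{2\to\infty}$. Taking the maximum over $i$ and recalling that $\|A\|_{1\to\infty} = \max_i \|A_{i,:}\|_1$ gives the claim. For the second inequality I again fix $i$ but treat the row vector $A_{i,:}B$ through its transpose $B^\top A_{i,:}^\top$, so that $\|(AB)_{i,:}\|_2 = \|B^\top A_{i,:}^\top\|_2 \le \|B^\top\|\,\|A_{i,:}\|_2 = \|B\|\,\|A_{i,:}\|_2$, using the spectral-norm bound $\|B^\top v\|_2 \le \|B^\top\|\,\|v\|_2$ together with the transposition-invariance $\|B^\top\| = \|B\|$. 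Maximizing over $i$ yields $\|AB\|_{2\to\infty}\le \|A\|_{2\to\infty}\,\|B\|$.

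For the third inequality I would fix an entry $(i,j)$ and apply Cauchy--Schwarz to the inner product: $|(AB)_{i,j}| = |\langle A_{i,:}^\top, B_{:,j}\rangle| \le \|A_{i,:}\|_2\,\|B_{:,j}\|_2 \le \|A\|_{2\to\infty}\,\|B^\top\|_{2\to\infty}$, where the final step uses that $B_{:,j}$ is precisely the $j$-th row of $B^\top$, so $\|B_{:,j}\|_2 \le \|B^\top\|_{2\to\infty}$. Taking the maximum over all pairs $(i,j)$ completes the argument.

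There is no genuine obstacle here; these are standard facts whose only subtlety is careful bookkeeping of the definitions, namely that $\|\cdot\|_{2\to\infty}$ is a maximum of $\ell_2$ norms over \emph{rows} and that transposition swaps rows and columns (which is exactly why $\|B^\top\|_{2\to\infty}$, rather than $\|B\|_{2\to\infty}$, appears in the third bound). The one place where a moment's care is warranted is the second inequality, where one must recognize that converting the row-product into the spectral norm of $B$ relies on the identity $\|B^\top\| = \|B\|$ for the operator norm.
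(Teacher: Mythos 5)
Your proof is correct and follows essentially the same route as the paper, which simply defers to H\"older-type inequalities (citing Proposition 6.5 in \cite{cape2019two}): your three steps are precisely the $\ell_1$--$\ell_\infty$ H\"older bound, the operator-norm bound, and Cauchy--Schwarz applied row-wise or entry-wise. The only difference is that you write out the elementary details that the paper leaves to the citation.
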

\begin{proof}
The proof of the lemma directly follows from Hölder's inequality (see for example Proposition 6.5 in \cite{cape2019two}). 
\end{proof}

\subsection{Mixing time}

\begin{lem}(Lemma 5 in \cite{zhang2019spectral})
\label{lemma:Markov_mixing_rate}
    Let $\tau(\varepsilon)$ be the $\varepsilon$-mixing time of an irreducible Markov chain. Then if $\varepsilon \leq \delta < 1/2$,
    \begin{align*}
        \tau(\varepsilon) \leq \tau(\delta) \left(1+\ceil{\frac{\log(\delta/\varepsilon)}{\log(1/(2\delta))}}\right).
    \end{align*}
\end{lem}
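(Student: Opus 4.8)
The plan is to reduce everything to the submultiplicative behaviour of total-variation distances under powers of $P$. Write $d(t)=\max_{i}\tfrac12\|P^t_{i,:}-\nu^\top\|_1$ for the worst-case distance to stationarity, so that by definition of $\tau(\delta)$ we have $d(\tau(\delta))\le \delta$, and write $\bar d(t)=\max_{i,j}\tfrac12\|P^t_{i,:}-P^t_{j,:}\|_1$ for the worst-case distance between two trajectories. My goal is to show that $d$ decays geometrically when time is incremented in blocks of length $t_\delta:=\tau(\delta)$, and then to count how many blocks are needed to push $d$ below $\varepsilon$.

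Two elementary facts drive the argument. First, $d(t)\le \bar d(t)\le 2\,d(t)$; the left inequality is immediate and the right one follows from the triangle inequality $\|P^t_{i,:}-P^t_{j,:}\|_1\le \|P^t_{i,:}-\nu^\top\|_1+\|\nu^\top-P^t_{j,:}\|_1$. Second, and this is the crux, I would establish the mixed submultiplicative inequality $d(s+t)\le d(s)\,\bar d(t)$ (together with its analogue $\bar d(s+t)\le \bar d(s)\,\bar d(t)$). To prove it, fix $i$, set $a_k=P^s_{i,k}-\nu_k$, and note $\mathbf{1}^\top a=0$, so the positive and negative parts $a=a^+-a^-$ share a common mass $D_i=\tfrac12\|a\|_1=d_i(s)$. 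Using stationarity $\nu^\top P^t=\nu^\top$ one writes
\[
P^{s+t}_{i,:}-\nu^\top=\sum_k a_k\,P^t_{k,:}=D_i\sum_{k,l}\frac{a^+_k}{D_i}\,\frac{a^-_l}{D_i}\,\big(P^t_{k,:}-P^t_{l,:}\big),
\]
a $D_i$-scaled mixture (over pairs of starting states drawn from the normalized $a^+$ and $a^-$) of row differences. Taking $\tfrac12\|\cdot\|_1$ and bounding each difference by $\bar d(t)$ via convexity gives $d_i(s+t)\le D_i\,\bar d(t)\le d(s)\,\bar d(t)$; maximizing over $i$ yields the claim, and the bound for $\bar d$ is obtained identically starting from $P^s_{i,:}-P^s_{j,:}$.

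Iterating the mixed inequality along blocks of length $t_\delta$ and invoking submultiplicativity of $\bar d$ yields
\[
d(k\,t_\delta)\le d(t_\delta)\,\bar d(t_\delta)^{\,k-1}\le \delta\,(2\delta)^{k-1},
\]
where I used $d(t_\delta)\le\delta$ and $\bar d(t_\delta)\le 2\,d(t_\delta)\le 2\delta$. Since $\delta<1/2$ forces $2\delta<1$, the right-hand side decreases in $k$, and under $\varepsilon\le\delta$ (so $\log(\delta/\varepsilon)\ge0$ and $\log(1/(2\delta))>0$) the bound $\delta(2\delta)^{k-1}\le\varepsilon$ holds as soon as $k-1\ge \log(\delta/\varepsilon)/\log(1/(2\delta))$. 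Choosing $k=1+\lceil \log(\delta/\varepsilon)/\log(1/(2\delta))\rceil$ therefore gives $d(k\,t_\delta)\le\varepsilon$, whence $\tau(\varepsilon)\le k\,t_\delta=\tau(\delta)\big(1+\lceil \log(\delta/\varepsilon)/\log(1/(2\delta))\rceil\big)$, which is exactly the statement. The main obstacle is the asymmetric inequality $d(s+t)\le d(s)\,\bar d(t)$: it is precisely this refinement (rather than the cruder $d(s+t)\le\bar d(s+t)\le\bar d(s)\bar d(t)$) that keeps a leading factor $\delta$ instead of $2\delta$ in the geometric decay and thereby produces the sharp count $1+\lceil\cdot\rceil$ rather than a looser one.
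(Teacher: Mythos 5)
Your proof is correct. One thing to flag: the paper itself gives no proof of this lemma at all; it is imported verbatim as Lemma~5 of \cite{zhang2019spectral}, so there is no internal argument to compare against, and your write-up is in effect supplying the missing derivation. What you give is the standard submultiplicativity argument for worst-case total-variation distances (\`a la Levin--Peres--Wilmer): the sandwich $d(t)\le\bar d(t)\le 2d(t)$, submultiplicativity $\bar d(s+t)\le\bar d(s)\,\bar d(t)$, and the mixed bound $d(s+t)\le d(s)\,\bar d(t)$, which you establish correctly via the positive/negative-part mixture decomposition (the degenerate case $D_i=0$ is harmless, since then $P^{s+t}_{i,:}=\nu^\top$ and the bound holds trivially). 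The iteration and the block count are also right: with $t_\delta=\tau(\delta)$ and $k=1+\lceil\log(\delta/\varepsilon)/\log(1/(2\delta))\rceil$ one gets $d(k\,t_\delta)\le\delta(2\delta)^{k-1}\le\varepsilon$, hence $\tau(\varepsilon)\le k\,\tau(\delta)$. Your closing remark is accurate and worth keeping: the refinement $d(s+t)\le d(s)\,\bar d(t)$ is genuinely needed, since iterating only $\bar d(s+t)\le\bar d(s)\bar d(t)$ together with $d\le\bar d$ gives $d(k\,t_\delta)\le(2\delta)^k$, and with the stated choice of $k$ this only yields $d\le 2\delta\cdot(\varepsilon/\delta)=2\varepsilon$, a factor of $2$ short of the claim; the mixed inequality restores the leading factor $\delta$ and hence the exact constant in the lemma.
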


\subsection{Value difference lemmas}

The following lemmas are used in Section \ref{sec:mdps} to prove Theorem \ref{lemma:V_bound_MDP_forward}. Recall the definition of value function of a policy $\pi$: $V^\pi_R(x)=\EE[ \sum_{t=1}^{\infty} \gamma^{t-1} R(x_t^\pi,\pi_t(x_t^\pi)) \vert x_1^\pi=x]$. The (state, action) value function of $\pi$ is also defined as: for any state $x\in\SSS$ and action $a\in\AAA$, 
\begin{align*}
    Q^\pi_R(x,a)= R(x,a) + \gamma \EE_{x'\sim P(\cdot|x,a)} [V_R^\pi(x')].
\end{align*}
We denote by $\widehat{Q}^\pi_R$ the (state, action) value function of $
\pi$ in the MDP where $P$ is replaced by its estimate $\hP$, and let $\widehat{\pi}(R)$ be the optimal policy for this MDP.

\begin{lem}
    We have that
    \begin{align*}
    \|V_R^{\pi^\star(R)}-V_R^{\widehat{\pi}(R)}\|_{\infty} \leq 2\sup_{\pi}\|Q^{\pi}_R - \widehat{Q}^{\pi}_R\|_{\infty} 
    \end{align*}    
\end{lem}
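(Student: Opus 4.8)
The plan is to prove this by a standard performance-difference decomposition that inserts the value functions evaluated in the \emph{estimated} MDP. Write $\pi^\star=\pi^\star(R)$ and $\widehat\pi=\widehat\pi(R)$ for brevity, and let $\widehat V_R^\pi$ denote the value function of $\pi$ in the estimated MDP (the one with $P$ replaced by $\widehat P$), so that for the deterministic policies $\pi:\SSS\to\AAA$ considered here we have the identities $V_R^\pi(x)=Q_R^\pi(x,\pi(x))$ and $\widehat V_R^\pi(x)=\widehat Q_R^\pi(x,\pi(x))$. First I would fix a state $x$ and telescope:
\begin{align*}
V_R^{\pi^\star}(x)-V_R^{\widehat\pi}(x) = \big(V_R^{\pi^\star}(x)-\widehat V_R^{\pi^\star}(x)\big) + \big(\widehat V_R^{\pi^\star}(x)-\widehat V_R^{\widehat\pi}(x)\big) + \big(\widehat V_R^{\widehat\pi}(x)-V_R^{\widehat\pi}(x)\big).
\end{align*}

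Next I would exploit the two optimality properties. Since $\widehat\pi$ is optimal in the estimated MDP, we have $\widehat V_R^{\widehat\pi}(x)\ge\widehat V_R^{\pi^\star}(x)$, so the middle term is nonpositive and can be discarded. The two remaining terms are each a difference of the value of a \emph{fixed} policy under the true versus the estimated transition kernel; using $V_R^\pi(x)-\widehat V_R^\pi(x)=Q_R^\pi(x,\pi(x))-\widehat Q_R^\pi(x,\pi(x))$, each is bounded in absolute value by $\|Q_R^\pi-\widehat Q_R^\pi\|_\infty\le\sup_\pi\|Q_R^\pi-\widehat Q_R^\pi\|_\infty$. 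Together this gives the one-sided bound $V_R^{\pi^\star}(x)-V_R^{\widehat\pi}(x)\le 2\sup_\pi\|Q_R^\pi-\widehat Q_R^\pi\|_\infty$.

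Finally, to upgrade this to a bound on $\|V_R^{\pi^\star}-V_R^{\widehat\pi}\|_\infty$, I would invoke optimality of $\pi^\star$ in the \emph{true} MDP, which yields $V_R^{\pi^\star}(x)\ge V_R^{\widehat\pi}(x)$; hence the left-hand side is nonnegative and coincides with its own absolute value, so $|V_R^{\pi^\star}(x)-V_R^{\widehat\pi}(x)|\le 2\sup_\pi\|Q_R^\pi-\widehat Q_R^\pi\|_\infty$. Taking the maximum over $x\in\SSS$ delivers the claim.

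I expect the only delicate point — more a matter of careful bookkeeping than a genuine obstacle — to be tracking \emph{which} optimality is used \emph{where}: the cancellation of the middle term relies on $\widehat\pi$ being optimal for $\widehat P$, while the sign/absolute-value step at the end relies on $\pi^\star$ being optimal for $P$. Beyond that, the argument reduces entirely to the identity $V_R^\pi(x)=Q_R^\pi(x,\pi(x))$ and the triangle inequality, and requires no estimation-theoretic input; the MDP-estimation content enters only afterwards, when $\sup_\pi\|Q_R^\pi-\widehat Q_R^\pi\|_\infty$ is controlled in terms of $\max_a\|P^a-\widehat P^a\|_{1\to\infty}$ en route to Theorem \ref{lemma:V_bound_MDP_forward}.
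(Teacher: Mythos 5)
Your proposal is correct and is essentially the paper's own proof: the same three-term decomposition (written in $\widehat V$ notation rather than $\widehat Q(s,\pi(s))$, which is the same thing), the same cancellation of the middle term via optimality of $\widehat\pi$ in the estimated MDP, and the same bounding of the two remaining terms by $\sup_\pi\|Q_R^\pi-\widehat Q_R^\pi\|_\infty$. Your final step, invoking optimality of $\pi^\star$ in the true MDP to convert the one-sided bound into a bound on the $\|\cdot\|_\infty$ norm, is left implicit in the paper, so you are if anything slightly more careful.
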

\begin{proof}
We remove the subscript $R$ to simplify the notation. For any $s$, we have 
    \begin{align*}
         V^{\pi^\star}(s)-V^{\widehat{\pi}}(s) &= Q^{\pi^\star}(s,\pi^\star(s)) - Q^{\pihat}(s,\pihat(s)) 
         = [Q^{\pi^\star}(s,\pi^\star(s))  - \widehat{Q}^{\pi^\star}(s,\pi^\star(s))] \nonumber \\ &\quad + [\widehat{Q}^{\pihat}(s,\pihat(s))  - Q^{\pihat}(s,\pihat(s))] + [\widehat{Q}^{\pi^\star}(s,\pi^\star(s)) - \widehat{Q}^{\pihat}(s,\pihat(s))] \nonumber\\ &\leq 2\sup_{\pi} 
         \| Q^{\pi} - \widehat{Q}^{\pi}\|_{\infty},
    \end{align*}
    since $\widehat{Q}^{\pi^\star}(s,\pi^\star(s)) \leq \widehat{Q}^{\pihat}(s,\pihat(s))$ by definition of $\pihat$.
\end{proof}

\begin{lem}(Proposition 2.1 in \cite{agarwal2019reinforcement})
    For all policies $\pi$:
    \begin{align*}
        \|Q^\pi_R - \widehat{Q}^\pi_R \|_{\infty} 
        \leq \frac{\gamma}{(1-\gamma)^2} \max_{a\in\AAA} \|P^a - \hP^a\|_{1\to\infty}
    \end{align*}
\end{lem}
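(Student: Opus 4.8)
The plan is to prove this by the standard simulation / value-difference decomposition, which turns the claim into a self-referential inequality that can be solved for $\|Q^\pi_R - \widehat{Q}^\pi_R\|_\infty$. The starting point is the Bellman equation for the fixed policy $\pi$ in each of the two MDPs: $Q^\pi_R(x,a) = R(x,a) + \gamma\,\EE_{x'\sim P(\cdot|x,a)}[V^\pi_R(x')]$ and $\widehat{Q}^\pi_R(x,a) = R(x,a) + \gamma\,\EE_{x'\sim \hP(\cdot|x,a)}[\widehat{V}^\pi_R(x')]$, where $V^\pi_R(x) = \sum_a \pi(a|x) Q^\pi_R(x,a)$ and likewise for $\widehat{V}^\pi_R$. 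Subtracting, the reward terms cancel because $R$ is shared between the two models, leaving a purely transition-driven difference.

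First I would insert and subtract $\EE_{x'\sim\hP(\cdot|x,a)}[V^\pi_R(x')]$ to split the difference into two pieces:
\begin{align*}
Q^\pi_R(x,a) - \widehat{Q}^\pi_R(x,a) = \gamma \sum_{x'}\big(P^a_{x,x'} - \hP^a_{x,x'}\big)V^\pi_R(x') + \gamma\,\EE_{x'\sim\hP(\cdot|x,a)}\big[V^\pi_R(x') - \widehat{V}^\pi_R(x')\big].
\end{align*}
The first piece evaluates the \emph{same} value function $V^\pi_R$ against the \emph{difference} of the two kernels; by Hölder's inequality it is at most $\gamma\,\|P^a_{x,:}-\hP^a_{x,:}\|_1\,\|V^\pi_R\|_\infty \le \gamma\,\|P^a-\hP^a\|_{1\to\infty}\,\|V^\pi_R\|_\infty$, using the very definition of the $\|\cdot\|_{1\to\infty}$ norm from the Notation section. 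Since all rewards lie in $[0,1]$ and are geometrically discounted, $\|V^\pi_R\|_\infty \le 1/(1-\gamma)$, so this piece is bounded by $\tfrac{\gamma}{1-\gamma}\max_{a\in\AAA}\|P^a-\hP^a\|_{1\to\infty}$.

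The second piece keeps the \emph{same} kernel $\hP$ but applies it to the \emph{difference} of value functions; since $\hP(\cdot|x,a)$ is a probability distribution and $|V^\pi_R(x')-\widehat{V}^\pi_R(x')| = |\sum_a \pi(a|x')(Q^\pi_R(x',a)-\widehat{Q}^\pi_R(x',a))| \le \|Q^\pi_R-\widehat{Q}^\pi_R\|_\infty$ by averaging over $\pi$ (valid even for stochastic policies), this piece is at most $\gamma\,\|Q^\pi_R - \widehat{Q}^\pi_R\|_\infty$. Taking absolute values and then the supremum over all $(x,a)$ on both sides gives
\begin{align*}
\|Q^\pi_R - \widehat{Q}^\pi_R\|_\infty \le \frac{\gamma}{1-\gamma}\max_{a\in\AAA}\|P^a-\hP^a\|_{1\to\infty} + \gamma\,\|Q^\pi_R - \widehat{Q}^\pi_R\|_\infty,
\end{align*}
and rearranging by dividing through by $1-\gamma>0$ yields exactly the stated bound $\|Q^\pi_R - \widehat{Q}^\pi_R\|_\infty \le \tfrac{\gamma}{(1-\gamma)^2}\max_{a\in\AAA}\|P^a-\hP^a\|_{1\to\infty}$.

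The argument is essentially routine, so there is no serious obstacle; the only point requiring care is the self-referential step. One must bound the second piece by $\|Q^\pi_R-\widehat{Q}^\pi_R\|_\infty$ itself (not by some independent quantity) so that the $\gamma$-scaled copy can be absorbed into the left-hand side, and one must confirm that $\|Q^\pi_R-\widehat{Q}^\pi_R\|_\infty$ is finite — which holds because both $Q$-functions are bounded by $1/(1-\gamma)$ — so that the final rearrangement is legitimate.
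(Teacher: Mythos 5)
Your proof is correct and is the standard simulation-lemma argument: the paper itself supplies no proof of this statement, deferring to Proposition~2.1 of Agarwal et al., and your decomposition $Q^\pi_R-\widehat{Q}^\pi_R=\gamma(P-\widehat{P})V^\pi_R+\gamma\widehat{P}(V^\pi_R-\widehat{V}^\pi_R)$ followed by absorbing the $\gamma$-scaled copy is exactly the argument given there, with your self-referential step (legitimate, as you note, since both $Q$-functions are bounded by $1/(1-\gamma)$) playing the role of the usual resolvent bound $\bigl\Vert (I-\gamma\widehat{P}^\pi)^{-1}\bigr\Vert_{\infty\to\infty}\le 1/(1-\gamma)$. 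You also correctly handle the two points where care is needed: evaluating the fixed $V^\pi_R$ (bounded by $1/(1-\gamma)$ since rewards lie in $[0,1]$) against the kernel difference via H\"older and the $\Vert\cdot\Vert_{1\to\infty}$ norm, and averaging over the policy in the second piece.
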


Combining the two lemmas, we get:
$$
\| V^{\pi^\star(R)}_R - V^{\hat{\pi}(R)}_{R} \|_{\infty} \le {2\gamma\over (1-\gamma)^2} \max_{a\in {\cal A}}\| P^a - \hP^a\|_{1\to\infty}.
$$
This inequality is used in the proof of Theorem \ref{lemma:V_bound_MDP_forward}.

\newpage
\section{Statement and proofs of the main results}
\label{app:main}

In this appendix, we present the proofs of the main theorems. In Subsection \textsection\ref{sec:proofs-thms-reward}, we provide the proof of Theorem \ref{thm:reward} and Corollary \ref{corr:homogeneous1}. In Subsection \textsection\ref{subsec:appendix_tight_thm_generative}, we give a complete, non-simplified version of Theorem \ref{thm:generative} from which one can deduce Theorem \ref{thm:generative} and Corollary \ref{corr:homogeneous_model2} given in the main text. Finally, in Subsection \textsection\ref{subsec:appendix_tight_thm_forward}, we present a complete, non-simplified version of Theorem \ref{thm:main_thm} and from the latter, deduce Theorem \ref{thm:main_thm} and Corollary \ref{corr:homogeneous_model3}.

\subsection{Reward matrix estimation -- Model I} \label{sec:proofs-thms-reward}
In this subsection, we present the proofs of Theorem \ref{thm:reward}. The proof of Corollary \ref{corr:homogeneous1} is in fact immediate from Theorem \ref{thm:reward}. 

\begin{proof}[Proof of Theorem \ref{thm:reward}]
\textbf{Proof of (i)}. Recall the results from Lemma \ref{lem:ssr-rewards}: for all $\delta \in (0,1)$, if  
\begin{align}\label{eq:def_B_reward}
        \cB = \sqrt{\frac{nm}{T}} \left( \sqrt{(n+m) \log\left( \frac{e(n+m)T}{\delta}\right)  }  + \log^{3/2}\left( \frac{e(n+m)T}{\delta}\right) \right),
    \end{align}
    then for all $ T \ge  c_1 (\mu^4 \kappa^2 r^2 + 1) (m+n) \log^{3}\left(e^2(m+n)T / \delta
    \right)$,  the event  
    \begin{align*}
        \max(  \Vert U - \widehat{U} (\widehat{U}^\top U)\Vert, \Vert V - \widehat{V} (\widehat{V}^\top V)\Vert) \le C_1 \frac{\Vert M \Vert \Vert M\Vert_\infty}{\sigma_r(M)^2} \max(\Vert V\Vert_{2\to \infty} \Vert U \Vert_{2\to\infty}) \cB
    \end{align*}
    holds with probability at least $1-\delta$ for some universal constants $c_1, C_1 > 0$. To obtain the form presented in Theorem \ref{thm:reward}, we simply recall the definitions $\kappa = \Vert M \Vert/\sigma_r(M)$, $\mu = \max( \sqrt{m/r}\Vert U \Vert_{2\to \infty}, \sqrt{n/r} \Vert V \Vert_{2\to \infty} )$ and the bound $\Vert M \Vert_{\infty}/ \sigma_r(M) \le (\mu^2 \kappa r)/ \sqrt{mn}$ from Lemma \ref{lem:spikeness}. We then substitute in the upper bound above. Note that $\mu,\kappa$ and $r$ are larger than $1$ by definition.

\textbf{Proof of (ii).} To establish the desired bound, we use the decomposition error established in Lemma \ref{lemma:P_to_U}. Namely, under the event that $\Vert \widetilde{M} - M \Vert \le c_1 \sigma_r(M)$ for some universal constant $c_1> 0$ sufficiently small, there exists a universal constant $c_2>0$ such that
\begin{align}\label{eq:rew-upperbound-1}
    \|\hM - M\|_{2\to\infty} & \leq
    c_2\sigma_1(M) \left[
    \|U-\hU (\hU^\top U)\|_{2\to\infty} + \mu\sqrt{\frac{r}{m}}\frac{\|\tM-M\|}{\sigma_r(M)} \right].
\end{align}
Hence, we only need high probability bounds on $\|U-\hU (\hU^\top U)\|_{2\to\infty}$ which we established in (i), and on $\Vert \widetilde{M} - M \Vert$ which we also established in Proposition \ref{prop:reward-concentration-1} under the compound Poisson entries model described \eqref{eq:reward-poisson-matrix-plus-noise-model}. We can extend the latter result under our observation model using the Poisson approximation Lemma \ref{lem:poisson-approx-rewards}, and finally write that for all $\delta\in (0,1)$, using the same definition of $\cB$ as above in \eqref{eq:def_B_reward}, for all for all $
        T \ge  c_3 \log^3\left( (n+m)/\delta\right)$, the following statement
    \begin{align}\label{eq:rew-upperbound-2}
         \frac{\Vert \widetilde{M} - M \Vert}{\sigma_r(M)}  \le  \frac{C_3 \Vert M \Vert_\infty}{\sigma_r(M)}  \cB 
    \end{align}
    holds with probability at least $1-\delta$, 
    for some universal constants $c_3, C_3> 0$ large enough. Note that under the condition  $
        T \ge  c_4 \mu^4 \kappa^2 r^2 \log^3 \left(e (n+m)/\delta\right)$  for some universal constant $c_4$ large enough,  the high probability statement in \eqref{eq:rew-upperbound-2} holds and in addition we also have $\Vert \widetilde{M} - M \Vert \le c_1 \sigma_r(M)$. There, we used the result of Lemma \ref{lem:spikeness}. The statement (ii) in Theorem \ref{thm:reward} is obtained by first substituting in \eqref{eq:rew-upperbound-1}, the upper bound we get in (i) and that we get in  \eqref{eq:rew-upperbound-2}, and then, using  $\sigma_1(M) \le \sqrt{mn} \Vert M \Vert_\infty$  and the bound $\Vert M \Vert_{\infty}/ \sigma_r(M) \le (\mu^2 \kappa r)/ \sqrt{mn}$ from Lemma \ref{lem:spikeness}.

\textbf{Proof of (iii).} To establish the desired bound, we use the decomposition error established in Lemma \ref{lem:M-infinity}. Namely, under the event that $\Vert \widetilde{M} - M \Vert \le c_1 \sigma_r(M)$ for some universal constant $c_1> 0$ sufficiently small, there exists a universal constant $c_2>0$ such that
    \begin{align}
        \Vert \widehat{M} - M \Vert_{\infty}  \leq
        c_2 \| M\|_{2\to\infty} & \left(\frac{\|M-\tM\|}{\sigma_r(M)} \| V\|_{2\to\infty} + \|V-\hV\WhV \|_{2\to\infty} \right) \nonumber \\
        &+ c_2\| M - \hM \|_{2\to\infty} (\| V\|_{2\to\infty} + \|V-\hV\WhV \|_{2\to\infty} ). 
        \label{eq:error-3-reward}
    \end{align}
To upper bound the above error, we need to control: (a) $\|V-\hV\WhV \|_{2\to\infty}$, which we have already done in (i); (b) $\|M-\tM\|$, which follows from Lemma \ref{lem:poisson-approx-rewards} as established in the proof of (ii) (see the high probability statement \eqref{eq:rew-upperbound-2}); and (c) $\| M - \hM \|_{2\to\infty}$, which again we have already done in (ii). The statement (iii) in Theorem \ref{thm:reward} follows from  first substituting in \eqref{eq:error-3-reward}, the upper bounds we get from (a), (b) and (c), and then using $\Vert M \Vert_{\infty}/ \sigma_r(M) \le (\mu^2 \kappa r)/ \sqrt{mn}$, $\mu = \max( \sqrt{m/r}\Vert U \Vert_{2\to \infty}, \sqrt{n/r} \Vert V \Vert_{2\to \infty} )$, and the basic inequality $\Vert M \Vert_{2 \to \infty} \le \sqrt{m} \Vert M\Vert_\infty \le \sqrt{m+n} \Vert M\Vert_\infty$.
\end{proof}

\subsection{Transition matrix estimation under the generative model -- Model II(a)}
\label{subsec:appendix_tight_thm_generative}
In this subsection, we present a complete, non-simplified version of Theorem \ref{thm:generative}, from which one can deduce Theorem \ref{thm:generative} and Corollary \ref{corr:homogeneous_model2} given in the main text. 

First, let us define the function 
$g_{\delta}:\RR^{n\times n}\to\RR_+$ as
\begin{align}
    g_{\delta}(M)=\indicator_{\{\exists \ell: \| M_{\ell,:} \|_{\infty}\leq 1\}}\log &\left(\frac{ne}{\delta}\right) \log^{-1}\left( 1+\frac{1}{\norm{M}_{\infty}} \right) \nonumber\\
    &+ \indicator_{\{\forall \ell: \| M_{\ell,:} \|_{\infty}> 1\} }\log\left(\frac{\norm{M}_{\infty} ne}{\delta}\right) \sqrt{\norm{M}_{\infty}}.
\label{eq:g_def} 
\end{align}
We also use the following notation:
$$
\left\{
\begin{array}{l}
{\cal A}=\frac{1}{\sqrt{T}} \sqrt{\|M\|_{1\to\infty}+\|M^\top\|_{1\to\infty}},\\
{\cal B}'=\mu\kappa \sqrt{\frac{r}{n}} \left( {\cal A}+ \frac{1 }{T}g_{\delta/\sqrt{T}}(TM)\log \left(\frac{n\sqrt{T}}{\delta}\right) \right)  + \sqrt{\frac{r  \|M\|_{\infty} }{T}\log \left( \frac{n\sqrt{T}}{\delta}\right)}. 
\end{array}
\right.
$$

We first recall a standard result quantifying how well $\tM$ approximates $M$.

\begin{lem} $\forall \delta \in (0, 1)$, w.p. at least $1-\delta$,
     $\| \tM-M\| \leq C{\cal A}  + \frac{C}{T} g_{\delta/\sqrt{T}}(T M) \sqrt{\log( \frac{n\sqrt{T}}{\delta})}.$ 
    \label{lem:spectral_Multinomial}
\end{lem}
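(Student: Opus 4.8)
The plan is to prove the spectral-norm deviation bound for $\tM$ by reducing the multinomial observation model to a model with independent Poisson entries, and then applying a matrix Bernstein–type concentration inequality.

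Let me outline what the statement is claiming. We have $T\tM$ (or rather $\lfloor T/2\rfloor \tM$) counting transitions, where for Model II(a) each pair $(x_{2k-1},x_{2k})$ is an independent draw: $x_{2k-1}\sim\nu_0$ and $x_{2k}\sim P_{x_{2k-1},:}$. Thus the count vector $N=(N_{i,j})$ is genuinely multinomial with total count $\lfloor T/2\rfloor$ and cell probabilities that, under the stationary-start idealization, equal $M_{i,j}$. The quantity $\|\tM-M\|$ measures how far the empirical frequency matrix is from its mean $M$ in spectral norm, and the stated bound has a main term ${\cal A}=\frac{1}{\sqrt T}\sqrt{\|M\|_{1\to\infty}+\|M^\top\|_{1\to\infty}}$ capturing the ``variance'' scale, plus a lower-order term $\frac{1}{T}g_{\delta/\sqrt T}(TM)\sqrt{\log(n\sqrt T/\delta)}$ capturing the effect of the largest individual entries (the ``$\|\cdot\|$-type'' correction in matrix Bernstein).

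First I would pass from the multinomial random matrix $N$ to a Poisson random matrix $Y$ with independent entries $Y_{i,j}\sim\mathrm{Poisson}(TM_{i,j})$ (up to the factor of two from $\lfloor T/2\rfloor$, which I absorb into constants), using the Poissonization comparison already developed in the paper — the chain $N\to Z\to Y$ via Lemma~\ref{lemma:markov_to_multinomial} and Lemma~\ref{lemma:multinomial_to_poisson}, which cost only a multiplicative $3e\sqrt T$ in probability. Because the target is a high-probability statement at level $\delta$, I would prove the bound for the Poisson matrix at level $\delta/\sqrt T$ and transfer it back, which is exactly why $g_{\delta/\sqrt T}$ appears with argument shifted by $1/\sqrt T$ inside the log. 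Then, for the independent-entry matrix $Y-\EE[Y]$, I would invoke matrix Bernstein. The variance proxy is controlled by the row/column sums of second moments: $\max_i\sum_j \var(Y_{i,j}) = \max_i\sum_j TM_{i,j} = T\|M\|_{1\to\infty}$ and symmetrically $T\|M^\top\|_{1\to\infty}$ for columns, which after dividing by $T$ (to convert counts to frequencies) yields the $\frac1{\sqrt T}\sqrt{\|M\|_{1\to\infty}+\|M^\top\|_{1\to\infty}}$ scaling of ${\cal A}$. The almost-sure / tail term in Bernstein is governed by the maximal single entry deviation of a Poisson variable, and this is precisely where $g_{\delta}$ enters: its two cases distinguish the small-mean regime (where Poisson tails behave like $\log(1/\delta)/\log\log$ and one uses the $\log^{-1}(1+1/\|M\|_\infty)$ factor) from the large-mean regime (where the relevant scale is $\sqrt{\|M\|_\infty}$).

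The main obstacle I expect is \emph{not} the matrix Bernstein step itself, which is routine once entries are independent, but rather getting the Poisson tail / truncation term tight enough to produce the precise two-regime function $g_\delta$ rather than a cruder bound. Concretely, a Poisson random variable with mean $\lambda=TM_{i,j}$ is not bounded, so one must truncate it at a level $L$ chosen so that $\PP[\max_{i,j}Y_{i,j}>L]\le\delta/\sqrt T$; solving the Poisson upper tail $\PP[\mathrm{Poisson}(\lambda)>L]\approx e^{-L\log(L/(e\lambda))}$ for $L$ gives $L\sim \log(n^2e/\delta)/\log(1+1/\lambda)$ when $\lambda$ is small and $L\sim\lambda+\sqrt{\lambda\log(\cdots)}$ when $\lambda$ is large, which is the origin of the two indicator-weighted branches in \eqref{eq:g_def}. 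One then applies Bernstein to the truncated centered matrix and adds back the (small-probability) contribution of the truncated mass; balancing these two contributions so that the $L$-dependence collapses into exactly $g_{\delta/\sqrt T}(TM)$ is the delicate bookkeeping. Since this is stated as ``a standard result,'' I would expect to cite or adapt an existing Poisson/multinomial matrix concentration bound (e.g. of the type used in \cite{zhang2019spectral}) and then merely verify that the constants and the two-regime truncation specialize to the displayed form, relegating the full computation to the Poisson-concentration appendix referenced as App.~\ref{sec:appendix_conc_Poisson}.
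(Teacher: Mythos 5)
Your high-level architecture (Poissonize, truncate the Poisson entries in two regimes, then apply a spectral concentration bound to the resulting independent-entry matrix) matches the paper's proof, which indeed combines Lemma~\ref{lemma:multinomial_to_poisson} with Lemma~\ref{lem:spectral_Poisson}. But there is a genuine gap in your key concentration step: you invoke matrix Bernstein, and matrix Bernstein cannot produce the stated bound. With variance proxy $v=\tfrac{1}{T}\max\{\|M\|_{1\to\infty},\|M^\top\|_{1\to\infty}\}$ and truncation level $\beta$, Bernstein gives a deviation of order $\sqrt{v\log(n/\delta)}+\beta\log(n/\delta)$, i.e.\ the leading term becomes ${\cal A}\sqrt{\log(n\sqrt{T}/\delta)}$, not $C{\cal A}$. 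The whole point of the lemma is that the variance term carries \emph{no} logarithmic factor — only the entry-size term $\frac{1}{T}g_{\delta/\sqrt{T}}(TM)$ is multiplied by $\sqrt{\log(n\sqrt{T}/\delta)}$. To get this, the paper does not use Bernstein at all for this step: Lemma~\ref{lem:spectral_Poisson} is proved via Lemma 4 of \cite{mcrae2021low}, which rests on the Bandeira--van Handel spectral bound \cite{bandeira2016sharp} for matrices with independent bounded entries; that inequality bounds $\|X\|$ by (row/column $L^2$ norms of the variances) $+$ (uniform entry bound)$\times\sqrt{\log n}$, so the log attaches only to the truncation level. Your truncation analysis (the two-regime choice of $L$ solving the Poisson tail, the bias of the truncated mean) is exactly the right bookkeeping — it mirrors Steps 1--2 of Lemma~\ref{lemma:El_A_conc} — but it must be fed into the Bandeira--van Handel bound, not into Bernstein, or you prove a strictly weaker statement than the one claimed.

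A secondary, non-fatal inaccuracy: you route the reduction through the chain $N\to Z\to Y$ using Lemma~\ref{lemma:markov_to_multinomial}. This lemma concerns Model II(a), the generative model, where the sampled transition pairs are already i.i.d., so the count matrix is exactly multinomial; only the multinomial-to-Poisson comparison (Lemma~\ref{lemma:multinomial_to_poisson}, costing $e\sqrt{T}$ in probability, whence $g_{\delta/\sqrt{T}}$ and the $\log(n\sqrt{T}/\delta)$) is needed. Including the Markov-to-multinomial step only changes constants, but it signals a misidentification of which observation model the lemma addresses.
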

\begin{proof}
    The lemma follows directly from Lemma \ref{lemma:multinomial_to_poisson} (replacing $T_{\tau}$ by $T$) and Lemma \ref{lem:spectral_Poisson}.
\end{proof}

\begin{thm}
    \label{thm:multinomial_sampling_tight}
    Assume that $(\nu_0)_{\min} = \min_{i\in[n]}(\nu_0)_i>0$. For any $\delta>0$, if $\| \tM-M\| \leq c \sigma_r(M)$, $g_{\delta/\sqrt{T}}(T M)\log(n\sqrt{T}/\delta)\leq c T\sigma_r(M)$ and  $\| M\|_{\infty}\log (n\sqrt{T}/\delta)\leq c T\sigma_r^2(M)$ for some universal constant $c>0$, then there exists a universal constant $C>0$ such that with probability at least $1-\delta$ holds:
$$
\begin{array}{ll}
(i) & \max\Big\{ \| U  - \hU (\hU^\top U)\|_{2\to\infty}, \|V - \hV (\hV^\top V)\|_{2\to\infty} \Big\} \leq C \frac{{\cal B}'}{\sigma_r(M)},\\
(ii) & \| \hM - M\|_{2\to\infty} \leq C \kappa {\cal B}', \quad \| \hP - P\|_{1\to\infty} \leq C \frac{\kappa\sqrt{n}}{(\nu_0)_{\min}} {\cal B}',\\
(iii) & \| \hM - M \|_{\infty}  
\leq C \left( \frac{\Vert M \Vert_{2\to\infty} + \kappa {\cal B}'}{\sigma_r(M)} + \kappa  \mu \sqrt{\frac{r}{n}} \right) {\cal B}',\\
(iv) & \| \hP - P\|_{\infty} \leq C \frac{ {\cal B}'}{ (\nu_0)_{\min} } \left[ \sqrt{n}\kappa\frac{\| M\|_{\infty}}{ (\nu_0)_{\min} } + \frac{\|M\|_{2\to\infty} +  \kappa {\cal B}'}{ \sigma_r(M)} + \kappa\mu\sqrt{\frac{r}{n}} \right],
\end{array}
$$
where (iv) holds if in addition $\|  \hM -  M \|_{1 \to \infty} \le \frac{1}{2} (\nu_0)_{\min}$. 
\end{thm}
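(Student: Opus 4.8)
The plan is to follow the template already used for Theorem~\ref{thm:reward}: establish the singular subspace recovery bound (i) first, as this is the only genuinely probabilistic step, and then derive (ii)--(iv) from purely deterministic perturbation decompositions combined with the spectral bound of Lemma~\ref{lem:spectral_Multinomial} and the spikeness estimate $\|M\|_\infty/\sigma_r(M) \le \mu^2\kappa r/n$ (Lemma~\ref{lem:spikeness}, specialized to the square case $m=n$). The key point is that ${\cal B}'$ is constructed precisely so that, after multiplying by the relevant deterministic factors, the spectral error $\|\tM - M\|$ and the leave-one-out fluctuation terms all consolidate into a single expression; the hypotheses $\|\tM-M\|\le c\sigma_r(M)$, $g_{\delta/\sqrt T}(TM)\log(n\sqrt T/\delta)\le cT\sigma_r(M)$, and $\|M\|_\infty\log(n\sqrt T/\delta)\le cT\sigma_r^2(M)$ are exactly what is needed to keep the perturbation in the regime where the leave-one-out expansion converges.

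\textbf{The crux: proving (i).} Unlike the forward model, the generative model produces independent transition pairs $(x_{2k-1},x_{2k})$, so the Markov-to-multinomial reduction (Step~1 of \textsection\ref{subsec:elements_proofs}) is unnecessary and only the multinomial-to-Poisson approximation (Lemma~\ref{lemma:multinomial_to_poisson}) is required. I would first pass to the Poisson matrix $Y$ with independent entries of mean proportional to $T M_{ij}$, apply the dilation trick to symmetrize, and run the leave-one-out argument as sketched for Theorem~\ref{thm:main_thm}. This yields a generic subspace bound in terms of $\|M\|$, $\sigma_r(M)$, the incoherence $\mu$, and the spectral noise, of the schematic form $\max\{\cdots\}\lesssim \frac{\|M\|\,\|M\|_\infty}{\sigma_r(M)^2}\|U\|_{2\to\infty}$ times the relevant noise scale, plus lower-order residuals. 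Substituting $\|U\|_{2\to\infty}=\mu\sqrt{r/n}$, $\kappa=\|M\|/\sigma_r(M)$, the spikeness bound, and the spectral estimate of Lemma~\ref{lem:spectral_Multinomial}, everything collapses into $C{\cal B}'/\sigma_r(M)$.

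\textbf{Bounds (ii) and (iii).} For $\|\hM-M\|_{2\to\infty}$ I would invoke the deterministic decomposition of Lemma~\ref{lemma:P_to_U}, bounding it by $\sigma_1(M)\big[\,\|U-\hU(\hU^\top U)\|_{2\to\infty} + \mu\sqrt{r/n}\,\|\tM-M\|/\sigma_r(M)\,\big]$, then insert (i) and Lemma~\ref{lem:spectral_Multinomial}; using $\sigma_1(M)=\kappa\sigma_r(M)$ gives $C\kappa{\cal B}'$. For $\|\hM-M\|_\infty$ I would use the decomposition of Lemma~\ref{lem:M-infinity} and substitute the bounds from (i), (ii), and the spectral estimate, together with $\|V\|_{2\to\infty}=\mu\sqrt{r/n}$. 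The transition bound $\|\hP-P\|_{1\to\infty}$ follows from normalization stability: since $M_{i,:}\ge 0$, the positive-part normalization is $1$-Lipschitz, so $\|\hP_{i,:}-P_{i,:}\|_1 \lesssim \|\hM_{i,:}-M_{i,:}\|_1/(\nu_0)_i \le \sqrt n\,\|\hM-M\|_{2\to\infty}/(\nu_0)_{\min}$, which produces the factor $\sqrt n/(\nu_0)_{\min}$.

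\textbf{Bound (iv), the most delicate.} Writing $s_i=\|(\hM_{i,:})_+\|_1$ and recalling $(\nu_0)_i=\|M_{i,:}\|_1$, I would split
\[
\hP_{ij}-P_{ij} = \frac{(\hM_{ij})_+ - M_{ij}}{s_i} + M_{ij}\,\frac{(\nu_0)_i - s_i}{s_i\,(\nu_0)_i}.
\]
The hypothesis $\|\hM-M\|_{1\to\infty}\le \tfrac12(\nu_0)_{\min}$ forces $s_i\ge \tfrac12(\nu_0)_i$, so $1/s_i\le 2/(\nu_0)_i$ and the normalization is well defined. The first term is $\le 2\|\hM-M\|_\infty/(\nu_0)_{\min}$, into which I plug (iii); the second is $\le 2\|M\|_\infty\sqrt n\,\|\hM-M\|_{2\to\infty}/(\nu_0)_{\min}^2$, into which I plug (ii). Collecting the two contributions reproduces exactly the bracketed expression in (iv). I expect the main obstacle to lie entirely in the crux step: controlling the leave-one-out residuals under heavy-tailed Poisson (rather than bounded) entries, and verifying that the multiplicative Poisson-approximation constant stays harmless, requires the sharp concentration estimates of App.~\ref{sec:appendix_conc_Poisson}; the remaining steps are deterministic bookkeeping hinging only on the already-established bounds and the spikeness estimate.
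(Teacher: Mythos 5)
Your proposal is correct and follows essentially the same route as the paper: statement (i) via the multinomial-to-Poisson comparison (Lemma~\ref{lemma:multinomial_to_poisson}, applied with $T$ in place of $T_\tau$ since the generative model already yields independent transition pairs) combined with the spectral bound of Lemma~\ref{lem:spectral_Poisson} and the leave-one-out subspace lemma (Lemma~\ref{lemma:U_to_EU}), and statements (ii)--(iv) from (i) via the deterministic decompositions of Appendix~\ref{sec:different_norms_from_sing_subspace} (Lemmas~\ref{lemma:P_to_U}, \ref{lem:P-one-to-infty}, \ref{lem:M-infinity}, \ref{lemma:hP_infty_bound}). Your hand-derived split for (iv) and your normalization-stability argument for $\|\hP-P\|_{1\to\infty}$ are precisely the content of Lemmas~\ref{lemma:hP_infty_bound} and~\ref{lem:P-one-to-infty}, respectively, so no genuine deviation or gap is present.
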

\begin{proof}
    The first statement of the theorem follows from Lemma \ref{lemma:multinomial_to_poisson} (with $T$ instead of $T_{\tau}$), Lemmas \ref{lem:spectral_Poisson} and \ref{lemma:U_to_EU}. The remaining bounds are consequences of $(i)$ and of the results presented in Appendix \ref{sec:different_norms_from_sing_subspace}.
\end{proof}

\begin{proof}[Proof of Theorem \ref{thm:generative}] Theorem \ref{thm:generative} follows
from Lemma \ref{lem:spectral_Multinomial} and Theorem \ref{thm:multinomial_sampling_tight} by simplifying the term ${\cal B}'$ using ${\cal B}$ given in Theorem \ref{thm:generative}. As a result of this simplification, as well as of the assumptions given in statement of Theorem \ref{thm:multinomial_sampling_tight}, we obtain bounds on $n,T$ required in Theorem \ref{thm:generative}. Furthermore, we use simple inequalities (check Lemma \ref{lem:spikeness}) to rewrite all terms depending on $M$ as functions of $\norm{M}_{\infty}$ and $(\nu_0)_{\min}$.
\end{proof}

\begin{rem}
\label{rem:thm_gen_corollary}
    It is worth noting that Corollary \ref{corr:homogeneous_model2} is a corollary of Theorem \ref{thm:multinomial_sampling_tight}, and that the lower bound on $n$ required in Theorem \ref{thm:generative} is not required for this corollary. Moreover, results presented in this corollary are valid for almost all $T\geq cn\log(nT)$ - in the case when $T \asymp [n^{2-\epsilon},n^2]$ for arbitrarily small $\epsilon>0$, bounds in Corollary \ref{corr:homogeneous_model2} contain additional $\log$ term, which is an artifact of our analysis (and splitting concentration into cases $T\lesssim n^2$ and $T\gtrsim n^2$). This discontinuity in the range of $T$ can be resolved, but at the price of a reduced readability.
\end{rem}

\subsection{Transition matrix estimation under the forward model -- Model II(b)}
\label{subsec:appendix_tight_thm_forward}

In this subsection, we present a complete, non-simplified version of Theorem \ref{thm:main_thm}, from which one can deduce Theorem \ref{thm:main_thm} and Corollary \ref{corr:homogeneous_model3} given in the main text.


 
Again, we use function the funcion $g_\delta$ defined in  \eqref{eq:g_def}, and we introduce:
\begin{align*}
{\cal B}' =  \mu\kappa\sqrt{\frac{r}{n}} \bigg( \sqrt{\frac{\| \nu\|_{\infty}\tau^\star }{T}\log\left(\frac{ne}{\delta}\right) \log(  T \nu_{\min}^{-1} ) } + \frac{ \tau^\star}{T} &g_{\delta/\sqrt{T_{\tau}}}(T_{\tau} M)\log \left(\frac{n\sqrt{T_{\tau}}}{\delta}\right)\log ( T\nu_{\min}^{-1} ) \bigg)  \\
        &+ \sqrt{\frac{r \tau^\star \|M\|_{\infty}}{T} \log \left(\frac{n \sqrt{T_{\tau}}}{\delta}\right) \log (  T \nu_{\min}^{-1} ) }. 
\end{align*}

Our analysis starts from the following lemma stating how well $\tM$ approximates $M$. 

\begin{lem}(Lemma 7 in \cite{zhang2019spectral}) For $\tau \geq 2 \tau^\star \log (T\nu_{\min}^{-1})$ and for any $\delta \in (0,1)$, we have with probability at least $1-\delta$: $\| \tM-M\| \leq  C\sqrt{\frac{\| \nu\|_{\infty}\tau }{T}\log\left(\frac{ne}{\delta}\right)} + C\frac{\tau}{T} \log\left(\frac{ne}{\delta}\right)$.
\end{lem}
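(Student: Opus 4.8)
The plan is to write $\tM$ (dropping the subscript $(k)$) as an empirical average of rank-one indicator matrices, reduce the $\tau$-separated Markovian samples to an i.i.d. family via the mixing condition, and then invoke the matrix Bernstein inequality. Concretely, set $E^{(l)}=e_{x_{k+l\tau}}e_{x_{k+1+l\tau}}^\top$, so that $\tM=\frac{1}{T_\tau}\sum_{l=0}^{T_\tau-1}E^{(l)}$, and split $\tM-M=(\tM-\EE[\tM])+(\EE[\tM]-M)$. I would first dispatch the bias term: since $\EE[E^{(l)}]_{i,j}=\PP(x_{k+l\tau}=i)P_{i,j}$, we have $(\EE[\tM]-M)_{i,j}=\big(\tfrac{1}{T_\tau}\sum_l\PP(x_{k+l\tau}=i)-\nu_i\big)P_{i,j}$. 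The thinned chain $(x_{k+l\tau})_l$ evolves under $P^\tau$, and the hypothesis $\tau\ge 2\tau^\star\log(T\nu_{\min}^{-1})$ together with Lemma \ref{lemma:Markov_mixing_rate} forces $\tfrac12\|P^\tau_{s,:}-\nu^\top\|_1\le\varepsilon$ with $\varepsilon\lesssim \nu_{\min}/T$. Hence every marginal of $x_{k+l\tau}$ lies within total-variation distance $\varepsilon$ of $\nu$, so the bias is of order $\varepsilon$ and is dominated by the target bound.

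The core is the fluctuation term. Because consecutive thinned samples are $\tau$ steps apart, the same mixing estimate lets me couple $(x_{k+l\tau})_l$ with an i.i.d.\ sequence $\tilde x_l\sim\nu$, and correspondingly the transition pairs $(x_{k+l\tau},x_{k+1+l\tau})$ with i.i.d.\ draws $(\tilde x_l,\tilde x'_l)$ where $\tilde x'_l\sim P_{\tilde x_l,:}$; the coupling fails only on an event of probability at most $T_\tau\varepsilon\lesssim\nu_{\min}$, which is absorbed into $\delta$. On the coupled event $\tM$ equals $\tM^{\mathrm{iid}}=\frac{1}{T_\tau}\sum_l\tilde E^{(l)}$ with $\tilde E^{(l)}=e_{\tilde x_l}e_{\tilde x'_l}^\top$ and $\EE[\tilde E^{(l)}]=M$. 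Writing $W_l=\tilde E^{(l)}-M$, I apply matrix Bernstein to $\frac{1}{T_\tau}\sum_l W_l$. Each summand is bounded, $\|W_l\|\le\|\tilde E^{(l)}\|+\|M\|\le 2$ (using $\|M\|\le\|\nu\|_\infty\le 1$, which holds since the row sums of $M$ and of $M^\top$ both equal $\nu$). For the variance I compute $\EE[\tilde E^{(l)}(\tilde E^{(l)})^\top]=\diag(\nu)$ and $\EE[(\tilde E^{(l)})^\top\tilde E^{(l)}]=\diag(\nu^\top P)=\diag(\nu)$, so that after subtracting $MM^\top$ and $M^\top M$ both matrix variances have spectral norm $\lesssim\|\nu\|_\infty$, giving $\big\|\sum_l\EE[W_lW_l^\top]\big\|\vee\big\|\sum_l\EE[W_l^\top W_l]\big\|\lesssim T_\tau\|\nu\|_\infty$.

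Matrix Bernstein then yields, with probability at least $1-\delta$, a bound $\big\|\frac{1}{T_\tau}\sum_l W_l\big\|\lesssim\sqrt{\frac{\|\nu\|_\infty}{T_\tau}\log(ne/\delta)}+\frac{1}{T_\tau}\log(ne/\delta)$, and substituting $T_\tau=\lfloor T/\tau\rfloor\asymp T/\tau$ produces exactly the two claimed terms $\sqrt{\|\nu\|_\infty\tau T^{-1}\log(ne/\delta)}$ and $\tau T^{-1}\log(ne/\delta)$. Collecting the bias, coupling-failure, and fluctuation contributions and adjusting the universal constant $C$ finishes the argument.

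The main obstacle is the decoupling step: rigorously justifying that the $\tau$-separated Markov samples may be treated as independent without inflating the logarithmic factors. This requires carefully quantifying the mixing afforded by $\tau\ge 2\tau^\star\log(T\nu_{\min}^{-1})$ through Lemma \ref{lemma:Markov_mixing_rate}, accumulating the per-sample coupling error over all $T_\tau$ transitions and over a union bound across the $n$ states, and verifying that both the residual bias and the coupling-failure probability stay below the target bound. An alternative that sidesteps the explicit coupling is to apply a matrix Bernstein inequality tailored to Markov-dependent data directly to $\tM-M$, using the mixing time to set the effective variance; this trades the coupling bookkeeping for a more technical concentration inequality but reaches the same conclusion.
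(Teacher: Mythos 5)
First, a point of reference: the paper does not prove this lemma at all --- it is imported verbatim as Lemma 7 of \cite{zhang2019spectral}, and the closest in-paper analogue of the reduction you attempt is Lemma \ref{lemma:markov_to_multinomial}. Your overall architecture (thin the chain, compare the $\tau$-separated pairs to i.i.d.\ pairs drawn from $M$, apply matrix Bernstein, treat the bias separately) is the standard route, and your Bernstein bookkeeping is correct: $\|W_l\|\le 1+\|M\|\le 2$ because both the row sums of $M$ and of $M^\top$ equal $\nu$, and both matrix variances are dominated by $T_\tau\,\mathrm{diag}(\nu)$, so the two claimed terms appear after substituting $1/T_\tau \asymp \tau/T$.

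The genuine gap is the decoupling step. Your coupling argument is \emph{additive}: it fails on an event of probability roughly $T_\tau\varepsilon \asymp \nu_{\min}/(2e\tau)$, and this quantity is a fixed function of the chain, $T$ and $\tau$ --- it does not shrink as $\delta\to 0$. The lemma asserts the bound for \emph{every} $\delta\in(0,1)$, so your argument only establishes it for $\delta \gtrsim \nu_{\min}/\tau$, and this restriction is not harmless: the lemma is invoked (through Theorem \ref{thm:main_thm_tight} and Corollary \ref{corr:homogeneous_model3}) with $\delta=\min\{n^{-2},T^{-1}\}$, which for $\nu_{\min}\asymp 1/n$ and $\tau\asymp\log(nT)$ lies far below $\nu_{\min}/\tau \asymp 1/(n\log(nT))$. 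A second, smaller defect: the first pair of a subset has not mixed (for $k=1$, $x_1\sim\nu_0$ is arbitrary), so your claim that ``every marginal of $x_{k+l\tau}$ lies within total-variation distance $\varepsilon$ of $\nu$'' is false at $l=0$, and the first coupling step can fail with probability $\Theta(1)$; that pair must be discarded or treated separately, at cost $O(1/T_\tau)=O(\tau/T)$, which the second term of the bound absorbs. The repair for the main gap is to make the comparison \emph{multiplicative} rather than additive, which is exactly what Lemma \ref{lemma:markov_to_multinomial} does with the same mixing estimate (Lemma \ref{lemma:Markov_mixing_rate} with $\varepsilon=\nu_{\min}/(eT)$): the likelihood ratio between the law of the transition-count matrix $N$ and the i.i.d./multinomial law is bounded by $3$ uniformly over outcomes, hence $\PP(N\in{\cal Z})\le 3\,\PP(Z\in{\cal Z})$ for every event ${\cal Z}$. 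You then run your matrix Bernstein bound under the i.i.d.\ model at level $\delta/3$ and transfer it to the Markov model at level $\delta$, simultaneously for all $\delta$; with that substitution (and the first-sample fix) your proof goes through.
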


\begin{thm}
\label{thm:main_thm_tight}
Assume that $\nu_{\min} = \min_{i\in[n]} \nu_i>0$ and that $\tau/(\tau^\star \log (T\nu_{\min}^{-1}))\in [c_1,c_2]$ for some universal constants $c_2> c_1\ge 2$. For any $\delta>0$, if $\| \tM-M\| \leq c \sigma_r(M)$, $g_{\delta/\sqrt{T}}(T_{\tau} M)\log(n\sqrt{T_{\tau}}/\delta)\leq c T_{\tau}\sigma_r(M)$ and $\| M\|_{\infty}\log (n\sqrt{T_{\tau}}/\delta)\leq c T_{\tau}\sigma_r^2(M)$ for some universal constant $c>0$, then there exists a universal constant $C>0$ such that with probability at least $1-\delta$,
$$
\begin{array}{ll}
(i) & \max\Big\{ \| U  - \hU (\hU^\top U)\|_{2\to\infty}, \|V - \hV (\hV^\top V)\|_{2\to\infty} \Big\} \leq C \frac{{\cal B}'}{\sigma_r(M)},\\
(ii) & \| \hM - M\|_{2\to\infty} \leq C \kappa {\cal B}', \quad \| \hP - P\|_{1\to\infty} \leq C \frac{\kappa\sqrt{n}}{\nu_{\min}} {\cal B}',\\
(iii) & \| \hM - M \|_{\infty}  
\leq C \left( \frac{\Vert M \Vert_{2\to\infty} + \kappa {\cal B}'}{\sigma_r(M)} + \kappa  \mu \sqrt{\frac{r}{n}} \right) {\cal B}',\\
(iv) & \| \hP - P\|_{\infty} \leq C \frac{ {\cal B}'}{ \nu_{\min} } \left[ \sqrt{n}\kappa\frac{\| M\|_{\infty}}{ \nu_{\min} } + \frac{\|M\|_{2\to\infty} +  \kappa {\cal B}'}{ \sigma_r(M)} + \kappa\mu\sqrt{\frac{r}{n}} \right],
\end{array}
$$
where (iv) holds if in addition $\|  \hM -  M \|_{1 \to \infty} \le \frac{1}{2} \nu_{\min}$. 
\end{thm}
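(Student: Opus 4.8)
The proof follows the same architecture as that of Theorem \ref{thm:multinomial_sampling_tight} for the generative model; the only essential new ingredient is a preliminary reduction replacing the Markovian transition counts by a multinomial vector, after which the argument proceeds essentially verbatim. By the remark preceding the statement it suffices to analyze the estimate built from a single subset of $\tau$-spaced transitions, so I drop the superscript $(k)$ and write $N_{i,j}=T_\tau\tM_{i,j}$ for the number of observed transitions from $i$ to $j$ in that subset. The plan is to transfer every high-probability guarantee to a reference model in which the matrix $T_\tau\tM$ is replaced by a matrix $Y$ with mutually independent $\mathrm{Poisson}(T_\tau M_{i,j})$ entries, where the leave-one-out machinery developed for the generative case applies unchanged.

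The reduction proceeds in two steps. First, the assumption $\tau/(\tau^\star\log(T\nu_{\min}^{-1}))\in[c_1,c_2]$ with $c_1\ge 2$ guarantees $\tau\ge 2\tau^\star\log(T\nu_{\min}^{-1})$, and together with the mixing estimate of Lemma \ref{lemma:Markov_mixing_rate} this lets me invoke Lemma \ref{lemma:markov_to_multinomial}: for every event $\mathcal{Z}$ on the count vector, $\PP[N\in\mathcal{Z}]\le 3\,\PP[Z\in\mathcal{Z}]$, where $Z$ is multinomial with $T_\tau$ trials and cell probabilities $M_{i,j}$. Second, Lemma \ref{lemma:multinomial_to_poisson} gives $\PP[Z\in\mathcal{Z}]\le e\sqrt{T_\tau}\,\PP[Y\in\mathcal{Z}]$. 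Composing the two, any bad event for the Poisson model of probability at most $\delta/(3e\sqrt{T_\tau})$ pulls back to probability at most $\delta$ for the true data. This is precisely why the logarithmic factors in the statement carry the argument $n\sqrt{T_\tau}/\delta$ rather than $n/\delta$, and it shows the multiplicative inflation is harmless.

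It then remains to establish the four bounds for the independent-entry Poisson matrix $Y$. For statement (i), I would run the leave-one-out decomposition of Lemma \ref{lemma:U_to_EU} exactly as in the generative proof: after the dilation trick symmetrizes $\tM$, the error $\| U-\hU(\hU^\top U)\|_{2\to\infty}$ splits into several terms, the dominant one being $\max_{l}\|(M_{l,:}-\tM_{l,:})(U-\hU(\hU^\top U))\|_{2}$. Replacing $\hU$ by the leave-one-out iterate $\hU^{(l)}$ built from $\tM$ with its $l$-th row and column zeroed makes the two factors independent, and the residual is controlled by the Poisson concentration inequalities of Appendix \ref{sec:appendix_conc_Poisson} together with the hypothesis $\|\tM-M\|\le c\sigma_r(M)$ (itself ensured for the final, unconditional statement by Lemma 7 of \cite{zhang2019spectral}). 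The three quantitative hypotheses on $g$, on $\|\tM-M\|$, and on $\|M\|_\infty$ are exactly the conditions guaranteeing a spectral gap and making the Poisson tail bounds effective. Statements (ii)--(iv) then follow from (i) through the deterministic norm-conversion results of Appendix \ref{sec:different_norms_from_sing_subspace}, combined with the row-normalization estimates relating $\hP$ to $\hM$; the extra requirement $\|\hM-M\|_{1\to\infty}\le\tfrac12\nu_{\min}$ in (iv) ensures that no row of $(\hM)_+$ vanishes, so that the normalization in the definition of $\hP$ is well behaved.

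The main obstacle is Step 1 of the reduction. Unlike the generative model, whose consecutive sampled pairs are independent and therefore already exactly multinomial, in the forward model the transitions within a subset remain correlated, yet the entire entry-wise leave-one-out analysis presupposes independent entries. The delicate point is that Lemma \ref{lemma:markov_to_multinomial} yields only a one-sided domination $\PP[N\in\mathcal{Z}]\le 3\,\PP[Z\in\mathcal{Z}]$ rather than a distributional identity, so every downstream estimate must be phrased purely as an upper bound on a failure event; the choice $\tau\asymp\tau^\star\log(T\nu_{\min}^{-1})$ is the sweet spot making this domination valid while keeping $T_\tau=\lfloor T/\tau\rfloor$ large enough for the Poisson concentration to stay sharp. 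Everything downstream of this reduction is identical to the generative argument, which is why the resulting bounds (i)--(iv) coincide in form with those of Theorem \ref{thm:multinomial_sampling_tight} up to the replacement of the sample budget and the appearance of the mixing-time factor $\tau^\star$ inside $\mathcal{B}'$.
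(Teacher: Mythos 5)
Your proposal is correct and follows essentially the same route as the paper's own proof: statement (i) is obtained by chaining Lemma \ref{lemma:markov_to_multinomial} (Markov-to-multinomial), Lemma \ref{lemma:multinomial_to_poisson} (multinomial-to-Poisson), and the leave-one-out bound of Lemma \ref{lemma:U_to_EU}, and statements (ii)--(iv) then follow from (i) via the norm-conversion results of Appendix \ref{sec:different_norms_from_sing_subspace}. Your further remarks on the one-sided nature of the comparison inequalities and on the role of the condition $\|\hM-M\|_{1\to\infty}\le \frac{1}{2}\nu_{\min}$ in (iv) are consistent with the paper's treatment.
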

\begin{proof}
    The first statement of the 
    theorem follows from Lemmas \ref{lemma:markov_to_multinomial}, \ref{lemma:multinomial_to_poisson} and \ref{lemma:U_to_EU}, whereas the next four bounds follow from $(i)$ and the bounds presented in Appendix \ref{sec:different_norms_from_sing_subspace}.
\end{proof}

As for the generative model, Theorem \ref{thm:main_thm} is a direct consequence of Theorem \ref{thm:main_thm_tight}, and it is obtained by simplifying the term ${\cal B}'$ to ${\cal B}$. Corollary \ref{corr:homogeneous_model3} is also easily derived from Theorem \ref{thm:main_thm_tight}.

\subsection{An additional lemma}

\begin{lem}\label{lem:spikeness}
    Let $M$ be matrix and $m\times n$ matrix with rank $r$, incoherence parameter $\mu > 0$, and condition number $\kappa > 0$. Then, we have  
    \begin{align*}
        \Vert M \Vert_\infty  \le   \sigma_1(M) \frac{ \mu^2 r  }{\sqrt{nm}} \le \sigma_r(M) \frac{ \mu^2 \kappa  r  }{\sqrt{nm}}.
    \end{align*}
\end{lem}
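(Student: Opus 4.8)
The plan is to work directly from the SVD $M = U\Sigma V^\top$ and express each entry of $M$ through the singular vectors. For any $(i,j)\in[m]\times[n]$ we have $M_{i,j} = U_{i,:}\,\Sigma\,(V_{j,:})^\top = \sum_{k=1}^r \sigma_k U_{i,k} V_{j,k}$, so the entry-wise magnitude is controlled by the two relevant rows of the singular-vector matrices. First I would bound this scalar by pulling out the largest singular value and applying Cauchy--Schwarz:
\begin{align*}
|M_{i,j}| = \Big|\sum_{k=1}^r \sigma_k U_{i,k} V_{j,k}\Big| \le \sigma_1(M)\sum_{k=1}^r |U_{i,k}|\,|V_{j,k}| \le \sigma_1(M)\,\|U_{i,:}\|_2\,\|V_{j,:}\|_2.
\end{align*}
Taking the maximum over $(i,j)$ then gives $\|M\|_\infty \le \sigma_1(M)\,\|U\|_{2\to\infty}\,\|V\|_{2\to\infty}$.

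Next I would substitute the incoherence definitions. By definition $\mu(U)=\sqrt{m/r}\,\|U\|_{2\to\infty}$ and $\mu(V)=\sqrt{n/r}\,\|V\|_{2\to\infty}$, so $\|U\|_{2\to\infty} = \mu(U)\sqrt{r/m} \le \mu\sqrt{r/m}$ and similarly $\|V\|_{2\to\infty} \le \mu\sqrt{r/n}$, using $\mu=\max\{\mu(U),\mu(V)\}$. Plugging these into the previous display yields
\begin{align*}
\|M\|_\infty \le \sigma_1(M)\,\mu\sqrt{\tfrac{r}{m}}\cdot\mu\sqrt{\tfrac{r}{n}} = \sigma_1(M)\,\frac{\mu^2 r}{\sqrt{nm}},
\end{align*}
which is the first claimed inequality. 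The second inequality is then immediate from $\sigma_1(M)=\kappa\,\sigma_r(M)$, giving $\sigma_1(M)\frac{\mu^2 r}{\sqrt{nm}} = \sigma_r(M)\frac{\mu^2\kappa r}{\sqrt{nm}}$.

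There is no genuine obstacle here; the statement is a routine consequence of Cauchy--Schwarz combined with the bookkeeping of the incoherence normalization. The only point requiring minor care is the choice of how to distribute $\sigma_1(M)$ versus keeping the $\sigma_k$ inside the sum — factoring out $\sigma_1(M)$ first (rather than applying a weighted Cauchy--Schwarz) is what produces the clean product of the two $2\to\infty$ norms and hence the symmetric $\mu^2 r/\sqrt{nm}$ factor. Everything else is a direct substitution of definitions.
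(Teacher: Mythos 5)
Your proof is correct and follows essentially the same route as the paper's: expand $M_{i,j}$ via the SVD, factor out $\sigma_1(M)$, apply Cauchy--Schwarz to obtain $\sigma_1(M)\,\Vert U\Vert_{2\to\infty}\Vert V\Vert_{2\to\infty}$, and substitute the incoherence and condition-number definitions. In fact your bookkeeping is slightly more careful than the paper's displayed chain, which writes the factor as $\mu r/\sqrt{nm}$ where the correct value matching the lemma statement is $\mu^2 r/\sqrt{nm}$, exactly as you derive.
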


\begin{proof}[Proof of Lemma \ref{lem:spikeness}]
For all $(i,j) \in [m] \times [n]$, we have 
    \begin{align*}
        \vert M_{i,j} \vert & = \left\vert \sum_{\ell = 1}^r \sigma_\ell(M) u_{i,\ell}  v_{j, \ell} \right\vert \le  \sigma_1(M)  \sum_{\ell = 1}^r \vert u_{i,\ell}  v_{j, \ell} \vert \\
        & \le \sigma_1(M)  \Vert U \Vert_{2 \to \infty} \Vert V \Vert_{2 \to \infty} \le \sigma_1(M) \frac{ \mu r  }{\sqrt{nm}} \le \sigma_r(M)   \frac{ \mu \kappa  r  }{\sqrt{nm}}.
    \end{align*}
    The first inequality follows from the triangular inequality and the fact that $\sigma_1(M) \ge \sigma_2(M) \ge \dots \ge \sigma_r(M)$. The second inequality follows from Cauchy-Schwarz inequality. The last inequalities follow by definition of the incoherence parameter and of the condition number.
\end{proof}

\newpage
\section{Comparison inequalities and the Poisson approximation argument}\label{app:poisson-approx}

In this section, we state and prove the results related to the Poisson approximations used to handle the noise correlations in the data. 
We start by presenting some of the key tools behind the Poisson approximation argument. This argument comes in the form of comparison inequalities. The latter are applied and specified  first to Model I (reward matrix estimation), and then to Model II (transition matrix estimation).

\subsection{Preliminaries on Poisson approximation}

The Poisson approximation argument comes in the form of an inequality, which is presented in Lemma \ref{lemma:hetero_poisson_multi}. However, the key idea behind the argument is, roughly speaking, the equality in distribution between a multinomial distribution with $t$ trials and $n$ outcomes, and the joint distribution of $n$ in dependent Poisson random variables with properly chosen parameters, conditioned on some particular event. This equality of distribution is powerful for our purposes precisely because of the independence between the Poisson random variables. Below, we present Lemma \ref{lemma:hetero_poisson_matrix_equiv_multinomial} that represents this idea.

\begin{lem}
    (Heterogeneous analogue of Theorem 5.2 in \cite{mitzenmacher2017probability}) Let $Y_i^{(t)} \sim \mathrm{Poisson}(t p_i)$, $i=1,\dots,n$, be independent random variables with $\sum_{i=1}^{n} p_i = 1$. Moreover, let $(Z_1^{(t)},Z_2^{(t)},\dots,Z_n^{(t)})\sim\allowbreak \mathrm{Multinomial}(t,(p_1,\dots,p_n))$. Then distribution of $(Y_1^{(t)},\dots,Y_{n}^{(t)})$ conditioned on $\sum_{i=1}^{n} Y_i^{(t)} = s$ is the same as $(Z_1^{(s)},\dots,Z_{n}^{(s)})$ irrespective of $t$.
    \label{lemma:hetero_poisson_matrix_equiv_multinomial}
\end{lem}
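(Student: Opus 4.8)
The plan is to prove the conditional equality of distributions by a direct computation of probability mass functions. First I would recall the relevant facts: the $Y_i^{(t)}$ are independent with $Y_i^{(t)} \sim \mathrm{Poisson}(t p_i)$, so $\PP(Y_i^{(t)} = k_i) = e^{-t p_i} (t p_i)^{k_i} / k_i!$, and the sum $\sum_{i=1}^n Y_i^{(t)}$ of independent Poissons is itself $\mathrm{Poisson}(t \sum_i p_i) = \mathrm{Poisson}(t)$, since $\sum_i p_i = 1$. This last point is exactly where the constraint $\sum_i p_i = 1$ enters, and it is what makes the normalization work out cleanly.

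Next I would write out the conditional probability. Fix nonnegative integers $k_1,\dots,k_n$ with $\sum_{i=1}^n k_i = s$. Then
\begin{align*}
\PP\!\left(Y_1^{(t)}=k_1,\dots,Y_n^{(t)}=k_n \,\Big|\, \sum_{i=1}^n Y_i^{(t)} = s\right)
= \frac{\prod_{i=1}^n e^{-t p_i}(t p_i)^{k_i}/k_i!}{e^{-t}\, t^{s}/s!}.
\end{align*}
In the numerator I would use independence to factor the joint mass, and then collect exponentials: $\prod_i e^{-t p_i} = e^{-t \sum_i p_i} = e^{-t}$, which cancels against the $e^{-t}$ in the denominator. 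Collecting the powers of $t$ gives $t^{\sum_i k_i} = t^s$ in the numerator, which cancels the $t^s$ in the denominator. What remains is $\frac{s!}{\prod_i k_i!} \prod_{i=1}^n p_i^{k_i}$, which is precisely the $\mathrm{Multinomial}(s, (p_1,\dots,p_n))$ mass at $(k_1,\dots,k_n)$. This establishes the claimed equality of the conditional law with that of $(Z_1^{(s)},\dots,Z_n^{(s)})$.

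Finally I would note that the parameter $t$ has dropped out entirely in the last expression, which proves the "irrespective of $t$" clause: every factor involving $t$ cancels between numerator and denominator, leaving only the multinomial mass determined by $s$ and the $p_i$. Since the two distributions are supported on the same finite set $\{(k_1,\dots,k_n)\in\NN^n : \sum_i k_i = s\}$ and agree on every point mass, they are equal.

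I do not anticipate a genuine obstacle here; this is the heterogeneous-rate analogue of the standard Poissonization identity, and the whole argument is a short pmf manipulation. The only point that requires a little care is ensuring the cancellation of the $t$-dependence is complete — both the exponential factor (via $\sum_i p_i = 1$) and the power-of-$t$ factor (via $\sum_i k_i = s$) must be tracked — and confirming that the supports match so that agreement of masses yields equality of distributions.
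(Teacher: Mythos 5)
Your proposal is correct and follows essentially the same route as the paper's own proof: a direct computation of the conditional probability mass function, using independence to factor the joint Poisson law, the fact that $\sum_{i=1}^n Y_i^{(t)} \sim \mathrm{Poisson}(t)$ (where $\sum_i p_i = 1$ is used), and cancellation of the $e^{-t}$ and $t^s$ factors to recover the $\mathrm{Multinomial}(s,(p_1,\dots,p_n))$ mass. The only cosmetic difference is that you explicitly flag the support-matching point and the complete cancellation of $t$, which the paper leaves implicit.
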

\begin{proof}
    The proof follows similar steps as the proof of Theorem 5.2 in \cite{mitzenmacher2017probability}, but we provide it here for the sake of completeness.
    First, note that from the definition of multinomial distributions:
    \begin{align}
        \PP((Z_1^{(s)},\dots,Z_{n}^{(s)})=(a_1,\dots,a_{n})) = \frac{s!}{a_1!\cdots a_{n}!} p_1^{a_1} \cdots p_{n}^{a_{n}}
        \label{eq:multi_dist}
    \end{align}
    if $\sum_{i=1}^{n} a_i =s$, and $0$ otherwise. Since the sum of Poisson random variables is a Poisson random variable with parameter equal to the sum of parameters of the initial random variables, we get that the random variable $\sum_{i=1}^{n} Y_i^{(t)} \sim \mathrm{Poisson}(\sum_{i=1}^{n} t p_i) =\mathrm{Poisson}(t)$. Hence we have:
    \begin{align}
        \PP \left( (Y_1^{(t)},\dots,Y_{n}^{(t)}) = (a_1,\dots,a_{n}) \Bigg| \sum_{i=1}^{n} Y_i^{(t)} = s \right) & =
        \frac{\PP((Y_1^{(t)},\dots,Y_{n}^{(t)}) = (a_1,\dots,a_{n}))}{\PP(\sum_{i=1}^{n} Y_i^{(t)} = s)}\nonumber \\
        &= \frac{s!}{\exp(-t)t^s}  \prod_{i=1}^n \frac{(tp_i)^{a_i} \exp(-t p_i)}{a_i!} \nonumber  \\
        & = \frac{s!}{a_1! \cdots a_{n}!}  p_1^{a_1} \cdots p_{n}^{a_{n}}
        \label{eq:poisson_cond_dist}
    \end{align}
    where in the last step we used the independence of $Y_i^{(t)}$'s and $\sum_{i=1}^n a_i = s$. Note that equations \eqref{eq:multi_dist} and \eqref{eq:poisson_cond_dist} are exactly the same, which concludes the proof.
\end{proof}

\begin{lem}
    Consider the setting of Lemma \ref{lemma:hetero_poisson_matrix_equiv_multinomial} and let $f:\mathbb{R}^p\to\mathbb{R}_+$ be any non-negative function. Then:
    \begin{align*}
        \EE \left[f(Z_1^{(t)},\dots,Z_{p}^{(t)})\right] \leq e\sqrt{t} \EE \left[f(Y_1^{(t)},\dots,Y_{p}^{(t)})\right]
    \end{align*}
    \label{lemma:hetero_poisson_multi}
\end{lem}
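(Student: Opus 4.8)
The plan is to exploit the exact distributional identity from Lemma~\ref{lemma:hetero_poisson_matrix_equiv_multinomial} together with a lower bound on the probability that a $\mathrm{Poisson}(t)$ random variable equals $t$. First I would write $S^{(t)} = \sum_{i=1}^n Y_i^{(t)}$, which is $\mathrm{Poisson}(t)$ since the $Y_i^{(t)}$ are independent Poisson variables whose parameters $tp_i$ sum to $t$. By Lemma~\ref{lemma:hetero_poisson_matrix_equiv_multinomial}, the conditional law of $(Y_1^{(t)},\dots,Y_n^{(t)})$ given $S^{(t)}=t$ coincides with the law of $(Z_1^{(t)},\dots,Z_n^{(t)})$. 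Therefore, for any non-negative $f$,
\begin{align*}
\EE\left[f(Z_1^{(t)},\dots,Z_n^{(t)})\right] = \EE\left[f(Y_1^{(t)},\dots,Y_n^{(t)}) \,\middle|\, S^{(t)}=t\right].
\end{align*}

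Next I would unfold the conditional expectation and discard the conditioning. Writing the conditional expectation as a ratio,
\begin{align*}
\EE\left[f(Y^{(t)}) \,\middle|\, S^{(t)}=t\right]
= \frac{\EE\left[f(Y^{(t)})\, \indicator_{\{S^{(t)}=t\}}\right]}{\PP(S^{(t)}=t)}
\le \frac{\EE\left[f(Y^{(t)})\right]}{\PP(S^{(t)}=t)},
\end{align*}
where the inequality uses $f\ge 0$ and $\indicator_{\{S^{(t)}=t\}}\le 1$ to drop the indicator in the numerator.

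The remaining step is to lower bound $\PP(S^{(t)}=t)$, where $S^{(t)}\sim\mathrm{Poisson}(t)$, by $(e\sqrt{t})^{-1}$. This is the one genuinely quantitative piece: I would use $\PP(S^{(t)}=t) = e^{-t} t^t / t!$ and invoke a Stirling-type bound $t! \le e\sqrt{t}\,(t/e)^t$ (valid for all positive integers $t$, and more generally one replaces $t!$ by $\Gamma(t+1)$ for real $t$ using the standard Stirling inequality $\Gamma(t+1)\le e\, t^{t+1/2} e^{-t}$). Substituting gives $\PP(S^{(t)}=t) \ge e^{-t} t^t / (e\sqrt{t}\,(t/e)^t) = 1/(e\sqrt{t})$. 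Combining this with the previous display yields
\begin{align*}
\EE\left[f(Z_1^{(t)},\dots,Z_n^{(t)})\right] \le e\sqrt{t}\,\EE\left[f(Y_1^{(t)},\dots,Y_n^{(t)})\right],
\end{align*}
as claimed.

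The only obstacle worth flagging is the Stirling estimate: one must use the correct one-sided inequality (an upper bound on $t!$) so that it produces a \emph{lower} bound on $\PP(S^{(t)}=t)$, and one should be mildly careful about whether $t$ is required to be an integer. Since in the applications $t=T_\tau$ (or $T$) is an integer-valued sample count, the clean factorial form suffices, and no further subtlety arises; the rest of the argument is a short manipulation of the conditional expectation.
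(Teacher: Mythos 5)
Your proof is correct and follows essentially the same route as the paper's: both arguments isolate the event $\{\sum_i Y_i^{(t)} = t\}$ (you via the ratio form of the conditional expectation and dropping the indicator, the paper via keeping a single term in the decomposition of $\EE[f(Y^{(t)})]$ over the values of the sum, which is the same use of non-negativity of $f$), apply the conditional-distribution identity of Lemma \ref{lemma:hetero_poisson_matrix_equiv_multinomial}, and conclude with the identical Stirling bound $t! \le e\sqrt{t}\,(t/e)^t$ to lower bound $\PP(\sum_i Y_i^{(t)} = t)$ by $1/(e\sqrt{t})$.
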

\begin{proof}
    The proof is essentially the same as that of Theorem 5.7 in \cite{mitzenmacher2017probability} with the exception that we use Lemma \ref{lemma:hetero_poisson_matrix_equiv_multinomial} instead of Theorem 5.6 in \cite{mitzenmacher2017probability} and we repeat it here for the sake of completeness.
    \begin{align}
        \EE [f(Y_1^{(t)},\dots,Y_{p}^{(t)})] &=
        \sum_{k=0}^{\infty} \EE \left[ f(Y_1^{(t)},\dots,Y_{p}^{(t)}) \Big\vert \sum_{i=1}^p Y_i^{(t)}=k  \right] \PP \left( \sum_{i=1}^p Y_i^{(t)}=k \right)\nonumber \\
        &\geq
        \EE \left[ f(Y_1^{(t)},\dots,Y_{p}^{(t)}) \Big\vert \sum_{i=1}^p Y_i^{(t)}=t  \right] \PP \left( \sum_{i=1}^p Y_i^{(t)}=t \right) \nonumber \\
        &=
        \EE [f(Z_1^{(t)},\dots,Z_{p}^{(t)})] \PP \left( \sum_{i=1}^p Y_i^{(t)}=t\right)
        \label{eq:proof_mult_poiss_f_nonneg}
    \end{align}
    where in the second line we used non-negativeness of $f$, and in the last line we used Lemma \ref{lemma:hetero_poisson_matrix_equiv_multinomial}. Now, since $\sum_{i=1}^p Y_i^{(t)}$ is a Poisson random variable with mean $t$ we have $\PP (\sum_{i=1}^p Y_i^{(t)}=t) = \frac{t^t \exp(-t)}{t!}$ and using simple inequality $t! < e\sqrt{t} (\frac{t}{e})^t$ we can rewrite inequality \eqref{eq:proof_mult_poiss_f_nonneg} as follows:
    \begin{align}
        \EE [f(Y_1^{(t)},\dots,Y_{p}^{(t)})] \geq 
        \EE [f(Z_1^{(t)},\dots,Z_{p}^{(t)})] \frac{1}{e\sqrt{t}}
    \end{align}
    which gives statement of the lemma.
\end{proof}

\subsection{Poisson approximation for reward matrices -- Model I}

We recall from Section \ref{sec:bandits} that the definition of the empirical reward matrix $\widetilde{M}$ is given as follows 
\begin{align}\label{eq:bandit-matrix-noise-model}
\forall (i,j)\in [n] \times [m], \qquad \widetilde{M}_{i,j}  = \frac{nm}{T} \sum_{t=1}^T (M_{i_t, j_t} + \xi_{t} ) \indicator_{\lbrace (i_t, j_t) = (i,j)\rbrace}  
\end{align} 
where $(i_t, j_t)$ are sampled uniformly at random from $[n] \times [m]$. Due to independence between $(i_1, j_1), \dots, (i_T, j_T)$ and $\xi_1, \dots, \xi_T$, we note that the observation model \eqref{eq:bandit-matrix-noise-model} is equivalent in distribution to the following one 
\begin{align*}
    \forall (i,j)\in [n] \times [m], \qquad \widetilde{M}_{i,j}  = \frac{n m}{T} \sum_{t=1}^{Z_{i,j}} (M_{i,j} + \xi'_{i,j,t} ) 
\end{align*}
where we for all $(i,j) \in [n]\times [j]$, $(\xi_{i,j,t}')_{t\ge 1}$ is a sequence of i.i.d. random variables copies, say of $\xi_1$, and   
\begin{align*}
    Z_{i,j} & = \sum_{t=1}^T \indicator_{\lbrace (i_t, j_t) = (i,j)\rbrace}. 
\end{align*}
Observe that $Z=(Z_{i,j})_{(i,j)}$ is a multinomial random variable whose parameters are defined by the fact that for all $t\in [T]$, $\PP((i_t, j_t) = (i,j)) = 1/n m$). We denote $\PP$ the joint probability of the entries of $Z$ and sequences $(\xi_{i,j,t})_{t\ge 1}$, $(i,j)\in [n] \times [m]$. 

\medskip

\paragraph{Compound Poisson random matrix model.} We define a random matrix $Y \in \RR^{n \times m}$ generated by a Poisson model as follows: 
\begin{align*}
    Y_{i,j} \sim \textrm{Poisson}\left(T/nm\right), \qquad (i,j)\in [n] \times [m]
\end{align*}
and denote $\PP'$ the joint probability of the entries of $Y$ and the sequences $(\xi'_{i,j,t})_{t\ge 1}$, for $(i,j)\in [n] \times [m]$. We may then consider the matrix model 
\begin{align}\label{eq:reward-poisson-matrix-plus-noise-model}
    X_{i,j}  = \sum_{t=1}^{Y_{i,j}} (M_{i,j} + \xi'_{i,j,t}). 
\end{align}
We note that the entries of the matrix $X$ are distributed according to compound Poisson distributions. Below, we precise the Poisson approximation argument for the reward matrix model.

\begin{lem}[Poisson Approximation]\label{lem:poisson-approx-rewards}
    Let $(\Omega, \cF , \PP)$ (resp.  $(\Omega, \cF , \PP')$) be the probability space under the matrix-plus-noise model \eqref{eq:bandit-matrix-noise-model}  (resp. \eqref{eq:reward-poisson-matrix-plus-noise-model}). Then for any event $\cE \in \cF$, we have 
    \begin{align*}
        \PP \left( \cE  \right)  \le e \sqrt{T} \, \PP'\left(\cE \right).
    \end{align*}
\end{lem}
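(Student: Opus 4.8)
The plan is to reduce the claim to Lemma \ref{lemma:hetero_poisson_multi}, which is the general comparison inequality between multinomial and independent-Poisson laws, by exhibiting the event $\cE$ as a nonnegative function of the underlying randomness. The essential observation is that under both probability spaces, the \emph{noise} sequences $(\xi'_{i,j,t})_{t\ge 1}$ are identically distributed (i.i.d.\ copies of $\xi_1$, independent across $(i,j)$) and are independent of the counting variables. The only distributional difference between $\PP$ and $\PP'$ is that under $\PP$ the counts $Z=(Z_{i,j})$ follow a $\mathrm{Multinomial}(T,(1/nm)_{(i,j)})$ law, whereas under $\PP'$ the counts $Y=(Y_{i,j})$ are independent $\mathrm{Poisson}(T/nm)$ variables. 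Since both matrix models \eqref{eq:bandit-matrix-noise-model} and \eqref{eq:reward-poisson-matrix-plus-noise-model} are built by the \emph{same} deterministic map applied to (counts, noise), any event $\cE$ can be written as the event that this map lands in some measurable set.

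First I would condition on the noise. Write $\omega$ for the realization of the noise array $(\xi'_{i,j,t})_{(i,j),t}$, and for a fixed $\omega$ define the indicator function $f_\omega(z) = \indicator\{\, \Phi(z,\omega)\in \cE\,\}$, where $z=(z_{i,j})\in \NN^{nm}$ ranges over count configurations and $\Phi$ is the construction of the matrix from counts and noise. This $f_\omega:\NN^{nm}\to\{0,1\}\subset \RR_+$ is nonnegative, so Lemma \ref{lemma:hetero_poisson_multi} (with $p=nm$ outcomes, probabilities $p_{i,j}=1/nm$, and $t=T$) applies to it and gives
\begin{align*}
\EE\big[f_\omega(Z)\big] \le e\sqrt{T}\,\EE\big[f_\omega(Y)\big].
\end{align*}
Here the expectations are over the count variables only, with the noise held fixed at $\omega$.

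Next I would integrate out the noise. Because the noise law is the same under $\PP$ and $\PP'$, and the counts are independent of the noise in both spaces, Fubini/Tonelli (justified by nonnegativity) yields
\begin{align*}
\PP(\cE) = \EE_{\omega}\big[\EE[f_\omega(Z)]\big] \le e\sqrt{T}\,\EE_{\omega}\big[\EE[f_\omega(Y)]\big] = e\sqrt{T}\,\PP'(\cE),
\end{align*}
which is exactly the claimed inequality. The main technical care is in making the conditioning rigorous: I must verify that the count process and the noise process are genuinely independent under each measure (which holds by construction, since $Z$ depends only on the sampling indices $(i_t,j_t)$ and $Y$ is drawn independently of the noise), so that the conditional-then-integrate decomposition is valid and the constant $e\sqrt{T}$ from Lemma \ref{lemma:hetero_poisson_multi} passes through the noise average unchanged.

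I expect the only real obstacle to be bookkeeping rather than a deep difficulty: one must confirm that the distributional equivalence invoked just before \eqref{eq:bandit-matrix-noise-model} (replacing the sequential sampling model by the multinomial-counts-plus-i.i.d.-noise model) is exact, so that $\PP$ is faithfully represented as a law on (counts, noise), and similarly that \eqref{eq:reward-poisson-matrix-plus-noise-model} faithfully represents $\PP'$. Once both models are expressed on the common sample space of (counts, noise) with a shared noise marginal, the result is an immediate application of Lemma \ref{lemma:hetero_poisson_multi} fiberwise in the noise.
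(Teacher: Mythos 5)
Your proposal is correct and follows essentially the same route as the paper's own proof: both condition on the noise array, apply Lemma \ref{lemma:hetero_poisson_multi} fiberwise to the indicator of $\cE$ viewed as a nonnegative function of the counts (using independence of counts and noise), and then integrate out the noise. The extra care you flag about the distributional equivalence of the sequential model with the counts-plus-i.i.d.-noise representation is exactly what the paper handles in the discussion preceding the lemma, so nothing is missing.
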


\begin{proof}[Proof of Lemma \ref{lem:poisson-approx-rewards}]
For convenience, we denote $X = ((\xi_{i,j,t})_{t\ge 1})_{i,j \in [n]\times [m]}$. 
    We set $f(Z, X) = \indicator_{\lbrace \cE\rbrace}$. Thanks to Lemma \ref{lemma:hetero_poisson_multi}, given that $Z$ is independent of $X$, we have 
    \begin{align*}
        \EE\left[   f(Z, X)   \vert X \right] \le e\sqrt{T}  \EE\left[   f(Y, X)   \vert X \right].
    \end{align*}
    We further take the expectation on $X$ and write 
    \begin{align*}
         \PP(\cE) = \EE\left[   f(Z, X)   \right] \le e\sqrt{T}  \EE\left[   f(Y, X)  \right] = e\sqrt{T} \;  \PP'(\cE).
    \end{align*}
\end{proof}

\subsection{Approximations for transition matrices -- Model II}

We restrict our attention to the forward model, Model II(b). The results for the generative model are simpler and can be easily deduced from those for the forward model. Recall from Section \ref{subsec:transition_theorem_forward} definition of matrix $\tM^{(k)}$ and in the following discussion we fix value of $k\in[\tau]$. Define a matrix $N =T_{\tau} \tM^{(k)}$ and note that it is equal to:
\begin{align}
    N_{i,j}=  \sum_{l=0}^{T_{\tau}-1} \indicator_{\{ (x_{k+l\tau},x_{k+1+l\tau})=(i,j)\} }, \qquad i,j=1,2,\dots,n
    \label{def:M_hat_MC_model}
\end{align}
Furthermore, let $\PP_1$ be joint probability distribution of entries of $N$. 
\subsubsection{Multinomial approximation}
Here we define a matrix $Z\in\RR^{n\times n}$ with entries: 
\begin{align}
    Z_{i,j} = \sum_{t=0}^{T_{\tau}-1} \indicator_{\{ (i_t,j_t)=(i,j) \} }, \qquad i,j=1,2,\dots,n,
    \label{eq:M_hat_multinomial_model}
\end{align}
where $\mathbb{P}((i_t,j_t) = (i,j)) = \nu_i P_{i,j}$ independently over $i,j\in[n]$ and $t\in[T_{\tau}]$. Denote by $\PP_2$ joint probability distribution of entries of $Z$. Then we have:

\begin{lem} 
\label{lemma:markov_to_multinomial}
    Let $N$ and $Z$ be matrices obtained under the models \eqref{def:M_hat_MC_model} and \eqref{eq:M_hat_multinomial_model}, respectively. Then, for any subset ${\cal Z}$ of $\{z\in \mathbb{N}^{n^2}:\sum_{(i,j)}z_{i,j}=T_\tau\}$, we have $\mathbb{P}(N\in {\cal Z})\le 3\mathbb{P}(Z\in {\cal Z})$. 
\end{lem}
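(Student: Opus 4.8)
The plan is to bound the likelihood ratio between the two models sequence-by-sequence and then sum. I would describe a realization of the $k$-th subset by the sequence of observed pairs $p=(a_0,b_0,\dots,a_{T_\tau-1},b_{T_\tau-1})\in([n]^2)^{T_\tau}$, with $a_l=x_{k+l\tau}$ and $b_l=x_{k+1+l\tau}$. Both $N$ and $Z$ induce laws on this common space, and the events $\{N\in\mathcal Z\}$ and $\{Z\in\mathcal Z\}$ correspond to exactly the same set of sequences, namely those whose transition-count matrix lies in $\mathcal Z$. Hence it suffices to establish the pointwise bound $\PP_1(p)\le 3\,\PP_2(p)$ for every $p$, since summing it over all sequences with count matrix in $\mathcal Z$ yields $\PP_1(N\in\mathcal Z)\le 3\,\PP_2(Z\in\mathcal Z)$.

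First I would write the two laws explicitly. Marginalizing the unobserved intermediate states (there are exactly $\tau-1$ of them between $b_{l-1}$ and $a_l$) gives $\PP_1(p)=\pi_k(a_0)\prod_{l=0}^{T_\tau-1}P_{a_l,b_l}\prod_{l=1}^{T_\tau-1}(P^{\tau-1})_{b_{l-1},a_l}$, where $\pi_k=\nu_0 P^{k-1}$ is the law of $x_k$, whereas the multinomial/i.i.d. law is $\PP_2(p)=\prod_{l=0}^{T_\tau-1}\nu_{a_l}P_{a_l,b_l}$, since each coordinate of $Z$ has probability $\nu_i P_{i,j}=M_{i,j}$. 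The factors $P_{a_l,b_l}$ cancel, leaving $\PP_1(p)/\PP_2(p)=\frac{\pi_k(a_0)}{\nu_{a_0}}\prod_{l=1}^{T_\tau-1}\frac{(P^{\tau-1})_{b_{l-1},a_l}}{\nu_{a_l}}$.

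The heart of the proof, and what I expect to be the main obstacle, is controlling the product of the $T_\tau-1$ "gap'' factors using the scaling of $\tau$. From the definition of the $\varepsilon$-mixing time, $t\ge\tau(\varepsilon)$ gives $\tfrac12\|P^{t}_{\beta,:}-\nu^\top\|_1\le\varepsilon$, hence $|(P^{t})_{\beta,\alpha}-\nu_\alpha|\le 2\varepsilon$ for every $\alpha$; dividing by $\nu_\alpha\ge\nu_{\min}$ upgrades this total-variation estimate to the uniform multiplicative bound $(P^{\tau-1})_{\beta,\alpha}/\nu_\alpha\le 1+2\varepsilon/\nu_{\min}$. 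I would then choose $\varepsilon\asymp\nu_{\min}/T$ so that $\prod_{l=1}^{T_\tau-1}(1+2\varepsilon/\nu_{\min})\le e^{2\varepsilon T_\tau/\nu_{\min}}$ remains below a constant close to $1$ (using $T_\tau\le T$), and verify via Lemma \ref{lemma:Markov_mixing_rate} that the hypothesis $\tau\gtrsim\tau^\star\log(T\nu_{\min}^{-1})$ forces $\tau-1\ge\tau(\varepsilon)$ for precisely this $\varepsilon$. This is exactly where the prescribed logarithmic blow-up of $\tau$ is consumed: the delicate point is turning a total-variation mixing guarantee into a uniform multiplicative one and showing that the error accumulated over all $T_\tau$ samples stays bounded.

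Finally I would handle the initial factor $\pi_k(a_0)/\nu_{a_0}$. For a chain started at (or coupled to) stationarity this factor is identically $1$, and the gap-product bound then gives $\PP_1(p)/\PP_2(p)\le 3$ directly once the constants are tuned. For a general initial law $\nu_0$, applying the same mixing estimate to $\pi_k=\nu_0 P^{k-1}$ yields $\pi_k(a_0)\le \nu_{a_0}(1+2\varepsilon'/\nu_{\min})$ as soon as $k-1\ge\tau(\varepsilon')$, which I would absorb into the constant; the residual transient from small $k$ is controlled by the standard coupling to the stationary chain. Choosing the numerical constants so that the initial factor and the gap-product together do not exceed $3$ completes the pointwise inequality, and summation over sequences then proves the lemma.
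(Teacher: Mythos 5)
Your proposal follows essentially the same route as the paper's proof. The paper also reduces the claim to a sequence-by-sequence comparison (each summand in its expression for $\PP(N=z)$ is exactly the Markov probability of one ordered sequence of pairs, compared against $\prod_{i,j}\Pi_{i,j}^{z_{i,j}}$, the i.i.d.\ probability of any such sequence), also cancels the common one-step factors $P_{a_l,b_l}$, and also controls the remaining $\tau$-step bridge factors by invoking Lemma~\ref{lemma:Markov_mixing_rate} with $\delta=1/4$ and $\varepsilon=\nu_{\min}/(eT)$, so that the error accumulated over the $T_\tau$ factors stays bounded by a small constant. The only bookkeeping difference is that you bound the likelihood ratio multiplicatively, by $(1+2\varepsilon/\nu_{\min})^{T_\tau}\le e^{2T_\tau/(eT)}$, whereas the paper bounds the absolute difference and expands $\prod_l\big(\nu(x_{k-1+l\tau})+\epsilon\big)-\prod_l\nu(x_{k-1+l\tau})$ via binomial sums; these are interchangeable and both deliver the constant $3$.

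The one point where you genuinely depart from the paper is the initial factor $\pi_k(a_0)/\nu_{a_0}$, and there your argument has a concrete gap. The paper treats $\nu_0^{(k)}(x_{k-1})$ exactly like a bridge factor, bounding it by $\nu(x_{k-1})+\epsilon$ and folding it into the same product. You instead apply mixing only when $k-1\ge\tau(\varepsilon')$ and delegate the small-$k$ case to ``the standard coupling to the stationary chain.'' Coupling cannot do this job: it yields additive, total-variation control of the trajectory law after the chain has mixed, not a pointwise likelihood-ratio bound at the start of the trajectory. Concretely, for $k=1$ and $\nu_0$ a point mass at a state $x$ with $\nu_x=\nu_{\min}$, every realized sequence carries the factor $\nu_0(a_0)/\nu_{a_0}=1/\nu_{\min}$, so $\PP_1(p)\le 3\,\PP_2(p)$ fails pointwise no matter how the constants are tuned (one can even check that the event-level inequality fails, e.g.\ for ${\cal Z}=\{z:\sum_j z_{1,j}\ge 1\}$ when $\nu_{\min}<1/(3T_\tau)$). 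So the transient cannot be ``controlled'' by a separate coupling step; to close your proof you must bound the initial factor the way the paper does, i.e.\ assume (as the paper's one-line bound implicitly does) that the law of the first observed state is within $\epsilon$ of $\nu$ — which is automatic only when $k$ exceeds roughly $\tau(\epsilon)$ or when $\nu_0$ is itself near-stationary.
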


\begin{proof}
Note that by subsampling as explained in Section \ref{subsec:transition_theorem_forward}, for each $k$ we obtain a Markov chain with transition kernel 
\begin{align*}
    P_{\tau}((y,y')|(x,x')) = P^{\tau}(y|x')P(y'|y)    
\end{align*}
and initial distribution $\nu_0^{(k)}(x,x') = \nu_0^{(k)}(x)P(x'|x)$ with
\begin{align*}
    \nu_0^{(1)}(x) = \nu_0(x)\qquad\mathrm{and}\qquad
    \nu_0^{(k)}(x)= \sum_{y\in[n]} \nu_0(y)P^{k-1}(x|y)\quad \mathrm{for}\ k=2,\dots,\tau.
\end{align*}
Moreover, all chains share the same stationary distribution given by $\Pi\in\RR^{n\times n}$ with $\Pi_{x,x'} = \nu(x)P(x'|x)$, $x,x'\in[n]$. Now, recall definition of $\tau$ from Theorem \ref{thm:main_thm} and note that according to Lemma \ref{lemma:Markov_mixing_rate} with $\delta=\frac{1}{4}$ and $\varepsilon= \nu_{\min}/(eT)$ we have $\tau(\varepsilon)\leq \tau$ and thus:
\begin{align}
    &\max_{1\leq i\leq n} \|P^{\tau}_{i,:} - \nu^\top\|_1 \leq \frac{\nu_{\min}}{eT}.
    \label{eq:markov_total_variation_final}
\end{align}

Now, let $z= (z_{i,j})_{i,j=1}^{n}\in \{ z\in \NN^{n^2}: \sum_{i,j} z_{i,j} = T_{\tau} \}$ be a tuple of fixed integers. Define a set:
\begin{align*}
    \SSS(z) := \{ (a_{2l+1},a_{2l+2})_{l=0}^{T_{\tau}-1}\in ([n]\times [n])^{T_{\tau}}: \  \sum_{l=0}^{T_{\tau}-1} \indicator_{\{ (a_{2l+1}, a_{2l+2} ) = (i,j) \} } = z_{i,j}, \forall i,j\in[n] \}
\end{align*}
and note that $|\SSS(z)| = T_{\tau}! (\prod_{i,j=1}^n z_{i,j}!)^{-1}$. By definition of Markovian and multinomial models, we have:
\begin{align*}
    \PP(N=z) = \sum \nu_0^{(k)}(x_{k-1},x_{k})\prod_{l=1}^{T_{\tau}-1} P_{\tau} ((x_{k-1+l\tau},x_{k+l\tau})|(x_{k-1+(l-1)\tau},x_{k+(l-1)\tau}))
\end{align*}
where the sum is over ${(x_{k-1+l\tau}, x_{k+l\tau} )_{l=0}^{T_{\tau}-1} \in \SSS(z) }$, and
\begin{align*}
    \PP(Z=z) = \frac{T_{\tau}!}{\prod_{i,j=1}^n z_{i,j}!} \prod_{i,j=1}^{n} \Pi_{i,j}^{z_{i,j}}.
\end{align*}
Now we fix arbitrarily one of the summands in the expression for $\PP(N=z)$ and note that:
\begin{align*}
    \absBig{ &\nu_0^{(k)}(x_{k-1},x_{k})\prod_{l=1}^{T_{\tau}-1} P_{\tau} ((x_{k-1+l\tau},x_{k+l\tau})|(x_{k-1+(l-1)\tau},x_{k+(l-1)\tau})) - \prod_{i,j=1}^{n} \Pi_{i,j}^{z_{i,j}}} \allowdisplaybreaks\\
    &=
    \left( \prod_{l=0}^{T_{\tau}-1} P(x_{k+l\tau}|x_{k-1+l\tau}) \right) \absBig{ \nu_0^{(k)}(x_{k-1}) \prod_{l=1}^{T_{\tau}-1}  P^{\tau}(x_{k-1+l\tau}|x_{k+(l-1)\tau}) - \prod_{l=0}^{T_{\tau}-1} \nu(x_{k-1+l\tau})}\allowdisplaybreaks \\
    &\leq 
   \left( \prod_{l=0}^{T_{\tau}-1} P(x_{k+l\tau}|x_{k-1+l\tau}) \right)  \left( 
    \prod_{l=0}^{T_{\tau}-1} (\nu(x_{k-1+l\tau}) + \epsilon) - \prod_{l=0}^{T_{\tau}-1} \nu(x_{k-1+l\tau}) \right)\allowdisplaybreaks\\
    &\leq 
    \left(\prod_{i,j=1}^{n} \Pi_{i,j}^{z_{i,j}}\right) \sum_{j=1}^{T_{\tau}} \left( \frac{\epsilon}{\nu_{\min}} \right)^j \binom{T_{\tau}}{j}  
    \leq
    \left(\prod_{i,j=1}^{n} \Pi_{i,j}^{z_{i,j}}\right)\sum_{j=1}^{T_{\tau}} \left( \frac{e T_{\tau} \epsilon}{j \nu_{\min}} \right)^j \allowdisplaybreaks \leq 2\left(\prod_{i,j=1}^{n} \Pi_{i,j}^{z_{i,j}}\right)
\end{align*}
where in first inequality we used Equation \eqref{eq:markov_total_variation_final}, where we then used the bound on binomial coefficients $\binom{T_{\tau}}{j} \leq  (e T_{\tau}/j)^j$, and where in the last inequality, we used definition of $\epsilon$. Since this upper bound holds irrespective of the summand, we deduce that:
\begin{align*}
    \abs{\PP(N=z) - \PP(Z=z)} \leq 2\frac{T_{\tau}!}{\prod_{i,j=1}^n z_{i,j}!} \left(\prod_{i,j=1}^{n} \Pi_{i,j}^{z_{i,j}}\right) = 2 \PP(Z=z).
\end{align*}
Now, let ${\cal Z}$ be any subset of $\{z\in \mathbb{N}^{n^2}:\sum_{(i,j)}z_{i,j}=T_\tau\}$. Then we have:
\begin{align*}
    \PP(N\in {\cal Z}) =  \sum_{z\in{\cal Z}} \PP(N=z) 
    \leq 3 \sum_{z\in{\cal Z}} \PP(Z=z) = 3 \PP(Z\in{\cal Z})
\end{align*}
as claimed in the lemma.

\end{proof}

\subsubsection{Poisson approximation}
\label{subsec_proof_multinomial_to_poisson}
We define a matrix $Y\in\RR^{n\times n}$ generated by the Poisson model as follows:
\begin{align}
    Y_{i,j} \sim \mathrm{Poisson}(T_{\tau}M_{i,j}),\qquad i,j=1,2,\dots,n.
    \label{eq:M_hat_Poisson_model}
\end{align}
We show that rare random events occur with approximately equal probability for the Poisson and multinomial models: 
\begin{lem}
\label{lemma:multinomial_to_poisson}
    Let $Z$ and $Y$ be matrices obtained under the models \eqref{eq:M_hat_multinomial_model} and \eqref{eq:M_hat_Poisson_model}, respectively. Then for any ${\cal Z}\subset \mathbb{N}^{n^2}$, we have $\mathbb{P}(Z\in {\cal Z})\le e\sqrt{T_{\tau}} \mathbb{P}(Y\in {\cal Z})$.
\end{lem}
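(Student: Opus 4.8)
The plan is to recognize that the statement is an immediate specialization of the general comparison inequality of Lemma~\ref{lemma:hetero_poisson_multi}. First I would flatten the matrix index: writing $(i,j)\in[n]\times[n]$ as a single coordinate, so that $Z$ and $Y$ become random vectors with $p=n^2$ components. By definition of the multinomial model \eqref{eq:M_hat_multinomial_model}, each trial lands in cell $(i,j)$ with probability $p_{i,j}=\nu_i P_{i,j}$, and these are exactly the entries of $M=\mathrm{diag}(\nu)P$. I would then verify that they form a valid probability vector: $\sum_{i,j}p_{i,j}=\sum_i\nu_i\sum_j P_{i,j}=\sum_i\nu_i=1$, using that $P$ is stochastic and $\nu$ is a distribution. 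Hence $Z$ is distributed as $\mathrm{Multinomial}(T_\tau,(M_{i,j})_{i,j})$.

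Next I would match the Poisson model \eqref{eq:M_hat_Poisson_model} to the hypotheses of Lemma~\ref{lemma:hetero_poisson_multi}: there the independent Poisson parameters are $t p_i$, and here they are $T_\tau M_{i,j}$, so the two coincide upon taking $t=T_\tau$ and identifying $p_i$ with $M_{i,j}$. Thus the pair $(Z,Y)$ falls precisely into the framework of Lemma~\ref{lemma:hetero_poisson_matrix_equiv_multinomial} and its consequence Lemma~\ref{lemma:hetero_poisson_multi}.

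Finally I would apply Lemma~\ref{lemma:hetero_poisson_multi} to the non-negative test function $f=\indicator_{\{\,\cdot\,\in{\cal Z}\}}$, which yields
\begin{align*}
\mathbb{P}(Z\in{\cal Z})=\EE[f(Z)]\le e\sqrt{T_\tau}\,\EE[f(Y)]=e\sqrt{T_\tau}\,\mathbb{P}(Y\in{\cal Z}),
\end{align*}
as claimed. There is no genuine obstacle beyond the bookkeeping of the two preceding paragraphs: the entire analytic content---the $e\sqrt{t}$ loss, which arises from the inequality $t!<e\sqrt{t}\,(t/e)^t$ applied to the mass $\mathbb{P}(\sum_i Y_i^{(t)}=t)$ of the Poisson total---is already packaged inside Lemma~\ref{lemma:hetero_poisson_multi}, so this lemma is best understood as its matrix-valued restatement specialized to an indicator event.
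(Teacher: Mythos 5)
Your proposal is correct and is essentially identical to the paper's own proof, which also invokes Lemma~\ref{lemma:hetero_poisson_multi} with parameters $T_\tau$, $n^2$, and $f=\indicator_{\{\cdot\in\mathcal{Z}\}}$. The only difference is that you spell out the bookkeeping (flattening the index, checking $\sum_{i,j}\nu_i P_{i,j}=1$, matching the Poisson parameters) that the paper's one-line proof leaves implicit.
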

\begin{proof}
Proof of the lemma is a straightforward consequence of Lemma \ref{lemma:hetero_poisson_multi} with parameters $T_{\tau}$, $n^2$ and $f=\indicator_{\{\mathcal{Z}\}}$.
\end{proof}

\newpage
\section{Concentration of matrices with Poisson and compound Poisson entries}
\label{sec:appendix_conc_Poisson}

As mentioned in Appendix \ref{app:poisson-approx}, our analysis relies on a Poisson approximation argument. As a result, we will require tight concentration bounds for random matrices with entries distributed according to compound Poisson distributions (when estimating the reward matrix) and Poisson distributions (when estimating the transition matrices). In  \textsection\ref{subsec:appendix-conc-Poisson:prelim}, we present a few simple facts about Poisson and compound Poisson random variables, together with some other useful tools. In \textsection\ref{subsec:appendix-conc-Poisson:conc-cp}, we present two concentration results, required for the model with compound Poisson entries. Similarly, in \textsection\ref{subsec:appendix-conc-Poisson:conc-p}, we present two concentration results, required for the model with Poisson entries. These concentration results will be extensively used in the forthcoming analysis for the subspace recovery.

It is worth noting that our results in \textsection\ref{subsec:appendix-conc-Poisson:conc-p} are sharper than those in \textsection\ref{subsec:appendix-conc-Poisson:conc-cp} thanks to Bennett's inequality. As a consequence, our results for estimating the reward matrix exhibit a dependence in $\log^3(n+m)$ while in the estimation of the transitions, our results exhibit a dependence in $\log^2(n)$ and even $\log(n)$ in some regimes.

\subsection{Preliminaries}\label{subsec:appendix-conc-Poisson:prelim}

We first present Theorem \ref{theorem:matrix_bernstein}, which can be seen as a version of matrix Bernstein inequality. The theorem is borrowed from \cite{hopkins2016fast} and relies on a truncation trick. The proofs of our concentration results in \textsection\ref{subsec:appendix-conc-Poisson:conc-cp} and \textsection\ref{subsec:appendix-conc-Poisson:conc-p}  rely on this theorem. 

\begin{thm}(Proposition A.3 in \cite{hopkins2016fast}) \label{theorem:matrix_bernstein}
    Let $\{Z_t\}_{t=1}^T$ be a sequence of $m \times n$ independent zero-mean real random matrices. Suppose that for all $1\leq t\leq T$,
    \begin{align}
        (i) \ \ \PP \left( \left\|Z_t\right \| \ge \beta \right) \leq p,
        \qquad \text{and} \qquad  (ii) \ \
        \left\| \EE[Z_t \indicator_{ \{ \|Z_t\| > \beta \} } ] \right\| \leq q,
        \label{eq:q0_q1_cond_matrix_bernstein}
    \end{align}
    hold for some quantities $p \in (0,1)$, and $ q \geq 0$. Furthermore, assume there exists $v \ge 0$, such that   
    \begin{align}
        (iii) \ \  \max \left\{   \left\|\sum_{t=1}^T \EE \left[Z_t Z_t^\top \right] \right\|,  \left\|\sum_{t=1}^T \EE \left[Z_t^\top Z_t\right]\right\| \right\}
        \label{eq:variance_matrix_bernstein} \le v. 
    \end{align}
    Then, for all $u > 0$,
    \begin{align}
        \PP \left( \left\|\sum_{t=1}^T Z_t \right\| \geq   T q + u \right) \leq Tp + (n+m)\exp\left( - \frac{u^2/2}{v+ \beta u/3}\right).
        \label{eq:matrix_bernstein_final_bound}
    \end{align}
\end{thm}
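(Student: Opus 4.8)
The plan is to reduce the possibly unbounded matrices $\{Z_t\}$ to bounded ones by truncation and then invoke a standard bounded matrix Bernstein inequality. First I would define the truncated matrices $\tilde Z_t = Z_t \indicator_{\{\norm{Z_t}\le\beta\}}$, so that $\norm{\tilde Z_t}\le\beta$ holds deterministically, and write $\sum_{t=1}^T Z_t = \sum_{t=1}^T \tilde Z_t + \sum_{t=1}^T Z_t\indicator_{\{\norm{Z_t}>\beta\}}$. By condition (i) and a union bound, the rare event $\cE=\{\exists t:\norm{Z_t}>\beta\}$ has probability at most $Tp$, and on its complement the second sum vanishes, so $\sum_t Z_t=\sum_t \tilde Z_t$ there. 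This step cleanly decouples the large-deviation event from the bulk concentration, and is responsible for the additive $Tp$ term in the final bound.

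Next I would center the truncated matrices. Since $\EE[Z_t]=0$, we have $\EE[\tilde Z_t] = -\EE[Z_t\indicator_{\{\norm{Z_t}>\beta\}}]$, so condition (ii) gives $\norm{\EE[\tilde Z_t]}\le q$ and hence $\norm{\sum_t\EE[\tilde Z_t]}\le Tq$. Setting $\hat Z_t=\tilde Z_t-\EE[\tilde Z_t]$ produces zero-mean matrices satisfying $\sum_t\tilde Z_t=\sum_t\hat Z_t+\sum_t\EE[\tilde Z_t]$, so by the triangle inequality $\norm{\sum_t\tilde Z_t}\le\norm{\sum_t\hat Z_t}+Tq$. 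This is precisely the origin of the additive $Tq$ shift appearing in the statement.

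The third step is to verify the two ingredients needed to apply matrix Bernstein to the zero-mean $\hat Z_t$: a uniform operator-norm bound and a variance proxy. For the variance, I would use a Loewner-order (PSD-domination) argument: since $\tilde Z_t\tilde Z_t^\top=Z_tZ_t^\top\indicator_{\{\norm{Z_t}\le\beta\}}\preceq Z_tZ_t^\top$ and $\EE[\hat Z_t\hat Z_t^\top]=\EE[\tilde Z_t\tilde Z_t^\top]-(\EE[\tilde Z_t])(\EE[\tilde Z_t])^\top\preceq\EE[Z_tZ_t^\top]$, summing and taking norms yields $\norm{\sum_t\EE[\hat Z_t\hat Z_t^\top]}\le v$, and symmetrically for the transposed products; thus condition (iii) transfers from $Z_t$ to $\hat Z_t$ with the same constant $v$. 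The uniform norm bound is $\norm{\hat Z_t}\le\norm{\tilde Z_t}+\norm{\EE[\tilde Z_t]}\le\beta+q$, where the centering correction is of lower order and gives the effective scale $\beta$ in the Bernstein denominator.

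Finally I would apply the standard bounded matrix Bernstein inequality to $\sum_t\hat Z_t$, with effective norm bound $\beta$ and variance $v$, obtaining $\PP(\norm{\sum_t\hat Z_t}\ge u)\le(n+m)\exp(-\tfrac{u^2/2}{v+\beta u/3})$. Combining the three steps via the inclusion $\{\norm{\sum_t Z_t}\ge Tq+u\}\subseteq\cE\cup\{\norm{\sum_t\hat Z_t}\ge u\}$ (valid because on $\cE^{c}$ one has $\norm{\sum_t Z_t}=\norm{\sum_t\tilde Z_t}\le\norm{\sum_t\hat Z_t}+Tq$) and a final union bound yields the claimed inequality. I expect the main obstacle to be the variance-transfer step together with the constant bookkeeping: one must argue carefully that truncation and mean-subtraction only shrink the matrix second moments in the Loewner order so that $v$ genuinely carries over, and that the centering correction $q$ is absorbed into the scale $\beta$ of the sub-exponential tail, exactly as in the computation underlying Proposition A.3 of \cite{hopkins2016fast}.
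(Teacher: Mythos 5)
The paper never proves this theorem internally: it is imported verbatim as Proposition A.3 of \cite{hopkins2016fast}, so the only thing to compare against is the standard proof of that proposition, and your outline is exactly that proof. The truncation $\tilde Z_t = Z_t\indicator_{\{\Vert Z_t\Vert\le\beta\}}$ with the union bound $\PP(\cE)\le Tp$, the recentering step using $\EE[Z_t]=0$ to get $\Vert\EE[\tilde Z_t]\Vert=\Vert\EE[Z_t\indicator_{\{\Vert Z_t\Vert>\beta\}}]\Vert\le q$ (hence the $Tq$ shift), and the variance transfer via the Loewner inequalities $\EE[\hat Z_t\hat Z_t^\top]\preceq\EE[\tilde Z_t\tilde Z_t^\top]\preceq\EE[Z_tZ_t^\top]$ are all correct and are the intended argument.

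The one step that does not hold as written is the final claim that the centering correction $q$ is ``absorbed into the scale $\beta$.'' If you apply the bounded matrix Bernstein inequality to $\hat Z_t=\tilde Z_t-\EE[\tilde Z_t]$ as a black box, the uniform bound you have is $\Vert\hat Z_t\Vert\le\beta+q$, and the conclusion is $(n+m)\exp\bigl(-\tfrac{u^2/2}{v+(\beta+q)u/3}\bigr)$, which is strictly weaker than the stated $(n+m)\exp\bigl(-\tfrac{u^2/2}{v+\beta u/3}\bigr)$; nothing in your argument makes the extra $q$ vanish. You can note that the smallest admissible $q$ always satisfies $q\le\beta$ (by zero mean, $\EE[Z_t\indicator_{\{\Vert Z_t\Vert>\beta\}}]=-\EE[Z_t\indicator_{\{\Vert Z_t\Vert\le\beta\}}]$, whose norm is at most $\beta$ by Jensen), but that still only yields the theorem with $2\beta$ in the linear term. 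To get the literal constant, do not center at all: run the matrix Laplace-transform (Lieb/Tropp) argument directly on the symmetric dilations $H_t$ of the truncated matrices $\tilde Z_t$, using $e^{\theta H_t}\preceq I+\theta H_t+g(\theta)H_t^2$ with $g(\theta)=(e^{\theta\beta}-\theta\beta-1)/\beta^2$, valid since $\Vert H_t\Vert\le\beta$. The nonzero mean then enters only through $\sum_t\theta\,\EE[H_t]\preceq\theta Tq\,I$, which cancels exactly against the $Tq$ shift in the threshold $Tq+u$, leaving the pure Bernstein exponent $-\tfrac{u^2/2}{v+\beta u/3}$. This is a constant-level repair --- harmless in every application in this paper, where $q$ is negligible --- but as a proof of the statement exactly as written, your last step has a gap.
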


To apply Theorem \ref{theorem:matrix_bernstein}, we need control of the tails of the entries of the random matrix we study. In the case of Poisson entries, we will simply use the following standard fact about Poisson random variables. It is a simple consequence of Bennett's inequality \cite{bennett1962probability}.

\begin{lem}\label{lem:poisson-rv}
    Let $Y$ be a Poisson random variable with mean $\lambda$. Then for, all $\theta \in \RR$, we have $ \EE[e^{\theta Y}] \le \exp( \lambda(e^{\theta}- 1))$. Furthermore, we have for all $ u > 0$
    \begin{align*}
        \PP( \vert Y - \lambda\vert >  u) \le 2 \exp\left( - \lambda h(u/\lambda)\right) \le 2\exp\left( - \frac{u^2/2}{ \lambda + u/3 }\right),
    \end{align*}
    where $h(u)  = (1+u) \log(1+u) - u$.
\end{lem}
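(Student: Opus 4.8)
The plan is to establish the exact moment generating function (MGF) of $Y$ first, then obtain both tail bounds via the Chernoff method, and finally convert the resulting Bennett-type bound into the stated Bernstein-type bound by an elementary inequality. The first assertion is essentially free: computing directly from the Poisson pmf, for any $\theta\in\RR$ one has $\EE[e^{\theta Y}]=\sum_{k\ge 0} e^{\theta k}\frac{\lambda^k e^{-\lambda}}{k!}=e^{-\lambda}\sum_{k\ge 0}\frac{(\lambda e^{\theta})^k}{k!}=\exp(\lambda(e^{\theta}-1))$, which is in fact an equality and so in particular gives the claimed inequality.

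For the upper tail I would apply Markov's inequality to $e^{\theta Y}$ with $\theta>0$, giving $\PP(Y-\lambda>u)\le \exp(\lambda(e^{\theta}-1)-\theta(\lambda+u))$. Optimizing the exponent over $\theta>0$ yields the minimizer $\theta^\star=\log(1+u/\lambda)$, and substituting back produces the exponent $u-(\lambda+u)\log(1+u/\lambda)=-\lambda h(u/\lambda)$ by the definition of $h$. For the lower tail, which is only relevant when $u<\lambda$ (since $Y\ge 0$ forces $\PP(Y<\lambda-u)=0$ otherwise), I would run the same argument with $e^{-sY}$, $s>0$, obtaining the optimal $s^\star=-\log(1-u/\lambda)$ and the exponent $-\lambda h(-u/\lambda)$.

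To merge the two tails under a single factor of $2$, I would show $h(-x)\ge h(x)$ for $x\in(0,1)$: setting $g(x)=h(-x)-h(x)$ one checks $g(0)=0$ and, using $h'(x)=\log(1+x)$, that $g'(x)=-\log(1-x)-\log(1+x)=-\log(1-x^2)\ge 0$, so $g\ge 0$ and the lower-tail bound is also dominated by $\exp(-\lambda h(u/\lambda))$. A union bound then gives $\PP(|Y-\lambda|>u)\le 2\exp(-\lambda h(u/\lambda))$. Finally, to pass to the Bernstein form I would invoke the standard elementary inequality $h(x)\ge \frac{x^2/2}{1+x/3}$ valid for $x\ge 0$, which with $x=u/\lambda$ gives $\lambda h(u/\lambda)\ge \frac{u^2/2}{\lambda+u/3}$ and hence the second displayed inequality.

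The computations are all routine; the only points requiring a little care are the monotonicity claim $h(-x)\ge h(x)$ (needed so the lower tail does not impose a worse rate) and the Bennett-to-Bernstein inequality $h(x)\ge \frac{x^2/2}{1+x/3}$, both of which reduce to checking the value and the sign of the derivative at the origin. Together with the case distinction $u\ge\lambda$ for the vanishing lower tail, these are the only substantive things to verify, so I do not anticipate a genuine obstacle here.
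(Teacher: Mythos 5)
Your proof is correct. The paper does not actually prove this lemma: it states it as a standard fact and defers to a citation of Bennett's inequality, and your write-up is exactly the standard derivation behind that citation --- the exact Poisson MGF, the two-sided Chernoff bound giving the exponents $-\lambda h(\pm u/\lambda)$, the comparison $h(-x)\ge h(x)$ on $(0,1)$ so both tails share the rate $\exp(-\lambda h(u/\lambda))$, and the Bennett-to-Bernstein inequality $h(x)\ge \frac{x^2/2}{1+x/3}$. All steps check out, including the case $u\ge\lambda$ where the lower tail vanishes. One minor gloss: the inequality $h(x)\ge \frac{x^2/2}{1+x/3}$ does not quite follow from the value and first derivative at the origin alone; the clean elementary proof sets $f(x)=h(x)-\frac{x^2/2}{1+x/3}$, notes $f(0)=f'(0)=0$, and checks $f''(x)=\frac{1}{1+x}-\frac{27}{(3+x)^3}\ge 0$ for $x\ge 0$ (equivalent to $9x^2+x^3\ge 0$), which is a second-derivative argument --- still routine, so this is a presentational point rather than a gap.
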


In the case of compound Poisson entries, we do not have any result similar to Bennett's inequality. Instead, we derive a Bernstein-type concentration result on these random variables. 

\begin{lem}\label{lem:cpoisson-rv}
    Let $(\xi_t)_{t \ge 1}$ be a sequence of zero-mean, $\sigma^2$-subgaussian, i.i.d. random variables. Let $Y$ be a Poisson random variables with mean $\lambda$.  Let $M$ be a positive constant. Then, the moment generating function of the compound Poisson random variable $ Z = \sum_{i=1}^Y (M + \xi_i)$ satisfies the following:
    \begin{align*}
        \forall u > 0, \qquad  \PP(  \vert Z - \lambda M  \vert  > u ) & \le 2 \exp\left(- \min\left(\frac{u^2}{16e \lambda L^2 }, \frac{u}{4L} \right) \right), \nonumber \\
        \EE\left[  \vert Z - \lambda M  \vert^2  \right] & \le 18 \lambda L^2, 
    \end{align*}
    where $L = \max (M, \sigma)$.
\end{lem}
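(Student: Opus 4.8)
The plan is to derive both bounds from the moment generating function (MGF) of the compound Poisson variable $Z=\sum_{i=1}^Y(M+\xi_i)$. First I would condition on $Y$ and use the i.i.d. structure of the jumps together with the probability generating function of $Y$ to obtain the closed form $\EE[e^{\theta Z}]=\exp\!\big(\lambda(\EE[e^{\theta(M+\xi_1)}]-1)\big)$. Since each $\xi_i$ is $\sigma^2$-subgaussian, $\EE[e^{\theta\xi_1}]\le e^{\theta^2\sigma^2/2}$, so after centering by the mean $\EE[Z]=\lambda M$ one gets $\EE[e^{\theta(Z-\lambda M)}]\le \exp(\lambda\phi(\theta))$, where $\phi(\theta)=e^{g(\theta)}-1-\theta M$ and $g(\theta)=\theta M+\tfrac12\theta^2\sigma^2$. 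This reduces the whole problem to controlling the single scalar exponent $\phi$.

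The technical heart is a sub-gamma bound on $\phi$ near the origin. I would split $\phi(\theta)=\tfrac12\theta^2\sigma^2+\big(e^{g(\theta)}-1-g(\theta)\big)$, using $g(\theta)-\theta M=\tfrac12\theta^2\sigma^2$, and apply the elementary estimate $e^x-1-x\le \tfrac12 x^2 e^{|x|}$. Restricting to $|\theta|\le 1/(2L)$ with $L=\max(M,\sigma)$ makes $|g(\theta)|\le 5/8<1$, and the bounds $M,\sigma\le L$ let me absorb the cross and quartic terms of $g(\theta)^2$ into $\theta^2 L^2$. This yields a clean bound of the form $\phi(\theta)\le 4e\,\theta^2 L^2$ on $|\theta|\le 1/(2L)$, hence $\EE[e^{\theta(Z-\lambda M)}]\le \exp(4e\lambda L^2\theta^2)$ on that range.

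With this sub-gamma MGF bound, the tail follows from a Chernoff argument with a boundary case. For $u>0$ I would minimize $\exp(4e\lambda L^2\theta^2-\theta u)$ over $0<\theta\le 1/(2L)$: when the unconstrained optimum $\theta^\star=u/(8e\lambda L^2)$ is admissible (the regime $u\le 4e\lambda L$) it gives the Gaussian exponent $u^2/(16e\lambda L^2)$, and otherwise evaluating at the boundary $\theta=1/(2L)$ gives, for large $u$, the linear exponent $u/(4L)$. Taking the better of the two exponents produces $\min\!\big(u^2/(16e\lambda L^2),\,u/(4L)\big)$; applying the same argument to $-(Z-\lambda M)$ and summing the two one-sided tails yields the factor $2$ and the claimed inequality.

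The second-moment bound is the easy part: since $Z-\lambda M$ is centered, $\EE[|Z-\lambda M|^2]=\mathrm{Var}(Z)$, and the classical compound-Poisson variance identity $\mathrm{Var}(Z)=\lambda\,\EE[(M+\xi_1)^2]=\lambda(M^2+\EE[\xi_1^2])$ combined with $\EE[\xi_1^2]\le\sigma^2$ (a consequence of $\sigma^2$-subgaussianity) gives $\mathrm{Var}(Z)\le 2\lambda L^2\le 18\lambda L^2$, with ample room; alternatively one integrates the tail bound. I expect the main obstacle to be the middle step bounding $\phi$: unlike the pure Poisson case controlled by Bennett's inequality in Lemma~\ref{lem:poisson-rv}, here the exponent is an \emph{exponential of a quadratic}, so extracting the correct sub-gamma form with usable constants requires the careful restriction of $|\theta|$ and the term-by-term absorption of the higher-order contributions into $\theta^2 L^2$.
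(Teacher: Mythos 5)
Your proof is correct, and it follows a genuinely different route to the same sub-gamma MGF bound. The paper's proof never computes the compound-Poisson MGF exactly: it applies Cauchy--Schwarz to split $\EE[e^{\theta Z}]$ into $\sqrt{\EE[e^{2\theta M Y}]\,\EE[e^{2\theta\sum_{i\le Y}\xi_i}]}$, bounds the pure-Poisson factor via $\EE[e^{\theta Y}]\le \exp(\lambda(e^{\theta}-1))$ and the elementary inequality $e^x-1\le x^2e^x+x$, bounds the noise factor by conditioning on $Y$ and using $e^{x^2}-1\le x^2e^{x^2}$, and only then restricts to $\vert\theta\vert\le 1/(2L)$ and runs the same Chernoff optimization you do (unconstrained optimum versus the boundary $\theta=1/(2L)$, yielding the exponents $u^2/(16e\lambda L^2)$ and $u/(4L)$). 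You instead use the exact identity $\EE[e^{\theta Z}]=\exp\bigl(\lambda(\EE[e^{\theta(M+\xi_1)}]-1)\bigr)$, push the subgaussian bound into the single jump, and control the scalar exponent $\phi(\theta)=e^{g(\theta)}-1-\theta M$ via $e^x-1-x\le \tfrac12 x^2e^{\vert x\vert}$ on $\vert\theta\vert\le 1/(2L)$. Your decomposition is tighter and cleaner: it avoids the doubling of $\theta$ and the square root that Cauchy--Schwarz costs (your constant in front of $\lambda L^2\theta^2$ comes out near $2$ rather than $4e$, so $4e$ holds with large slack), and your variance step via the exact identity $\mathrm{Var}(Z)=\lambda\EE[(M+\xi_1)^2]\le 2\lambda L^2$ is sharper than the paper's $(a+b)^2\le 2a^2+2b^2$ splitting, which is where its constant $18$ comes from. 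The paper's approach buys modularity — the Poisson count and the noise are handled by separate, reusable one-line bounds — but since both arguments land on the same Chernoff step with the same final constants, nothing is lost in your version. The one convention-dependent point is your claim $\EE[\xi_1^2]\le\sigma^2$; this follows from the MGF definition of $\sigma^2$-subgaussianity (the convention the paper also uses), and even a cruder constant would leave the bound $18\lambda L^2$ intact.
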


\begin{proof}[Proof of Lemma \ref{lem:cpoisson-rv}] First, we upper bound the moment generating function of $\sum_{i=10}^Y (M + \xi_i)$. Let $\theta > 0$,  we have 
\begin{align*}
     I_Z(\theta) \triangleq \EE\left[e^{\theta(\sum_{i=1}^Y (M + \xi_i))}\right] & \le \sqrt{\EE\left[ e^{2 \theta M Y}\right] \EE[e^{2 \theta \sum_{i=0}^Y \xi_i }]} \nonumber  \\
    & \le \exp\left(\frac{\lambda (e^{ 2 \theta M} - 1)}{2}\right) \sqrt{ \EE[e^{2 \theta \sum_{i=0}^Y \xi_i }] } \nonumber \\
    & \le \exp\left( \frac{\lambda}{2} \left( (2\theta M)^2 e^{2 \theta M }  + 2\theta M\right)\right)\sqrt{ \EE[e^{2 \theta \sum_{i=0}^Y \xi_i }] },
\end{align*}
    where in the first inequality, we use Cauchy-Schwarz inequality, in the second inequality, we use the well known bound on the moment generating function of a Poisson random variable (if $Y$ is a Poisson random variable with mean $\lambda$, then for all $\theta > 0$, $\EE[e^{\theta Y}] \le \exp( \lambda(e^{\theta}- 1))$), and in the last inequality, we use the elementary fact that $e^{x} - 1 \le x^2 e^{x} + x$ for all $x \in \RR$. Next, we have 
    \begin{align*}
        \EE\left[e^{2 \theta \sum_{i=1}^Y \xi_i }\right] & = \EE\left[ \sum_{k=1}^\infty \indicator_{\lbrace Y = k\rbrace } \exp\left({2 \theta \sum_{i=1}^k\xi_i}\right)\right] \nonumber \\
        & = \sum_{k=1}^\infty\PP(Y = k) \EE\left[\exp\left(2 \theta \sum_{i=1}^k \xi_i\right) \right] \nonumber \\
        & \le \sum_{k=1}^\infty \PP(Y = k) \exp(2k \theta^2\sigma^2) \nonumber \\
        & \le \exp \left( \lambda (e^{2\theta^2 \sigma^2} - 1)\right) \nonumber\\
        & \le \exp\left( \lambda \left(2\theta^2 \sigma^2  e^{ 2 \theta^2 \sigma^2  } \right) \right),
    \end{align*} 
    where we use the fact that the $\xi_i$ are $\sigma^2$-subgaussian r.v., and the elementary inequality $e^{x^2} - 1 \le x ^2 e^{x^2}$ for all $x \in \RR$.  We conclude that 
    for all $\theta > 0$, 
    \begin{align*}
        I_Z(\theta) \le \exp\left(  \lambda \left(2  \theta^2 M^2 e^{2\theta M} + 2 \theta^2 \sigma^2 e^{2 \theta^2 \sigma^2 }\right)  + \lambda \theta M  \right).
    \end{align*}
    Next, we introduce $L = \max(M, \sigma)$. Then, for all $\alpha  > 0$, we deduce that 
    \begin{align*}
        I_Z(\theta) \le \exp\left(  2 \lambda   \theta^2 L^2 \left(e^{\alpha} + e^{\alpha^2}\right)   + \lambda \theta M  \right), \qquad \forall \vert \theta \vert \le \frac{\alpha}{2 L}.
    \end{align*}
    By Markov inequality, and fixing $\alpha = 1$, we have 
    \begin{align*}
        \PP(  Z - \lambda M > u ) \le \inf_{\vert \theta \vert \le 1/(2L)}I_Z(\theta) e^{- \lambda \theta  M  - \theta u} \le \exp\left(- \min\left(\frac{u^2}{16e \lambda L^2 }, \frac{u}{4L} \right) \right). 
    \end{align*}
    Similarly, we have 
    \begin{align*}
            \PP(  \lambda M - Z > u ) \le \exp\left(- \min\left(\frac{u^2}{16e \lambda L^2 }, \frac{u}{4L} \right) \right).
    \end{align*}
    The final tail bound follows from a union bound.
    Finally, straightforward computations yield an upper bound on  $\EE[\vert \lambda M - Z \vert^2 ]$. Indeed, we have  
    \begin{align*}
        \EE[\vert \lambda M - Z \vert^2 ] & \le  2  \EE[ \vert Y - \lambda \vert^2  ]  M^2 + 2 \EE\left[\left(\sum_{i=1}^Y \xi_i\right)^2\right] \le 2  \lambda M^2 + 16  \lambda \sigma^2 \le 18 \lambda L^2.
    \end{align*}
\end{proof}

\subsection{Random matrices with compound Poisson entries}\label{subsec:appendix-conc-Poisson:conc-cp}

We list below the two main concentration results that we need for the forthcoming analysis. In Proposition \ref{prop:reward-concentration-1}, we provide a high probability guarantee on the error between the empirical mean reward matrix and the true matrix in operator norm. In Proposition \ref{prop:reward-concentration-2}, we establish another concentration result that will be instrumental in the subspace recovery analysis. The proofs of the two results are similar with slight differences and they both rely on Theorem \ref{theorem:matrix_bernstein}. The proofs are presented at the end of this subsection.

\begin{prop}\label{prop:reward-concentration-1}
    Under the random matrix model \eqref{eq:reward-poisson-matrix-plus-noise-model} with compound Poisson entries, for all $\delta \in (0,1)$, for all $
        T \ge  13 (n+m) \log^3\left( (n+m)/\delta\right)$, the following statement
    \begin{align*}
         \Vert \widetilde{M} - M \Vert  \le  36\sqrt{2}L  \sqrt{\frac{nm}{T}}\left( \sqrt{ (n+m)  \log\left(\frac{n+m}{\delta}\right)  }  + \log^{3/2}\left(\frac{n+m}{\delta}\right)   \right)  
    \end{align*}
    holds with probability at least $1-\delta$, where $L = \max(\Vert M \Vert_{\infty}, \sigma)$.
\end{prop}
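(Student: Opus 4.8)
The plan is to write $\widetilde{M}-M$ as a sum of independent, mean-zero random matrices indexed by entries, and to invoke the truncated matrix Bernstein inequality of Theorem \ref{theorem:matrix_bernstein}, supplying it with the scalar tail and second-moment estimates for compound Poisson variables from Lemma \ref{lem:cpoisson-rv}. Concretely, under the model \eqref{eq:reward-poisson-matrix-plus-noise-model} the entries $\widetilde{M}_{i,j}$ are independent across $(i,j)$, so setting $Z_{i,j}=(\widetilde{M}_{i,j}-M_{i,j})\,e_ie_j^\top$ (with $e_ie_j^\top$ the matrix carrying a single $1$ in position $(i,j)$) gives $\widetilde{M}-M=\sum_{(i,j)}Z_{i,j}$, a sum of $nm$ independent mean-zero rank-one matrices; they are mean-zero because $\EE[\xi'_{i,j,t}]=0$ forces $\EE[\widetilde{M}_{i,j}]=M_{i,j}$. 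I would then apply Theorem \ref{theorem:matrix_bernstein} with the role of its index count ``$T$'' played by the number of summands $nm$.

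Next I would produce the three quantities the theorem requires, all from Lemma \ref{lem:cpoisson-rv} applied with $\lambda=T/(nm)$, $M\mapsto M_{i,j}$, and $L=\max(\|M\|_\infty,\sigma)$ (where $\sigma$ is the sub-Gaussian parameter of the noise, finite since $|\xi_t|\le c_1\|M\|_\infty$). For the variance, the key observation is that each $Z_{i,j}$ is supported on a single entry, so $Z_{i,j}Z_{i,j}^\top$ and $Z_{i,j}^\top Z_{i,j}$ are diagonal; hence $\sum_{(i,j)}\EE[Z_{i,j}Z_{i,j}^\top]$ and $\sum_{(i,j)}\EE[Z_{i,j}^\top Z_{i,j}]$ are diagonal matrices whose entries are the row- and column-sums of $\EE[(\widetilde{M}_{i,j}-M_{i,j})^2]$. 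The second-moment bound of Lemma \ref{lem:cpoisson-rv}, after the $(nm/T)^2$ rescaling, gives $\EE[(\widetilde{M}_{i,j}-M_{i,j})^2]\le 18\,(nm/T)L^2$, so summing over $m$ columns or $n$ rows yields the proxy $v\le 18\,nm(n+m)L^2/T$. For the truncation level, the compound Poisson tail of Lemma \ref{lem:cpoisson-rv} is of mixed sub-Gaussian/sub-exponential type (the $\min$ in that bound); choosing
\[
\beta = C_0\,L\left(\sqrt{\tfrac{nm}{T}\log\tfrac{nm}{\delta}}+\tfrac{nm}{T}\log\tfrac{nm}{\delta}\right)
\]
for a large enough constant $C_0$ makes both branches of the $\min$ exceed $\log(4nm/\delta)$, so $p=\PP(\|Z_{i,j}\|\ge\beta)$ satisfies $nm\,p\le\delta/2$; integrating the same tail above $\beta$ shows the truncated-mean term $q$ is exponentially small, so the deterministic shift $nm\,q$ is negligible.

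Plugging $v,\beta,p,q$ into \eqref{eq:matrix_bernstein_final_bound} and choosing $u$ so that $(n+m)\exp\!\big(-\tfrac{u^2/2}{v+\beta u/3}\big)\le\delta/2$ gives the standard bound $u\lesssim\sqrt{v\log\frac{n+m}{\delta}}+\beta\log\frac{n+m}{\delta}$. The variance piece $\sqrt{v\log}$ reproduces the first claimed term $L\sqrt{nm/T}\,\sqrt{(n+m)\log}$ (note $\sqrt{2\cdot 18}=6$, comfortably below $36\sqrt2$), while the sub-Gaussian part of $\beta\log$ contributes $L\sqrt{nm/T}\,\log^{3/2}$, the second claimed term.

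The main obstacle is the leftover sub-exponential part of $\beta\log$, namely $L\,(nm/T)\log^2$, which must be shown to be absorbed into the variance term $L\sqrt{(nm/T)(n+m)\log}$. This is exactly where the hypothesis $T\ge 13(n+m)\log^3((n+m)/\delta)$ enters: together with $(n+m)^2\ge nm$ it gives $\tfrac{nm}{T}\log^3\le n+m$, i.e. $\sqrt{nm/T}\,\log^{3/2}\le\sqrt{n+m}$, which rearranges to $L\,(nm/T)\log^2\le L\sqrt{(nm/T)(n+m)\log}$. After this absorption only the two surviving terms remain, and bounding every absolute constant by $36\sqrt2$ completes the proof. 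A secondary point worth verifying is that the compound Poisson deviation is genuinely two-regime, so both pieces of $\beta$ above are in fact needed; this is precisely the content of the $\min$ appearing in Lemma \ref{lem:cpoisson-rv}.
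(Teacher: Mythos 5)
Your proposal is correct and follows essentially the same route as the paper's proof: the same rank-one entry-wise decomposition $Z_{i,j}=(\widetilde{M}_{i,j}-M_{i,j})e_ie_j^\top$, the same application of Theorem \ref{theorem:matrix_bernstein} with tail and variance inputs from Lemma \ref{lem:cpoisson-rv}, and the same use of the hypothesis $T\ge 13(n+m)\log^3((n+m)/\delta)$ to neutralize the sub-exponential part of the truncation level. The only cosmetic differences are that the paper bounds the truncated-mean term by Cauchy--Schwarz rather than by integrating the tail, and it uses the condition on $T$ to make the sub-Gaussian branch of $\beta$ dominate from the outset instead of absorbing a leftover $L\,(nm/T)\log^2$ term into the variance piece at the end.
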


\begin{prop}\label{prop:reward-concentration-2}
    Let $A$ be a $m \times 2r$ nonrandom matrix, and $B$ be a $n \times 2r$ nonrandom matrix. Then, under the random matrix model \eqref{eq:reward-poisson-matrix-plus-noise-model} with compound Poisson entries, and denoting $L = \max(\Vert M\Vert_\infty, \sigma)$, we have: 
    \begin{itemize}
        \item [(i)] for all $\ell \in [m]$, for all $\delta\in (0,1)$, for all $T \ge m \log^3(en/\delta)$, the following event   
        \begin{align}\label{prop:eq:statement}
            \Vert (\widetilde{M}_{\ell,:} - M_{\ell, :})  A \Vert \le 73\sqrt{2} L \Vert A \Vert_{2\to \infty} \sqrt{\frac{nm}{T}}\left(\sqrt{ n \log\left(\frac{en}{\delta}\right)} +  \log^{3/2}\left( \frac{en}{\delta}\right) \right) 
        \end{align}
        holds with probability at least $1-\delta$;
        \item[(ii)]  for all $k \in [n]$, for all $\delta\in (0,1)$, for all $T \ge n \log^3(em/\delta)$, the following event 
        \begin{align}\label{prop:eq:statement-transpose}
            \!\! \Vert (\widetilde{M}_{:,k} - M_{:,k})^\top  B \Vert \le 73\sqrt{2} L \Vert B \Vert_{2\to \infty}\sqrt{\frac{nm}{T}}\left(\sqrt{ m \log\left(\frac{em}{\delta}\right)} +  \log^{3/2}\left( \frac{em}{\delta}\right) \right) 
        \end{align}
        holds with probability at least $1-\delta$.
    \end{itemize}    
\end{prop}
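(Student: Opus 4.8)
\textbf{Proof plan for Proposition~\ref{prop:reward-concentration-2}.} The plan is to mirror the proof of Proposition~\ref{prop:reward-concentration-1}: reduce the quantity to a sum of independent centered random matrices and invoke the truncated matrix Bernstein inequality of Theorem~\ref{theorem:matrix_bernstein}. I would treat part (i) in full; part (ii) is the verbatim transpose with rows/columns and the roles of $m,n$ interchanged. Write $\lambda = T/(nm)$ and $L=\max(\Vert M\Vert_\infty,\sigma)$. Under the compound Poisson model~\eqref{eq:reward-poisson-matrix-plus-noise-model}, the entries $D_j := \widetilde M_{\ell,j}-M_{\ell,j} = \lambda^{-1}(X_{\ell,j}-\lambda M_{\ell,j})$ of the fixed row $\ell$ are independent and centered, so that
\begin{align*}
(\widetilde M_{\ell,:}-M_{\ell,:})A = \sum_{j} Z_j, \qquad Z_j := D_j\, A_{j,:},
\end{align*}
is a sum of independent, zero-mean $1\times 2r$ random matrices, to which Theorem~\ref{theorem:matrix_bernstein} applies with ambient dimensions $1$ and $2r$ and with the number of summands (here $n$) playing the role of $T$ in that theorem.

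First I would extract the moment and tail information from Lemma~\ref{lem:cpoisson-rv}, which gives $\EE[D_j^2]\le 18L^2/\lambda = 18L^2 nm/T$ together with the sub-exponential tail $\PP(|D_j|>s)\le 2\exp(-\min(\lambda s^2/(16eL^2),\,\lambda s/(4L)))$. Since $\Vert Z_j\Vert = |D_j|\,\Vert A_{j,:}\Vert_2 \le |D_j|\,\Vert A\Vert_{2\to\infty}$, the two variance proxies required by Theorem~\ref{theorem:matrix_bernstein} are controlled by
\begin{align*}
\Big\Vert \sum_j \EE[Z_j Z_j^\top]\Big\Vert \vee \Big\Vert \sum_j \EE[Z_j^\top Z_j]\Big\Vert \le \tfrac{18L^2 nm}{T}\,\Vert A\Vert_F^2 \le \tfrac{18L^2 n^2 m}{T}\,\Vert A\Vert_{2\to\infty}^2 =: v,
\end{align*}
using $\Vert A\Vert_F^2\le n\Vert A\Vert_{2\to\infty}^2$ (the row has $n$ entries) and $\Vert A\Vert\le\Vert A\Vert_F$. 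The Gaussian part $\sqrt{2v\log((1+2r)/\delta)}$ of the Bernstein bound then reproduces the first claimed term $L\Vert A\Vert_{2\to\infty}\sqrt{nm/T}\,\sqrt{n\log(en/\delta)}$, after noting $1+2r\le en$.

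The truncation level $\beta$ would be chosen so that the per-entry exceedance probability $p=\PP(\Vert Z_j\Vert>\beta)$ satisfies $n\,p\le\delta$; by the tail above this forces $\beta/\Vert A\Vert_{2\to\infty}\gtrsim L\max(\sqrt{\log(n/\delta)/\lambda},\,\log(n/\delta)/\lambda)$, the two arguments of the maximum matching the Gaussian and the exponential regimes of the compound Poisson tail. The truncated-mean contribution $q$ is controlled by Cauchy--Schwarz, $\EE[|D_j|\indicator_{|D_j|>s}]\le\sqrt{\EE[D_j^2]}\,\sqrt p$, so that the additive term $nq$ in Theorem~\ref{theorem:matrix_bernstein} is lower order. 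It then remains to read off the $\tfrac23\beta\log((1+2r)/\delta)$ contribution: in the dense regime the Gaussian truncation $\beta\approx L\Vert A\Vert_{2\to\infty}\sqrt{(nm/T)\log(en/\delta)}$ produces exactly the second claimed term $L\Vert A\Vert_{2\to\infty}\sqrt{nm/T}\,\log^{3/2}(en/\delta)$, whereas in the sparse regime the exponential truncation contributes $\approx L\Vert A\Vert_{2\to\infty}(nm/T)\log^2$, which is absorbed into the first term precisely because the hypothesis $T\ge m\log^3(en/\delta)$ gives $\sqrt{m/T}\,\log^{3/2}\le 1$. Collecting the three contributions and tuning constants yields~\eqref{prop:eq:statement}, and the symmetric argument gives~\eqref{prop:eq:statement-transpose}.

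I expect the main obstacle to be exactly this truncation bookkeeping, forced by the compound-Poisson (merely sub-exponential) tails in the possibly sparse regime $\lambda=T/(nm)\ll 1$: unlike the pure Poisson case there is no Bennett-type inequality available, so one must split the tail into its Gaussian and exponential parts, choose $\beta$ to control both, and verify that the exponential-regime spike contribution is dominated via the sample-size hypothesis on $T$. This is the source of the $\log^{3/2}$ factor here (and hence of the $\log^{3}$ in Theorem~\ref{thm:reward}), in contrast to the sharper bounds available for transition matrices where Bennett's inequality applies.
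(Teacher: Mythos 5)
Your proposal is correct and follows essentially the same route as the paper's own proof: the identical decomposition $(\widetilde M_{\ell,:}-M_{\ell,:})A=\sum_j Z_j$ into independent $1\times 2r$ summands, the same inputs from Lemma~\ref{lem:cpoisson-rv} (variance bound $18L^2/\lambda$ and the two-regime sub-exponential tail), the same variance proxy via $\Vert A\Vert_F^2\le n\Vert A\Vert_{2\to\infty}^2$, the Cauchy--Schwarz control of the truncated mean, and the application of Theorem~\ref{theorem:matrix_bernstein} with the exponential-regime truncation term absorbed into the leading term under the hypothesis $T\ge m\log^3(en/\delta)$. Your accounting of where the $\log^{3/2}$ factor arises (and why Bennett is unavailable for compound Poisson) matches the paper's reasoning exactly.
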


\begin{proof}[Proof of Proposition \ref{prop:reward-concentration-1}]
    To simplify the notation, introduce the matrices $Z_{i,j} = (\widetilde{M}_{i,j} - M_{i,j}) e_i e_j^\top$, for all $(i,j)\in [m] \times [n]$, $\lambda = T/mn$, and $L = \max(\Vert M\Vert_{\infty}, \sigma)$. We remark that we can write
    \begin{align*}
        \widetilde{M} - M = \sum_{(i,j) \in [m] \times [n]} Z_{i,j}.
    \end{align*}
    Starting from the above expression, we will apply Theorem \ref{theorem:matrix_bernstein} to obtain the desired result. First, we note that for all $(i,j) \in [m] \times [n]$, $\Vert Z_{i,j}\Vert = \vert \widetilde{M}_{i,j} - M_{i,j}\vert$ and $\widetilde{M}_{i,j} - M_{i,j}$ is a centered and normalized compound Poisson random variable. Thus, we have by Lemma \ref{lem:cpoisson-rv}, for all $\delta \in (0,1)$,  $\PP(\Vert Z_{i,j}\Vert > \beta   ) \le  \delta/(2n^2m^2)$, where we define
    \begin{align*}
        \beta & = 4L \max\left( \sqrt{\frac{e}{\lambda} \log\left( \frac{4n^2m^2}{\delta}\right)}, \frac{1}{\lambda} \log\left( \frac{4n^2m^2}{\delta}\right)  \right), \nonumber \\
        & \le 4L \max\left( \sqrt{\frac{4e}{\lambda} \log\left( \frac{n+m}{\delta}\right)}, \frac{4}{\lambda} \log\left( \frac{n+m}{\delta}\right)  \right).
    \end{align*}
    Moreover, we have 
    \begin{align*}
        \EE\left[ \Vert Z_{i,j}\Vert \indicator_{\lbrace \Vert Z_{i,j] \Vert > \beta } \rbrace}\right] & \le \sqrt{ \EE\left[\Vert Z_{i,j}\Vert^2 \right]  \EE\left[  \indicator_{\lbrace \Vert Z_{i,j] \Vert > \beta } \rbrace} \right]} \nonumber \\
        & \le \sqrt{ \EE[\vert \widetilde{M}_{i,j} - M_{i,j} \vert^2]  \PP(  \Vert Z_{i,j} \Vert > \beta )  } \nonumber \\
        & \le \sqrt{  \frac{9 L^2 \delta}{\lambda n^2 m^2} },
    \end{align*}
    where the first inequality follows from Cauchy-Schwarz inequality, the second inequality follows from the expression of $Z_{i,j}$, and the third inequality follows from Lemma \ref{lem:cpoisson-rv}. Next, we have 
    \begin{align*}
        \left\Vert \sum_{(i,j)\in [m]\times [n]} \EE\left[ Z_{i,j} Z_{i,j}^\top \right]   \right\Vert & = \left\vert   \sum_{i\in [m]} \left( \sum_{j \in [n]} \EE\left[\left(\widetilde{M}_{i,j} - M_{i,j}\right)^2  \right] \right) e_i e_i^\top   \right\vert  \\
        & = \max_{i \in [m]} \sum_{j \in [n]} \EE\left[\left(\widetilde{M}_{i,j} - M_{i,j}\right)^2  \right] \\
        & \le \frac{18 n L^2}{\lambda}.
    \end{align*}
    By symmetry, we obtain similarly  
    \begin{align*}
        \left\Vert \sum_{(i,j)\in [m]\times [n]} \EE\left[  Z_{i,j}^\top  Z_{i,j}\right]   \right\Vert &  \le \frac{18 m L^2}{\lambda}.
    \end{align*}
    Let us set $v = 18(n\wedge m)L^2/\lambda$.
    We conclude using Theorem \ref{theorem:matrix_bernstein} that, for all $u > 0$,
    \begin{align*}
        \PP\left(  \Vert\widetilde{M} - M\Vert  > \sqrt{ \frac{9 L^2 \delta}{\lambda} } + u \right) & \le \frac{\delta}{2(nm)} + (n+m) \exp\left( - \frac{u^2/2}{v + \beta u / 3}\right) \\
        & \le  \frac{\delta}{2(nm)} + (n+m) \exp\left( -  \frac{1}{4}\min\left( \frac{u^2}{v}, \frac{3u}{\beta}
        \right) \right). 
    \end{align*}
    We re-parametrize by choosing $\delta = 2(n+m)\exp( - (1/4) \min ( u^2/v, 3u/\beta   )  )$, and write 
    \begin{align}\label{eq:prop-reward-prob}
        \PP\left(  \Vert\widetilde{M} - M\Vert  > \frac{3 L \sqrt{\delta}}{\sqrt{\lambda}}  + u  \right) \le \delta
    \end{align}
    with 
    \begin{align*}
        u & = \max \left(\sqrt{  4 v \log\left( \frac{2(n+m)}{\delta}\right) },  \frac{4 \beta}{3}\log\left( \frac{2(n+m}{\delta}\right) \right) \\
        & \le \max \left(\sqrt{  8 v \log\left( \frac{n+m}{\delta}\right) },  \frac{8 \beta}{3}\log\left( \frac{n+m}{\delta}\right) \right).
    \end{align*}
    By inspecting the definition of $\beta$ and $v$, we note that under the condition 
    \begin{align}\label{eq:prop-reward-condition}
        \lambda = \frac{T}{nm}\ge \frac{4^5}{3^4} \frac{1}{n\wedge m} \log^3\left( \frac{n+m}{\delta}\right) 
    \end{align}
    then 
    \begin{align*}
        u & \le \max \left(  \sqrt{ 8 v \log\left( \frac{n+m}{\delta}\right)}, \frac{16 L \sqrt{2e}}{3\sqrt{\lambda}}  \log^{3/2}\left( \frac{2(n+m)}{\delta}\right)   \right) \\
        & \le \frac{L}{\sqrt{\lambda}} \max\left( \sqrt{ 3^2 4^2 (n \wedge m) \log\left( \frac{n+m}{\delta}\right)}, \frac{ 4^2 2\sqrt{e}}{3} \log\left( \frac{n+m}{\delta}\right)\right).
    \end{align*}
    After using the upper bound on $u$ in \eqref{eq:prop-reward-prob}, and after upper bounding $\delta$ by $1$, we obtain, under the condition \eqref{eq:prop-reward-condition}, 
    \begin{align*}
        \Vert \widetilde{M} -  M \Vert & > \frac{L}{\sqrt{\lambda}}  \left( 3   + 12 \sqrt{ 2 (n+m)  \log\left(\frac{n+m}{\delta}\right)  }  + \frac{4^3\sqrt{e}}{3} \log^{3/2}\left(\frac{n+m}{\delta}\right)   \right) \\
        & > \frac{L}{\sqrt{\lambda}}  \left( 36 \sqrt{ 2 (n+m)  \log\left(\frac{n+m}{\delta}\right)  }  + 36\log^{3/2}\left(\frac{n+m}{\delta}\right)   \right) \\
        & > \frac{36\sqrt{2}L}{\sqrt{\lambda}}  \left( \sqrt{ (n+m)  \log\left(\frac{n+m}{\delta}\right)  }  + \log^{3/2}\left(\frac{n+m}{\delta}\right)   \right)
    \end{align*}
    with probability at most $\delta$. Noting that a stricter condition than \eqref{eq:prop-reward-condition} is  
    \begin{align*}
        T \ge  13 (n+m) \log^3\left( \frac{n+m}{\delta}\right),
    \end{align*}
    we complete the proof.
\end{proof}

\begin{proof}[Proof of Proposition \ref{prop:reward-concentration-2}]
   To simplify the notation, let us denote $Z_j = (\widetilde{M}_{\ell,j} - M_{\ell,j}) A_{j, :}$, $\lambda = mn/T$, and $L = \max(\Vert M \Vert_\infty, \sigma)$. We remark that we can write 
    \begin{align*}
        (\widetilde{M}_{\ell,:} - M_{\ell,:}) A = \sum_{j\in[n]} (\widetilde{M}_{\ell,j} - M_{\ell,j}) A_{j, :} =  \sum_{j\in[n]} Z_j.
    \end{align*}
    Starting from the above expression, we will apply \ref{theorem:matrix_bernstein} to obtain the desired result. First, we note that for all $j \in [n]$, $\Vert Z_j \Vert = \vert \widetilde{M}_{\ell,j} - M_{\ell,j}\vert  \Vert A_{j, :} \Vert$, and $\widetilde{M}_{\ell,j} - M_{\ell,j}$ is a centered and normalized compound Poisson random variable. Thus, we have by Lemma \ref{lem:cpoisson-rv}, for all $\delta \in (0, 1)$, $\PP\left(  \Vert Z_j\Vert > \Vert A \Vert_{2 \to \infty} \beta \right) \le \PP\left(  \Vert Z_j\Vert > \Vert A_{j, :} \Vert \beta \right) \le \delta/(2n^2)$, where we define 
    \begin{align*}
        \beta & = 4 L \max\left(   \sqrt{\frac{e}{\lambda} \log\left( \frac{4n^2}{\delta}\right)  }, \frac{1}{\lambda}  \log\left( \frac{4n^2}{\delta}\right) \right) \\
        & \le 4L \max\left(   \sqrt{\frac{2e}{\lambda} \log\left( \frac{en}{\delta}\right) }, \frac{2}{\lambda}  \log\left( \frac{en}{\delta}\right) \right).
    \end{align*}
    Moreover, we have 
    \begin{align*}
        \EE\left[ \Vert Z_{j} \Vert \indicator_{  \lbrace \Vert Z_j \Vert > \Vert A \Vert_{2 \to \infty} \beta \rbrace  }   \right] & \le \sqrt{\EE[ \Vert Z_{j} \Vert^2] \PP\left( \Vert Z_j \Vert > \Vert A \Vert_{2 \to \infty} \beta\right)  } \\
        & \le \Vert A \Vert_{2 \to \infty} \sqrt{   \frac{ \EE[\Vert \widetilde{M}_{\ell, :} - M_{\ell, :}\Vert^2]\delta}{2 n^2} } \\
        &  \le \Vert A \Vert_{2 \to \infty} \sqrt{ \frac{ 9 L^2\delta}{\lambda n^2}},
    \end{align*}
    where in the first inequality, we use Cauchy-Schwarz inequality, and in the third inequality, the result of Lemma \ref{lem:cpoisson-rv} to upper bound the variances. Next, we have
    \begin{align*}
        \left\Vert \EE\left[ \sum_{j\in [n]}  Z_j Z_j^\top    \right] \right\Vert & \le  \sum_{j\in [n]}  \EE\left[ (\widetilde{M}_{\ell,j} - M_{\ell, j} )^2  \right] \left\Vert  A_{j,:} \right\Vert^2 \\
        & \le \frac{18 L^2 \Vert A \Vert_F^2}{\lambda} \\
        & \le \frac{18 L^2 n \Vert A \Vert_{2 \to \infty}^2}{\lambda},  
    \end{align*}
    where we simply used the expressions of $Z_j$, $j \in [n]$, the triangular inequality, and Lemma \ref{lem:cpoisson-rv} to upper bound the variances. Similarly, we have 
    \begin{align*}
        \left\Vert \EE\left[ \sum_{j\in [n]}  Z_j^\top Z_j    \right] \right\Vert & \le \frac{18  L^2 n \Vert A \Vert_{2 \to \infty}^2}{\lambda}. 
    \end{align*}
    We set $v = 18 L^2 n \Vert A \Vert_{2 \to \infty}^2 / \lambda$. Now we are ready to apply Theorem \ref{theorem:matrix_bernstein}. We get: 
    \begin{align*}
        \PP\left(\Vert (\widetilde{M}_{\ell,:} - M_{\ell,:}) A \Vert  > \Vert A \Vert_{2 \to \infty} \sqrt{\frac{9 L^2 \delta}{\lambda}}  + u  \right) \le \frac{\delta}{2n} + n \exp\left( -\frac{1}{4} \min \left(\frac{u^2}{v}, \frac{3u}{\Vert A \Vert_{2 \to \infty} \beta}
        \right) \right).
    \end{align*}
    We re-parametrize by choosing $\delta = 2 n \exp(-(1/4)\min(u^2/v, 3u/(\Vert A \Vert_{2 \to \infty}\beta)  ))$ and we write 
    \begin{align*}
        \PP\left(\Vert (\widetilde{M}_{\ell,:} - M_{\ell,:}) A \Vert  > \Vert A \Vert_{2 \to \infty} \sqrt{\frac{9 L^2 \delta}{\lambda}}  + u  \right) \le \delta
    \end{align*}
    with 
    \begin{align*}
        u & = \max\left(  \sqrt{4v \log\left( \frac{2n}{\delta}\right)}, \frac{4 \Vert A \Vert_{2 \to \infty}\beta}{3} \log\left( \frac{2n}{\delta}\right)  \right) \\
        & \le \max\left(  \sqrt{4v \log\left( \frac{e n}{\delta}\right)}, \frac{4 \Vert A \Vert_{2 \to \infty}\beta}{3} \log\left( \frac{e n}{\delta}\right)  \right).
    \end{align*}
    By inspecting the definition of $\beta$ and $ v$, we note that when the condition 
    \begin{align}\label{eq: condition-prop-vec}
        \lambda = \frac{T}{mn} \ge \frac{4^3}{3^4} \frac{1}{n} \log^3\left(\frac{e n}{\delta}\right)
    \end{align}
    holds, then 
    \begin{align}
        u & \le \max\left( \sqrt{4v\log\left(\frac{en}{\delta}\right)}, \frac{16\sqrt{2e} L\Vert A \Vert_{2 \to \infty}  }{3\sqrt{\lambda}} \log^{3/2}\left( \frac{en}{\delta}\right)\right) \nonumber \\
        & \le \frac{L \Vert A \Vert_{2 \to \infty} }{\sqrt{\lambda}} \max\left( \sqrt{2^3 3^2 n \log\left(\frac{en}{\delta}\right)}, \frac{16\sqrt{2e} }{3} \log^{3/2}\left( \frac{en}{\delta}\right)\right) \nonumber \\
        & \le \frac{36 \sqrt{2} L \Vert A \Vert_{2 \to \infty}  }{\sqrt{\lambda}} \max\left( \sqrt{ n \log\left(\frac{en}{\delta}\right)},  \log^{3/2}\left( \frac{en}{\delta}\right)\right). 
        \label{eq:upper-bound-u-vec}
    \end{align}
    After using the upper bound in \eqref{eq:upper-bound-u-vec}, and upper bounding $\delta$ by 1, we obtain that,  under the condition \eqref{eq: condition-prop-vec},  
    \begin{align*}
        \Vert (\widetilde{M}_{\ell,:} - M_{\ell, :})  A \Vert > \frac{73\sqrt{2} L \Vert A \Vert_{2\to \infty}}{\sqrt{\lambda}} \left(\sqrt{ n \log\left(\frac{en}{\delta}\right)} +   \log^{3/2}\left( \frac{en}{\delta}\right) \right) 
    \end{align*}
    holds with probability at most $\delta$. We can also refine the condition \eqref{eq: condition-prop-vec} as follows 
    \begin{align*}
        T \ge  m \log^3\left( \frac{e n}{\delta}\right).
    \end{align*}
    This concludes the proof of the statement \eqref{prop:eq:statement} in the proposition. The statement \eqref{prop:eq:statement-transpose} follows similarly. Therefore, we omit it.
\end{proof}

\subsection{Random matrices with Poisson entries} \label{subsec:appendix-conc-Poisson:conc-p}

Recall from Section \ref{subsec:appendix_tight_thm_generative}, the definition of the function $g_\delta$ from \eqref{eq:g_def} and that ${\cal A}=\frac{1}{\sqrt{T}} \sqrt{\|M\|_{1\to\infty}+\|M^\top\|_{1\to\infty}}$. First we show the following lemma that provides an upper bound of the spectral norm. This lemma is used to derive Lemma \ref{lem:spectral_Multinomial}.

\begin{lem} Let $Y\in\RR^{n\times n}$ be a matrix with independent entries $Y_{i,j} \sim  T^{-1}\mathrm{Poisson}(TM_{ij})$, $i,j\in[n]$, and let $0\leq \delta \leq 1$. Then, w.p. at least $1-\delta$,
     $\| Y-M\| \leq C{\cal A}  + \frac{C}{T} g_{\delta}(T M) \sqrt{\log( \frac{ne}{\delta})}.$ 
    \label{lem:spectral_Poisson}
\end{lem}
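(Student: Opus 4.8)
The plan is to treat $Y-M$ as a sum of independent zero-mean random matrices and invoke the truncated matrix Bernstein inequality (Theorem \ref{theorem:matrix_bernstein}), feeding it Poisson tail estimates coming from Lemma \ref{lem:poisson-rv}. Writing $Y-M=\sum_{(i,j)\in[n]^2}Z_{i,j}$ with $Z_{i,j}=(Y_{i,j}-M_{i,j})\,e_ie_j^\top$, each summand is independent and centered, and $Y_{i,j}-M_{i,j}=\tfrac1T(P_{i,j}-\lambda_{i,j})$ with $P_{i,j}\sim\mathrm{Poisson}(\lambda_{i,j})$ and $\lambda_{i,j}=TM_{i,j}$. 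It then remains to supply the three quantities that Theorem \ref{theorem:matrix_bernstein} requires: the variance proxy $v$ of \eqref{eq:variance_matrix_bernstein}, a truncation radius $\beta$ together with its exceedance probability $p$, and the bound $q$ on the truncated mean.

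\textbf{Variance and the $\cA$ term.} Since $Z_{i,j}Z_{i,j}^\top=(Y_{i,j}-M_{i,j})^2 e_ie_i^\top$ and $\mathrm{Var}(Y_{i,j})=M_{i,j}/T$, the matrix $\sum_{i,j}\EE[Z_{i,j}Z_{i,j}^\top]$ is diagonal with $i$-th entry $T^{-1}\|M_{i,:}\|_1$, so its norm equals $T^{-1}\|M\|_{1\to\infty}$; by symmetry $\|\sum_{i,j}\EE[Z_{i,j}^\top Z_{i,j}]\|=T^{-1}\|M^\top\|_{1\to\infty}$. One may therefore take $v=T^{-1}\max(\|M\|_{1\to\infty},\|M^\top\|_{1\to\infty})\le\cA^2$, and the variance-driven part $\sqrt{v\log(\cdot)}$ of the Bernstein bound is exactly what produces the first, $\cA$-type term of the claimed inequality.

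\textbf{The truncation radius --- the main difficulty.} We must choose $\beta$ with $\PP(\|Z_{i,j}\|>\beta)\le p$ simultaneously over all $(i,j)$, i.e. $\PP(|P_{i,j}-\lambda_{i,j}|>T\beta)\le p$, and here the Bennett bound of Lemma \ref{lem:poisson-rv}, $\PP(|P-\lambda|>s)\le 2\exp(-\lambda h(s/\lambda))$, is indispensable and is precisely where we gain over the cruder compound-Poisson estimate used for the reward model. Inverting this bound is delicate because the required radius depends strongly on the size of $\lambda_{i,j}$. When every row of $TM$ contains an entry of size $\gtrsim1$, the largest parameter $\lambda_{i,j}\le T\|M\|_\infty$ governs the deviations through the sub-Gaussian part of $h$, and one finds $\beta\lesssim\tfrac1T\sqrt{T\|M\|_\infty\log(n/\delta)}$, i.e. the $\sqrt{\|M\|_\infty}$ branch of $g_\delta$ in \eqref{eq:g_def}. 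When instead some row of $TM$ has all entries $\le1$ (the event $\{\exists\ell:\,T\|M_{\ell,:}\|_\infty\le1\}$), the corresponding Poisson variables are heavy-tailed relative to their means: $\PP(P\ge k)\approx\lambda^k/k!$ forces a truncation at level $k\approx\log(n/\delta)/\log(1/\lambda)$, which generates the factor $\log^{-1}(1+1/(T\|M\|_\infty))$ of $g_\delta$. Tracking these two regimes and taking the worst entry is what assembles the second term $\tfrac1T g_\delta(TM)\sqrt{\log(ne/\delta)}$.

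\textbf{Assembling the bound.} The truncated mean is controlled by Cauchy--Schwarz, $\|\EE[Z_{i,j}\indicator_{\{\|Z_{i,j}\|>\beta\}}]\|\le\sqrt{\mathrm{Var}(Y_{i,j})\,p}\le\sqrt{\|M\|_\infty p/T}$; choosing $p$ polynomially small in $n$ and $\delta$ (e.g. $p\asymp\delta/n^4$, which only inflates $\log(1/p)$ by a constant factor and hence does not affect the orders above) makes the aggregate bias $n^2q$ negligible against $\cA$ while keeping the union-bound cost $n^2p\le\delta/2$. Plugging $v$, $\beta$ and $q$ into \eqref{eq:matrix_bernstein_final_bound} (the matrix is $n\times n$, so the dimension factor is $2n$), setting the exponential term equal to $\delta/2$, and solving the resulting quadratic for $u$ gives $u\lesssim\sqrt{v\log(n/\delta)}+\beta\log(n/\delta)$, which combine into $C\cA+\tfrac{C}{T}g_\delta(TM)\sqrt{\log(ne/\delta)}$. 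I expect the crux to be entirely in the previous step: the entries of $Y$ have Poisson means spanning many orders of magnitude, and it is the heavy-tailed small-mean entries --- not the variance --- that dictate the second term, so Bennett's inequality and the case split encoded in $g_\delta$ are what make the estimate tight rather than merely correct up to extra logarithmic factors.
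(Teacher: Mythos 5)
Your setup (the decomposition into independent rank-one summands, the variance computation giving $v \le {\cal A}^2$, and the two-regime Bennett truncation keyed to the event $\{\exists \ell: T\|M_{\ell,:}\|_\infty \le 1\}$) matches the ingredients the paper actually uses, but the concentration tool you invoke cannot deliver the stated bound. Any matrix-Bernstein-type inequality, including Theorem \ref{theorem:matrix_bernstein}, yields a deviation of the form $\sqrt{v\log(n/\delta)} + \beta\log(n/\delta)$: the logarithm multiplies the \emph{variance} term. With $v \le {\cal A}^2$ this gives ${\cal A}\sqrt{\log(n/\delta)}$, which is not bounded by $C{\cal A}$, and the truncation part gives $\beta\log(n/\delta) \asymp \frac{1}{T}g_\delta(TM)\log(n/\delta)$ rather than the claimed $\frac{1}{T}g_\delta(TM)\sqrt{\log(ne/\delta)}$. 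Your final sentence, asserting that $\sqrt{v\log(n/\delta)}+\beta\log(n/\delta)$ ``combine into'' $C{\cal A}+\frac{C}{T}g_\delta(TM)\sqrt{\log(ne/\delta)}$, silently discards a $\sqrt{\log(n/\delta)}$ factor in both terms; as written, your argument proves only a strictly weaker inequality, not Lemma \ref{lem:spectral_Poisson}.

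The whole point of the lemma is that the variance term enters with \emph{no} logarithmic factor. To achieve this the paper does not use matrix Bernstein at all: it truncates the entries at level $L \asymp \frac{1}{T}g_\delta(TM)$ (exactly your Bennett case split, as in Step 1 of Lemma \ref{lemma:El_A_conc}), controls the truncation bias as in Step 2 of that lemma, and then applies the spectral bound of Lemma 4 in \cite{mcrae2021low}, which rests on the Bandeira--van Handel inequality \cite{bandeira2016sharp} for random matrices with independent bounded entries. That inequality has the shape $\|X\| \lesssim \sigma + \sigma_*\sqrt{\log(n/\delta)}$, where $\sigma$ is the maximal row/column standard deviation (here at most ${\cal A}$) and $\sigma_*$ is the entrywise sup bound (here at most $L$ after truncation): the $\sqrt{\log}$ price is paid only on the truncation level, never on the variance. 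If you replace your invocation of Theorem \ref{theorem:matrix_bernstein} by this Bandeira--van Handel step, keeping your truncation radius and bias estimates, the argument closes and recovers the lemma as stated.
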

\begin{proof}
The proof follows from that of Lemma \ref{lemma:El_A_conc} and that of Lemma 4 in \cite{mcrae2021low}, which is based on a spectral bound from \cite{bandeira2016sharp}. We use that the random variables $|Y_{i,j}-M_{i,j}|$ concentrate well around $L = L_1 \indicator_{\{\exists \ell: T\| M_{\ell,:} \|_{\infty}\leq 1\}} + L_2 \indicator_{\{\forall \ell: T\| M_{\ell,:} \|_{\infty}> 1\}}  $ where $L_1 = 4 T^{-1} \log^{-1}(1+ (T\|M\|_{\infty})^{-1}\wedge n\delta^{-1} )\log (\frac{ne}{\delta})$ and $L_2= 4 \sqrt{ T^{-1} \| M\|_{\infty}} \log \left( T \| M\|_{\infty} \frac{ne}{\delta} \right)$ using exactly the same argument as in the first step of Lemma \ref{lemma:El_A_conc}. Moreover, we use upper bound on $\abs{ \EE [ (Y_{i,j}-M_{i,j}) \indicator_{ \{ \abs{Y_{i,j} - M_{i,j}}  < L \} }  ]}$ derived in the second step of Lemma \ref{lemma:El_A_conc}.
\end{proof}

We also derive upper bounds in the $\ell_{2\to\infty}$ norm. These bounds are used in the analysis of the singular subspace recovery in Lemma \ref{lemma:U_to_EU}, and therefore in the proofs of Theorems \ref{thm:generative} and \ref{thm:main_thm}. 

\begin{lem}
\label{lemma:El_A_conc}
Let $Y\in\RR^{n\times n}$ be a matrix with independent entries $Y_{i,j} \sim  T^{-1}\mathrm{Poisson}(TM_{ij})$, $i,j\in[n]$, for an arbitrary integer $T>0$. Let $0\leq \delta \leq 1$. Then, for any $1\leq l\leq n$ and any matrix $A\in \RR^{n\times p}$, with $p\leq n$, and independent of $Y_{l,:}$ we have, if $T\|M_{l,:}\|_{\infty} \leq 1$,
        \begin{align*}
        \|(Y_{l,:}-M_{l,:})  A\| \lesssim
         & \|A\|_F \frac{\sqrt{\|M_{l,:}\|_{\infty} \log \left(\frac{ne}{\delta}\right)}}{\sqrt{T}} + \|A\|_{2\to\infty}\frac{\log^2 \left(\frac{ne}{\delta}\right)}{T \log(1+ (T\|M_{l,:}\|_{\infty})^{-1}\wedge n\delta^{-1} ))} 
         \end{align*}
        else if $T\|M_{l,:}\|_{\infty}  > 1$,
      \begin{align*}
        \|(Y_{l,:}-M_{l,:})  A\| \lesssim
         &  \|A\|_F \frac{\sqrt{\|M_{l,:}\|_{\infty} \log \left(\frac{ne}{\delta}\right)}}{\sqrt{T}} + \|A\|_{2\to\infty}\frac{\sqrt{\|M_{l,:}\|_{\infty}}}{\sqrt{T}} \log\left(T\|M_{l,:}\|_{\infty}\frac{ ne}{\delta}\right)\log \left(\frac{ne}{\delta}\right) 
         \end{align*}
with probability at least $1-\delta/n$.
\end{lem}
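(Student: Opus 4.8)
The plan is to follow the same truncated matrix Bernstein strategy used in the proof of Proposition \ref{prop:reward-concentration-2}, but to exploit the sharper Poisson tail of Lemma \ref{lem:poisson-rv} (Bennett's inequality) in place of the crude sub-exponential bound of Lemma \ref{lem:cpoisson-rv}. First I would write the row-vector product as a sum of independent, zero-mean $1\times p$ random matrices,
\begin{align*}
(Y_{l,:}-M_{l,:})A = \sum_{j=1}^n Z_j, \qquad Z_j = (Y_{l,j}-M_{l,j})A_{j,:},
\end{align*}
noting that its operator norm equals the $\ell_2$ norm of this sum, so Theorem \ref{theorem:matrix_bernstein} applies with $(m,n)$ in its statement played by $(1,p)$; the dimensional factor is thus $p+1\le n+1$, producing the $\log(ne/\delta)$-type factors after the reparametrization.

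The three ingredients of Theorem \ref{theorem:matrix_bernstein} are assembled as follows. For the variance, using $\mathrm{Var}(Y_{l,j})=M_{l,j}/T$ I obtain $\big\|\sum_j\mathbb{E}[Z_jZ_j^\top]\big\| = \sum_j (M_{l,j}/T)\|A_{j,:}\|_2^2 \le (\|M_{l,:}\|_\infty/T)\|A\|_F^2$ and, similarly, $\big\|\sum_j\mathbb{E}[Z_j^\top Z_j]\big\| = \big\|T^{-1}A^\top\mathrm{diag}(M_{l,:})A\big\| \le (\|M_{l,:}\|_\infty/T)\|A\|_F^2$, so I set $v=\|M_{l,:}\|_\infty\|A\|_F^2/T$; the resulting $\sqrt{v\log(ne/\delta)}$ contribution is precisely the $\|A\|_F$ term in both displayed bounds. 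For the truncation level, since $\|Z_j\|=|Y_{l,j}-M_{l,j}|\,\|A_{j,:}\|_2$, I would take $\beta=L\,\|A\|_{2\to\infty}$, where $L$ is the critical deviation scale at which the Poisson tail of Lemma \ref{lem:poisson-rv} falls below $\delta/\mathrm{poly}(n)$; this $L$ is exactly the $L_1$ used in Lemma \ref{lem:spectral_Poisson} when $T\|M_{l,:}\|_\infty\le 1$ and $L_2$ when $T\|M_{l,:}\|_\infty>1$. The $\beta\log(ne/\delta)$ term of the Bernstein bound then yields the two distinct large-deviation terms carrying $\|A\|_{2\to\infty}$. Finally, the truncated-mean correction $nq$ is controlled by observing that, since $\mathbb{E}[Z_j]=0$, one has $\mathbb{E}[Z_j\indicator_{\{\|Z_j\|>\beta\}}]=-\mathbb{E}[Z_j\indicator_{\{\|Z_j\|\le\beta\}}]$, and bounding $\big|\mathbb{E}[(Y_{l,j}-M_{l,j})\indicator_{\{|Y_{l,j}-M_{l,j}|>L\}}]\big|$ by integrating the Poisson tail; this renders $nq$ subordinate to the two surviving terms.

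The delicate part is the precise calibration of $L$, which is what splits the statement into its two regimes, and it requires inverting the Bennett rate function $h(u)=(1+u)\log(1+u)-u$. When the Poisson mean $\lambda=TM_{l,j}\le 1$ the tail is in its rare-event regime, where $h(u/\lambda)\approx(u/\lambda)\log(u/\lambda)$, so solving $\lambda h(u/\lambda)\asymp\log(ne/\delta)$ produces the scale $L_1\asymp \log(ne/\delta)\big/\!\big(T\log(1+(T\|M_{l,:}\|_\infty)^{-1}\wedge n\delta^{-1})\big)$ and hence the $\log^2(ne/\delta)$ factor in the first bound; when $\lambda>1$ the tail transitions to its sub-exponential regime and solving the same equation yields $L_2\asymp\sqrt{\|M_{l,:}\|_\infty/T}\,\log(T\|M_{l,:}\|_\infty\, ne/\delta)$, giving the second bound. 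Carefully tracking the $\wedge\, n\delta^{-1}$ truncation inside the logarithm, and checking that both $nq$ and the $(p+1)$ union-bound factor inside Theorem \ref{theorem:matrix_bernstein} stay lower-order, are the main book-keeping obstacles; the overall skeleton is otherwise identical to that of Proposition \ref{prop:reward-concentration-2}.
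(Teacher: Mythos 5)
Your proposal is correct and follows essentially the same route as the paper's own proof: the same decomposition into independent summands $Z_j=(Y_{l,j}-M_{l,j})A_{j,:}$, the same application of Theorem \ref{theorem:matrix_bernstein} with variance proxy $v=\|M_{l,:}\|_\infty\|A\|_F^2/T$, and the same two-regime calibration of the truncation level via Bennett's rate function (your $L_1$, $L_2$ are exactly the paper's $\beta_1/\|A\|_{2\to\infty}$ and $\beta_2/\|A\|_{2\to\infty}$). The only cosmetic difference is in bounding the truncated-mean term $q$, where the paper exploits an exact telescoping identity for the Poisson pmf while you integrate the tail; both yield a lower-order contribution.
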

\begin{proof}[Proof of Lemma \ref{lemma:El_A_conc}]
The lemma is an application of the truncated matrix Bernstein theorem i.e. Theorem \ref{theorem:matrix_bernstein}. In this theorem, $T$ corresponds to $n$ in Lemma \ref{lemma:El_A_conc}, $n$ in Theorem \ref{theorem:matrix_bernstein} corresponds to $1$ in Lemma \ref{lemma:El_A_conc}, and $m$ in Theorem \ref{theorem:matrix_bernstein} corresponds to $n$ in Lemma \ref{lemma:El_A_conc}. First note that for any $l$, we have $
(Y_{l,:}- M_{l,:})  A = \sum_{i=1}^n (Y_{l,i}- M_{l,i})A_{i,:} $. Moreover, since each of these $n$ summands are independent, zero-mean random vectors, we can identify $Z_i$'s from Theorem \ref{theorem:matrix_bernstein} with $(Y_{l,i} - M_{l,i}) A_{i,:} \in\RR^{1\times n}$ for $i\in[n]$. To apply Theorem \ref{theorem:matrix_bernstein}, we need to verify its assumptions. This is done below.

\textbf{Step 1: Showing \emph{(i)} in \eqref{eq:q0_q1_cond_matrix_bernstein}}
First, recall Bennett's concentration inequality from Lemma \ref{lem:poisson-rv}, which in our case implies that for any $i,j\in[n]$:
\begin{align}
    \PP (\abs{Y_{i,j} - M_{i,j}} \geq t M_{i,j}) \leq 2\exp \left( -h(t) TM_{i,j} \right).
\label{eq:equivalence_subexponential_norm_prob}
\end{align}
Note that $\|Z_i\|$ in Theorem \ref{theorem:matrix_bernstein} in our case corresponds to:
\begin{align*}
    \|(Y_{l,i} - M_{l,i}) A_{i,:}\| = \abs{Y_{l,i} -  M_{l,i}}\|A_{i,:}\|
    \leq
    \abs{Y_{l,i} - M_{l,i}} \|A\|_{2\to\infty}.
\end{align*}
We consider two different cases:

    1. $T\|M_{l,:}\|_{\infty} \leq 1$: We let $\beta_1 = 4 T^{-1} \|A\|_{2\to\infty} \log^{-1}(1+ (T\|M_{l,:}\|_{\infty})^{-1}\wedge n\delta^{-1} )\log (\frac{ne}{\delta})$ and note that $h(t) \geq \frac{1}{2}t\log t$ for $t\geq 1$. Thus, from Equation \eqref{eq:equivalence_subexponential_norm_prob}, we have:
    \begin{align*}
        \PP \left( \abs{Y_{l,i} - M_{l,i}} \geq \frac{\beta_1}{\|A\|_{2\to\infty}}\right)    &\leq
        2\exp\bigg( - 2\log(\frac{ne}{\delta})\log^{-1}(1+ (T\|M_{l,:}\|_{\infty})^{-1}\wedge n\delta^{-1} ) \\ &\cdot\log \left( \frac{4\log(\frac{ne}{\delta})}{T\|M_{l,:}\|_{\infty} \log(1+(T\|M_{l,:}\|_{\infty})^{-1}\wedge n\delta^{-1}) } \right) \bigg)
        \leq \frac{\delta}{2n^2}.
    \end{align*}
    where, in the second inequality, we show using simple algebra that $\log^{-1}(1+ (T\|M_{l,:}\|_{\infty})^{-1}\wedge n\delta^{-1} )\log \left( \frac{4\log(\frac{ne}{\delta})}{T\|M_{l,:}\|_{\infty} \log(1+(T\|M_{l,:}\|_{\infty})^{-1}\wedge n\delta^{-1}) } \right) \geq 1$ for $\delta \leq 1$ and $T\|M_{l,:}\|_{\infty} \leq 1$.\\
    2. $T\|M_{l,:}\|_{\infty}  > 1$: Here we define $\beta_2:= 4\|A\|_{2\to\infty} \sqrt{ T^{-1} \| M_{l,:}\|_{\infty}} \log \left( T \| M_{l,:}\|_{\infty} \frac{ne}{\delta} \right)$. Then, according to Equation \eqref{eq:equivalence_subexponential_norm_prob} and the approximation $h(t)\geq \min \{t^2/4,t\}$ for $t\geq 0$, we have:
    \begin{align*}
        \PP &\left( \abs{Y_{l,i} - M_{l,i}} \geq \frac{\beta_2}{\|A\|_{2\to\infty}}\right) \\
        &\leq 
        2\exp \bigg( - 4\min\bigg\{ \log^2(T\|M_{l,:}\|_{\infty} \frac{ne}{\delta}) , \sqrt{T\|M_{l,:}\|_{\infty}}\log(T\|M_{l,:}\|_{\infty}\frac{ne}{\delta})\bigg\} \bigg) \\
        &\leq 
        2\exp \bigg( - 4 \log(T\|M_{l,:}\|_{\infty} \frac{ne}{\delta}) \bigg)
        \leq \frac{1}{2T\|M_{l,:}\|_{\infty}}\frac{\delta}{n^2}.
    \end{align*}
where, in the second inequality, we used that $\delta\leq 1$ and $T\|M_{l,:}\|_{\infty}  > 1$. Finally, we define $\beta = \beta_1 \indicator_{\{T\|M_{l,:}\|_{\infty}  \leq 1 \}} + \beta_2\indicator_{\{T\|M_{l,:}\|_{\infty}  > 1 \}}  $ and $p= \frac{\delta}{2n}\indicator_{\{T\|M_{l,:}\|_{\infty}  \leq 1 \}} + \frac{1}{2T\|M_{l,:}\|_{\infty}}\frac{\delta}{n}\indicator_{\{T\|M_{l,:}\|_{\infty}  > 1 \}}$ (since we took union bound over $i\in[n]$).

\noindent \textbf{Step 2: Showing \emph{(ii)} in \eqref{eq:q0_q1_cond_matrix_bernstein}}
In our case the l.h.s. corresponds to $\| \EE [(Y_{l,i} - M_{l,i})  A_{i,:} \indicator_{ \{ \| (Y_{l,i} - M_{l,i}) A_{i,:}\| > \beta\} } ] \| = \| \EE [(Y_{l,i} - M_{l,i})  A_{i,:} \indicator_{ \{ \| (Y_{l,i} - M_{l,i}) A_{i,:}\| \le \beta\} } ] \| $ which can be upper bounded by $\|A\|_{2\to\infty}
    |\EE [ (Y_{l,i}-M_{l,i}) \indicator_{\{\vert Y_{l,i} - M_{l,i} \vert   \le \frac{\beta}{\|A_{i,:}\|} \} }  ] | 
$. For some integers $\kappa_{\min},\kappa_{\max}$, let $Y_{l,i}\in \frac{1}{T}[\kappa_{\min},\kappa_{\max}]$ be interval of $Y_{l,i}$ for which indicator $\indicator_{\{\vert Y_{l,i} - M_{l,i} \vert   \le \frac{\beta}{\|A_{i,:}\|} \} } $ is active and note that this is a superset of interval for which $\indicator_{\{\vert Y_{l,i} - M_{l,i} \vert   \le \frac{\beta}{\|A\|_{2\to\infty}} \} } $ is active. Then from the definition of Poisson random variables and the bounds derived previously, we obtain:
\begin{align*}
    \absBig{ \EE [ (Y_{l,i}-M_{l,i}) &\indicator_{ \{ \abs{Y_{l,i} - M_{l,i}}  \le \frac{\beta}{\|A_{i,:}\|} \} }  ]} = \frac{1}{T} \absBig{\sum_{k=\kappa_{\min}}^{\kappa_{\max}} (k-TM_{l,i}) \frac{\exp(-T M_{l,i}) (TM_{l,i})^k}{k!}}\\
    &=
    M_{l,i} \absBig{  \sum_{k=\kappa_{\min}-1}^{\kappa_{\max}-1}  \frac{\exp(-TM_{l,i})(TM_{l,i})^{k}}{k!} - \sum_{k=\kappa_{\min}}^{\kappa_{\max}} \frac{\exp(-TM_{l,i}) (TM_{l,i})^k}{k!} }\\
    &\leq 
    M_{l,i} (\PP(T Y_{l,i}= \kappa_{\min}-1) + \PP(T Y_{l,i}= \kappa_{\max}) )
    \leq
    \frac{2\delta}{T n^2}\min\{T\|M_{l,:}\|_{\infty},1 \},
\end{align*}
where we assumed that $\kappa_{\min} \geq 1$, otherwise we keep just the second probability term above.
Thus, using previous two inequalities, we have:
\begin{align*}
    \| \EE[(Y_{l,i} - M_{l,i}) A_{i,:} \indicator_{ \{ \|(Y_{l,i} - M_{l,i}) A_{i,:}\| > \beta  \} } ] \| \leq  \|A\|_{2\to\infty} \frac{2\delta}{T n^2}\min\{T\|M_{l,:}\|_{\infty},1 \}
    \label{eq:exp_truncated_poisson}
\end{align*}

\noindent\textbf{Step 3: Showing \emph{(iii)} in \eqref{eq:variance_matrix_bernstein}}
Using our definition $Z_i = (Y_{l,i} - M_{l,i})A_{i,:} \in \RR^{1\times n}$, we have that
\begin{align*}
    Z_i Z_i^\top &= (Y_{l,i} - M_{l,i})^2 \|A_{i,:}\|^2,\\
    Z_i^\top Z_i &= (Y_{l,i} - M_{l,i})^2 A_{i,:}^\top A_{i,:}.
\end{align*}
Since $A$ and $Y_{l,:}$ are independent, we have:
\begin{align*}
    \|\sum_{i=1}^n \EE [Z_i Z_i^\top]\| = \sum_{i=1}^n \EE [Z_i Z_i^\top] =  \sum_{i=1}^n  \|A_{i,:}\|^2 \EE (Y_{l,i} - M_{l,i})^2 \leq \|A\|_F^2 \max_{i} \EE (Y_{l,i} - M_{l,i})^2
\end{align*}
and
\begin{align*}
    \|\sum_{i=1}^n \EE [Z_i^\top Z_i]\| =  \|\sum_{i=1}^n\EE(Y_{l,i} - M_{l,i})^2 A_{i,:}^\top A_{i,:}\|
    &\leq
     \sum_{i=1}^n \EE(Y_{l,i} - M_{l,i})^2 \| A_{i,:}^\top A_{i,:} \|\\
    &\leq \|A\|_F^2 \max_{i} \EE (Y_{l,i} - M_{l,i})^2.
\end{align*}
Now note that for $Y_{l,i}\sim T^{-1} \mathrm{Poisson}(T M_{l,i})$, $\Var(Y_{l,i}) = \EE (Y_{l,i} - M_{l,i})^2 = T^{-1} M_{l,i}$. Thus, by setting $v= T^{-1} \|A\|_F^2 \|M_{l,:}\|_{\infty} $, we get {\it (iii)}. 

Plugging in all obtained quantities into Equation \eqref{eq:matrix_bernstein_final_bound} finishes proof of the lemma.
\end{proof}

\newpage
\section{Singular subspace recovery via the leave-one-out argument}
\label{sec:appendix_singular_subspace_LOO}

In this section, we present Lemma \ref{lem:ssr-rewards} and Lemma \ref{lemma:U_to_EU} providing sharp guarantees for the singular subspace recovery in two-to-infinity norm. Obtaining such guarantees is not trivial and requires the use of a rather technical analysis, namely the leave-one-out technique \cite{abbe2020entrywise, chen2021spectral}. However, such technique heavily relies on independence between entries of the observed random matrix. We use the Poisson approximation argument to address this, which in turn requires to reproduce the leave-one-out analysis under a different random matrix observation models (see \eqref{eq:reward-poisson-matrix-plus-noise-model} and \eqref{eq:M_hat_Poisson_model}).

We wish to highlight that Farias et al. \cite{farias2021near}, like us, have also used the leave-one-out argument to obtain entry-wise guarantees for matrix estimation with sub-exponential noise. In our case, we use this argument as a sub-step of our analysis after performing the Poisson approximation. However, we believe that, our final results are richer, more precise and actually needed for our RL applications. Indeed, we are able to obtain guarantees in the norms $\Vert \cdot \Vert_{2\to \infty}$ and $\Vert \cdot \Vert_{1 \to \infty}$ (these are not provided in \cite{farias2021near}). Moreover, the entry-wise guarantees in  \cite{farias2021near} are only expressed in terms of the matrix dimensions $m$ and $n$. Our guarantees on the other hand exhibit dependencies on the dimensions $m, n$, the number of observation $T$ and the confidence level $\delta$. Having guarantees with an explicit dependence for all $T \ge 1$ and $\delta \in (0,1)$ is crucial in the design of our algorithm for low-rank bandits.

\subsection{Subspace recovery for reward matrices}


\begin{lem}\label{lem:ssr-rewards}
Let $\delta \in (0,1)$. Define: 
    \begin{align*}
        \cB = \sqrt{\frac{nm}{T}} \left( \sqrt{(n+m) \log\left( \frac{e(n+m)T}{\delta}\right)  }  + \log^{3/2}\left( \frac{e(n+m)T}{\delta}\right) \right).
    \end{align*}
    For all $ T \ge  c (\mu^4 \kappa^2 r^2 + 1) (m+n) \log^{3}\left(e^2(m+n)T / \delta
    \right)$,  the event  
    \begin{align*}
        \max(  \Vert U - \widehat{U} (\widehat{U}^\top U)\Vert, \Vert V - \widehat{V} (\widehat{V}^\top V)\Vert) \le C \frac{\Vert M \Vert \Vert M\Vert_\infty}{\sigma_r(M)^2} \max(\Vert V\Vert_{2\to \infty} \Vert U \Vert_{2\to\infty}) \cB
    \end{align*}
    holds with probability at least $1-\delta$, for some universal constants $c, C > 0$.
\end{lem}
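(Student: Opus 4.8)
The plan is to prove Lemma~\ref{lem:ssr-rewards} by the standard leave-one-out strategy for singular subspace recovery in the $\ell_{2\to\infty}$ norm, but carried out on the Poisson-surrogate matrix $X$ from \eqref{eq:reward-poisson-matrix-plus-noise-model} rather than on $\widetilde M$ directly. The Poisson approximation Lemma~\ref{lem:poisson-approx-rewards} costs only a factor $e\sqrt T$ in probability, which is absorbed by the $\log T$ inside $\cB$; so it suffices to establish the bound under the model \eqref{eq:reward-poisson-matrix-plus-noise-model}, where the entries of $X$ (hence of $\widetilde M$) are independent across $(i,j)$. Throughout I would use the dilation trick (Section~4.10 in \cite{chen2021spectral}) to symmetrize: set $\widetilde M^{\mathrm{dil}}=\begin{bmatrix}0&\widetilde M\\\widetilde M^\top&0\end{bmatrix}$ and likewise $M^{\mathrm{dil}}$, so that eigenvectors of the dilation encode both $U$ and $V$, and the two-to-infinity subspace error for $U$ and $V$ reduces to a single eigenvector perturbation problem.

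Let me sketch the key steps. First I would record the spectral concentration $\|\widetilde M-M\|\lesssim \frac{L}{\sqrt\lambda}\bigl(\sqrt{(n+m)\log\frac{n+m}{\delta}}+\log^{3/2}\frac{n+m}{\delta}\bigr)$ from Proposition~\ref{prop:reward-concentration-1} (with $\lambda=T/nm$, $L=\max(\|M\|_\infty,\sigma)$), which under the stated sample condition $T\gtrsim(\mu^4\kappa^2r^2+1)(n+m)\log^3$ guarantees $\|\widetilde M-M\|\le c\,\sigma_r(M)$; this is the gap condition that makes Davis--Kahan and the eigenvector decomposition valid. Second, I would introduce for each index the leave-one-out matrix $\widetilde M^{(l)}$ obtained by resetting the $l$-th row (in the dilation, the $l$-th row and column) to its population value, and let $\widehat U^{(l)}$ be its singular subspace; by independence of entries, $\widehat U^{(l)}$ is independent of the $l$-th row of $\widetilde M-M$. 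Third, writing the fixed-point identity $\widehat U\approx \frac{1}{\widehat\Sigma}\widetilde M\,\widehat V$ and peeling the error, the row-wise error $\|U-\widehat U(\widehat U^\top U)\|_{2\to\infty}$ decomposes into a sum of terms, the dominant one being $\|(\widetilde M-M)(V-\widehat V(\widehat V^\top V))\|_{2\to\infty}=\max_l\|(\widetilde M_{l,:}-M_{l,:})(V-\widehat V\widehat V^\top V)\|_2$. I would replace $\widehat V$ by the independent $\widehat V^{(l)}$ at the cost of a controllable cross term $\|(\widetilde M_{l,:}-M_{l,:})\widehat V^{(l)}(\widehat V^{(l)\top}V-\widehat V^\top V)\|$, bounded via $\|\widehat V^{(l)}-\widehat V\,O\|$ (another leave-one-out stability estimate). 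The decoupled term is then handled by Proposition~\ref{prop:reward-concentration-2} applied with the fixed matrix $A=V-\widehat V^{(l)}(\widehat V^{(l)\top}V)$, which yields exactly the $\sqrt{n\log}+\log^{3/2}$ shape of $\cB$ scaled by $\|A\|_{2\to\infty}$. Collecting terms and solving the resulting self-bounding inequality for $\max(\|U-\widehat U\widehat U^\top U\|_{2\to\infty},\|V-\widehat V\widehat V^\top V\|_{2\to\infty})$ gives the claimed bound with the prefactor $\frac{\|M\|\,\|M\|_\infty}{\sigma_r(M)^2}\max(\|U\|_{2\to\infty},\|V\|_{2\to\infty})$ after using $\sigma_r$ lower bounds and Lemma~\ref{lem:spikeness}.

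The main obstacle, as the authors themselves flag, is Step~3: decoupling the statistical dependence between $\widetilde M-M$ and the estimated subspace $\widehat V$. The leave-one-out swap $\widehat V\rightsquigarrow\widehat V^{(l)}$ is what makes Proposition~\ref{prop:reward-concentration-2} applicable (it requires the multiplying matrix to be \emph{nonrandom}, i.e.\ independent of the relevant row), but proving that $\widehat V^{(l)}$ is genuinely close to $\widehat V$ requires its own perturbation argument bounding $\min_O\|\widehat V-\widehat V^{(l)}O\|$ in operator norm, and this estimate must itself be uniform over all $n$ (or $m$) leave-one-out copies via a union bound — which is precisely why the condition on $T$ carries the factor $\mu^4\kappa^2r^2$ and the $\log^3$ power. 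A secondary subtlety is bookkeeping the self-referential nature of the bound: the error term involves $\|V-\widehat V\widehat V^\top V\|_{2\to\infty}$ on both sides, so the final inequality must be rearranged (using that the coefficient of the self-term is $o(1)$ under the sample-size assumption) to isolate the subspace error. The heterogeneity constants $\mu,\kappa$ enter exactly here, through the incoherence bound $\|M\|_\infty/\sigma_r\le \mu^2\kappa r/\sqrt{nm}$ of Lemma~\ref{lem:spikeness} used to keep the self-term's coefficient below $1/2$.
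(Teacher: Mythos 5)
Your proposal follows essentially the same route as the paper's own proof: symmetrization by dilation, analysis under the independent compound-Poisson surrogate model \eqref{eq:reward-poisson-matrix-plus-noise-model} with the $e\sqrt{T}$ cost of Lemma~\ref{lem:poisson-approx-rewards} absorbed into the logarithms of $\cB$, the two-to-infinity error decomposition (Lemma~\ref{lem:decomp-2-infinity}), leave-one-out decoupling so that Proposition~\ref{prop:reward-concentration-2} applies to the conditionally fixed matrix $V-\widehat V^{(l)}(\widehat V^{(l)\top}V)$, and a final self-bounding inequality resolved via Lemma~\ref{lem:spikeness}. The delicate points you flag (the leave-one-out stability estimate and the rearrangement of the self-referential bound under the sample-size condition) are exactly the ones the paper handles through Lemma~\ref{lem:aux-derivations} and its conditions \textbf{(C1)}--\textbf{(C2)}, so there is no gap.
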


\begin{proof}[Proof of Lemma \ref{lem:ssr-rewards}] The proof follows similar steps as that of Theorem 4.2 in \cite{chen2021spectral}, which is based on the leave-one-out analysis. 

\textbf{Step 1: Dilation trick.}
In order to apply the leave-one-out analysis, we first use a dilation trick \cite{tropp2015introduction} to reduce the problem to that of symmetric matrices. Define: 
\begin{align*}
    S = \begin{bmatrix}
        0 & M \\
        M^\top & 0
    \end{bmatrix}
\end{align*}
and note that for matrix $M$ with SVD $M = U \Sigma V^\top$, we have: 
\begin{align*}
    S = \frac{1}{\sqrt{2}} \begin{bmatrix}
        U & U \\
        V & -V
    \end{bmatrix}\begin{bmatrix}
        \Sigma & 0 \\
        0 & -\Sigma
    \end{bmatrix}
    \frac{1}{\sqrt{2}}
    \begin{bmatrix}
        U & U \\
        V & -V
    \end{bmatrix}^\top := Q D Q^{\top}.
\end{align*}
We define, in a similar way, $\widetilde{S}$ using $\widetilde{M}$, and let $\widehat{Q} \in \RR^{(n+m)\times 2r}$ be the matrix of eigenvectors of the best $2r$-rank approximation of $\widetilde{S}$. Note that: 
\begin{align}
    \Vert Q - \widehat{Q} (\widehat{Q}^\top Q)\Vert_{2 \to \infty} = \max \left\lbrace \Vert U - \widehat{U} (\widehat{U}^\top U)\Vert_{2 \to \infty},   \Vert V - \widehat{V} (\widehat{V}^\top V)\Vert_{2 \to \infty} \right\rbrace. 
\end{align}
    To keep the notation simple, we will define $W_{\widehat{Q}} = \widehat{Q}^\top Q$. Further note that 
    \begin{align}
    \label{eq:properties_symm_matrix}
        \Vert \widetilde{S} - S \Vert = \Vert \widetilde{M} - M \Vert, \qquad \sigma_{1}(S) = \sigma_1(M), \qquad  \text{and} \qquad \sigma_{2r}(S) = \sigma_r(M). 
    \end{align}
    
    We start the analysis under the model (\ref{eq:reward-poisson-matrix-plus-noise-model}) and assume that $\widetilde{M}$ has independent entries with compound Poisson distributions. We will eventually invoke the Poisson approximation argument via Lemma \ref{lem:poisson-approx-rewards} to deduce the final result. 

    {\bf{Step 2: Error decomposition.}} We apply the decomposition in Lemma \ref{lem:decomp-2-infinity} to obtain: 
     \begin{align*}
    \| Q - \widehat{Q} W_{\widehat{Q}} \|_{2\to\infty} 
    \leq
    \frac{1}{\sigma_{2r}(S)} \bigg( \frac{4\| \widetilde{S} Q  \|_{2\to\infty} \| E\|}{\sigma_{2r}(S)} + \| E Q \|_{2\to\infty} + 2\| \widetilde{S}( Q - \widehat{Q} W_{\widehat{Q}})\|_{2\to\infty} \bigg), 
\end{align*}
    where we set $E = \widetilde{S} - S$. We observe that when $\Vert E \Vert \le \sigma_{2r}(S)/2$, then 
         \begin{align}
    \| Q - \widehat{Q} W_{\widehat{Q}} \|_{2\to\infty} 
    \leq
    \frac{1}{\sigma_{2r}(S)} \bigg( \frac{4\| S Q  \|_{2\to\infty} \| E\|}{\sigma_{2r}(S)} + 3\| E Q \|_{2\to\infty} + 2\| \widetilde{S}( Q - \widehat{Q} W_{\widehat{Q}})\|_{2\to\infty} \bigg).
    \label{eq:U_UWU_lemma416_poisson_rewards}
\end{align}
Furthermore, we also have 
\begin{align*}
    \Vert \widetilde{S} (Q - \widehat{Q} W_{ \widehat{Q} })\Vert_{2\to \infty} & \le \Vert E (Q - \widehat{Q} W_{ \widehat{Q} })\Vert_{2\to \infty} + \Vert S (Q - \widehat{Q} W_{ \widehat{Q} })\Vert_{2\to \infty} \\ 
    & \le \Vert E (Q - \widehat{Q} W_{ \widehat{Q} })\Vert_{2\to \infty} + \Vert S Q \Vert_{2\to \infty} \Vert \sin(Q, \widehat{Q}) \Vert^2 \\
    & \le \Vert E (Q - \widehat{Q} W_{ \widehat{Q} })\Vert_{2\to \infty} + \frac{\Vert S Q \Vert_{2\to \infty} \Vert E \Vert^2 }{ \sigma_{2r}(S)^2} \\
    & \le \Vert E (Q - \widehat{Q} W_{ \widehat{Q} })\Vert_{2\to \infty} + \frac{\Vert S Q \Vert_{2\to \infty} \Vert E \Vert }{2  \sigma_{2r}(S)},
\end{align*}
where the first inequality follows from the triangular inequality, the second inequality follows by the relation between the two-to-infinity norm and the sin theorem (see e.g., \cite{cape2019two}). The third inequality follows from Davis-Kahan's theorem. The fourth inequality follows under the condition $\Vert E \Vert \le \sigma_{2r}(S)/2$. We finally obtain 
\begin{align} \label{eq:decomp-error}
    \| Q - \widehat{Q} W_{\widehat{Q}} \|_{2\to\infty} 
    \leq \frac{1}{\sigma_{2r}(S)} \bigg( \frac{5\| S Q  \|_{2\to\infty} \| E\|}{\sigma_{2r}(S)} + 3\| E Q \|_{2\to\infty} + 2\| E ( Q - \widehat{Q} W_{\widehat{Q}})\|_{2\to\infty} \bigg).
\end{align}
Note that in the above inequality, we can control  $\Vert  E \Vert $ using Proposition \ref{prop:reward-concentration-1} and $\Vert E Q\Vert_{2\to \infty}$ using Proposition \ref{prop:reward-concentration-2}. However, the term $\| E ( Q - \widehat{Q} W_{\widehat{Q}})\|_{2\to\infty}$ is not easy to control because $E$ and $( Q - \widehat{Q} W_{\widehat{Q}})$ are dependent on each other in a non-trivial way. To control this term, we use the leave-one-out analysis.

\textbf{Step 3: Leave-one-out analysis.}  We  define a matrix $\widetilde{S}^{(\ell)} \in\RR^{(n+m)\times (n+m)}$ as follows:
\begin{align*}
    \widetilde{S}^{(\ell)}_{i,j} = \begin{cases}
        \widetilde{S}_{i,j},\quad &\text{if } i\neq \ell \text{ or } j \neq \ell \\
        S_{i,j}, \quad &\text{otherwise}
    \end{cases}
\end{align*}
Then define $\widehat{Q}^{(\ell)} \in \RR^{n\times 2r}$ as a matrix of eigenvectors corresponding to the $2r$ greatest (in absolute value) eigenvalues of matrix $\widetilde{S}^{(\ell)}$. Define $W_{\widetilde{U}^{(\ell)}}$ accordingly. We have 
 \begin{align*}
    \Vert E (Q - \widehat{Q}W_{\widehat{Q}})\Vert_{2 \to \infty} \le \max_{\ell \in [n+m]} \Vert  E_{\ell,:} ( Q - \widehat{Q}^{(\ell)} W_{\widehat{Q}^{(\ell)}} ) \Vert_2 + \Vert E \Vert_2 \Vert \widehat{Q} W_{\widehat{Q}}  - \widehat{Q}^{(\ell)} W_{\widehat{Q}^{(\ell)}}\Vert_F.    
\end{align*}
We have by Proposition \ref{prop:reward-concentration-1} that 
\begin{align*}
    \PP\left( \Vert E \Vert \lesssim \Vert M \Vert_\infty \cG\right) \ge 1 - \delta
\end{align*}
provided that 
\begin{align*}
    \textbf{(C1)}\qquad \qquad  T \ge c_1 \frac{mn}{m+n} \log^3\left( \frac{e(m+n)}{\delta}  \right) 
\end{align*}
and where we define 
\begin{align*}
    \cG = \sqrt{\frac{mn}{T}} \left(  \sqrt{(m+n) \log\left( \frac{e(m+n)}{\delta}\right)} + \log^{3/2}\left( \frac{e(m+n)}{\delta}\right)\right).
\end{align*}
Let us now introduce the event $\cE_1$ as follows
\begin{align*}
    \cE_1 = \left\lbrace \Vert E \Vert \le \Vert M \Vert_\infty \cG \right\rbrace. 
\end{align*}
Note that if the following condition holds 
\begin{align*}
        \textbf{(C2)} \qquad \qquad T \ge c_2 (\mu \kappa r)^2 \left((m+n)\log\left( \frac{e(m+n)}{\delta}\right) + \log^3\left( \frac{e(m+n)}{\delta}\right) \right)
\end{align*}
for $c_2$ large enough then $16 \Vert E \Vert \le \sigma_r(M)$. Hence, under the event $\cE_1$, using Lemma \ref{lem:aux-derivations},  we have
\begin{align*}
    \Vert \widehat{Q} W_{\widehat{Q}}  - \widehat{Q}^{(\ell)} W_{\widehat{Q}^{(\ell)}}\Vert_F \le \frac{16 \Vert E_{\ell,:} \widehat{Q}^{(\ell)} W_{\widehat{Q}^{(\ell)}}  \Vert_2 + 16 \Vert E \Vert \Vert \widehat{Q} W_{\widehat{Q}}\Vert_{2 \to \infty} }{\sigma_{2r}(M)},
\end{align*}

which further gives by triangular inequality 
\begin{align*}
    \Vert \widehat{Q} W_{\widehat{Q}}  - \widehat{Q}^{(\ell)} W_{\widehat{Q}^{(\ell)}}\Vert_F & \le \frac{16\Vert E_{\ell, :} ( Q - \widehat{Q}^{(\ell)} W_{\widehat{Q}^{(\ell)}} ) \Vert_2 }{\sigma_{r}(M)}  \\ 
    & \qquad + \frac{16\left( \Vert E_{\ell,:} Q \Vert_2 + \Vert E \Vert \Vert Q - \widehat{Q} W_{\widehat{Q}}\Vert_{2 \to \infty}  + \Vert E \Vert \Vert Q \Vert_{2 \to \infty} \right)  }{\sigma_r(M)}.
\end{align*}
Now, by Proposition \ref{prop:reward-concentration-2}, 
\begin{align*}
    \PP\left(\Vert E_{\ell,:}( Q - Q^{(\ell)} W_{\widehat{Q}^{(\ell)}} ) \Vert_2 \lesssim \Vert M \Vert_\infty \Vert Q - Q^{(\ell)} W_{\widehat{Q}^{(\ell)}} \Vert_{2 \to \infty} \cG \right) \ge 1-\delta
\end{align*}
as long as the same condition \textbf{(C1)} holds with $c_1$ large enough. So let us introduce the event 
\begin{align*}
    \cE_2 = \left\lbrace  \Vert E_{\ell,:}( Q - Q^{(\ell)} W_{\widehat{Q}^{(\ell)}} ) \Vert_2 \lesssim \Vert M \Vert_\infty \Vert Q - Q^{(\ell)} W_{\widehat{Q}^{(\ell)}} \Vert_{2 \to \infty} \cG  \right\rbrace. 
\end{align*}
We further upper bound under the event $\cE_1 \cap \cE_2$, 
\begin{align*}
    \Vert E_{\ell,:}( Q - \widehat{Q}^{(\ell)} W_{\widehat{Q}^{(\ell)}} ) \Vert_2 \lesssim \Vert M \Vert_\infty \left(\Vert Q - \widehat{Q} W_{\widehat{Q}} \Vert_{2 \to \infty}   + \Vert  \widehat{Q} W_{\widehat{Q}} - \widehat{Q}^{(\ell)} W_{\widehat{Q}^{(\ell)}}\Vert_F \right) \cG.   
\end{align*}
Note that, under the condition \textbf{(C2)} with $c_2$ large enough, we can also obtain 
\begin{align*}
    \frac{16\Vert M \Vert_\infty}{\sigma_r(M)} \cG  \le \frac{1}{2}
\end{align*}
which entails that 
\begin{align*}
    \Vert \widehat{Q} W_{\widehat{Q}}  - \widehat{Q}^{(\ell)} W_{\widehat{Q}^{(\ell)}}\Vert_F & \le \frac{32 \Vert M \Vert_\infty \Vert Q - \widehat{Q} W_{\widehat{Q}}\Vert_{2 \to \infty} }{\sigma_r(M)}  \cG \\
    &  + \frac{32( \Vert E_{\ell,:} Q \Vert + \Vert E \Vert \Vert Q - \widehat{Q} W_{\widehat{Q}}\Vert_{2 \to \infty}  + \Vert E \Vert \Vert Q \Vert_{2 \to \infty} )  }{\sigma_r(M)}.
\end{align*}
To simplify the notation, let us define the three errors as  
\begin{align*}
    x & = \Vert Q - \widehat{Q} W_{\widehat{Q}} \Vert_{2 \to \infty}, \\
    y & = \Vert E Q \Vert_{2 \to \infty} \ge \Vert E_{\ell,:} Q \Vert_2,\\
    z & =  \Vert E \Vert \Vert Q \Vert_{2 \to \infty}.
\end{align*}
We have 
\begin{align*}
    \Vert \widehat{Q} W_{\widehat{Q}}  - \widehat{Q}^{(\ell)} W_{\widehat{Q}^{(\ell)}}\Vert_F \lesssim \left(  \frac{\Vert M \Vert_\infty \cG}{\sigma_r(M)}  + \frac{\Vert E \Vert}{\sigma_r(M)} \right) x + \frac{1}{\sigma_r(M)} (y+z).
\end{align*}
By plugging the above in the previous inequality, we get 
\begin{align*}
    \Vert E_{\ell,:}( Q - \widehat{Q}^{(\ell)} W_{\widehat{Q}^{(\ell)}} ) \Vert_2  \lesssim \Vert M \Vert_\infty \cG  \left(  \left(1 + \frac{\Vert M\Vert_\infty \cG}{\sigma_r(M)} + \frac{\Vert E \Vert}{\sigma_r(M)} \right)x + \frac{1}{\sigma_r(M)} (y+z)    \right) 
\end{align*}
which entails finally 
\begin{align}\label{eq:loo-final}
    \Vert E(Q - \widehat{Q} W_{\widehat{Q}})  \Vert_{2 \to \infty} & \lesssim \left( \frac{\Vert E \Vert}{\sigma_r(M)} + \frac{\cG \Vert M \Vert_\infty }{\sigma_1(M)}\right) (y + z)  \nonumber \\
    & \qquad + (\Vert E \Vert + \cG \Vert M \Vert_\infty) \left( 1 + \frac{\cG \Vert M \Vert_\infty}{\sigma_r(M)}+ \frac{\Vert \cE\Vert}{\sigma_r(M)}\right) x.
\end{align}

\textbf{Step 4: Putting everything together.} Combining the inequalities  \eqref{eq:decomp-error} and \eqref{eq:loo-final} gives 
\begin{align*}
    x & \le C_1 \left(  \frac{\Vert E \Vert}{\sigma_r(M)} + \frac{\Vert M \Vert_\infty  \cG}{\sigma_r(M)} \right) \left( 1 + \frac{\Vert M \Vert_\infty \cG}{\sigma_r(M)} + \frac{\Vert E \Vert}{\sigma_r(M)} \right)x  \\
    & \quad + \frac{C_2}{\sigma_r(M)} \left( 1 + \frac{\Vert E \Vert}{\sigma_r(M)} + \frac{\cG \Vert M \Vert_\infty}{\sigma_r(M)}  \right) y \\
    & \qquad +\frac{C_3}{\sigma_r(M)} \left(    \frac{\Vert M\Vert}{\sigma_r(M)} + \frac{\Vert E \Vert}{\sigma_r(M)} + \frac{\cG \Vert M \Vert_\infty}{\sigma_r(M)} \right) z.
\end{align*}
Under the events $\cE_1$ and $\cE_2$ and provided that the conditions \textbf{(C1)} and \textbf{(C2)} hold, for $c_1$ and $c_2$ are large enough, we have 
\begin{align*}
    C_1 \left(  \frac{\Vert E \Vert}{\sigma_r(M)} + \frac{\Vert M \Vert_\infty  \cG}{\sigma_r(M)} \right) \left( 1 + \frac{\Vert M \Vert_\infty \cG}{\sigma_r(M)} + \frac{\Vert E \Vert}{\sigma_r(M)} \right) & \le \frac{1}{2},  \\
     \left( 1 + \frac{\Vert E \Vert}{\sigma_r(M)} + \frac{\cG \Vert M \Vert}{\sigma_r(M)}  \right) & \le 3, \\
     \left(    \frac{\Vert M\Vert}{\sigma_r(M)} + \frac{\Vert E \Vert}{\sigma_r(M)} + \frac{\cG \Vert M \Vert_\infty}{\sigma_r(M)} \right) & \le \left(\frac{\Vert M \Vert}{\sigma_r(M)} + 2\right). 
\end{align*}
Thus, we obtain
\begin{align*}
    x & \le \frac{1}{\sigma_r(M)}\left( y + \frac{\Vert M \Vert}{\sigma_r(M)} z\right).
\end{align*}
We note that, under a similar conditions as before , we also have by Proposition \ref{prop:reward-concentration-1} and Proposition \ref{prop:reward-concentration-2} that 
\begin{align*}
    y & \lesssim \Vert M \Vert_\infty \Vert Q \Vert_{2\to \infty} \cG \\
    z & \lesssim  \Vert M \Vert_\infty \Vert Q \Vert_{2 \to \infty}  \cG  
\end{align*}
with probability at least $1-\delta$. Thus, we conclude after further simplifications that for some $C>0$ large enough, we have 
\begin{align*}
    \PP\left( \Vert Q - \widehat{Q} W_{\widehat{Q}} \Vert_{2 \to \infty} \le C \frac{ \Vert M \Vert \Vert M \Vert_\infty}{\sigma_r(M)^2}  \Vert Q \Vert_{2 \to \infty} \cG \right) \ge 1 - \delta
\end{align*}
provided 
\begin{align*}
    T \ge  c (\mu^4 \kappa^2 r^2 + 1) (m+n) \log^{3}\left( \frac{e(m+n)}{\delta}
    \right),
\end{align*}
with 
\begin{align*}
    \cG(n,m,T,\delta) = \sqrt{\frac{nm}{T}} \left( \sqrt{(n+m) \log\left( \frac{e(n+m)}{\delta}\right)  }  + \log^{3/2}\left( \frac{e(n+m)}{\delta}\right) \right).
\end{align*}

\textbf{Step 5: Poisson approximation.} To conclude, we now invoke Lemma \ref{lem:poisson-approx-rewards} which entails that under the true model \eqref{eq:bandit-matrix-noise-model}, we have 
\begin{align*}
    \PP\left( \Vert Q - \widehat{Q} W_{\widehat{Q}} \Vert_{2 \to \infty} > C \frac{ \Vert M \Vert \Vert M \Vert_\infty}{\sigma_r(M)^2}  \Vert Q \Vert_{2 \to \infty} \cG(n,m,T,\delta) \right) \le  e\sqrt{T} \delta
\end{align*}
provided $ T \ge  c (\mu^4 \kappa^2 r^2 + 1) (m+n) \log^{3}\left(e(m+n) / \delta
    \right)$. By re-parametrizing  with $\delta' = e \sqrt{T} \delta$, we obtain 
    \begin{align*}
        \PP\left( \Vert Q - \widehat{Q} W_{\widehat{Q}} \Vert_{2 \to \infty} > C \frac{ \Vert M \Vert \Vert M \Vert_\infty}{\sigma_r(M)^2}  \Vert Q \Vert_{2 \to \infty} \cG(n,m,T,\delta'/e\sqrt{T}) \right) \le  \delta',
    \end{align*}
    again provided that $ T \ge  c (\mu^4 \kappa^2 r^2 + 1) (m+n) \log^{3}\left(e^2(m+n)\sqrt{T} / \delta'
    \right)$. Recalling that $\Vert Q \Vert_{2\to \infty} = \max(\Vert V\Vert_{2\to \infty}, \Vert U \Vert_{2\to\infty})$, we immediately obtain the final result.
\end{proof}

\begin{lem}\label{lem:aux-derivations}
Under the notation used in the proof of Lemma \ref{lem:ssr-rewards}, provided the condition $\Vert E \Vert \le \sigma_{2r}(S)/16$, the following inequality holds:
    \begin{align*}
         \Vert \widehat{Q} W_{\widehat{Q}} - \widehat{Q}^{(\ell)}  W_{\widehat{Q}^{(\ell)}}\Vert_F \le \frac{16 \Vert E_{\ell,:} \widehat{Q}^{(\ell)} W_{  \widehat{Q}^{(\ell)}}\Vert_{2} + 16 \Vert E  \Vert \Vert \widehat{Q} W_{\widehat{Q}} \Vert_{2 \to \infty}  }{\sigma_{2r}(S)}
    \end{align*}
\end{lem}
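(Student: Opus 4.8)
The plan is to recognize the left-hand side as a difference of two spectral projections applied to the same reference $Q$, and then to run a leave-one-out eigenspace perturbation argument (in the spirit of the proof of Theorem 4.2 in \cite{chen2021spectral}), tracking constants so that only the hypothesis $\Vert E\Vert\le\sigma_{2r}(S)/16$ is used. Write $P=\widehat{Q}\widehat{Q}^\top$ and $P^{(\ell)}=\widehat{Q}^{(\ell)}(\widehat{Q}^{(\ell)})^\top$ for the orthogonal projections onto the top-$2r$ eigenspaces of $\widetilde{S}$ and $\widetilde{S}^{(\ell)}$. Since $W_{\widehat{Q}}=\widehat{Q}^\top Q$ and $W_{\widehat{Q}^{(\ell)}}=(\widehat{Q}^{(\ell)})^\top Q$, we have $\widehat{Q}W_{\widehat{Q}}=PQ$ and $\widehat{Q}^{(\ell)}W_{\widehat{Q}^{(\ell)}}=P^{(\ell)}Q$, so the quantity to bound is $D:=\Vert(P-P^{(\ell)})Q\Vert_F$. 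Using the identity $P-P^{(\ell)}=P(I-P^{(\ell)})-(I-P)P^{(\ell)}$, I would split by the triangle inequality into $\Vert(I-P)P^{(\ell)}Q\Vert_F$ and $\Vert P(I-P^{(\ell)})Q\Vert_F$. Throughout, Weyl's inequality together with $\sigma_{2r+1}(S)=0$ keeps the top-$2r$ eigenvalues of both $\widetilde{S}$ and $\widetilde{S}^{(\ell)}$ at magnitude $\ge\tfrac{13}{16}\sigma_{2r}(S)$ and their trailing eigenvalues at magnitude $\le 3\Vert E\Vert\le\tfrac{3}{16}\sigma_{2r}(S)$ (recall $E^{(\ell)}:=\widetilde{S}-\widetilde{S}^{(\ell)}$ and $\widetilde{S}^{(\ell)}-S=E-E^{(\ell)}$ both have norm $\le 3\Vert E\Vert$), so every eigenvalue separation appearing below is $\ge\tfrac34\sigma_{2r}(S)$.

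The two pieces are each controlled by a Sylvester equation. For the first, set $Z=(I-P)\widehat{Q}^{(\ell)}$; since $\widetilde{S}\,\widehat{Q}^{(\ell)}=(\widetilde{S}^{(\ell)}+E^{(\ell)})\widehat{Q}^{(\ell)}$ and $\widetilde{S}$ commutes with $P$, one obtains $\widetilde{S}Z-Z\widehat{\Lambda}^{(\ell)}=(I-P)E^{(\ell)}\widehat{Q}^{(\ell)}$, whence $\Vert Z\Vert_F\le\tfrac{4}{3\sigma_{2r}(S)}\Vert E^{(\ell)}\widehat{Q}^{(\ell)}\Vert_F$ and therefore $\Vert(I-P)P^{(\ell)}Q\Vert_F\le\Vert Z\Vert_F$. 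For the second, the symmetric computation with $Z'=(I-P^{(\ell)})\widehat{Q}$ gives $\Vert Z'\Vert_F\le\tfrac{4}{3\sigma_{2r}(S)}\Vert E^{(\ell)}\widehat{Q}\Vert_F$, and since $P(I-P^{(\ell)})=\widehat{Q}(Z')^\top$ this yields $\Vert P(I-P^{(\ell)})Q\Vert_F\le\Vert Z'\Vert_F$. The crucial structural input is that $E^{(\ell)}$ is supported on row and column $\ell$: as $E$ is symmetric with vanishing diagonal (it is the dilation of $\widetilde{M}-M$), $E^{(\ell)}=e_\ell E_{\ell,:}+(E_{\ell,:})^\top e_\ell^\top$, so for any $(n+m)\times 2r$ matrix $X$ one has $\Vert E^{(\ell)}X\Vert_F\le\Vert E_{\ell,:}X\Vert_2+\Vert E\Vert\,\Vert X\Vert_{2\to\infty}$. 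Applying this with $X=\widehat{Q}^{(\ell)}$ and with $X=\widehat{Q}$ produces the raw row terms $\Vert E_{\ell,:}\widehat{Q}^{(\ell)}\Vert_2,\ \Vert E_{\ell,:}\widehat{Q}\Vert_2$ and the row-norm terms $\Vert E\Vert\,\Vert\widehat{Q}^{(\ell)}\Vert_{2\to\infty},\ \Vert E\Vert\,\Vert\widehat{Q}\Vert_{2\to\infty}$.

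It remains to convert these raw quantities into the stated ones. Davis--Kahan under the gap condition gives $\sigma_{\min}(W_{\widehat{Q}})\wedge\sigma_{\min}(W_{\widehat{Q}^{(\ell)}})\ge 1/2$ (the principal angles between $Q$ and each estimate have sine $\le 1/2$), so right-multiplication by $W_{\widehat{Q}}$ or $W_{\widehat{Q}^{(\ell)}}$ shrinks $\Vert\cdot\Vert_2$ and $\Vert\cdot\Vert_{2\to\infty}$ by at most a factor $2$; this lets me replace $\widehat{Q}$ (resp. $\widehat{Q}^{(\ell)}$) by the aligned $\widehat{Q}W_{\widehat{Q}}$ (resp. $\widehat{Q}^{(\ell)}W_{\widehat{Q}^{(\ell)}}$) at the cost of a factor $2$. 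The only genuinely coupled quantities are $\Vert E_{\ell,:}\widehat{Q}\Vert_2$ and $\Vert\widehat{Q}\Vert_{2\to\infty}$, coming from the $P(I-P^{(\ell)})Q$ piece and involving the \emph{full} (row-$\ell$-dependent) eigenvectors; here I would insert $\pm\,\widehat{Q}^{(\ell)}W_{\widehat{Q}^{(\ell)}}$ and bound the discrepancy by $\Vert E\Vert\,D$ and $D$, respectively. Collecting everything gives a self-referential inequality $D\le\frac{c}{\sigma_{2r}(S)}\big(\Vert E_{\ell,:}\widehat{Q}^{(\ell)}W_{\widehat{Q}^{(\ell)}}\Vert_2+\Vert E\Vert\,\Vert\widehat{Q}W_{\widehat{Q}}\Vert_{2\to\infty}+\Vert E\Vert\,D\big)$, and using $\Vert E\Vert\le\sigma_{2r}(S)/16$ to absorb the $\Vert E\Vert\,D$ term on the right yields the claim with constant $16$. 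I expect the main obstacle to be exactly the bookkeeping of the non-leave-one-out piece $P(I-P^{(\ell)})Q$: it forces the full eigenvector matrix $\widehat{Q}$ into the estimates, and turning it back into the independent leave-one-out object $\widehat{Q}^{(\ell)}W_{\widehat{Q}^{(\ell)}}$ (needed so that Proposition~\ref{prop:reward-concentration-2} applies with a nonrandom multiplier) is what generates the self-reference and pins down the constant.
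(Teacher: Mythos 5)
Your argument is correct, but it reaches the bound by a genuinely different route from the paper's. Both proofs start from the same observation that $\widehat{Q}W_{\widehat{Q}}=\widehat{Q}\widehat{Q}^\top Q$ and $\widehat{Q}^{(\ell)}W_{\widehat{Q}^{(\ell)}}=\widehat{Q}^{(\ell)}(\widehat{Q}^{(\ell)})^\top Q$, and both close with the same absorption of a self-referential term $\Vert E\Vert\,\Vert\widehat{Q}W_{\widehat{Q}}-\widehat{Q}^{(\ell)}W_{\widehat{Q}^{(\ell)}}\Vert_F$ using $\Vert E\Vert\le\sigma_{2r}(S)/16$; the difference is the middle step. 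The paper bounds the projection difference by a \emph{single} application of the Frobenius-norm Davis--Kahan theorem, viewing $\widetilde{S}$ as a perturbation of $\widetilde{S}^{(\ell)}$ by $\widetilde{S}-\widetilde{S}^{(\ell)}$ and lower-bounding the eigengap $\vert\sigma_{2r}(\widetilde{S}^{(\ell)})-\sigma_{2r+1}(\widetilde{S}^{(\ell)})\vert\ge\sigma_{2r}(S)/2$ via Weyl; this produces only the leave-one-out quantities $\Vert E_{\ell,:}\widehat{Q}^{(\ell)}\Vert_2$ and $\Vert E\Vert\,\Vert\widehat{Q}^{(\ell)}\Vert_{2\to\infty}$, so a single alignment/triangle-inequality step creates the self-reference. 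You instead split $P-P^{(\ell)}=P(I-P^{(\ell)})-(I-P)P^{(\ell)}$ and control each piece by solving a Sylvester equation, in effect re-deriving the needed $\sin\Theta$-type bound from first principles; your explicit spectral separations (all $\ge\tfrac{3}{4}\sigma_{2r}(S)$) do check out under the stated hypothesis, and the final constant closes comfortably within $16$. What your route buys is self-containedness and a symmetric treatment of the two eigenspaces; what it costs is a second perturbation term $\Vert E^{(\ell)}\widehat{Q}\Vert_F$ involving the full, row-$\ell$-dependent eigenvectors, which forces the extra insertion of $\pm\widehat{Q}^{(\ell)}W_{\widehat{Q}^{(\ell)}}$ and a second $\Vert E\Vert D$ self-referential term that the paper's one-sided Davis--Kahan step avoids entirely. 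Two minor cautions: your $E^{(\ell)}:=\widetilde{S}-\widetilde{S}^{(\ell)}$ clashes with the paper's $E^{(\ell)}:=\widetilde{S}^{(\ell)}-S$ (harmless, since you define it); and your identity $\widetilde{S}-\widetilde{S}^{(\ell)}=e_\ell E_{\ell,:}+(E_{\ell,:})^\top e_\ell^\top$ uses that the dilation $E$ is symmetric with zero diagonal, which is true here but is exactly the point where the paper keeps the $E_{\ell,\ell}$ correction explicit, so it should be flagged if the lemma were reused outside the dilation setting.
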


\begin{proof}[Proof of Lemma \ref{lem:aux-derivations}]
We have 
\begin{align*}
    \Vert \widehat{Q} W_{\widehat{Q}} - \widehat{Q}^{(\ell)}  W_{\widehat{Q}^{(\ell)}}\Vert_F &  \le \Vert \widehat{Q} \widehat{Q}^\top  - \widehat{Q}^{(\ell)} (\widehat{Q}^{(\ell)})^\top  \Vert_F  \Vert Q \Vert \le \frac{2  \Vert (\widetilde{S} - \widetilde{S}^{(\ell)} ) \widehat{Q}^{(\ell)}  \Vert_F  }{\vert \sigma_{2r}(\widetilde{S}^{(\ell)}) - \sigma_{2r+1}(\widetilde{S}^{(\ell)})  \vert }
\end{align*}
where the first inequality follows the elementary fact that $\Vert A B \Vert_F \le \Vert A \Vert_F \Vert B \Vert $, and the second inequality follows by Davis-Kahan. Now, by Weyl's inequality, we have for all $k \in [n+m]$, $
    \vert \sigma_{k}(\widetilde{S}^{(\ell)}) - \sigma_{k} (S) \vert \le \Vert E^{(\ell)} \Vert \le \Vert E \Vert 
$, where the error matrix $E^{(\ell)} = \widetilde{S}^{\ell} - S$, and more precisely is defined as follows:
\begin{align*}
    E^{(\ell)}_{i,j} = \begin{cases}
        E_{i,j} & \text{if } i\neq \ell \text{ or }j\neq \ell, \\
        0 & \text{otherwise.}
    \end{cases}
\end{align*}
    The crude inequality $\Vert E^{(\ell)} \Vert \le \Vert E \Vert$ follows from the fact that $\Vert E^{(\ell)} \Vert$ is equal to the operator norm of a submatrix of $E$ which will always be smaller than $\Vert E \Vert$. Therefore, under the condition that $\Vert E \Vert \le \sigma_{2r}(S)/4$, we have $\vert \sigma_{2r} (\widetilde{S}^{(\ell)}) - \sigma_{2r+1}(\widetilde{S}^{(\ell)}) \vert \ge \sigma_{2r}(S)/2$. In summary, we obtain that 
    \begin{align*}
        \Vert \widehat{Q} W_{\widehat{Q}} - \widehat{Q}^{(\ell)}  W_{\widehat{Q}^{(\ell)}}\Vert_F & \le \frac{4 \Vert (\widetilde{S} - \widetilde{S}^{(\ell)}) \widehat{Q}^{(\ell)}\Vert_F }{\sigma_{2r}(S)}. 
    \end{align*}
    Now, we further have by triangular inequality and by definition of $\widetilde{S}^{(\ell)}$:
    \begin{align*}
         \Vert (\widetilde{S} - \widetilde{S}^{(\ell)}) \widehat{Q}^{(\ell)}\Vert_F & = \Vert (e_\ell E_{\ell, :} + (E_{:, \ell} - E_{\ell, \ell} e_\ell)e_{\ell}^\top ) \widehat{Q}^{(\ell)} \Vert_F \\ 
         & \le \Vert E_{\ell,:} \widehat{Q}^{(\ell)} \Vert_2 + \Vert E_{:,\ell} - E_{\ell, \ell }e_\ell\Vert_2 \Vert \widehat{Q}^{(\ell)}\Vert_{2 \to \infty}  \\
         & \le \Vert E_{\ell,:} \widehat{Q}^{(\ell)} \Vert_2 + \Vert E \Vert_2 \Vert \widehat{Q}^{(\ell)} \Vert_{2 \to \infty} \\
         & \le \Vert E_{\ell,:} \widehat{Q}^{(\ell)} \Vert_2 + 2 \Vert E \Vert_2 \Vert \widehat{Q}^{(\ell)} (\widehat{Q}^{(\ell)})^\top Q \Vert_{2 \to \infty}
    \end{align*}
    where the last inequality follows under the condition that $\Vert E \Vert  \le 2 \sigma_{2r}(S)$. Indeed, we have under such condition that $\Vert \widehat{Q}^{(\ell)} \Vert_{2 \to \infty}  = \Vert \widehat{Q}^{(\ell)} (\widehat{Q}^{(\ell)})^\top Q \Vert_{2 \to \infty}  + \Vert \widehat{Q}^{(\ell)} (\sgn((\widehat{Q}^{(\ell)})^\top Q^\top) - (\widehat{Q}^{(\ell)})^\top  Q) \Vert_{2 \to \infty}$, and by Davis-Kahan's inequality $\Vert \sgn((\widehat{Q}^{(\ell)})^\top Q^\top) - (\widehat{Q}^{(\ell)})^\top  Q \Vert \le \frac{2\Vert E^{(\ell)}\Vert^2 }{(\sigma_{2r}(S))^2} \le\frac{2\Vert E\Vert^2 }{(\sigma_{2r}(S))^2} \le \frac{1}{2}$. 
    Similarly, we also have $\Vert E_{\ell,:} \widehat{Q}^{(\ell)} \Vert_2 \le 2 \Vert E_{\ell,:} \widehat{Q}^{(\ell)} (\widehat{Q}^{(\ell)})^\top Q \Vert_2$.
    
    Hence, we obtain: 
    \begin{align*}
        \Vert (\widetilde{S} - \widetilde{S}^{(\ell)}) \widehat{Q}^{(\ell)}\Vert_F \le 2 \Vert E_{\ell,:} \widehat{Q}^{(\ell)} W_{\widehat{Q}^{(\ell)}}\Vert_{2} + 2 \Vert E  \Vert (\Vert \widehat{Q} W_{\widehat{Q}} \Vert_{2 \to \infty} +  \Vert \widehat{Q} W_{\widehat{Q}} - \widehat{Q}^{(\ell)}W_{\widehat{Q}^{(\ell)}}   \Vert_{2 \to \infty} )  
    \end{align*}
    Which entails under the condition that $\Vert E \Vert \le  \sigma_{2r}(S)/16 $  that 
    \begin{align*}
         \Vert \widehat{Q} W_{\widehat{Q}} - \widehat{Q}^{(\ell)}  W_{\widehat{Q}^{(\ell)}}\Vert_F \le \frac{8 \Vert E_{\ell,:} \widehat{Q}^{(\ell)} W_{\widehat{Q}^{(\ell)}} \Vert_{2 } + 8 \Vert E  \Vert \Vert \widehat{Q} W_{\widehat{Q}} \Vert_{2 \to \infty}  }{\sigma_{2r}(S)} +\frac{\Vert \widehat{Q} W_{\widehat{Q}} - \widehat{Q}^{(\ell)}W_{\widehat{Q}^{(\ell)}}   \Vert_{2 \to \infty}}{2}  
    \end{align*}
    After rearranging, we obtain 
    \begin{align*}
         \Vert \widehat{Q} W_{\widehat{Q}} - \widehat{Q}^{(\ell)}  W_{\widehat{Q}^{(\ell)}}\Vert_F \le \frac{16 \Vert E_{\ell,:} \widehat{Q}^{(\ell)} W_{\widehat{Q}^{(\ell)}} \Vert_{2 } + 16 \Vert E  \Vert \Vert \widehat{Q} W_{\widehat{Q}} \Vert_{2 \to \infty}  }{\sigma_{2r}(S)}
    \end{align*}
\end{proof}

\subsection{Subspace recovery for transition matrices}

\begin{lem}
Let $Y\in\RR^{n\times n}$ be a matrix of independent Poisson entries with $Y_{i,j} \sim \frac{1}{T}\mathrm{Poisson}(T M_{i,j})$, and let $\hU,\hV$ be the matrices of left and right singular vectors of best $r$-rank approximation of $Y$. Let $g_{\delta}$ be the function defined in \eqref{eq:g_def}. Conditioned on the events where $\|Y-M\| \leq c_1 \sigma_r(M), g_{\delta}(TM)\log(ne/\delta)\leq c_2 T\sigma_r(M), \allowbreak \sqrt{\|M\|_{\infty}\log (ne/\delta)}\leq c_3 \sqrt{T}\sigma_r(M)$ for some sufficiently small universal constants $c_1,c_2,c_3>0$, we have, with probability at least $1-\delta$,
    \label{lemma:U_to_EU}
    \begin{align*}
        &\max\Big\{  \|U   - \hU (\widehat{U}^\top U) \|_{2\to\infty}, \|V - \hV (\widehat{V}^\top V) \|_{2\to\infty} \Big\}\\
        &\lesssim
        \frac{1}{\sigma_r(M)}\bigg[  \mu\sqrt{\frac{r}{n}}  \bigg(\frac{\sigma_1(M)}{\sigma_r(M)}\|Y-M\| + \frac{1}{T}g_{\delta}(TM) \log\left(\frac{ne}{\delta}\right)\bigg)  + \sqrt{\frac{r\|M\|_{\infty}}{T}\log\left( \frac{ne}{\delta}\right)}\bigg].
    \end{align*}
\end{lem}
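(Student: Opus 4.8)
The plan is to mirror the leave-one-out argument developed for Lemma~\ref{lem:ssr-rewards}, but working directly with the Poisson model \eqref{eq:M_hat_Poisson_model} and exploiting the sharper row-wise concentration of Lemma~\ref{lemma:El_A_conc}, which is precisely what produces the refined $g_\delta$ and $\sqrt{r\|M\|_\infty/T}$ terms in the statement. Since the Poisson approximation has already been handled upstream (Lemmas~\ref{lemma:markov_to_multinomial} and \ref{lemma:multinomial_to_poisson}), no final re-parametrization step is required here; we may reason on the conditioning events throughout.

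First I would apply the dilation trick, forming the symmetric matrices $S=\left[\begin{smallmatrix} 0 & M \\ M^\top & 0\end{smallmatrix}\right]$ and $\widetilde{S}$ built analogously from $Y$, and setting $E=\widetilde{S}-S$. Letting $Q$ (resp. $\widehat{Q}$) collect the eigenvectors of $S$ (resp. of the best rank-$2r$ approximation of $\widetilde{S}$) and writing $W_{\widehat{Q}}=\widehat{Q}^\top Q$, we have as in Lemma~\ref{lem:ssr-rewards} that $\|Q-\widehat{Q}W_{\widehat{Q}}\|_{2\to\infty}=\max\{\|U-\widehat{U}(\widehat{U}^\top U)\|_{2\to\infty},\|V-\widehat{V}(\widehat{V}^\top V)\|_{2\to\infty}\}$, together with $\sigma_{2r}(S)=\sigma_r(M)$, $\sigma_1(S)=\sigma_1(M)$, and $\|E\|=\|Y-M\|$. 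The conditioning assumption $\|Y-M\|\le c_1\sigma_r(M)$ with $c_1$ small gives $\|E\|\le\sigma_{2r}(S)/2$, so the decomposition Lemma~\ref{lem:decomp-2-infinity} applies and yields
$$\|Q-\widehat{Q}W_{\widehat{Q}}\|_{2\to\infty}\le\frac{1}{\sigma_r(M)}\Big(\tfrac{5\|SQ\|_{2\to\infty}\|E\|}{\sigma_r(M)}+3\|EQ\|_{2\to\infty}+2\|E(Q-\widehat{Q}W_{\widehat{Q}})\|_{2\to\infty}\Big).$$
Using $SQ=QD$ I would bound $\|SQ\|_{2\to\infty}\le\|Q\|_{2\to\infty}\sigma_1(M)=\mu\sqrt{r/n}\,\sigma_1(M)$ (recalling $\|Q\|_{2\to\infty}=\max(\|U\|_{2\to\infty},\|V\|_{2\to\infty})=\mu\sqrt{r/n}$ by Lemma~\ref{lem:spikeness}), which produces the $\mu\sqrt{r/n}\,(\sigma_1/\sigma_r)\|Y-M\|$ contribution. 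The term $\|EQ\|_{2\to\infty}=\max_\ell\|E_{\ell,:}Q\|_2$ is deterministic in $Q$, so no decoupling is needed: applying Lemma~\ref{lemma:El_A_conc} with $A=Q$, plugging $\|Q\|_F=\sqrt{2r}$ and $\|Q\|_{2\to\infty}=\mu\sqrt{r/n}$, taking a union bound over the $n$ rows $\ell$, and merging the two regimes $T\|M_{\ell,:}\|_\infty\lessgtr1$ into the single function $g_\delta$ reproduces exactly the $\sqrt{r\|M\|_\infty/T\,\log(ne/\delta)}$ and $\mu\sqrt{r/n}\,\tfrac1T g_\delta(TM)\log(ne/\delta)$ terms.

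The main obstacle is the remaining term $\|E(Q-\widehat{Q}W_{\widehat{Q}})\|_{2\to\infty}$, in which $E$ and $Q-\widehat{Q}W_{\widehat{Q}}$ are statistically dependent. Here I would run the leave-one-out analysis: define $\widetilde{S}^{(\ell)}$ by resetting the $\ell$-th row and column of $\widetilde{S}$ to those of $S$, let $\widehat{Q}^{(\ell)}$ be its rank-$2r$ eigenvector matrix, and exploit that $E_{\ell,:}$ is independent of $\widehat{Q}^{(\ell)}$. Splitting
$$\|E(Q-\widehat{Q}W_{\widehat{Q}})\|_{2\to\infty}\le\max_\ell\|E_{\ell,:}(Q-\widehat{Q}^{(\ell)}W_{\widehat{Q}^{(\ell)}})\|_2+\|E\|\max_\ell\|\widehat{Q}W_{\widehat{Q}}-\widehat{Q}^{(\ell)}W_{\widehat{Q}^{(\ell)}}\|_F,$$
I would bound the first, now decoupled, term by Lemma~\ref{lemma:El_A_conc} applied to the independent matrix $A=Q-\widehat{Q}^{(\ell)}W_{\widehat{Q}^{(\ell)}}$, and the perturbation term via Lemma~\ref{lem:aux-derivations} (Davis--Kahan combined with Weyl, using $\|E^{(\ell)}\|\le\|E\|$). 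This yields a self-referential inequality in $x:=\|Q-\widehat{Q}W_{\widehat{Q}}\|_{2\to\infty}$; the three conditioning assumptions (with $c_1,c_2,c_3$ small) force the coefficient of $x$ below $1/2$, so it can be absorbed, leaving $x$ controlled by the already-estimated terms $\|EQ\|_{2\to\infty}$ and $\|SQ\|_{2\to\infty}\|E\|/\sigma_r(M)$. Substituting and simplifying with Lemma~\ref{lem:spikeness} gives the stated bound.

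The hardest part is the bookkeeping in this self-bounding step: one must verify that, after bounding the dependent quantity $\|E_{\ell,:}(Q-\widehat{Q}^{(\ell)}W_{\widehat{Q}^{(\ell)}})\|_2$ via Lemma~\ref{lemma:El_A_conc}, its re-expansion through $\|\widehat{Q}W_{\widehat{Q}}-\widehat{Q}^{(\ell)}W_{\widehat{Q}^{(\ell)}}\|_F$ generates only terms proportional to $x$ with a coefficient that the conditioning assumptions keep below $1/2$, and that the regime split defining $g_\delta$ is propagated consistently across all rows $\ell$. Unlike the reward case, the row-dependence of $\|M_{\ell,:}\|_\infty$ in Lemma~\ref{lemma:El_A_conc} must be tracked so that the final maximum over $\ell$ collapses cleanly into $\|M\|_\infty$ and $g_\delta(TM)$.
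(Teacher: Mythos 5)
Your proposal is correct and follows essentially the same route as the paper's own proof: symmetric dilation, the deterministic two-to-infinity error decomposition, row-wise concentration via Lemma~\ref{lemma:El_A_conc} for both $\|EQ\|_{2\to\infty}$ and the decoupled leave-one-out term, a Davis--Kahan/Weyl bound on $\|\widehat{Q}(\widehat{Q}^\top Q)-\widehat{Q}^{(\ell)}((\widehat{Q}^{(\ell)})^\top Q)\|_F$, and a final self-bounding absorption under the conditioning events. The only cosmetic differences are that you start from the pre-simplified decomposition \eqref{eq:decomp-error} (with $S$ in place of $\widetilde{S}$) and invoke the in-paper Lemma~\ref{lem:aux-derivations} where the paper's proof instead re-derives the same Frobenius-norm perturbation bound following Chen et al.; both steps are purely deterministic, so the two routes coincide.
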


\begin{proof}
The proof follows similar steps as the proof of Theorem 4.2 in \cite{chen2021spectral}. In order to apply the leave-one-out technique, we first repeat the symmetric dilation trick as in Step 1 of proof of Lemma \ref{lem:ssr-rewards}. We define
\begin{align}
S = \begin{bmatrix}
    0 & M\\
    M^\top & 0
\end{bmatrix}    
\label{eq:symmetrized_dilation_def}
\end{align}
and note that for matrix $M$ with SVD $M=U \Sigma V^\top$, we have:
\begin{align*}
    S = \frac{1}{\sqrt{2}} \begin{bmatrix}
        U & U \\ V & -V
    \end{bmatrix}
    \begin{bmatrix}
        \Sigma & 0\\
        0 & -\Sigma
    \end{bmatrix}
    \frac{1}{\sqrt{2}} \begin{bmatrix}
        U & U \\ V & -V
    \end{bmatrix}^\top := Q D Q^\top.
\end{align*}
We define $\tS$ as the symmetrized version of matrix $Y$, and let $\widehat{Q} \in \RR^{n\times 2r}$ be the matrix of eigenvectors of the best $2r$-rank approximation of $\tS$. Note that:
\begin{align*}
    \|Q - \widehat{Q} (\widehat{Q}^\top Q) \|_{2\to\infty} = \max\left\{ \|U  - \hU (\hU^\top U)\|_{2\to\infty} , \|V - \hV (\hV^\top V) \|_{2\to\infty}\right\}.
\end{align*}
We will also repeatedly use the properties \eqref{eq:properties_symm_matrix}. To keep the notation simple, define $ \WhQ = \widehat{Q}^\top Q$. Thus, proving Lemma \ref{lemma:U_to_EU} is equivalent to showing:
\begin{align*}
    &\|Q - \widehat{Q} \WhQ  \|_{2\to\infty} \\ &\lesssim
   \frac{1}{\sigma_r(M)}\bigg[  \|Q\|_{2\to\infty} (\frac{\sigma_1(M)}{\sigma_r(M)}\|\tS-S\| + \frac{1}{T} g_{\delta}(TM) \log\left(\frac{ne}{\delta}\right) ) 
   + \sqrt{\frac{r\|M\|_{\infty}}{T}\log \left(\frac{ne}{\delta}\right)} \bigg]
\end{align*}
with high probability.
Define $E = \tS - S$. Now, as in Lemma \ref{lem:decomp-2-infinity}, we have:
\begin{align}
\label{eq:U_UWU_lemma416_poisson_transitions}
    \|Q - \widehat{Q} \WhQ \|_{2\to\infty} 
    \leq
    \frac{1}{\sigma_r(M)} \bigg( \frac{4\|\tS Q\|_{2\to\infty} \|E\|}{\sigma_r(M)} + \|E Q\|_{2\to\infty} + 2\|\tS(Q - \widehat{Q} \WhQ)\|_{2\to\infty} \bigg)
\end{align}
under the assumption that $\|E\| \leq c_1 \sigma_r(M)$. Indeed, it is straightforward to show the same bounds as in Lemma 4.14 in \cite{chen2021spectral} - note that the boundedness assumption is not used in these lemmas. We bound the three terms in Equation \eqref{eq:U_UWU_lemma416_poisson_transitions} as follows:

    1. To bound the first term, we use:
    \begin{align}
        \|\tS Q\|_{2\to\infty} \leq \|S Q\|_{2\to\infty} + \|E Q\|_{2\to\infty} \leq \|Q\|_{2\to\infty} \|S\| + \|E Q\|_{2\to\infty} 
        \label{eq:proof_U_bound_first}
    \end{align}
    where we first used the triangle inequality and then $\|S Q\|_{2\to\infty} = \|Q D\|_{2\to\infty} \leq \|Q\|_{2\to\infty}\|S\|$.
    
    2. For the second term, according to Lemma \ref{lemma:El_A_conc}, we obtain with probability at least $1-\delta$:
    \begin{align}
    \|E  Q\|_{2\to\infty} \lesssim
     \frac{1}{T}\left[\|Q\|_F \sqrt{ T\|M\|_{\infty} \log (ne/\delta)} + g_{\delta}(TM)\log (ne/\delta)\|Q\|_{2\to\infty}  \right].
     \label{eq:proof_U_bound_second}
     \end{align}
    Moreover, we will use $\|Q\|_F \leq \sqrt{2r}$ and $\|Q\|_{2\to\infty} \leq \mu\sqrt{\frac{r}{n}}$, which follow from the low-rank and incoherence assumptions.
    
    3. Finally, regarding the last term in Equation \eqref{eq:U_UWU_lemma416_poisson_transitions}, we split it using the triangle inequality as follows: 
\begin{align*}
    \|\tS(Q - \widehat{Q} \WhQ)\|_{2\to\infty} \leq \|S(Q - \widehat{Q} \WhQ )\|_{2\to\infty} + \|E(Q - \widehat{Q} \WhQ)\|_{2\to\infty},
\end{align*}
and from Step 3 of proof of Theorem 4.2 in \cite{chen2021spectral} we have:
\begin{align}
    \|S(Q - \widehat{Q} \WhQ)\|_{2\to\infty}
    \leq
    \|Q\|_{2\to\infty} \|S\| \|Q^\top(Q - \widehat{Q} \WhQ)\|
    &\lesssim
    \|Q\|_{2\to\infty} \|S\| 
    \frac{\|E\|^2}{\sigma_r^2(M)},
    \label{eq:proof_U_bound_third1}
\end{align}
where we used $ \|Q^\top(Q - \widehat{Q} \WhQ)\| =  \|\sin\Theta(Q,\widehat{Q})\|^2$. The remaining of the proof consists in bounding $\|E(Q - \widehat{Q} \WhQ)\|_{2\to\infty} = \max_{\ell=1,...,n} \|E_{\ell,:}  (Q -\widehat{Q} \WhQ )\|$. First note that the matrix $Q - \widehat{Q} \WhQ$ depends on $E$ and thus we cannot apply Lemma \ref{lemma:El_A_conc} immediately. Instead, we will use the leave-one-out method, and define a matrix $\Stl\in\RR^{n\times n}$ as follows:
\begin{align*}
    \widetilde{S}^{(\ell)}_{i,j} = \begin{cases}
        \widetilde{S}_{i,j},\quad &\text{if } i\neq \ell \text{ or } j \neq \ell \\
        S_{i,j}, \quad &\text{otherwise}
    \end{cases}
\end{align*}
Then define $\Qhl \in \RR^{n\times 2r}$ as a matrix of eigenvectors corresponding to $2r$ greatest (in absolute value) eigenvalues of matrix $\Stl$. Define $\WhQl$ accordingly. Then we have:
\begin{align}
    \|E(Q - \widehat{Q} \WhQ)\|_{2\to\infty}
    \leq 2
    \max_{1\leq \ell \leq n} \left\{ \|E_{\ell,:}  (Q - \Qhl \WhQl)\|, \|E_{\ell,:}  (\Qhl\WhQl - \widehat{Q}\WhQ)\| \right\}.
    \label{eq:split_El}
\end{align}

  (3a) Since $E_{\ell,:}$ is statistically independent of $Q - \Qhl\WhQl$, the first term from \eqref{eq:split_El} can be bounded according to Lemma \ref{lemma:El_A_conc} for any $1\leq \ell\leq n$ as follows:
    \begin{align*}
    \|E_{\ell,:}  (Q - \Qhl\WhQl )\| \lesssim
     \frac{1}{T}\Big[&\|Q - \Qhl\WhQl\|_F \sqrt{ T\|M\|_{\infty}  \log \left(\frac{ne}{\delta}\right)} \\ &+ g_{\delta}(TM)\log \left(\frac{ne}{\delta}\right)\|Q - \Qhl\WhQl\|_{2\to\infty} \Big]
     \end{align*}
     with probability at least $1-\delta$.
After applying the triangle inequality to the second term and using $\|\cdot\|_{2\to\infty} \leq \|\cdot\|_F$, we get:
\begin{align}
    \|Q-\Qhl\WhQl\|_{2\to\infty} \leq \|Q-\widehat{Q}\WhQ\|_{2\to\infty} + \|\widehat{Q}\WhQ-\Qhl\WhQl\|_F.
    \label{eq:proof_triangle_ulU_hulUlulWhUl}
\end{align}
Thus, combining the last two inequalities yields:
\begin{align}
    &\|E_{\ell,:}  (Q - \Qhl\WhQl )\| 
    \nonumber \\ &\lesssim
    \|Q - \widehat{Q}\WhQ \|_F \sqrt{T^{-1}\|M\|_{\infty} \log \left(\frac{ne }{\delta}\right)} + T^{-1} g_{\delta}(TM) \log \left(\frac{ne}{\delta}\right) \|Q - \widehat{Q}\WhQ \|_{2\to\infty} \nonumber  \\
    &+
    \|\widehat{Q}\WhQ - \Qhl\WhQl\|_F \left( \sqrt{ T^{-1} \|M\|_{\infty} \log \left(\frac{ne}{\delta}\right)} + T^{-1} g_{\delta}(TM)\log \left(\frac{ne}{\delta}\right)\right)
\label{eq:bound_poisson_error_main_term_final}
\end{align}
for all $1\leq \ell\leq n$.

(3b) The second term from \eqref{eq:split_El} can be bounded very roughly as follows:
\begin{align}
    \|E_{\ell,:}  (\Qhl\WhQl  - \widehat{Q}\WhQ ))\|
    &\leq
    \|E\| \| \Qhl\WhQl - \widehat{Q} \WhQ\|_F.
    \label{eq:E_UW_UW_2}
\end{align}
Similar to the Step 2.2 in the proof of Theorem 4.2 in \cite{chen2021spectral}, we have:
\begin{align*}
    \| \Qhl\WhQl - \widehat{Q}  \WhQ \|_F
    \leq
    \frac{16}{\sigma_r(M)} &( \|E_{\ell,:}(Q-\Qhl\WhQl)\| + \|E Q\|_{2\to\infty} \nonumber \\
    &+ \|E\| \|Q-\Qhl\WhQl\|_{2\to\infty} + \|E\|\|Q\|_{2\to\infty} ).
\end{align*}
Applying again the inequality \eqref{eq:proof_triangle_ulU_hulUlulWhUl} and moving the term $\| \Qhl\WhQl - \widehat{Q}  \WhQ \|_F$ to the left side of inequality, we get under assumption $\|E\|  \leq c_1 \sigma_r(M)$ that:
\begin{align}
    \| \Qhl\WhQl - \widehat{Q}  \WhQ \|_F
    \lesssim
    \frac{1}{\sigma_r(M)} &( \|E_{\ell,:}(Q-\Qhl\WhQl)\| + \|E Q\|_{2\to\infty} \nonumber \\
    &+ \|E\| \|Q-\widehat{Q}\WhQ\|_{2\to\infty} + \|E\|\|Q\|_{2\to\infty} ).
    \label{eq:frob_norm_diff_hUl_hU}
\end{align}

After substitution of the results from \eqref{eq:proof_U_bound_first}, \eqref{eq:proof_U_bound_second}, \eqref{eq:proof_U_bound_third1}, \eqref{eq:split_El}, \eqref{eq:bound_poisson_error_main_term_final}, \eqref{eq:E_UW_UW_2} and \eqref{eq:frob_norm_diff_hUl_hU} into Equation \eqref{eq:U_UWU_lemma416_poisson_transitions} and using assumptions stated in the lemma, we obtain the statement of the lemma.
\end{proof}

\subsection{Error decomposition in the two-to-infinity norm}

Below, we present a decomposition for the error of subspace recovery in the norm $\Vert \cdot \Vert_{2 \to \infty}$. We borrow this result from \cite{chen2021spectral} and provide its proof for completeness.

\begin{lem}[Lemma 4.16 in \cite{chen2021spectral}]\label{lem:decomp-2-infinity}
    Let $S, \widetilde{S} \in \RR^{n \times n}$ be symmetric matrices and assume that $S$ and $\widetilde{S}$ are of rank $r$. Let $Q, \widehat{Q} \in \cO_{n \times r}$ be the corresponding $r$ singular vectors of $S$ and $\widetilde{S}$, respectively. Denote $E = \widetilde{S} - S$. Under the condition $\Vert E \Vert \le \sigma_r(S)/2$, we have: 
    \begin{align*}
        \Vert Q - \widehat{Q} \widehat{Q}^\top Q \Vert_{2 \to \infty} \le \frac{4 \Vert \widetilde{S} Q \Vert_{2 \to \infty} \Vert E \Vert }{(\sigma_{r}(S))^2} + \frac{\Vert E Q\Vert_{2 \to \infty}}{\sigma_r(S)} + \frac{2 \Vert \widetilde{S} (Q - \widehat{Q} \widehat{Q}^\top  Q )\Vert_{2 \to \infty}}{ \sigma_r(S)}
    \end{align*}

\end{lem}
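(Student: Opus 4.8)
The plan is to reduce the whole bound to a single exact identity for the projected error $(I-P)Q$, where $P=\widehat Q\widehat Q^\top$ is the orthogonal projector onto the leading-$r$ eigenspace of $\widetilde S$, and then read off the three terms by a careful triangle-inequality split. Since $P$ projects onto an invariant subspace of the symmetric matrix $\widetilde S$, the projector $P$ and $\widetilde S$ commute. Writing $\Lambda$ for the diagonal matrix of the $r$ nonzero eigenvalues of $S$ (so that $SQ=Q\Lambda$ and $\Vert\Lambda^{-1}\Vert=1/\sigma_r(S)$), I would apply $I-P$ on the left of $SQ=Q\Lambda$ and substitute $S=\widetilde S-E$ together with the commutation $(I-P)\widetilde S=\widetilde S(I-P)$ to obtain
\begin{align*}
(I-P)Q\,\Lambda = (I-P)SQ = \widetilde S\,(Q-\widehat Q\widehat Q^\top Q) - (I-P)EQ.
\end{align*}
Since $S$ has rank $r$, $\Lambda$ is invertible, so multiplying on the right by $\Lambda^{-1}$ and taking $\Vert\cdot\Vert_{2\to\infty}$ immediately yields the third term $2\Vert\widetilde S(Q-\widehat Q\widehat Q^\top Q)\Vert_{2\to\infty}/\sigma_r(S)$ and reduces everything to controlling $\Vert(I-P)EQ\Vert_{2\to\infty}/\sigma_r(S)$.

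Next I would split $(I-P)EQ=EQ-PEQ$. The first piece gives the second term $\Vert EQ\Vert_{2\to\infty}/\sigma_r(S)$ at once. The hard part is the projected error $PEQ=\widehat Q\widehat Q^\top EQ$, where $E$ and $\widehat Q$ are coupled. I would bound it by $\Vert PEQ\Vert_{2\to\infty}\le\Vert\widehat Q\Vert_{2\to\infty}\Vert\widehat Q^\top EQ\Vert\le\Vert\widehat Q\Vert_{2\to\infty}\Vert E\Vert$, using \eqref{eq:2_to_inf_2_norm_ineq}, and then trade $\Vert\widehat Q\Vert_{2\to\infty}$ for the signal quantity $\Vert\widetilde S Q\Vert_{2\to\infty}$. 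Setting $H=\widehat Q^\top Q$, the eigengap hypothesis $\Vert E\Vert\le\sigma_r(S)/2$ combined with Weyl's inequality ($\sigma_r(\widetilde S)\ge\sigma_r(S)-\Vert E\Vert\ge\sigma_r(S)/2$) and the Davis--Kahan $\sin\Theta$ theorem forces $\sigma_{\min}(H)$ to be bounded below by a universal constant, so $H$ is invertible with $\Vert H^{-1}\Vert=O(1)$ and $\widehat Q=(\widehat Q\widehat Q^\top Q)H^{-1}$ gives $\Vert\widehat Q\Vert_{2\to\infty}\lesssim\Vert Q\Vert_{2\to\infty}+\Vert Q-\widehat Q\widehat Q^\top Q\Vert_{2\to\infty}$. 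Finally, from $Q=SQ\Lambda^{-1}$ and $SQ=\widetilde S Q-EQ$ I obtain $\Vert Q\Vert_{2\to\infty}\le(\Vert\widetilde S Q\Vert_{2\to\infty}+\Vert EQ\Vert_{2\to\infty})/\sigma_r(S)$.

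Combining these estimates makes the bound on $\Vert Q-\widehat Q\widehat Q^\top Q\Vert_{2\to\infty}$ self-referential: the quantity reappears on the right-hand side multiplied by a factor $\lesssim\Vert E\Vert/\sigma_r(S)\le1/2$, while the cross term $\Vert EQ\Vert_{2\to\infty}\Vert E\Vert/\sigma_r(S)^2$ is of strictly higher order than the retained first term. Under the eigengap condition I would absorb the self-referential contribution into the left-hand side and fold the higher-order cross term into the leading term, arriving at the stated inequality with the explicit constants $4$, $1$, $2$. The main obstacle, and the only genuinely delicate step, is the treatment of $PEQ$: the coupling between $E$ and $\widehat Q$ is exactly what dictates the appearance of $\Vert\widetilde S Q\Vert_{2\to\infty}$ (instead of the uncontrolled $\Vert\widehat Q\Vert_{2\to\infty}$) and of the factor $1/\sigma_r(S)^2$, so pinning down the constant requires careful bookkeeping of the $H^{-1}$ and Davis--Kahan estimates.
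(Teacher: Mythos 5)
Your route is genuinely different from the paper's, and its skeleton is sound. The paper's proof starts from $Q=SQ\Sigma^{-1}$, passes through the identity $\widehat{Q}(\widehat{Q}^\top Q)\Sigma=\widetilde{S}\widehat{Q}\widehat{Q}^\top Q\pm\widehat{Q}\widehat{Q}^\top EQ$, and then controls the projected noise $\widehat{Q}\widehat{Q}^\top EQ$ by writing $\widehat{Q}\widehat{Q}^\top=\widetilde{S}\widehat{Q}\widehat{\Sigma}^{-1}\widehat{Q}^\top$ and trading $\Vert\widetilde{S}\widehat{Q}\Vert_{2\to\infty}$ for $\Vert\widetilde{S}\widehat{Q}\widehat{Q}^\top Q\Vert_{2\to\infty}$ via the sign-matrix/Davis--Kahan argument. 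Your starting point, the exact identity $(Q-PQ)\Lambda=\widetilde{S}(Q-PQ)-(I-P)EQ$ coming from the commutation of $P=\widehat{Q}\widehat{Q}^\top$ with $\widetilde{S}$, is cleaner and correct; it even survives the setting in which the lemma is actually invoked, where $\widehat{Q}$ is the top-$r$ eigenbasis of a $\widetilde{S}$ that is not exactly rank $r$ ($P$ still projects onto an invariant subspace), although under the literal rank-$r$ hypothesis of the statement it makes the $\widetilde{S}(Q-\widehat{Q}\widehat{Q}^\top Q)$ term vanish identically. Your handling of $PEQ$ via $\Vert\widehat{Q}\Vert_{2\to\infty}\le\Vert H^{-1}\Vert\left(\Vert Q\Vert_{2\to\infty}+\Vert Q-\widehat{Q}\widehat{Q}^\top Q\Vert_{2\to\infty}\right)$ is also legitimate: with the same Davis--Kahan estimate the paper itself uses, $\Vert\sin\Theta\Vert\le\sqrt{2}\Vert E\Vert/\sigma_r(S)\le 1/\sqrt{2}$, one gets $\sigma_{\min}(H)\ge 1/\sqrt{2}$ and $\Vert H^{-1}\Vert\le\sqrt{2}$.

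The genuine gap is your claim that the bookkeeping "arrives at the stated inequality with the explicit constants $4$, $1$, $2$" --- with your decomposition it cannot. Bounding $PEQ$ through $\Vert\widehat{Q}\Vert_{2\to\infty}$ reintroduces the left-hand-side quantity $x=\Vert Q-\widehat{Q}\widehat{Q}^\top Q\Vert_{2\to\infty}$ on the right with coefficient $\alpha=\Vert H^{-1}\Vert\Vert E\Vert/\sigma_r(S)$, which under the hypothesis $\Vert E\Vert\le\sigma_r(S)/2$ can be as large as $1/\sqrt{2}$; absorbing it multiplies every term by $1/(1-\alpha)\approx 3.4$, and in particular the coefficient of $\Vert EQ\Vert_{2\to\infty}/\sigma_r(S)$ comes out as at least $(1+\alpha)/(1-\alpha)>5$, never $1$. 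The paper avoids this inflation structurally: its only self-referential quantity is $\Vert\widetilde{S}(Q-\widehat{Q}\widehat{Q}^\top Q)\Vert_{2\to\infty}$, which is exactly one of the terms the lemma retains on the right-hand side (it is decomposed only later, inside Lemmas \ref{lem:ssr-rewards} and \ref{lemma:U_to_EU}), so no absorption is ever performed inside this lemma. Your argument therefore proves the statement only with larger universal constants. Since every downstream use is through $\lesssim$ bounds (and the paper's own chain, read literally, yields coefficient $3$ rather than the stated $2$ on the last term), that weaker form would serve the applications equally well; but to obtain the literal constants you would have to replace your treatment of $PEQ$ by the paper's, i.e., write $\widehat{Q}\widehat{Q}^\top=\widetilde{S}\widehat{Q}\widehat{\Sigma}^{-1}\widehat{Q}^\top$ so that the self-reference lands on the retained $\widetilde{S}(Q-\widehat{Q}\widehat{Q}^\top Q)$ term rather than on the left-hand side.
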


\begin{proof}[Proof of Lemma \ref{lem:decomp-2-infinity}]
Since $S$ is a symmetric matrix of rank $r$, by SVD we write $S = Q \Sigma Q^\top$, where the matrix $\Sigma = \diag(\sigma_1(S), \dots, \sigma_r(S))$. For ease of notations, let us further denote $W = \widehat{Q} Q^\top$. We have 
\begin{align*}
    \Vert Q - \widehat{Q}\widehat{Q}^\top Q \Vert_{2 \to \infty} & =  \Vert S Q \Sigma^{-1} - \widehat{Q}W \Vert_{2 \to \infty}  \\
    & \le \Vert \widetilde{S} Q \Sigma^{-1} - \widehat{Q}W \Vert_{2 \to \infty} + \Vert E Q \Sigma^{-1}\Vert_{2 \to \infty} \\
    & \le \frac{\Vert \widetilde{S} Q  - \widehat{Q}W \Sigma \Vert_{2 \to \infty}}{\sigma_r(S)} + \frac{\Vert E Q \Vert_{2 \to \infty}}{\sigma_r(S)} 
\end{align*}
Now, we focus on the term $\Vert \widetilde{S} Q  - \widehat{Q}W \Sigma \Vert_{2 \to \infty}$. To that end, we first establish the identity  
\begin{align*}
    \widehat{Q}W \Sigma &  = \widehat{Q}\widehat{Q}^\top Q  \Sigma \\
    & = \widehat{Q}\widehat{Q}^\top S Q \\
    & = \widehat{Q}\widehat{Q}^\top \widehat{S} Q + 
 \widehat{Q}\widehat{Q}^\top E Q \\
 & = \widehat{Q} \widehat{\Sigma} \widehat{Q}^\top Q  + 
 \widehat{Q}\widehat{Q}^\top E Q \\
 & = \widetilde{S} \widehat{Q}\widehat{Q}^\top Q + 
 \widehat{Q}\widehat{Q}^\top E Q
\end{align*}
where we use the identities $S Q = Q \Sigma$,  $\widehat{Q}^\top \widetilde{S} = \widehat{\Sigma} \widehat{Q}^\top$, and $\widehat{Q} \widehat{\Sigma} \widehat{Q}^\top =  \widetilde{S} \widehat{Q} \widehat{Q}^\top$. Then, we observe that 
\begin{align*}
    \Vert \widetilde{S} Q  - \widehat{Q}W \Sigma \Vert_{2 \to \infty} & = \Vert \widetilde{S} Q  -  \widetilde{S} \widehat{Q}\widehat{Q}^\top Q + 
 \widehat{Q}\widehat{Q}^\top E Q \Vert_{2 \to \infty} \\
 & \le \Vert \widetilde{S} (Q  -  \widehat{Q}\widehat{Q}^\top Q ) \Vert_{2 \to \infty}    +  \Vert 
 \widehat{Q}\widehat{Q}^\top E Q \Vert_{2 \to \infty}
\end{align*}

Next, we note that when $\Vert E \Vert \le \sigma_r(S)/2$, we have 
\begin{align*}
    \Vert \widehat{Q} \widehat{Q}^\top E Q \Vert_{2 \to \infty} & = \Vert \widetilde{S} \widehat{Q} \widehat{\Sigma}^{-1} \widehat{Q}^\top E Q \Vert_{2 \to \infty}  \\
    & \le \Vert \widetilde{S} \widehat{Q} \Vert_{2 \to \infty} \Vert \widehat{\Sigma}^{-1} \Vert \Vert \widehat{Q}^\top \Vert  \Vert E \Vert \Vert Q \Vert  \\
    & \le \frac{\Vert \widetilde{S} \widehat{Q} \sgn(\widehat{Q}^\top  Q)\Vert_{2 \to \infty}  \Vert E \Vert  }{\sigma_r(\widetilde{S})}.   
\end{align*}
At this point, we try to bound $\Vert \widetilde{S} \widehat{Q} \Vert_{2 \to \infty}$ and $\sigma_r(\widetilde{S})$, under the condition $\Vert E \Vert \le \sigma_r(S)/2$. First, we can easily see by Weyl's inequality we have 
$ \vert \sigma_r(\widetilde{S}) - \sigma_r(S)\vert \le \Vert E \Vert $, which entails under the assumed condition that $\sigma_r(\widetilde{S}) \ge \sigma_r(S)/2$. Next, we observe: 
\begin{align*}
    \Vert \widetilde{S} \widehat{Q} \Vert_{2 \to \infty} & = \Vert \widetilde{S} \widehat{Q}  \sgn(\widehat{Q}^\top  Q) \Vert_{2 \to \infty} \\ 
    & \le \Vert \widetilde{S} \widehat{Q} \widehat{Q}^\top Q \Vert_{2 \to \infty} +  \Vert \widetilde{S} \widehat{Q} \Vert_{2 \to \infty}  
  \Vert \sgn(\widehat{Q}^\top  Q) - \widehat{Q}^\top Q \Vert \\
  & \le \Vert \widetilde{S} \widehat{Q} \widehat{Q}^\top Q \Vert_{2 \to \infty} +    \frac{2 \Vert \widetilde{S} \widehat{Q} \Vert_{2 \to \infty} \Vert E \Vert^2 }{\sigma_r(S)^2} \\
  & \le \Vert \widetilde{S} \widehat{Q} \widehat{Q}^\top Q \Vert_{2 \to \infty} +    \frac{\Vert \widetilde{S} \widehat{Q} \Vert_{2 \to \infty} }{2} 
\end{align*}
where we used the Davis-Kahan's inequality and properties of the $\sgn(\cdot)$, to upper bound $ \Vert \sgn(\widehat{Q}^\top  Q) - \widehat{Q}^\top Q \Vert \le \Vert \sin (\widehat{Q}, Q)\Vert^2 \le 2 \Vert E \Vert^2 / (\sigma_r(S))^2$. Thus, leading to:
\begin{align*}
    \Vert \widetilde{S} \widehat{Q} \Vert_{2 \to \infty} \le 2 \Vert \widetilde{S} \widehat{Q} \widehat{Q}^\top Q \Vert_{2 \to \infty}
\end{align*}
 Moving forward we obtain:
\begin{align*}
     \Vert \widehat{Q} \widehat{Q}^\top E Q \Vert_{2 \to \infty} 
 & \le  \frac{4 \Vert \widetilde{S} \widehat{Q} \widehat{Q}^\top  Q\Vert_{2 \to \infty}  \Vert E \Vert  }{\sigma_r(S)} \\
    & \le \frac{4 \left(\Vert \widetilde{S} (\widehat{Q} \widehat{Q}^\top  Q  - Q) \Vert_{2 \to \infty} + \Vert \widetilde{S} Q \Vert_{2 \to \infty} \right) \Vert E \Vert  }{\sigma_r(S)} \\
    & \le 2 \Vert \widetilde{S} (\widehat{Q} \widehat{Q}^\top  Q  - Q) \Vert_{2 \to \infty} + \frac{4 \left( \Vert \widetilde{S} Q \Vert_{2 \to \infty} \right) \Vert E \Vert  }{\sigma_r(S)}
\end{align*}
Now, putting everything together we conclude that:
\begin{align*}
    \Vert Q - \widehat{Q}\widehat{Q}^\top Q \Vert_{2 \to \infty} \le   \frac{2 \Vert \widetilde{S} (\widehat{Q} \widehat{Q}^\top  Q  - Q) \Vert_{2 \to \infty}}{\sigma_r(S)} + \frac{4 \Vert \widetilde{S} Q \Vert_{2 \to \infty}  \Vert E \Vert  }{(\sigma_r(S))^2} + \frac{\Vert EQ \Vert_{2 \to \infty} }{\sigma_r(S)}
\end{align*}    
\end{proof}


\newpage

\section{Row-wise and entry-wise matrix estimation errors}
\label{sec:different_norms_from_sing_subspace}

In this appendix, we provide a series of results about quantifying the matrix estimation error using different norms. It is important to note that all these results require a control of the error in the two-to-infinity norm, which in turn requires a subspace recovery guarantee in the two-to-infinity norm. Lemmas \ref{lem:P-one-to-infty} and \ref{lemma:hP_infty_bound} are specific to our analysis for the estimation of the transition matrices. Lemmas \ref{lemma:P_to_U} and \ref{lem:M-infinity} are common to the analysis of both the estimation of reward matrices and transition matrices. The results presented in this appendix are used in the proofs of the main results, presented in Appendix \ref{app:main}.

\subsection{Bounding \texorpdfstring{$\| M-\hM \|_{2\to\infty}$}{M2inf}} 
\begin{lem}
Let $M,\hM$ be as in \textsection \ref{sec:sampling}. Assume that there exists a sufficiently small universal constant $c_1>0$ such that $\|M-\tM\| \leq c_1 \sigma_r(M)$. Then, there exists a universal constant $c_2>0$ such that
\label{lemma:P_to_U}
    \begin{align*}
    \|\hM - M\|_{2\to\infty} & \leq
    c_2\sigma_1(M) \left[
    \|U-\hU (\hU^\top U)\|_{2\to\infty} + \|U\|_{2\to\infty}\frac{\|\tM-M\|}{\sigma_r(M)} \right].
\end{align*}
\end{lem}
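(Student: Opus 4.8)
The plan is to split $\hM-M$ into one piece that lives in the recovered column space and one orthogonal residual, so that the first is governed by the spectral error $\|\tM-M\|$ and the second by the subspace error $\|U-\hU(\hU^\top U)\|_{2\to\infty}$. Since the column space of $\hM=\hU\widehat{\Sigma}\hV^\top$ is spanned by $\hU$, we have $\hU\hU^\top\hM=\hM$, which gives the exact decomposition
\[
\hM-M=\big(\hU\hU^\top M-M\big)+\hU\hU^\top(\hM-M).
\]
I would then bound the two summands separately in $\|\cdot\|_{2\to\infty}$.

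For the first summand, writing $M=U\Sigma V^\top$ and $\hU\hU^\top M-M=-(I-\hU\hU^\top)U\,\Sigma V^\top$, inequality \eqref{eq:2_to_inf_2_norm_ineq} yields
\[
\|\hU\hU^\top M-M\|_{2\to\infty}\le \|(I-\hU\hU^\top)U\|_{2\to\infty}\,\|\Sigma V^\top\|=\sigma_1(M)\,\|U-\hU(\hU^\top U)\|_{2\to\infty},
\]
which is exactly the first term of the claimed bound. For the second summand, \eqref{eq:2_to_inf_2_norm_ineq} gives $\|\hU\hU^\top(\hM-M)\|_{2\to\infty}\le\|\hU\|_{2\to\infty}\|\hM-M\|$. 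Here I use that $\hM$ is the best rank-$r$ approximation of $\tM$ while $M$ has rank $r$: by the Eckart--Young theorem and Weyl's inequality, $\|\hM-\tM\|=\sigma_{r+1}(\tM)\le\|\tM-M\|$, hence $\|\hM-M\|\le 2\|\tM-M\|$.

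It remains to control $\|\hU\|_{2\to\infty}$, which is the main obstacle, since the statement is phrased through the (non-orthogonal) alignment $\hU^\top U$ rather than the orthogonal $\mathrm{sgn}(\hU^\top U)$. Using that $\|\cdot\|_{2\to\infty}$ is invariant under right multiplication by an orthogonal matrix, $\|\hU\|_{2\to\infty}=\|\hU\,\mathrm{sgn}(\hU^\top U)\|_{2\to\infty}\le\|U\|_{2\to\infty}+\|\hU\,\mathrm{sgn}(\hU^\top U)-U\|_{2\to\infty}$. I then pass from $\mathrm{sgn}(\hU^\top U)$ to $\hU^\top U$ via $\|\mathrm{sgn}(\hU^\top U)-\hU^\top U\|\le\|\sin\Theta(U,\hU)\|^2$, which by Davis--Kahan is $\lesssim(\|\tM-M\|/\sigma_r(M))^2\le c_1^2$ under the hypothesis $\|\tM-M\|\le c_1\sigma_r(M)$; this produces a self-referential term $\|\hU\|_{2\to\infty}\,\|\mathrm{sgn}(\hU^\top U)-\hU^\top U\|$ with a small coefficient that can be absorbed, giving $\|\hU\|_{2\to\infty}\lesssim\|U\|_{2\to\infty}+\|U-\hU(\hU^\top U)\|_{2\to\infty}$.

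Combining, the second summand is $\lesssim\big(\|U\|_{2\to\infty}+\|U-\hU(\hU^\top U)\|_{2\to\infty}\big)\|\tM-M\|$. The sub-term $\|U\|_{2\to\infty}\|\tM-M\|$ is bounded by $\sigma_1(M)\|U\|_{2\to\infty}\|\tM-M\|/\sigma_r(M)$ because $\sigma_1(M)\ge\sigma_r(M)$, matching the second term of the claim; and $\|U-\hU(\hU^\top U)\|_{2\to\infty}\|\tM-M\|\le c_1\sigma_r(M)\|U-\hU(\hU^\top U)\|_{2\to\infty}\le c_1\sigma_1(M)\|U-\hU(\hU^\top U)\|_{2\to\infty}$, which folds into the first term. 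Collecting all the pieces yields the stated bound for a suitable universal constant $c_2$.
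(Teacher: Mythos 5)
Your proof is correct, but it follows a genuinely different decomposition from the paper's. You split $\hM-M=(\hU\hU^\top M-M)+\hU\hU^\top(\hM-M)$ using the projection identity $\hU\hU^\top\hM=\hM$, and you pay for the second piece with Eckart--Young and Weyl: $\|\hM-\tM\|=\sigma_{r+1}(\tM)\le\|\tM-M\|$, hence $\|\hM-M\|\le 2\|\tM-M\|$. The paper instead starts from $\hM-M=\Pi_{\hU}\tM-\Pi_U M$ and expands into three terms, $(\Pi_{\hU}-\Pi_U)(\tM-M)+\Pi_U(\tM-M)+(\Pi_{\hU}-\Pi_U)M$, which forces it to bound the projection difference $\|\Pi_{\hU}-\Pi_U\|_{2\to\infty}$; Eckart--Young never appears there. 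The two arguments then meet in the same technical core: both convert between the alignments $\hU^\top U$ and $\sgn(\hU^\top U)$ via Davis--Kahan, using $\|\sgn(\hU^\top U)-\hU^\top U\|\le\|\sin\Theta(U,\hU)\|^2\lesssim\|\tM-M\|^2/\sigma_r^2(M)$, and both must neutralize a self-referential occurrence of $\|\hU\|_{2\to\infty}$ --- you absorb it outright thanks to the small constant $c_1$, while the paper inverts $W_{\hU}=\hU^\top U$ and bounds $\|W_{\hU}^{-1}\|\le\bigl(1-2\|\tM-M\|^2/\sigma_r^2(M)\bigr)^{-1}$ via Weyl's inequality, which is the same absorption in disguise. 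As for what each route buys: yours is shorter, has only two error terms, and is self-contained, so it is arguably the cleaner proof of this particular lemma; the paper's detour through $\|\Pi_{\hU}-\Pi_U\|_{2\to\infty}$ (its inequality \eqref{eq:2inf_projU_diff}) produces a standalone estimate that it then reuses, with $U$ replaced by $V$, in the proof of the entry-wise bound of Lemma \ref{lem:M-infinity}, so that intermediate quantity earns its keep elsewhere in the appendix.
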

\begin{proof}
We start by using definition of $\hM$ as a projection of matrix $\tM$, and then use the triangle inequality and the inequality \eqref{eq:2_to_inf_2_norm_ineq} to obtain: 
\begin{align}
    \|\hM - M \|_{2 \to \infty} 
    & = \| \Pi_{\widehat{U}}\tM - \Pi_U M \|_{2 \to \infty} \nonumber\\
    & = \| (\Pi_{\widehat{U}} - \Pi_U)(\tM - M) + \Pi_U (\tM - M ) + (\Pi_{\widehat{U}}- \Pi_U) M \|_{2 \to \infty} \nonumber\\
    & \leq \| (\Pi_{\widehat{U}} - \Pi_U)(\tM - M)\|_{2 \to \infty} + \| \Pi_U (\tM - M ) \|_{2 \to \infty} + \|(\Pi_{\widehat{U}}- \Pi_U) M \|_{2 \to \infty} \nonumber\\
    & \le \| \Pi_{\widehat{U}} - \Pi_U \|_{2 \to \infty} ( \| \tM - M \| + \| M  \| )  + \| \Pi_U \|_{2 \to \infty} \| \tM - M  \|. 
    \label{ineq:from_hF_to_proj_hU}
\end{align}
Moreover, we note that $\Vert \Pi_{U} \Vert_{2 \to \infty} = \Vert U  \Vert_{2 \to \infty}$ (refer to Proposition 6.6 in \cite{cape2019two}). In the remaining of the proof, we upper bound $\| \Pi_{\widehat{U}} - \Pi_U \|_{2 \to \infty}$ from \eqref{ineq:from_hF_to_proj_hU}. For any orthogonal matrix $R \in {\cal O}^{r\times r}$, we have 
\begin{align}
    \| \Pi_{\widehat{U}} - \Pi_U\|_{2 \to \infty} & = \| \widehat{U} \widehat{U}^\top - U U^\top  \|_{2 \to \infty} \nonumber\\
    & = \| \widehat{U} R R^\top \widehat{U}^\top - U R^\top \widehat{U}^\top + UR^\top \widehat{U}^\top    - U U^\top  \|_{2 \to \infty} \nonumber\\
    & \le \|\widehat{U} R R^\top \widehat{U}^\top - U R^\top \widehat{U}^\top\|_{2 \to \infty} + \| UR^\top \widehat{U}^\top    - U U^\top \|_{2 \to \infty} \nonumber\\
    & \le \| \widehat{U}R  - U \|_{2 \to \infty} \Vert R^\top \widehat{U}^\top \| + \Vert U \Vert_{2 \to \infty}  \Vert R^\top \hat{U}^\top -  U^\top \|   \nonumber \\
    & \le \| U - \widehat{U}R \|_{2 \to \infty} + \Vert U \Vert_{2 \to \infty} \Vert U -  \hU R \|. 
    \label{eq:puphu_uuh}
\end{align}
Recall the definition of $\sgn$ function given in the notation presented in \textsection\ref{sec:intro} and choose the matrix $R$ as $R = \sgn(\hU^\top U)$. For this choice of $R$ we have according to Davis-Kahan's theorem (Corollary 2.8 in \cite{chen2021spectral}):
\begin{align}
    \Vert U -  \hU R \| \leq \sqrt{2}  \|\sin \Theta(\hU,U)\| \leq \frac{2\|M-\tM\|}{\sigma_r(M)}.
    \label{eq:UhUR}
\end{align}
Define the matrix $\WhU=\hU^\top U$. We use the facts that
\begin{align}
\label{eq:U_hUR}
    \|U -  \hU R\|_{2\to\infty} \leq \|U - \hU\WhU\|_{2\to\infty} + \|\hU\|_{2\to\infty} \|\WhU-R\|
\end{align}
and that $\WhU$ is very close to $R$ according to the proof of Lemma 4.15 in \cite{chen2021spectral} to show:
\begin{align}
\label{eq:WhU_R}
    \|\WhU-R\| = \|\hU^\top U - \sgn(\hU^\top U)\| = \|\sin \Theta(\hU,U)\|^2 \leq \frac{2\|M-\tM\|^2}{\sigma_r^2(F)}.
\end{align}
We also have $\sigma_i(R)=1$ for $i\in[r]$ and according to Weyl's inequality $\sigma_{\min}(\WhU) \geq \sigma_{\min}(R) - \|\WhU - R\| = 1 - \|\WhU - R\|  $. Combining these results under assumption $\|M-\tM\| <  \sigma_r(M)/\sqrt{2}$ we obtain:
\begin{align*}
    \|\WhU^{-1}\|= \frac{1}{\sigma_{\min}(\WhU)} \leq \frac{1}{1-\|\WhU-R\|} \leq \frac{1}{1-\frac{2\|M-\tM\|^2}{\sigma_r^2(M)}}.
\end{align*}
Thus:
\begin{align}    \|\hU\|_{2\to\infty} 
    \leq
    \|\hU \WhU\|_{2\to\infty} \|\WhU^{-1}\|
    \leq
    \frac{1}{1-\frac{2\|M-\tM\|^2}{\sigma_r^2(M)}} (\|U\|_{2\to\infty} + \|U-\hU \WhU\|_{2\to\infty}).
    \label{eq:hU}
\end{align}
Combining Equations \eqref{eq:U_hUR}, \eqref{eq:WhU_R}, \eqref{eq:hU} we get:
\begin{align*}
    \|U -  \hU R\|_{2\to\infty} \leq
    \frac{1}{1-\frac{2\|M-\tM\|^2}{\sigma_r^2(M)}} (
    \|U-\hU \WhU\|_{2\to\infty} + \frac{2\|M-\tM\|^2}{\sigma_r^2(M)} \|U\|_{2\to\infty})
\end{align*}
and combining the last equality with \eqref{eq:puphu_uuh} and \eqref{eq:UhUR}, we have
\begin{align}
\label{eq:2inf_projU_diff}
    \| \Pi_{\widehat{U}} - \Pi_U\|_{2 \to \infty} \leq 
    \frac{ \|U-\hU \WhU\|_{2\to\infty}}{1-\frac{2\|M-\tM\|^2}{\sigma_r^2(M)}} + \left(
     \frac{\frac{2\|M-\tM\|^2}{\sigma_r^2(M)}}{1-\frac{2\|M-\tM\|^2}{\sigma_r^2(M)}} + \frac{2\|M-\tM\|}{\sigma_r(M)} \right) \|U\|_{2\to\infty}.
\end{align}
Finally, substituting the obtained bound into Equation \eqref{ineq:from_hF_to_proj_hU} and using assumption $\|M-\tM\| \leq c_1 \sigma_r(M)$ for simplification, we obtain the statement of the lemma.
\end{proof}

\subsection{Bounding \texorpdfstring{$\| P-\hP \|_{1\to\infty}$}{P1inf}}
\begin{lem}\label{lem:P-one-to-infty}
Let $P,\hP$ be as in Model II in \textsection \ref{sec:sampling}. We have:
    \begin{align*}
        \| \hP - P\|_{1\to\infty} \leq 2\frac{\sqrt{n} \|\hM - M\|_{2\to\infty}}{\min_{j\in[n]} \|M_{j,:}\|_1}.
    \end{align*}
\end{lem}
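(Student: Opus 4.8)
The plan is to prove the bound one row at a time and then take the maximum, since by definition $\|\hP - P\|_{1\to\infty} = \max_{i\in[n]}\|\hP_{i,:} - P_{i,:}\|_1$. Fix a row $i$. The starting observation is that $M = \mathrm{diag}(\nu)P$ has nonnegative entries, so $M_{i,:} = (M_{i,:})_+$, and moreover $P_{i,:} = M_{i,:}/\|M_{i,:}\|_1$ with $\|M_{i,:}\|_1 = \nu_i > 0$. I would then split according to the two cases defining $\hP_{i,:}$ in \eqref{def:P_hat_MC_general}.

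In the main case $\|(\hM_{i,:})_+\|_1 > 0$, set $a = (\hM_{i,:})_+$ and $b = M_{i,:}$, both nonnegative with positive $\ell_1$ norm. Using the elementary identity $\frac{a}{\|a\|_1} - \frac{b}{\|b\|_1} = \frac{a-b}{\|b\|_1} + a\bigl(\frac{1}{\|a\|_1} - \frac{1}{\|b\|_1}\bigr)$ together with the reverse triangle inequality $|\|a\|_1 - \|b\|_1| \le \|a-b\|_1$, one obtains the standard normalization estimate $\|\hP_{i,:} - P_{i,:}\|_1 \le 2\|a-b\|_1/\|b\|_1$. Next I would use that the positive-part map is $1$-Lipschitz coordinatewise to write $\|(\hM_{i,:})_+ - M_{i,:}\|_1 = \|(\hM_{i,:})_+ - (M_{i,:})_+\|_1 \le \|\hM_{i,:} - M_{i,:}\|_1$, and finally pass from $\ell_1$ to $\ell_2$ via $\|v\|_1 \le \sqrt{n}\,\|v\|_2$, bounding $\|\hM_{i,:} - M_{i,:}\|_2 \le \|\hM - M\|_{2\to\infty}$. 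This gives exactly the claimed bound with $\|M_{i,:}\|_1$ in the denominator, which is then lower bounded by $\min_{j}\|M_{j,:}\|_1$.

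The only genuinely separate case is the degenerate one $\|(\hM_{i,:})_+\|_1 = 0$, where $\hP_{i,:}$ is set to the uniform distribution $\frac{1}{n}\mathbf{1}_n$ and the normalization argument no longer applies. Here I would argue directly: both $\hP_{i,:}$ and $P_{i,:}$ are probability vectors, so $\|\hP_{i,:} - P_{i,:}\|_1 \le 2$. On the other hand, $\|(\hM_{i,:})_+\|_1 = 0$ forces $\hM_{i,j} \le 0$ for every $j$, whence $|\hM_{i,j} - M_{i,j}| = M_{i,j} - \hM_{i,j} \ge M_{i,j} \ge 0$, and summing yields $\|\hM_{i,:} - M_{i,:}\|_1 \ge \|M_{i,:}\|_1$. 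Combined with $\|\hM_{i,:} - M_{i,:}\|_1 \le \sqrt{n}\,\|\hM - M\|_{2\to\infty}$, this shows $\sqrt{n}\,\|\hM - M\|_{2\to\infty}/\min_{j}\|M_{j,:}\|_1 \ge 1$, so the right-hand side of the target inequality is already at least $2$ and dominates the trivial bound.

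I expect no serious obstacle. The one point requiring care is precisely this degenerate case, since it is the spot where the stated inequality is \emph{not} obtained through the normalization estimate but instead follows because the defining hypothesis of the case ($\hM_{i,:}$ having no positive entries while $M_{i,:} \ge 0$) already forces the estimation error to be large. Taking the maximum over $i$ of the bounds from the two cases completes the proof.
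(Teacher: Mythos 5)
Your proposal is correct and follows essentially the same route as the paper: a row-wise reduction, the normalization estimate $\bigl\Vert (\hM_{i,:})_+/\Vert(\hM_{i,:})_+\Vert_1 - M_{i,:}/\Vert M_{i,:}\Vert_1\bigr\Vert_1 \le 2\Vert \hM_{i,:}-M_{i,:}\Vert_1/\Vert M_{i,:}\Vert_1$, and the passage from $\ell_1$ to $\ell_2$ via $\Vert v\Vert_1\le\sqrt{n}\,\Vert v\Vert_2$. The differences are in packaging rather than substance: where you derive the normalization estimate from the identity $\frac{a}{\Vert a\Vert_1}-\frac{b}{\Vert b\Vert_1}=\frac{a-b}{\Vert b\Vert_1}+a\bigl(\frac{1}{\Vert a\Vert_1}-\frac{1}{\Vert b\Vert_1}\bigr)$ together with the $1$-Lipschitz property of $(\cdot)_+$, the paper simply invokes Lemma 2 of \cite{zhang2019spectral}, which packages both steps. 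The one point where you go beyond the paper is the degenerate case $\Vert(\hM_{i,:})_+\Vert_1=0$: the paper dismisses it with the remark that the bound ``is vacuous in this case,'' while you supply the justification that remark tacitly relies on, namely that $\hM_{i,:}\le 0$ entrywise forces $\Vert\hM_{i,:}-M_{i,:}\Vert_1\ge\Vert M_{i,:}\Vert_1$, so the right-hand side of the claimed inequality is at least $2$ and therefore dominates $\Vert\hP_{i,:}-P_{i,:}\Vert_1\le 2$. Your write-up is, if anything, more complete on that point.
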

\begin{proof}
Starting with the definition of $\hP$, we get:
\begin{align*}
    \| \hP - P\|_{1\to\infty} & = \max_{i\in[n]} \| \hP_{i,:} -  P_{i,:} \|_1  =
    \max_{i\in[n]} \left\Vert \frac{(\hM_{i,:})_+}{\|(\hM_{i,:})_+\|_1 } - \frac{M_{i,:}}{\|M_{i,:}\|_1 } \right\Vert_1 \\
    & \leq
    2\max_{i\in[n]} \frac{ \|\hM_{i,:} - M_{i,:}\|_1}{\|M_{i,:}\|_1}\leq
    2\frac{\sqrt{n} \max_{i\in[n]} \|\hM_{i,:} - M_{i,:}\|}{\min_{j\in[n]} \|M_{j,:}\|_1 },
\end{align*}
where the first inequality follows from Lemma 2 in \cite{zhang2019spectral} and the second by equivalence of norms. Moreover, note that the above inequality holds even in the case when $\|(\hM_{i,:})_+\|_1 = 0$ (and thus $\hP_{i,:} = \frac{1}{n} \onevec_n$), but the bound is vacuous in this case.
\end{proof}

\subsection{Bounding \texorpdfstring{$\| M-\hM \|_{\infty}$}{Minf}}
\begin{lem}\label{lem:M-infinity}
Let $M,\hM$ be as in \textsection \ref{sec:sampling}. Assume that there exists a sufficiently small universal constant $c_1>0$ such that $\|M-\tM\| \leq c_1 \sigma_r(M)$. Then, there exists a universal constant $c_2>0$ such that
    \begin{align*}
        \Vert \widehat{M} - M \Vert_{\infty}  \leq
        c_2 \| M\|_{2\to\infty} & \left(\frac{\|M-\tM\|}{\sigma_r(M)} \| V\|_{2\to\infty} + \|V-\hV\WhV \|_{2\to\infty} \right) \\
        &+ c_2\| M - \hM \|_{2\to\infty} (\| V\|_{2\to\infty} + \|V-\hV\WhV \|_{2\to\infty} ). 
    \end{align*}
\end{lem}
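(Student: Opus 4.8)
The plan is to mirror the structure of the proof of Lemma~\ref{lemma:P_to_U}, but to exploit the \emph{row space} of $\hM$ together with the entry-wise inequality \eqref{eq:inf_ineq_2_to_inf_norm} in place of the mixed bound \eqref{eq:2_to_inf_2_norm_ineq}. The starting observation is that, since $\hM$ has right singular vectors $\hV$ while $M$ has right singular vectors $V$, we have the exact identities $\hM = \hM\hV\hV^\top$ and $M = MVV^\top$. Writing $\Pi_{\hV}=\hV\hV^\top$ and $\Pi_V = VV^\top$, this yields the decomposition
\begin{align*}
\hM - M = (\hM - M)\Pi_{\hV} + M(\Pi_{\hV} - \Pi_V).
\end{align*}
Applying the submultiplicative inequality $\Vert AB\Vert_\infty \le \Vert A\Vert_{2\to\infty}\Vert B^\top\Vert_{2\to\infty}$ from \eqref{eq:inf_ineq_2_to_inf_norm} to each summand, and using that $\Pi_{\hV}$ and $\Pi_{\hV}-\Pi_V$ are symmetric, reduces the claim to controlling the two products $\Vert \hM - M\Vert_{2\to\infty}\,\Vert \hV\Vert_{2\to\infty}$ and $\Vert M\Vert_{2\to\infty}\,\Vert \Pi_{\hV}-\Pi_V\Vert_{2\to\infty}$.

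The second step is to bound the auxiliary quantities $\Vert \hV\Vert_{2\to\infty}$ and $\Vert \Pi_{\hV}-\Pi_V\Vert_{2\to\infty}$. These are precisely the right-singular-vector analogues of the estimates \eqref{eq:hU} and \eqref{eq:2inf_projU_diff} established inside the proof of Lemma~\ref{lemma:P_to_U}. Since $M$ and $M^\top$ share the same singular values (so $\sigma_r(M^\top)=\sigma_r(M)$ and $\Vert \tM - M\Vert = \Vert \tM^\top - M^\top\Vert$), and the right singular vectors of $M$ are the left singular vectors of $M^\top$, those bounds transfer verbatim under the substitution $U\mapsto V$, $\hU\mapsto\hV$, $\WhU\mapsto\WhV$. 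Concretely, under the hypothesis $\Vert \tM - M\Vert \le c_1\sigma_r(M)$ with $c_1$ small enough that $2\Vert\tM-M\Vert^2/\sigma_r^2(M)\le 1/2$, this gives $\Vert \hV\Vert_{2\to\infty}\lesssim \Vert V\Vert_{2\to\infty}+\Vert V-\hV\WhV\Vert_{2\to\infty}$ and, after absorbing the quadratic term $\Vert\tM-M\Vert^2/\sigma_r^2(M)$ into the linear one $\Vert\tM-M\Vert/\sigma_r(M)$, the perturbation bound $\Vert \Pi_{\hV}-\Pi_V\Vert_{2\to\infty}\lesssim \Vert V-\hV\WhV\Vert_{2\to\infty}+\big(\Vert\tM-M\Vert/\sigma_r(M)\big)\Vert V\Vert_{2\to\infty}$.

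Substituting these two estimates back into the decomposition yields exactly the two lines of the claimed inequality: the term $\Vert M\Vert_{2\to\infty}\Vert\Pi_{\hV}-\Pi_V\Vert_{2\to\infty}$ produces the first line, and $\Vert\hM-M\Vert_{2\to\infty}\Vert\hV\Vert_{2\to\infty}$ produces the second, with $c_2$ an absolute constant. I expect the only genuinely delicate point to be the bound on $\Vert \Pi_{\hV}-\Pi_V\Vert_{2\to\infty}$: it cannot be reached through a crude Davis--Kahan estimate in operator norm and instead needs the refined $2\to\infty$ perturbation argument already executed for $U$ in the proof of Lemma~\ref{lemma:P_to_U} (choosing the alignment matrix $R=\sgn(\hV^\top V)$, controlling $\Vert \WhV - R\Vert$ through $\Vert\sin\Theta(\hV,V)\Vert^2$, and using $\sigma_{\min}(\WhV)\ge 1 - \Vert\WhV-R\Vert$). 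Everything else is routine bookkeeping governed by the smallness of $\Vert\tM-M\Vert/\sigma_r(M)$.
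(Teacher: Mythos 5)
Your proof is correct and follows essentially the same route as the paper: the paper likewise writes $\hM - M = \hM\Pi_{\hV} - M\Pi_V$, applies the inequality \eqref{eq:inf_ineq_2_to_inf_norm} together with the symmetry of the projection matrices, and controls $\Vert \Pi_{\hV}-\Pi_V\Vert_{2\to\infty}$ via the $V$-analogue of \eqref{eq:2inf_projU_diff}. The only (immaterial) difference is the arrangement of the split: the paper uses $\hM(\Pi_{\hV}-\Pi_V) + (\hM-M)\Pi_V$ and then the plain triangle inequality $\Vert\hM\Vert_{2\to\infty}\le\Vert\hM-M\Vert_{2\to\infty}+\Vert M\Vert_{2\to\infty}$, whereas your mirrored split $(\hM-M)\Pi_{\hV} + M(\Pi_{\hV}-\Pi_V)$ instead invokes the $V$-analogue of \eqref{eq:hU} to bound $\Vert\hV\Vert_{2\to\infty}$ --- both rearrangements rely on the same key perturbation estimate and yield the stated bound.
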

\begin{proof}
Similarly to the decomposition leading to Equation \eqref{ineq:from_hF_to_proj_hU}, we can upper bound the infinity norm error easily from the following decomposition: 
\begin{align*}
    \Vert \widehat{M} - M \Vert_{\infty} & = \Vert  \widehat{M} \Pi_{\widehat{V}} -  M  \Pi_V \Vert_\infty  \\
    & \le \Vert \widehat{M} \Vert_{2\to \infty} \Vert \Pi_{\widehat{V}} - \Pi_{V}\Vert_{2\to \infty}  +  \Vert \widehat{M} - M \Vert_{2\to\infty} \Vert \Pi_{V}\Vert_{2\to \infty} \\
    & \le  (\Vert \widehat{M} - M \Vert_{2 \to \infty} + \Vert M \Vert_{2\to\infty}) \Vert \Pi_{\widehat{V}} - \Pi_{V}\Vert_{2 \to  \infty} + \Vert \widehat{M} - M \Vert_{2\to\infty}\Vert V\Vert_{2\to \infty}, 
\end{align*}
where we used the inequality \eqref{eq:inf_ineq_2_to_inf_norm} together with the triangle inequalities and the fact that projection matrices are symmetric. To bound $\Vert \Pi_{\widehat{V}} - \Pi_{V}\Vert_{2 \to  \infty}$, we use the same approach as that used in \eqref{eq:2inf_projU_diff} (just replacing $U$ by $V$), and we obtain:
\begin{align*}
    \| \Pi_{\widehat{V}} - \Pi_V\|_{2 \to \infty} \leq 
    \frac{ \|V-\hV \WhV\|_{2\to\infty}}{1-\frac{2\|M-\tM\|^2}{\sigma_r^2(M)}} + \left(
     \frac{\frac{2\|M-\tM\|^2}{\sigma_r^2(M)}}{1-\frac{2\|M-\tM\|^2}{\sigma_r^2(M)}} + \frac{2\|M-\tM\|}{\sigma_r(M)} \right) \|V\|_{2\to\infty}.
\end{align*}
\end{proof}

\subsection{Bounding \texorpdfstring{$\| P-\hP \|_{\infty}$}{Pinf}}
\begin{lem}
\label{lemma:hP_infty_bound}
Let $P,\hP$ be as in Model II in \textsection \ref{sec:sampling}. Assume that ${\cal D} = \min_{i \in[n]} \Vert M_{i,:} \Vert_1 > 0$. If $\Vert  \hM -  M \Vert_{1 \to \infty} \le \frac{1}{2}{\cal D}$, then
\begin{align*}
    \|\hP-P\|_{\infty}
    &\leq  2\frac{\| \hM - M\|_{\infty} }{{\cal D}}   +  2\frac{ \sqrt{n}\|M\|_{\infty}  }{{\cal D}^2} \Vert \hM - M\Vert_{2 \to \infty}.
\end{align*}
\end{lem}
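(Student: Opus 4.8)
The plan is to prove the bound entry-wise: fix a pair of states $(i,j)$, bound $|\widehat{P}_{ij} - P_{ij}|$ by the right-hand side uniformly, and then take the maximum over $(i,j)$. Write $a = (\widehat{M}_{i,:})_+$ and $b = M_{i,:}$; since the frequency matrix $M$ has nonnegative entries, $b = b_+$ and $\|b\|_1 = \|M_{i,:}\|_1$. First I would record that the hypothesis $\|\widehat{M} - M\|_{1\to\infty} \le \tfrac12 {\cal D}$ keeps the normalizer bounded away from zero: because $x\mapsto x_+$ is componentwise $1$-Lipschitz, $\|a - b\|_1 = \|(\widehat{M}_{i,:})_+ - (M_{i,:})_+\|_1 \le \|\widehat{M}_{i,:} - M_{i,:}\|_1 \le \|\widehat{M} - M\|_{1\to\infty}$, so that $\|a\|_1 \ge \|b\|_1 - \|a-b\|_1 \ge {\cal D} - \tfrac12{\cal D} = \tfrac12{\cal D} > 0$. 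In particular the degenerate branch $\|(\widehat{M}_{i,:})_+\|_1 = 0$ in the definition of $\widehat{P}$ never occurs, so $\widehat{P}_{ij} = a_j/\|a\|_1$.

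The core step is the standard decomposition of the difference of two normalized vectors,
\[
\widehat{P}_{ij} - P_{ij} = \frac{a_j}{\|a\|_1} - \frac{b_j}{\|b\|_1} = \frac{a_j - b_j}{\|a\|_1} + \frac{b_j\,(\|b\|_1 - \|a\|_1)}{\|a\|_1\,\|b\|_1},
\]
which I would bound summand by summand. For the first summand, the Lipschitz property of $(\cdot)_+$ together with $b_j = M_{ij} = (M_{ij})_+$ gives $|a_j - b_j| \le |\widehat{M}_{ij} - M_{ij}| \le \|\widehat{M} - M\|_\infty$, and dividing by $\|a\|_1 \ge \tfrac12{\cal D}$ produces the term $2\|\widehat{M} - M\|_\infty/{\cal D}$. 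For the second summand, I use $b_j \le \|M\|_\infty$, $\|b\|_1 = \|M_{i,:}\|_1 \ge {\cal D}$, $\|a\|_1 \ge \tfrac12{\cal D}$, and the reverse triangle inequality $|\,\|b\|_1 - \|a\|_1\,| \le \|a - b\|_1$.

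The only genuinely directional step is converting the $\ell_1$ row error $\|a - b\|_1$ into the $2\to\infty$ quantity appearing in the target bound: Cauchy--Schwarz yields $\|a - b\|_1 \le \sqrt{n}\,\|\widehat{M}_{i,:} - M_{i,:}\|_2 \le \sqrt{n}\,\|\widehat{M} - M\|_{2\to\infty}$, which is where the factor $\sqrt{n}$ enters. Assembling the two summands gives
\[
|\widehat{P}_{ij} - P_{ij}| \le \frac{2\|\widehat{M} - M\|_\infty}{{\cal D}} + \frac{2\sqrt{n}\,\|M\|_\infty}{{\cal D}^2}\,\|\widehat{M} - M\|_{2\to\infty},
\]
and taking the maximum over $(i,j)$ closes the argument.

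I do not expect a substantive obstacle: the proof is elementary once the normalizer is controlled from below. The one point requiring genuine care is the truncation $(\cdot)_+$, which must be handled through its componentwise $1$-Lipschitzness \emph{twice} --- in the numerator estimate for $|a_j - b_j|$ and in the denominator estimate for $\|a\|_1$ via $\|a-b\|_1$ --- using the fact that $M$ is entrywise nonnegative so that $M_{i,:} = (M_{i,:})_+$. Everything else reduces to Hölder/Cauchy--Schwarz and the (reverse) triangle inequality.
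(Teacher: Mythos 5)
Your proof is correct and follows essentially the same route as the paper's: lower-bound the normalizer $\|(\hM_{i,:})_+\|_1 \ge {\cal D}/2$ from the $1\to\infty$ hypothesis, split the difference of normalized rows into an entry-difference term and a normalizer-difference term, and convert the $\ell_1$ row error into $\sqrt{n}\,\|\hM - M\|_{2\to\infty}$ by Cauchy--Schwarz. Your particular decomposition (pivoting on the true entry $M_{i,j}$ and dividing the entry difference by $\|(\hM_{i,:})_+\|_1$) is in fact marginally cleaner than the paper's, which pivots on $|\hM_{i,j}|$ and therefore needs to re-absorb an extra $|\hM_{i,j}-M_{i,j}|$ cross term by invoking the hypothesis a second time.
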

\begin{proof}
First note that for any $i\in[n]$:
\begin{align}
    \left \vert \Vert (\hM_{i,:})_{+}\Vert_{1} - \Vert M_{i,:}\Vert_1 \right \vert \le  \Vert  (\hM_{i,:})_{+} -  M_{i,:} \Vert_1 \le  \Vert  \hM_{i,:} -  M_{i,:} \Vert_1 \le \frac{\Vert M_{i,:} \Vert_1}{2},
    \label{eq:Pinf_proof_Mhat_lbound}
\end{align}
where the first inequality follows from the reverse triangle inequality, the second from $\vert \max(0, x) - y \vert \le \vert x - y\vert  $ for all $y > 0$ and $x \in \RR$, and the last inequality follows from the assumption in the lemma. This implies that $\Vert (\hM_{i,:})_{+}\Vert_{1} > 0$ for all $i\in [n]$, which further implies that  $\hP$ is defined by: 
for all $i\in [n]$,
\begin{align}
\hP_{i,:} = (\hM_{i,:})_+/ \| (\hM_{i,:})_+ \|_1.
\end{align}

Furthermore, we have for all $i,j = 1,\dots,n$,\begin{align*}
    \vert \widehat{P}_{i,j} - P_{i,j} \vert & = \left\vert \frac{(\hM_{i,j})_{+}}{\Vert (\hM_{i,:})_{+}\Vert_{1}} - \frac{M_{i,j}}{ \Vert M_{i,:}\Vert_1} \right\vert 
    \le \left\vert \frac{\hM_{i,j}}{\Vert (\hM_{i,:})_{+}\Vert_{1}} - \frac{M_{i,j}}{ \Vert M_{i,:}\Vert_1} \right\vert \\
    & \le  \frac{1}{\Vert M_{i,:} \Vert_1}\left\vert \hM_{i,j} - M_{i,j}   \right\vert +   \vert \hM_{i,j} \vert   \left\vert \frac{1}{\Vert (\hM_{i,:})_{+}\Vert_{1}} - \frac{1}{\Vert M_{i,:}\Vert_{1}}  \right\vert \\
    & =  \frac{1}{\Vert M_{i,:} \Vert_1} \left\vert \hM_{i,j} - M_{i,j}   \right\vert + \frac{\vert \hM_{i,j} \vert}{ \Vert M_{i,:} \Vert_1  }   \left\vert \frac{1}{ 1 + \frac{\Vert (\hM_{i,:})_{+}\Vert_{1} - \Vert M_{i,:}\Vert_1}{\Vert M_{i,:}\Vert_1}   } - 1 \right\vert \\
    &  = \frac{1}{\Vert M_{i,:} \Vert_1} \left\vert \hM_{i,j} - M_{i,j}   \right\vert + \frac{\vert \hM_{i,j} \vert}{ \Vert M_{i,:} \Vert_1}  \varphi\left( \frac{\Vert (\hM_{i,:})_{+}\Vert_{1} - \Vert M_{i,:}\Vert_1}{\Vert M_{i,:}\Vert_1} \right) 
\end{align*}
where we define $\varphi(x) = \vert x/(1+x) \vert $ for all $x \in \RR \backslash \lbrace   - 1 \rbrace$. Note that if $\vert x \vert < 1/2$, then $\varphi(x) \le 2 \vert x\vert$, which combined with \eqref{eq:Pinf_proof_Mhat_lbound} gives
\begin{align*}
    \vert \widehat{P}_{i,j} - P_{i,j} \vert & \le \frac{1}{\Vert M_{i,:} \Vert_1} \left\vert \hM_{i,j} - M_{i,j}   \right\vert + \frac{ 2 \vert \hM_{i,j} \vert}{ \Vert M_{i,:} \Vert_1} \left \vert  \frac{\Vert (\hM_{i,:})_{+}\Vert_{1} - \Vert M_{i,:}\Vert_1}{\Vert M_{i,:}\Vert_1} \right \vert  \\
    & \le \frac{1}{\Vert M_{i,:} \Vert_1} \left\vert \hM_{i,j} - M_{i,j}   \right\vert + \frac{2}{ \Vert M_{i,:} \Vert_{1}^2} \left( \left\vert \hM_{i,j} - M_{i,j}   \right\vert + \left\vert M_{i,j}   \right\vert\right) \Vert \hM_{i,:} - M_{i,:} \Vert_{1}.  
\end{align*}
Using the assumption $ \Vert  \hM -  M \Vert_{1 \to \infty} \le \frac{1}{2} \min_{i\in[n]} \Vert M_{i,:} \Vert_1$ again, we can group first two terms, and then use $\| \hM - M\|_{1\to\infty} \leq \sqrt{n}\| \hM - M\|_{2\to\infty}$ to get the statement of the lemma.

\end{proof}
\newpage
\section{Low-rank bandits: proofs of results from Section \ref{sec:bandits}}\label{app:bandits}

\subsection{Gap-dependent guarantees} \label{app:bandits-1}

\begin{proof}[Proof of Theorem \ref{thm:regret}] First, we prove the result corresponding the best entry  identification problem. We proceed in several steps.

\textbf{Step 1: entry-wise concentration.} We can easily verify that for all $\ell \ge 1$, for all $(i,j) \in [m] \times [n]$, we have 
\begin{align*}
    \vert \widehat{\Delta}_{i,j}^{(\ell)} - \Delta_{i,j} \vert \le 2 \Vert \widehat{M}^{(\ell)} - M^\star \Vert_\infty. 
\end{align*}
Therefore, applying Theorem \ref{thm:reward}, we have, for $\delta > 0$, and $T_\ell \ge c_1 (m+n) \log^3( (e^2(m+n) (mn) / \delta_\ell  )$, 
\begin{align*}
    \PP\left(  \vert \widehat{\Delta}_{i,j} - \Delta_{i,j} \vert > 2 C_1 \sqrt{ \frac{e(m+n)}{T_\ell} \log^3\left(\frac{e(m+n)mn T_\ell}{\delta_\ell}\right) }    \right) \le \frac{\delta_\ell}{mn} 
\end{align*}
for some $c_1, C_1 >0$ sufficiently large. In particular, we can choose  $C_1 = C (\mu^{11/2}\kappa^2 r^{1/2} + \mu^3 \kappa r^{3/2} (m+n)/\sqrt{mn})$, and $c_1 = c \mu^4 \kappa^2 r^2$, but under a homogeneous reward matrix these constants are $\Theta(1)$. Thus, by a union bound and always under the same conditions, we have 
\begin{align*}
    \PP\left(  \max_{(i,j) \in [m]\times [n]}\vert \widehat{\Delta}_{i,j} - \Delta_{i,j} \vert > 2 C_1 \sqrt{ \frac{e(m+n)}{T_\ell} \log^3\left(\frac{e(m+n)mn T_\ell}{\delta_\ell}\right) }    \right) \le \delta_\ell.
\end{align*}
Next, we wish to choose $T_\ell$ so that we have 
\begin{align}\label{eq:good-conc}
    \PP\left(  \max_{i,j}  \vert \widehat{\Delta}_{i,j} - \Delta_{i,j} \vert \le  2^{-(\ell + 2)}   \right) \ge 1 -\delta_\ell. 
\end{align} 
Note that in order for the above guarantee to hold, it is sufficient to have: 
\begin{align*}
     T_\ell & \ge c_1 (m+n) \log^3\left( \frac{e^2(m+n) (mn)}{\delta_\ell} \right), \\
     T_\ell & \ge 2\sqrt{e} C_1^2 (m+n) 2^{2(\ell - 2)} \log^3\left( \frac{e(m+n)(mn)}{\delta_\ell} \right).
\end{align*}
This can be achieved if we choose 
\begin{align}\label{eq:good-condition}
    T_\ell = \left\lceil  C_3 2^{2(\ell - 2)}(m+n) \log^3\left( \frac{2^{2(\ell - 2)} (m+n)}{\delta_\ell}\right)   \right\rceil, 
\end{align}
for some positive constant $C_3>0$ large enough which can be determined explicitly and only depend on $c_1, C_1$. Indeed, this can be deduced from the basic fact that if $T_\ell^{1/3} \ge 2 a \log(2a) + 2b$, then $T_\ell^{1/3} \ge a\log(T_\ell^{1/3}) + b$. We spare the reader these tedious calculations and only argue that such $C_3$ exists and can be computed explicitly. 

\textbf{Step 2: Good events.} We define $S_\ell = \left\lbrace (i,j)\in [n]\times[m]: \Delta_{i,j} \le 2^{-\ell} \right\rbrace$ and the good events under which we correctly find the best entry as
\begin{align*}
    \cE_\ell = \lbrace \cA_{\ell+1} \subseteq S_{\ell+1} \rbrace \cap \lbrace  (i^\star, j^\star) \in \cA_{\ell+1} \rbrace. 
\end{align*}
We show that the good event $\cE_\ell$ happens with high probability conditionally on $\cE_1, \dots, \cE_{\ell-1}$. 
Observe that by independence of the entries sampled at epoch $\ell$ from those of the previous epochs, we have based on \eqref{eq:good-conc} 
\begin{align*}
    \PP\left(  \max_{i,j}  \vert \widehat{\Delta}_{i,j} - \Delta_{i,j} \vert \le  2^{-(\ell + 2)}   \Big\vert \cE_{\ell - 1}, \dots, \cE_{1} \right) \ge 1 - \delta_\ell 
\end{align*}
Now, conditionally on $\cE_{\ell - 1}, \dots, \cE_{1}$, under the event that $\max_{i,j}  \vert \widehat{\Delta}_{i,j} - \Delta_{i,j} \vert \le  2^{-(\ell + 2)}$, if $(i,j) \in S_{\ell+1}^c \cap \cA_{\ell +1}$ then 
$$
\widehat{\Delta}_{i,j}^{(\ell)} \ge \Delta_{i,j} - 2^{-(\ell +2)} > 2^{-(\ell+1)} - 2^{-(\ell +2)} = 2^{-(\ell +2)}.
$$
Thus, we have 
\begin{align*}
      \PP\left( \cA_{\ell+1} \subseteq S_{\ell+1}  \Big\vert \cE_{\ell - 1}, \dots, \cE_{1} \right) \ge \PP\left(  \max_{i,j}  \vert \widehat{\Delta}_{i,j} - \Delta_{i,j} \vert \le  2^{-(\ell + 2)}   \Big\vert \cE_{\ell - 1}, \dots, \cE_{1} \right) \ge 1 - \delta_\ell. 
\end{align*}
Furthermore, note that under the event $\max_{i,j}  \vert \widehat{\Delta}_{i,j} - \Delta_{i,j} \vert \le  2^{-(\ell + 2)}$, we clearly have that $\widehat{\Delta}_{i^\star,j^\star} \le 2^{-(\ell + 2)}$ and since $(i^\star, j^\star) \in \cA_\ell $ conditionally on $\cE_{\ell -1}$ we conclude that 
\begin{align*}
    \PP\left( \cE_{\ell}  \Big\vert \cE_{\ell - 1}, \dots, \cE_{1} \right) \ge \PP\left(  \max_{i,j}  \vert \widehat{\Delta}_{i,j} - \Delta_{i,j} \vert \le  2^{-(\ell + 2)}   \Big\vert \cE_{\ell - 1}, \dots, \cE_{1} \right) \ge 1 - \delta_\ell 
\end{align*}

\textbf{Step 3: Sample complexity.} First, we remark that when $\ell = \lceil \log_2(1/\Delta_{\min}) \rceil$, we have  $S_\ell = \lbrace (i^\star, j^\star)\rbrace$. Therefore, under the event 
\begin{align*}
    \cE_1 \cap \dots \cap \cE_{\lceil \log_2(1/\Delta_{\min}) \rceil}
\end{align*}
the algorithm will stop after $\tau$ rounds, and recommend the optimal $(i^\star, j^\star)$, where
\begin{align*}
    \tau & \le \sum_{\ell=1}^{\lceil \log_2(1/\Delta_{\min})  \rceil } T_\ell \\
    & \le \sum_{\ell=1}^{\lceil \log_2(1/\Delta_{\min})   \rceil }  \left\lceil  C_3 2^{2(\ell - 2)}(m+n) \log^3\left( \frac{2^{2(\ell - 2)} (m+n)}{\delta_\ell}\right)   \right\rceil \\
    & \le \sum_{\ell=1}^{\lceil \log_2(1/\Delta_{\min})   \rceil }\left\lceil  C_3 \frac{(m+n)}{\Delta_{\min}^2} \log^3\left( \frac{ (m+n) \lceil \log_2\left(1/\Delta_{\min}\right) \rceil^2 }{\Delta_{\min}^2\delta}\right)   \right\rceil \\
    & \le \log_2\left(\frac{1}{\Delta_{\min}}\right)\left\lceil  C_3 \frac{(m+n)}{\Delta_{\min}^2} \log^3\left( \frac{ (m+n) \lceil \log_2\left(1/\Delta_{\min}\right) \rceil^2 }{\Delta_{\min}^2\delta}\right)   \right\rceil \\
    & \le \psi(n,m,\delta):=  C_4  \frac{(m+n) \log\left(e/\Delta_{\min}\right) }{\Delta_{\min}^2} \log^3\left( \frac{ e(m+n)  \log\left(e/\Delta_{\min}\right)  }{\Delta_{\min}\delta}\right)    
\end{align*}
where we recall the definition of $T_\ell$ in \eqref{eq:good-condition}, that $\delta_\ell = \delta / \ell^2$, and where $C_4$ is a large enough universal constant. Hence, we have  
\begin{align}\label{eq:guarantee-bei}
     \PP\left( (i_\tau, j_\tau) = (i^\star,j^\star),  \tau \le  \psi(n,m,\delta) \right)  \ge \PP\left( \bigcap_{\ell = 1}^{\lceil \log_2(1/\Delta_{\min})  \rceil} \cE_{\ell}\right) \ge 1-\delta. 
\end{align}
This conclude the proof of the guarantee for the best entry identification. Note that we can immediately conclude from the above guarantee \eqref{eq:guarantee-bei} that the sample complexity of SME-AE$(1/T^\alpha)$ for all $T \ge 1$, satisfies $\EE[\tau \wedge T] \le \psi(n,m, T^{-\alpha}) + T^{1-\alpha}$. Indeed, we have 
\begin{align*}
    \EE[\tau \wedge T] & = \EE[(\tau \wedge T) \indicator_{\left\lbrace \tau \le \psi(n,m,T^{-\alpha}) \right\rbrace} ] + \EE[(\tau \wedge T) \indicator_{\left\lbrace \tau > \psi(n,m,T^{-\alpha}) \right\rbrace} ] \\
    & \le \psi(n,m,T^{-\alpha}) + T \PP(\tau > \psi(n,m,T^{-\alpha}) ) \\
    &\le \psi(n,m,T^{-\alpha}) + T^{1-\alpha},
\end{align*}
where the upper bound on the probability follows from \eqref{eq:guarantee-bei} with $\delta = 1/T^\alpha$. 

\medskip 

Next, we turn our attention to proving the regret upper bound. We define $\cE_{good} = \lbrace (\hat{\imath}_\tau, \hat{\jmath}_\tau) = (i^\star,j^\star) ,\tau \le \psi(n,m,1/T^2) \rbrace$. We have 
\begin{align*}
    R^\pi(T) & = T M_{i^\star, j^\star} - \EE\left[ \sum_{t=1}^T M_{i_t^\pi,j_t^\pi} \right]  \\ 
    & =  \EE\left[   \sum_{t=1}^T (M_{i^\star, j^\star} - M_{i_t^\pi,j_t^\pi}) \indicator_{\lbrace \cE_{good} \rbrace}  \right] + \EE\left[   \sum_{t=1}^T (M_{i^\star, j^\star} - M_{i_t^\pi,j_t^\pi}) \indicator_{\lbrace \cE_{good}^c \rbrace}  \right] \\
    & \le \EE\left[   \sum_{t=1}^T (M_{i^\star, j^\star} - M_{i_t^\pi,j_t^\pi}) \indicator_{\lbrace \tau \le \psi(n,m,T^{-2}) \rbrace}  \right] +  \Delta_{\max}T \PP( \cE_{good}^c) \\
    & \le \EE\left[   \sum_{t=1}^{\infty} (M_{i^\star, j^\star} - M_{i_t^\pi,j_t^\pi})  \indicator_{\lbrace \tau \wedge \psi(n,m,T^{-2}) > t\rbrace} \right] + \frac{\Delta_{\max}}{T}  \\
    & \le \EE\left[   \sum_{t=1}^{\infty} \bar{\Delta} \indicator_{\lbrace \tau \wedge \psi(n,m,T^{-2}) > t\rbrace} \right] + \frac{\Delta_{\max}}{T}\\
    & \le \bar{\Delta} \psi(n,m,T^{-2}) + \frac{\Delta_{\max}}{T}
\end{align*}
where in the second to last inequality, we used the tower rule together with the observation that  $\EE[ (M_{i^\star, j^\star} - M_{i_t^\pi,j_t^\pi})  \indicator_{\lbrace \tau \wedge \psi(n,m,T^{-2}) > t\rbrace} \vert \cF_{t-1} ] =  \bar{\Delta}\indicator_{\lbrace \tau \wedge \psi(n,m,T^{-2}) > t\rbrace}$ where $\cF_{t-1}$ is the $\sigma$-algebra defined by the observations up to time $t-1$. This concludes the proof.
\end{proof}

\subsection{Gap-independent guarantees}

An immediate consequence of the regret bound in Theorem \ref{thm:regret} is that we can have a gap-independent bound under some additional assumption. Let us define $\zeta = \Delta_{\max}/\Delta_{\min}$, then the regret bound becomes 
\begin{align}\label{eq:reg-bound1}
    R^{\pi}(T) \le    \frac{\zeta C_4(m+n) \log\left(e/\Delta_{\min}\right) }{\Delta_{\min}} \log^3\left( \frac{ e(m+n)  \log\left(e/\Delta_{\min}\right)  T^2 }{\Delta_{\min}}\right)  + \frac{\Delta_{\max}}{T}.
\end{align}
At the same time, we also have the worst case bound 
\begin{align}\label{eq:reg-bound2}
    R^{\pi}(T) \le \zeta \Delta_{\min} T. 
\end{align}
Taking the best of the two bounds \eqref{eq:reg-bound1} and \eqref{eq:reg-bound2} with the worst case choice for $\Delta_{\min}$, we get 
\begin{align*}
    R^{\pi}(T) =  \tilde{O}\left(\zeta \sqrt{(n+m) T} \log^2((n+m)T)\right)
\end{align*}
where the $\tilde{O}$ hides additional log-log terms in $m,n$ and $T$.


\newpage
\section{Related work}
\label{app:related}

In this section, we first discuss the results for the estimation of a low-rank transition matrix presented in \cite{zhang2019spectral}. We then give a more detailed account of the related work for low-rank bandits.

\subsection{Low-rank transition matrix estimation}

In \cite{zhang2019spectral}, the authors try to estimate a low-rank transition matrix from the data consisting of a single trajectory of the corresponding Markov chain. In a sense, this objective is similar to ours in Model II(b). The main results of \cite{zhang2019spectral} are presented in Theorem 1. First observe that our results are more precise since we manage to get entry-wise guarantees. Then it is also worth noting that, in the case of homogenous transition matrices, the upper bound on $\| \hP-P\|_{1\to\infty}$ stated in Theorem 1 in \cite{zhang2019spectral} are similar to the upper bounds we establish in Corollary \ref{corr:homogeneous_model3}. However, to obtain such bounds, we believe that it is necessary to first derive guarantees for the singular subspace recovery in the $\ell_{2\to\infty}$ norm, as we do. The authors of \cite{zhang2019spectral} do not present any step with such guarantees for the estimation of the singular subspaces. We explain below why this step is missing and where the analysis towards the upper bound $\| \hP-P\|_{1\to\infty}$ breaks in \cite{zhang2019spectral}. 

{\bf Proof of the guarantees for $\| \hP-P\|_{1\to\infty}$ in \cite{zhang2019spectral}.} Note that in \cite{zhang2019spectral}, the authors use $F$ in lieu of $M$. We keep our notation $M$ below to be consistent with the rest of the manuscript. In the proof of Theorem 1 in \cite{zhang2019spectral}, the authors use the following decomposition:
\begin{align}
    \|\hM_{i,:} - M_{i,:}\| \leq \|(\hM_{i,:} - M_{i,:})V\| + (\|\hM_{i,:} - M_{i,:}\| + \|M_{i,:}\|) \frac{C\| \tM-M\| } {\sigma_r(M)}.
    \label{eq:Mengdi_decomposition_initial}
\end{align}
They apply concentration results on $\|(\tM - M)V\|_{2\to\infty}$ (Lemma 8) and $\|M-\tM\|$ (Lemma 7) to bound the two terms from above. More precisely, their proof includes (33) page 3217, a sequence of inequalities where these concentration results are used. In the fifth line of (33), the authors apply (31), the concentration result on $\|(\tM - M)V\|_{2\to\infty}$, but to bound $\|(\hM - M)V\|_{2\to\infty}$ instead. Replacing $\hM$ by $\tM$ is however not possible, and the analysis breaks here.  

{\bf Is there a simple solution?} We argue below that it is not easy to solve the aforementioned issue in the proof. We first claim that the two concentration bounds on $\|(\tM - M)V\|_{2\to\infty}$ and $\|M-\tM\|$ are not sufficient for bounding the first term from Equation \eqref{eq:Mengdi_decomposition_initial}. Specifically, for any row $i$:
\begin{align*}
    \|(\hM_{i,:} - M_{i,:})V\| = \|(\tM_{i,:}\hV\hV^\top - M_{i,:})V \| = \| (\tM_{i,:} - M_{i,:})V + \tM_{i,:}(\hV\hV^\top - V V^\top) V \| ,
\end{align*}
and in order to analyze the second term inside the norm, we need to deal with dependence between $\tM$ and $\hV$. Doing this naively using the triangle inequality and Cauchy-Schwarz inequality yields:
\begin{align}
    \| (\hM - M)V \|_{2\to\infty} &\leq \| (\tM - M)V \|_{2\to\infty}  + \| \tM(\hV\hV^\top - V V^\top) V \|_{2\to\infty} \nonumber \\ &\leq \| (\tM - M)V \|_{2\to\infty}  + \| \tM\|_{1\to\infty} \| V-\hV(\hV^\top V)\|_{2\to\infty}
    \label{eq:mengdi_explanation_hF}.
\end{align}
It is not clear how bounds on $\| (\tM - M)V \|_{2\to\infty}$ and $\| M-\tM \|$ imply a bound on $\| (\hM- M)V \|_{2\to\infty}$ since term $\| V-\hV(\hV^\top V)\|_{2\to\infty}$ does not seem to be directly bounded by these two terms. We can think of bounding $\| V-\hV(\hV^\top V) \|_{2}$ using Davis-Kahan's inequality:
\begin{align*}
    \| V-\hV(\hV^\top V)\|_{2\to\infty} \leq \| V-\hV(\hV^\top V)\|_{2} \lesssim \frac{\| M-\tM\|}{ \sigma_r(M)},
\end{align*}
where we neglect the higher order term  (see Equations \eqref{eq:UhUR},\eqref{eq:U_hUR},\eqref{eq:WhU_R}). Then, with the upper bound on $\| M-\tM\|$, we may obtain an upper bound on $\| \hP-P\|_{1\to\infty}$ but that does not have a fast decaying rate as that claimed in Theorem 1 in \cite{zhang2019spectral} or in our main theorems.

It is also worth noting that assuming proof of Theorem 1 in \cite{zhang2019spectral} holds or that more specifically, the series of inequalities leading to Equation (33) holds, one could greatly simplify the singular subspace recovery problem. In particular, since
\begin{align*}
    \| \tM (V-\hV\hV^\top V) \|_{2\to\infty} = \|(\tM-\hM)V\|_{2\to\infty}
    \leq 
    \| ({\hM} - M)V \|_{2\to\infty} + \|EV \|_{2\to\infty}
\end{align*}
we can rewrite Equation \eqref{eq:U_UWU_lemma416_poisson_transitions} (wlog for symmetric matrix $M$ with eigenvector matrix $V$) as:
\begin{align}
   \|V&- \hV(\hV^\top V) \|_{2\to\infty} 
    \nonumber \\ &\leq
    \frac{1}{\sigma_r(M)} \left( (2+\frac{4\| E\|}{\sigma_r(M)}) \| EV \|_{2\to\infty} + 2\| ({\hM} - M)V \|_{2\to\infty} + \frac{4\| MV\|_{2\to\infty} \|E\|}{\sigma_r(M)}\right).
    \label{eq:mengdi_explanation_subspace}
\end{align}
Now if (33) in \cite{zhang2019spectral} was true, we could use the correspoding bound of the critical term $\|({\hM} - M)V \|_{2\to\infty}$. This would not only greatly simplify proofs given in literature based on leave-one-out-technique, but also extend their work to Markov dependent random variables (which has not been done before). Lastly, note that we cannot skip estimation of singular subspaces by combining Equation \eqref{eq:mengdi_explanation_hF} and \eqref{eq:mengdi_explanation_subspace} since inequality $2\| \tM \|_{1\to\infty} < \sigma_r(M)$ does not hold in general.

\subsection{Low rank bandits}

Here we survey models for low-rank bandits that have emerged recently in the literature but that are not directly related to our model. Nonetheless our guarantees can be exported there.

\cite{jun2019bilinear} considers a bi-linear bandit model which seems more general than that of considered \cite{bayati2022speed}. Indeed, they assume that the observed reward in round $t$ after selecting a pair $(x,z) \in \cX \times \cZ$, is $x^\top \Theta z + \xi_t$ where $\cX \subset \RR^{m}$ and $\cZ \subset \RR^{n}$ are finite. They assume that $\Theta \in \RR^{m\times n}$ is low rank. If we assume that $\cX = \lbrace e_1, \dots, e_m\rbrace$  and $\cZ = \lbrace e_1, \dots, e_n\rbrace$, we then recover our model and that of \cite{bayati2022speed} with $M = \Theta$. However, we can also argue that if we restrict our attention to $m$ vectors from $\cX$, say $\cX' = \lbrace x_1, \dots, x_m\rbrace$, that span $\RR^m$, and $n$ vectors from $\cZ$, say $\cZ' = \lbrace z_1, \dots, z_n\rbrace$, that span $\RR^n$, then in our model and that of \cite{bayati2022speed}, $M_{i,j} = x_i^\top \Theta z_j$, for all $(i,j)\in [m]\times [n]$. Note that in this case, the rank of $M$ is equal to that of $\Theta$. In fact, in the first phase of  the algorithm proposed by \cite{jun2019bilinear}, the authors also restrict their attention to sets $\cX'$ and $\cZ'$ such that $\lambda_{\min}(\sum_{i=1}^m x_i x_i^{\top})$ and $\lambda_{\min}(\sum_{i=1}^m z_i z_i^{\top})$ are maximized. To simplify our exposition, we do not use the model presented by \cite{jun2019bilinear}, instead we use that of \cite{bayati2022speed}.

\cite{kang2022efficient} considers a generalized bandit framework with low rank structure which is rather different than the bandit framework we consider. There, the algorithm is based on the two stage idea introduced by  \cite{jun2019bilinear}, which consists in first estimating the subspace, then reducing the problem to a low-dimensional linear bandit with ambient dimension $nm$ but with roughly $n+m$ relevant dimensions. They are able obtain a minimax regret scaling as $(n+m)\sqrt{T}$. It is worth noting that both these works do not have gap-dependent bounds.

\cite{jain2022online} is another relevant work. There, the authors consider a low-rank bandit problem similar to ours but slightly more restrictive. At time $t$, they recommend an arm $\rho(j)_t$ for each user $j$, and they observe the corresponds rewards. In other words they observe $m$ entries per round, while in our case we only observe one entry per round. They show that with an explore-then-commit algorithm, they attain a regret of order $\textrm{polylog}(n+m) T^{2/3}$. Their regret guarantees require an entry-wise matrix estimation guarantee with scaling comparable to ours. They use the result of Chen et al. \cite{chen2020} which again is only valid for independent entries and does not account for repetitive sampling. To remedy that they rely on ad-hoc pre-processing steps (see remarks 2, 3 and 4 in \cite{jain2022online}).  In our case, we believe that our matrix estimation guarantees can be immediately used in their setting and this would lead to a regret scaling of order $(n+m)^{1/3}T^{2/3}$ with the more reasonable constraint that we only observe one entry at each round.  The authors also obtain an $\textrm{polylog}(n+m)\sqrt{T}$ guarantee but for rank-1 reward matrices only.





\end{document}